\providecommand{\norm}[1]{\lVert#1\rVert}
\newcommand\numberthis{\addtocounter{equation}{1}\tag{\theequation}}
\newtheorem{thm}{Theorem}
\newtheorem{problem}{Problem}
\newtheorem{assum}{Assumption}
\newtheorem{lemma}{Lemma}
\newtheorem{definition}{Definition}
\newtheorem{proposition}[thm]{Proposition}
\DeclareMathOperator*{\argmin}{arg\,min}
\renewcommand{\P}{\mathbb{P}} 
\DeclareMathOperator*{\E}{\mathbb{E}}
\DeclareMathOperator*{\Var}{Var}
\DeclareMathOperator*{\var}{Var}
\newcommand{\Sprime}{S''}
\newcommand{\cond}{\; \middle| \;}
\title{Foundations of Safe Online Reinforcement Learning in the Linear Quadratic Regulator: Generalized Baselines}
\author{
	Benjamin Schiffer 
 and
    Lucas Janson 
}
\date{Department of Statistics, Harvard University}
\begin{document}
\maketitle

\begin{abstract}
Many practical applications of online reinforcement learning require the satisfaction of safety constraints while learning about the unknown environment. In this work, we establish theoretical foundations for reinforcement learning with safety constraints by studying the canonical problem of Linear Quadratic Regulator learning with unknown dynamics, but with the additional constraint that the position must stay within a safe region for the entire trajectory with high probability. Our primary contribution is a general framework for studying stronger baselines of nonlinear controllers that are better suited for constrained problems than linear controllers. Due to the difficulty of analyzing non-linear controllers in a constrained problem, we focus on 1-dimensional state- and action- spaces, however we also discuss how we expect the high-level takeaways can generalize to higher dimensions.  Using our framework, we show that for \emph{any} non-linear baseline satisfying natural assumptions, $\tilde{O}_T(\sqrt{T})$-regret is possible when the noise distribution has sufficiently large support, and $\tilde{O}_T(T^{2/3})$-regret is possible for \emph{any} subgaussian noise distribution. In proving these results, we introduce a new uncertainty estimation bound for nonlinear controls which shows that enforcing safety in the presence of sufficient noise can provide “free exploration” that compensates for the added cost of uncertainty in safety-constrained control.

\end{abstract}

\newpage
\section{Introduction}

\subsection{Background and Motivation}

Recent advances in reinforcement learning (RL) have led to many successes in applying RL algorithms to a variety of practical online applications, from robotics to personalized health \citep{levine2016end, lillicrap2015continuous,tewari2017ads}. A core concept behind online RL algorithms is the careful balance between exploration (proactively learning about the unknown environment) and exploitation (using what is already known to maximize reward). In practice, however, RL algorithms are restricted in the possible actions and states by safety constraints. For example, a drone using an RL algorithm must have safety constraints restricting possible states that would result in the drone crashing into a building or injuring a bystander. Therefore, the drone cannot explore the environment by accelerating directly into a building, and instead must explore in a safe manner. To deploy more RL algorithms to practical applications, instead of just balancing exploration versus exploitation, the optimal algorithm must now balance exploration versus exploitation versus safety. In many applications, the safety constraints must be obeyed at all time steps (even at the beginning), which does not allow for any violation of safety even during the initial learning period. Therefore, this component of ``safety" involves both learning safely as well as learning how to be safe in the future. Studying simple canonical problems in RL can give insights into how to develop safe RL algorithms in more complex practical settings. In this paper, we address safety in the context of online LQR with unknown dynamics. Online LQR with unknown dynamics can be viewed as one of the simplest RL problems with a continuous decision space, and this problem has recently gained significant attention within the RL community both with and without safety constraints (see e.g. \citet{abbasi2011regret, dean2018regret, dean2019safely}).

\subsection{Setting and Motivation}

In order to better understand the interaction between safety and the balance of exploration/exploitation, we study the classic problem of controlling a discrete-time linear dynamical system with unknown dynamics while minimizing a quadratic cost. In our problem setting, the position at the next time step depends on the current position, the current control input, and a random noise. The goal is to choose controls (actions) that keep the position as close to the origin as possible while using as little control as possible. An example application of this problem is controlling a drone around a target \citep{rubio2016optimal}. In this scenario, the goal is to maintain a safe distance from the target while preserving fuel despite random disturbances from air currents. In this paper, we are interested in the setting where the dynamics are unknown. When the dynamics are unknown, LQR becomes an online RL problem of balancing exploration (controls that learn about the dynamics) and exploitation (controls that minimize the cost). Extending the previous example of controlling a drone around a target, the dynamics could for example be determined by the weather pattern that is unknown in advance. The goal in this paper is to design an algorithm that can learn the dynamics safely while not incurring significantly more cost than the best safe algorithm when the dynamics are known.

To quantify safety in this setting, we will consider constraints on the position of the controller which restrict the position to stay within a safe region. Continuing the previous example, a drone control must be safe in that it must avoid positions that are currently occupied by walls or other objects. We focus on position constraints rather than control constraints because position constraints have the added difficulty that, at the time of choosing the control, the next position for any given control is unknown due to the noise and uncertainty about the dynamics. In contrast, the algorithm has perfect information about (and control over) the choice of control. We therefore consider the LQR setting with only position constraints. See Section \ref{sec:discussion} for discussion on how our results extend to the setting with control constraints. While the optimal policy with known dynamics and without position constraints is the well-understood Linear Quadratic Regulator, with constraints the optimal policy even for known dynamics no longer has a closed-form \citep{rawlings2012postface}. Due to the substantially increased complexity of the constrained LQR problem with both known and unknown dynamics, we will focus on the setting when both the positions and controls are one-dimensional. We focus on the one-dimensional setting to highlight the main ways in which learning unknown dynamics changes in the presence of constraints, without the additional technical overhead that comes with proving results for higher dimensions. However, we do predict that many of the results in this paper can be generalized to higher dimensions, and we discuss this further in Section \ref{sec:discussion}. Other works have also taken the same approach of first studying only the one-dimensional case of LQR, see e.g. \citet{fefferman2021optimal, abeille2017thompson}. The one-dimensional setting of safe LQR does have its own applications, for example maintaining a fixed temperature of a room \citep{oldewurtel2008tractable}. In this application, the goal is to maintain a certain range of safe temperatures with high probability while using as little energy as possible. Taking temperature as the position, this problem can be formulated as a one-dimensional LQR problem with safety ``position" constraints on the temperature.

\subsection{Our Contribution}
The main theorems of this paper each establish new regret results for safety-constrained LQR learning. We improve prior works' regret bounds for this setting along three dimensions, the regret rate, the regret baseline, and the types of noise distributions. In contrast to prior works, we focus on one-dimensional LQR with only positional constraints, but in this setting we study more general non-linear baselines. The following table summarizes our different results relative to prior works across these three dimensions:
\vspace{5mm}

\scalebox{.9}{
\begin{tabular}{@{}llll@{}}
\toprule
               & Regret Rate                               & Regret Baseline                    & Noise Distributions                          \\ \midrule
Previous works & $\tilde{O}_T(T^{2/3})$                    & Best Safe Linear Controller           & Bounded                              \\
Theorem \ref{sufficiently_large_error}        & $\tilde{O}_T(\sqrt{T})$                   & Best General Baseline Controller                      & subgaussian+Large Support \\
Theorem \ref{performance}       & $\tilde{O}_T(T^{2/3})$ & Best General Baseline Controller                      & subgaussian                         \\ \bottomrule
\end{tabular}
}

\vspace{5mm}

The main contribution of this paper is a general framework for analyzing different baselines for the safety-constrained LQR learning problem. Using this framework, we show that a $\tilde{O}_T(\sqrt{T})$ rate of regret is possible for safety-constrained LQR learning in one-dimension for noise distributions with large support, improving on $\tilde{O}_T(T^{2/3})$ regret of previous works \citep{li2021safe, dean2019safely}. This rate of regret for constrained LQR learning matches the optimal regret rate for \emph{unconstrained} LQR learning \citep{ziemann2024regret}. In addition to improving the rate of regret, this result is also with respect to a stronger and more general baseline than studied in previous works. The regret for this result is defined with respect to the best controller from general classes of baseline controllers satisfying only minimal regularity condition. To the best of our knowledge, this is the first work on constrained LQR learning with respect to any baseline other than the best safe linear controller. Our result also holds for any subgaussian noise distribution, which is the (to the best of our knowledge) first safety-constrained LQR learning result for unbounded distributions. A key technical tool used to prove $\tilde{O}_T(\sqrt{T})$ regret is a new bound for estimating unknown dynamics with non-linear controllers, which may be of independent interest.

 In addition to showing that $\tilde{O}_T(\sqrt{T})$ regret is possible when the noise distribution has sufficiently large support, we also show that a certainty equivalence algorithm can achieve a regret rate of $\tilde{O}_T(T^{2/3})$ relative to the best controller from these general classes of baseline controllers for any subgaussian noise distribution. All of the proofs of the regret results in this paper are constructive and provide certainty equivalence algorithms for achieving the guaranteed rates of regret.

\subsection{Related Work}\label{sec:related_work}
RL has been recognized as being a powerful tool in a broad array of applications \citep{silver2016mastering, kiran2021deep, levine2016end}, but there is still a need to better understand RL in the presence of safety constraints. There exists a wide array of definitions of safety in RL, many of which focus on some notion of reachability or stability  \citep{ganai2024iterative, garg2024learning, gu2022review, moldovan2012safe, wachi2018safe, wachi2024survey, yao2024constraint}. However, these notions of safety are less directly related to our problem setting. More related to our problem, there is also a body of literature on algorithms for RL for control with constraints that maintain safety for the entire trajectory  \citep{fulton2018safe, cheng2019end, marvi2021safe, fisac2018general}. These works study different broad definitions of safety in control, which can apply to a wider variety of models and settings than our results. However, the technical contribution of these works focuses specifically on developing safe algorithms, without proving theoretical results about the rates of regret or the optimality of the proposed safe algorithms. 

The LQR problem has many applications despite the simplicity of the problem statement \citep{priess2014solutions, choi1999lqr, shabaani2003application}. There has recently been a large body of work focusing on minimizing regret in the unconstrained LQR setting with unknown dynamics, beginning with \citet{abbasi2011regret} which gave the first algorithm for $\tilde{O}_T(\sqrt{T})$ regret for unconstrained LQR learning. This was followed by many works that study variations of both the infinite and finite time problem including (but not limited to) \citet{dean2018regret, mania2019certainty, mania2020active, simchowitz2018learning,cohen2019learning,wang2021exact,wang2022rate,mania2019certainty,abeille2017thompson,zheng2020non,sun2020finite,khosravi2020nonlinear,sattar2022non,faradonbeh2018input,faradonbeh2017finite,oymak2019non,ye2024online,athrey2024regret,ziemann2024regret,lee2024nonasymptotic}. Certainty Equivalence (CE) algorithms estimate the unknown dynamics and find an optimal policy under the estimated dynamics. Later works on LQR learning showed that CE algorithms are in fact (rate) optimal for the unconstrained learning problem \citep{simchowitz2020naive,faradonbeh2018optimality,mania2019certainty,wang2022rate}. 
There are also some connections between our work and the areas of model predictive control and system identification, but we defer these to the appendix (Appendix \ref{app:related_works}) in the interest of space because the connections to our work are not as strong as the works surveyed in the rest of this subsection.

The two previous works that are most closely related to this paper are \citet{dean2019safely} and \citet{li2021safe}, which both study safety-constrained LQR learning with unknown dynamics. Both works study the regret with respect to the baseline of the best linear controllers of the form $u_t = -Kx_t$ and derive an upper bound of $\tilde{O}_T(T^{2/3})$ on the regret. In \citet{dean2019safely}, they use system level synthesis to develop an algorithm that can safely inject noise into the system to give statistical guarantees on the learning rate.  \citet{li2021safe} provide the first adaptive learning algorithm for constrained LQR learning with unknown dynamics using a CE approach. While their results hold for higher dimensional LQR, our results improve on theirs in two ways. First, we are able to show a regret rate of $\tilde{O}_T(\sqrt{T})$ for some noise distributions, an improvement over their regret rate of $\tilde{O}_T(T^{2/3})$. Second, our regret results are with respect to a significantly stronger and more general baseline. These previous works focused on regret with respect to the best safe linear controller. However, the class of safe linear controllers is a relatively weak class of safe controllers, and the best safe linear controller can be far worse than the best overall safe controller. See Section \ref{sec:baseline} for more discussion on the importance of the choice of baseline. Note that these works allow constraints on both control and positions, while our results focus only on positional constraints. See Section \ref{sec:discussion} for more discussion on control constraints.

This work is the first part of a two part series of papers on safe LQR learning. In this paper, we provide a general framework for studying baselines of non-linear controllers in this problem. The second part of the series \citep{schiffer2025stronger} uses the general framework from this paper to study a specific baseline of non-linear controllers known as the truncated linear controllers. \citet{schiffer2025stronger} shows that the assumptions proposed in this paper do actually hold for a very natural baseline, and therefore our general framework can be applied to that baseline. Using the current paper's framework, \citet{schiffer2025stronger} shows an even stronger result holds for that paper's specific baseline using a more complex algorithm. The additional results in \citet{schiffer2025stronger} required significant extra technical work that would have pushed this paper's already lengthy technical appendix to an inaccessible length, which is why we split the results into a two part series.

\section{Preliminaries}

\subsection{Outline of Preliminaries}

In order to formally state our problem, the preliminaries section is organized as follows. First, in Section \ref{sec:dynamics}, we outline the dynamics of the system and the notation we will use for controllers. In Section \ref{sec:constraints}, we define and motivate the expected-position safety constraints we use to represent safety throughout the paper. In order for it to be possible to learn safely, we also need some initial information. In Section \ref{sec:initial_uncertainty}, we outline the exact assumptions we make on the initial uncertainty. Finally, in Section \ref{sec:problem_statement}, we put everything from the previous sections together with a definition of regret to formally state our problem.

\subsection{Problem Dynamics}\label{sec:dynamics}

Denote the state of the system at time $t$ for $t \in [T]$ as $x_t \in \mathbb{R}$ and the control at time $t$ as $u_t \in \mathbb{R}$. For simplicity, we will assume that the system starts at position $x_0 = 0$. The position at time $t+1$ follows dynamics $x_{t+1} = a^*x_t + b^*u_t + w_t$,
where $a^* \in \mathbb{R}$ and $b^* \in \mathbb{R}$ determine the dynamics and $w_t \stackrel{\text{i.i.d.}}{\sim} \mathcal{D}$ is the noise term drawn from a continuous, mean-$0$ probability distribution $\mathcal{D}$ with cumulative distribution function $F_{\mathcal{D}}$ and variance $\sigma_{\mathcal{D}}^2 = 1$.  We will consider the quadratic cost at time $t$ as $qx_t^2 + ru_t^2$ for $q, r \in \mathbb{R}_{>0}$, and consider the sum of cost over the first $T$ steps. Throughout this paper, we will assume that the dynamics $a^*,b^*$ are unknown, while all other problem parameters are known (e.g. $\mathcal{D}, q,r$, etc.). For simplicity, we will denote the unknown dynamics as $\theta^* = (a^*,b^*)\in\mathbb{R}^2$.

We will also use the following controller notation. Define $H_t = (x_0,u_0,x_1,..., u_{t-1}, x_t)$, and $\mathcal{F}_t = \sigma(H_t)$, the sigma algebra generated by $H_t$. We define a (possibly time-dependent and randomized) controller $C$ such that the control chosen at time $t$ is $u_t = C(H_t)$. Note that any randomness in the controller $C$ must be independent of the noise random variables $\{w_t\}_{t=0}^{T-1}$. Define the $T$-step \textit{cost} of a controller $C$  starting at position $x_0$ under dynamics $\theta$ with noise random variables $W = \{w_t\}_{t=0}^{T-1}$ as
\begin{equation}\label{eq:cost_def}
J(\theta, C, T, x_0, W)  = \frac{1}{T}\left(qx_T^2 +  \sum_{t=0}^{T-1} qx_t^2 + ru_t^2\right),
\end{equation}
\[
 \text{where }  u_t = C(H_t),\;\,  x_{t+1} = ax_t + bu_t + w_t,\;\, w_t \stackrel{i.i.d.}{\sim} \mathcal{D}.
\]
Notice that $J$ outputs an average cost. We will refer to $T \cdot J(\theta, C, T, x_0, W)$ as the \textit{total cost}. We denote $J^*(\theta, C, T, x_0)$ as the expectation of $J(\theta, C, T, x_0, W)$ with respect to only the randomness in $W$. Formally, this means that $J^*(\theta, C, T, x_0)  = \E\left[J(\theta, C, T, x_0, W) \mid \theta, C, T, x_0\right]$ in case any of $\theta$, $C$, $T$, and $x_0$ are random, but in the typical setting when $\theta$, $C$, $T$, and $x_0$ are all deterministic, $J^*(\theta, C, T, x_0)$ will be non-random. For notational simplicity, we also define $J^*(\theta,C,T) = J^*(\theta,C,T,0)$.

\subsection{Constraints}\label{sec:constraints}

Now we will formalize our positional constraints.  Both \citet{dean2019safely} and \citet{li2021safe} formulate their positional constraints as \textit{realized-position constraints} of the form 
\begin{equation}\label{eq:intro_safety_hard}
    D_{\mathrm{L}}^{x} \le x_t \le D_{\mathrm{U}}^{x},
\end{equation}
which must be satisfied with probability $1$ when the dynamics are known. Realized-position constraints that hold with probability $1$ have the easy interpretation that the realized position must never exceed the realized-position boundaries given by the user of the algorithm. However, in the case of unbounded noise distributions (for example Gaussian noise), having the realized position never exceed any compact set with probability $1$ is impossible even with known dynamics. This is because with Gaussian noise, there is always a strictly positive probability that $x_t$ will be outside of the safe region $[D_{\mathrm{L}}^{x}, D_{\mathrm{U}}^{x}]$ for any choice of control $u_{t-1}$. Therefore, in order to allow for unbounded noise distributions, we must relax the requirement of never exceeding the constraints with probability $1$, and instead allow the position $x_t$ to exceed the realized-position boundaries $D_{\mathrm{L}}^{x}$ and $D_{\mathrm{U}}^{x}$ with probability at most $\delta_{\mathrm{traj}}$ throughout the entire trajectory. Using a union bound, one way to achieve this relaxation for $T$ steps is to require that for every $t$,
\begin{equation}\label{eq:Dconvert}
    D_{\mathrm{L}}^{x} - F_{\mathcal{D}}^{-1}\left(\frac{\delta_{\mathrm{traj}}}{2T}\right) \le a^*x_t + b^*u_t \le D_{\mathrm{U}}^{x} - F_{\mathcal{D}}^{-1}\left(1-\frac{\delta_{\mathrm{traj}}}{2T}\right).
\end{equation}
Motivated by this result, we will formulate our problem in terms of \textit{expected-position constraints} of the form
\begin{equation}\label{eq:intro_safety}
    D_{\mathrm{L}}^{\E[x]} \le a^*x_t + b^*u_t \le D_{\mathrm{U}}^{\E[x]}.
\end{equation}
Because $\mathcal{D}$ is mean-$0$, this expected-position constraint has the easy interpretation of constraining the expected position, conditional on the history, at every time point (hence the $\E[x]$ superscript). By constraining the expected position, we are also implicitly constraining the realized position $x_{t}$ to be within the random interval $[D_{\mathrm{L}}^{\E[x]} + w_{t-1}, D_{\mathrm{U}}^{\E[x]} + w_{t-1}]$. Furthermore, if the noise distribution has support  $[-\bar{w}, \bar{w}]$ and $\delta_{\mathrm{traj}} = 0$ (as in \citet{dean2019safely} and \citet{li2021safe}), then realized-position constraints are a special case of the expected-position constraints:
Equation \eqref{eq:intro_safety_hard} with realized-position boundaries $D^{x} := \big(D_{\mathrm{L}}^{x}, D_{\mathrm{U}}^{x}\big)$ is equivalent to Equation \eqref{eq:intro_safety} with expected-position boundaries $D^{\E[x]} := (D^{\E[x]}_{\mathrm{L}},D^{\E[x]}_{\mathrm{U}}) = \big(D_{\mathrm{L}}^{x} + \bar{w}, D_{\mathrm{U}}^{x} - \bar{w}\big)$. 
For unbounded noise, Equation \eqref{eq:intro_safety_hard} is impossible to satisfy with probability $1$, while Equation \eqref{eq:intro_safety} is possible to satisfy and is directly related to the problem of satisfying the realized-position constraints with high probability. Therefore, the constraints in Equation \eqref{eq:intro_safety} can in some sense be thought of as a generalization of the realized-position constraints in Equation \eqref{eq:intro_safety_hard}. To maintain that $0$ is a safe position, we will also require that $D_{\mathrm{L}}^{\E[x]} < 0 < D_{\mathrm{U}}^{\E[x]}$ (see Assumption \ref{problem_specifications}).

In order to satisfy the realized position constraints in Equation \eqref{eq:intro_safety_hard} for all $T$ steps with constant probability, the magnitude of the boundaries must scale with the max position, which scales with the magnitude of the largest realized noise. For an unbounded distribution $\mathcal{D}$, this means that the realized-position boundaries must be a function of $T$ that grows with $T$. Now looking at Equation \eqref{eq:Dconvert}, the implied expected-position constraints include both $D^{x}$ (which may be a function of $T$) and a quantile of the noise distribution (which is explicitly a function of $T$). Therefore, we will allow the expected-position boundary $D^{\E[x]}$ of Equation \eqref{eq:intro_safety} to depend on $T$. However, in the typical feasible safe RL problem we will have expected-position boundaries that are $O_T(1)$. The reason for this is that the expected-position constraints only bound the position in expectation. Therefore, unlike the realized-position boundary which must scale with the maximum position in order to be feasible, the expected-position boundary is feasible as long as it scales with the largest product of position and dynamics estimation error (uncertainty in $\theta$). Under the assumptions in this paper, we will achieve an estimation error that decreases at a rate that is much faster than the rate at which the maximum position grows. Thus, while we allow the expected-position boundaries to be functions of $T$, the reader should generally think of them as not growing with $T$ in a typical problem, and indeed some of our results will explicitly require 
the expected-position boundaries to be $O_T(1)$.

Formally, we define safety as follows. Note that when the boundaries $(D_{\mathrm{L}}^{\E[x]},D_{\mathrm{U}}^{\E[x]})$ are clear in context, we will drop the constraints and simply refer to algorithms that are safe for a specific dynamics $\theta^*$.

\begin{definition}\label{def:safe}
    A series of controls $\{u_t\}_{t=0}^{T-1}$ are \textit{safe} for dynamics $\theta^*$ and boundaries $(D_{\mathrm{L}}^{\E[x]},D_{\mathrm{U}}^{\E[x]})$  if every control satisfies Equation \eqref{eq:intro_safety}. Similarly, a controller $C$ is \textit{safe} for dynamics $\theta^*$ and boundaries $(D_{\mathrm{L}}^{\E[x]},D_{\mathrm{U}}^{\E[x]})$ if the resulting controls $\{C(H_t)\}_{t=0}^{T-1}$ under true dynamics $\theta^*$ are safe for dynamics $\theta^*$.
\end{definition}

\subsection{Initial Uncertainty Assumptions}\label{sec:initial_uncertainty}
Without any prior knowledge about the unknown dynamics $\theta^*$, it is impossible to choose a first action that is guaranteed to be safe for all $\theta^* \in \mathbb{R}^{2}$. Therefore, to learn anything about the unknown dynamics while maintaining safety, we require some initial information about the unknown dynamics. Before getting into our main results, we will therefore formalize our assumptions about the initial uncertainty in our problem. As is standard in previous works \citep{abbasi2011regret, li2021safe}, we will assume the following:
\begin{assum}\label{assum_init}
    The algorithm has access to some $\Theta = \Theta_a \times \Theta_b =  [\underline{a}, \bar{a}] \times [\underline{b}, \bar{b}]$ such that $\theta^* \in \Theta$ and $\bar{b} \ge \underline{b} > 0$ and $\bar{a} \ge \underline{a} > 0$.
\end{assum}
$\Theta$ can be thought of as the initial uncertainty set for $\theta^*$. Define the size of such a set $\Theta$ as $\mathrm{size}(\Theta) = \max(\bar{a} - \underline{a}, \bar{b} - \underline{b})$. Note that depending on the size of $\Theta$, maintaining safety with respect to the expected-position boundaries for any $\theta^* \in \Theta$ may be infeasible. Infeasible in our setting means that there does not exist any adaptive controller $C$ such that for all $\theta^* \in \Theta$, the controller is safe with high probability, i.e. $\P\left(\forall t < T : D_{\mathrm{L}}^{\E[x]} \le a^*x_t + b^*C(H_t) \le D_{\mathrm{U}}^{\E[x]} \right) \ge 1-\delta$. Clearly feasibility of $\Theta$ (for some appropriate choice of $\delta$) is a necessary condition for our problem to have a solution. The assumptions we make are only slightly stronger than just feasibility, which we discuss further in Appendix \ref{app:infeasibility}. As described in Section \ref{sec:related_work}, many previous works have developed algorithms that maintain guaranteed safety, but to the best of our knowledge the exact amount of prior information needed has not been quantified. 

The assumption that $a^*,b^* > 0$ is for algebraic convenience, and the same results can be shown for any constant $a^*,b^* \in \mathbb{R}$. The assumption that $\underline{a}, \underline{b} > 0$ can actually be removed given the next assumption, and we discuss this more in Appendix \ref{app:relax_assum_init}. 

The other main assumption about prior information that we make is that we have sufficient information to not violate the safety constraint for some initial period of the algorithm.

\begin{assum}\label{assum:initial}
    There is a known controller $C^{\mathrm{init}}$ such that $\forall x \in \left[ D_{\mathrm{L}}^{\E[x]} + F_{\mathcal{D}}^{-1}(\frac{1}{T^4}), D_{\mathrm{U}}^{\E[x]} + F_{\mathcal{D}}^{-1}(1-\frac{1}{T^4})\right] $,
    \begin{equation}\label{eq:assum_initial}
        D_{\mathrm{L}}^{\E[x]} + \frac{b^*}{\log(T)} \le  a^*x + b^*C^{\mathrm{init}}(x) \le D_{\mathrm{U}}^{\E[x]} - \frac{b^*}{\log(T)}.
    \end{equation}
\end{assum}

To get a sense of how strong Assumption \ref{assum:initial} is, note that if we ignore the vanishing log terms in Equation \eqref{eq:assum_initial}, then Assumption \ref{assum:initial} is equivalent to assuming that we can identify any safe controller. If this is not the case, then safe learning is clearly impossible. We further discuss Assumption \ref{assum:initial} and how it relates to the concept of feasibility in Appendix \ref{app:infeasibility}. In Appendix \ref{app:assum_initial}, we also provide further interpretation of Assumption \ref{assum:initial} in the case of bounded noise.

\subsection{Problem Statement}\label{sec:problem_statement}

We define $\mathcal{C}^{\theta^*}$ as a baseline class of controllers if every controller  $C \in \mathcal{C}^{\theta^*}$ is safe with respect to dynamics $\theta^*$ with probability $1$. \textbf{If $\theta^*$ were known}, then the safe LQR problem with $\mathcal{C}^{\theta^*}$ as the baseline would simply be to minimize the expected total cost for all controllers in this baseline, i.e. to solve
\begin{equation}\label{eq:fullformula}
\begin{array}{ll@{}ll}
\displaystyle\min_{C \in \mathcal{C}^{\theta^*}}  & T \cdot J^*(\theta^*, C, T).
\end{array}
\end{equation}
We will use the expression in Equation \eqref{eq:fullformula} as the baseline cost to which we compare the cost of our algorithms. We will often consider families of controller classes $\{\mathcal{C}^{\theta}\}_{\theta \in \Theta}$ such that for any dynamics $\theta$, every controller in the class $\mathcal{C}^{\theta}$ is safe for dynamics $\theta$ with probability $1$. For example, the baseline class $\mathcal{C}^{\theta}$ could be the class of linear controllers that are safe for dynamics $\theta$, the class of affine controllers that are safe for dynamics $\theta$, all controllers that are safe for dynamics $\theta$, etc.

The \textit{regret} of an algorithm with corresponding controller $C_{\text{alg}}$ with respect to baseline $\mathcal{C}^{\theta^*}$ is the random variable
\begin{equation}\label{eq:regret_rv}
  \text{Regret } := T \cdot  J(\theta, C_{\text{alg}}, T, 0, W) - \min_{C \in \mathcal{C}^{\theta^*}}  T \cdot J^*(\theta^*, C, T).
\end{equation}
Note that this regret random variable compares the realized cost of the algorithm with the expected cost of a controller from the baseline class, and this definition of regret is typical in the LQR learning literature \citep{abbasi2011regret,li2021safe}. We also could have defined regret comparing the realized cost of an algorithm to the realized cost of the best (in expectation) controller from the baseline class. Due to standard concentration inequalities, the realized total cost of the baseline controller will be within $\tilde{O}(\sqrt{T})$ of the expected total cost of the baseline controller. Therefore, considering a realized total cost for both terms in the regret would change our regret bounds by at most $\tilde{O}(\sqrt{T})$ and therefore not change any of the results.

The overarching goal of this paper is to find a controller $C_{\mathrm{alg}}$ that achieves low regret as defined in Equation \eqref{eq:regret_rv} and such that for any true dynamics $\theta^* \in \Theta$, the controller $C^{\mathrm{alg}}$ is safe for $\theta^*$ with probability $1-o_T(1/T)$. Note that we only require that the algorithm $C_{\text{alg}}$ is safe with probability $1-o_T(1/T)$, while we require the baseline to be safe with probability $1$. This (slightly unfair) mismatch is necessary to allow the algorithm to use information ``learned" from historical observations when trying to satisfy the safety constraints.  For example, if $\mathcal{D}$ is an unbounded distribution, then it is impossible to conclude anything with probability $1$ based on any amount of historical information. We want to allow our algorithm to use information about $\theta^*$ learned from previous time steps to choose better future safe controls, and therefore we only require safety with respect to $\theta^*$ with probability $1-o_T(1/T)$. We chose $1-o_T(1/T)$ for the safety probability because this is strictly stronger than $1-o_T(1)$ or $1-\delta$ for constant $\delta > 0$, and therefore our results hold for these larger probabilities of satisfying safety as well. In principle, we could also compare to a baseline that allows some probability of error. However, because the baseline does not need to learn $\theta^*$, allowing it to be safe with probability slightly less than 1 would not significantly impact its cost, while it would significantly increase the mathematical complexity of the analysis.

Finally, we will make the following assumptions about the problem specifications throughout this paper.

\begin{assum}[Problem Specifications]\label{problem_specifications}
    The noise distribution $\mathcal{D}$ is mean-$0$, variance $1$, and subgaussian with bounded density. The boundaries $D^{\E[x]}_{\mathrm{L}}, D^{\E[x]}_{\mathrm{U}}$ (which may be functions of $T$) satisfy that $-\log^2(T) \le D_{L}^{\E[x]} < 0 < D_{U}^{\E[x]} \le \log^2(T)$ and that $D^{\E[x]}_{\mathrm{U}} - D_{\mathrm{L}}^{\E[x]} \ge \frac{1}{\log(T)}$. 
\end{assum}
For exposition purposes, we also assume that $\log_2(T^{1/12})$ is an integer. The assumption of variance $1$ gives a simpler uncertainty bound, but as in \citet{abbasi2011regret} this can be relaxed. We assume that $\max(|D_{\mathrm{L}}^{\E[x]}|, D_{\mathrm{U}}^{\E[x]}) \le \log^2(T)$ because if the constraints are greater than $\log^2(T)$, then the constraints have very little impact on the optimal controller. This is because with subgaussian noise, with high probability the noise random variables have magnitude less than $o(\log(T))$, and so reasonable controllers will with high probability never hit the constraint. Therefore, if both boundaries are greater than $\log^2(T)$ then the problem becomes similar to the unconstrained problem, and if one  boundary is large, then the problem becomes one sided which is an easier version of our problem. The assumption of mean-$0$ and subgaussian noise is also standard in the LQR literature \citep{abbasi2011regret,dean2019safely, li2021safe}.

Putting everything together, the formal problem we are considering is the following.

\begin{problem}[Safe LQR Learning]\label{problem_main}
    Suppose we are given $D, \mathcal{D}, \Theta, T$ that satisfy Assumption \ref{assum_init}--\ref{problem_specifications} and a set of baseline classes of controllers $\{\mathcal{C}^\theta\}_{\theta \in \Theta}$. Then the goal of safe LQR learning is to find an algorithm $C^{\mathrm{alg}}$ that achieves a regret with respect to baseline $\mathcal{C}^{\theta^*}$ that is as low as possible, while also satisfying $\sup_{\theta \in \Theta} \P\left(C^{\mathrm{alg}} \text{ is safe with respect to $\theta$}\right) = 1-o_T(1/T)$.
\end{problem}
Note that $\sup_{\theta \in \Theta} \P\left(C^{\mathrm{alg}} \text{ is safe with respect to $\theta$}\right) = 1-o_T(1/T)$ is equivalent to requiring that there exists some probability $p = 1 - o_T(1/T)$ such that for any $\theta \in \Theta$, if the true dynamics $\theta^* = \theta$ then the controls used by $C^{\mathrm{alg}}$ are safe with respect $\theta^*$ with probability $p$.

\subsection{Notation}

To simplify notation, we use $\theta = (a,b)$ to represent an arbitrary set of dynamics and $\theta^* = (a^*,b^*)$ to represent the true (unknown) dynamics. We will also use $D:= (D_{\mathrm{L}}, D_{\mathrm{U}}) := (D_{\mathrm{L}}^{\E[x]}, D_{\mathrm{U}}^{\E[x]})$ (i.e., drop the superscripts).
We will use $\tilde{O}_T$ and $O_T$ notation to represent $\tilde{O}$ and $O$ with respect to $T$, where the values of the hidden constants and $\log$ terms may depend on the values of problem inputs such as $q,r, D, \mathcal{D}, \Theta$. Because the nature of our problem requires us to define a significant amount of notation in this paper, we have a table in Appendix \ref{app:notation} that lists the common notation used throughout the paper that the reader can use as a reference if needed.

\section{Theoretical Results}
The goal of this paper is to provide a general framework for studying the regret with respect to non-linear baselines of controllers. We first introduce a general class of baselines satisfying regularity conditions in Section \ref{sec:baseline}. We then present our two main theorems in Section \ref{sec:baseline_theorems}.

\subsection{Regret Rates for General Baselines}\label{sec:baseline}
In order to present our main theorem, we first need a baseline class of controllers $\mathcal{C}^{\theta^*}$ to define the regret in Equation~\eqref{eq:regret_rv}. In both \citet{li2021safe} and \citet{dean2019safely}, the regret baseline for the $\tilde{O}_T(T^{2/3})$ results is the cost of the best stationary linear controller of the form $u_t = -Kx_t$ that is safe for $\theta^*$ with probability $1$. We will refer to the class of stationary linear controllers that are safe for $\theta^*$ with probability $1$ as the class of safe linear controllers. Since not all linear controllers are safe for dynamics $\theta^*$, this is restricted to $K$ that will maintain safety for $\theta^*$ for any realization of the noise, and therefore can be a very weak baseline. Linear controllers are not always well-suited for safety constrained LQR because linear controllers only have one degree of freedom ($K$), but in safety constrained LQR the controller must balance keeping regret low with being safe. For example, when $D_{\mathrm{U}}$ and $D_{\mathrm{L}}$ are not symmetric, the best linear controller must still behave symmetrically. However, symmetric behavior may be far from optimal for $D_{\mathrm{U}}$ and $D_{\mathrm{L}}$ that are not symmetric, and linear controllers lack the flexibility to behave non-symmetrically. Therefore, there exist much stronger baselines than the safe linear controllers studied in \citet{li2021safe,dean2019safely}. 

In Section \ref{sec:baseline_theorems}, we present two results that hold for a wide range of stronger baseline classes of controllers. Before stating the theorems, we will outline a few assumptions on the controllers in these general baseline classes. 

Let $\{\mathcal{C}^{\theta}\}_{\theta \in \Theta}$ be the set of baseline classes of controllers for dynamics $\theta \in \Theta$. For the rest of this paper, we will assume that the baseline class of controllers satisfies Assumption \ref{assum:baseline_cont}. 
\begin{assum}\label{assum:baseline_cont}
    All of the controllers in the baseline class $\mathcal{C}^{\theta}$ for all $\theta \in \Theta$ are stationary, Markovian,  deterministic, and safe for dynamics $\theta$ with probability $1$.
\end{assum}

Note that the assumption that every controller in $\mathcal{C}^{\theta}$ is safe for dynamics $\theta$ with probability $1$ is consistent with the baselines of \citet{li2021safe} and \citet{dean2019safely}. Additionally, this means that the baseline class of controllers could change depending on the dynamics $\theta$, as the class of controllers that is safe for one dynamics will not necessarily be safe for a different dynamics. One option is to construct the baseline class from another class of controllers $\tilde{\mathcal{C}}$ (for example the class of linear controllers), as follows:
\begin{equation}\label{eq:safe_baseline}
    \{C \in \tilde{\mathcal{C}} : \text{$C$ is safe for dynamics $\theta$}\}.
\end{equation}
If $\tilde{\mathcal{C}}$ is a rich enough class of controllers (e.g. all controllers), then Equation \eqref{eq:safe_baseline} would result in a good safe baseline. However, if $\tilde{\mathcal{C}}$ is a relatively small class of controllers (e.g. linear controllers), then the restriction in Equation \eqref{eq:safe_baseline} to only controllers in the class that are safe for $\theta$ may result in a weak safe baseline. Therefore, instead of simply subsetting the class of controllers $\tilde{\mathcal{C}}$ as in Equation \eqref{eq:safe_baseline}, we will preserve the complexity of the function class $\tilde{\mathcal{C}}$ by transforming \emph{every} controller in $\tilde{\mathcal{C}}$ into a controller that is safe for $\theta$. We generalize even further by allowing the starting class of controllers $\tilde{\mathcal{C}}^\theta$ to be different for each $\theta$. 
\begin{assum}[Truncation]\label{weak_depend}
     For any $\theta$, there exists a controller class $\tilde{\mathcal{C}}^\theta$  of deterministic controllers such that the baseline class $\mathcal{C}^{\theta}$ consists of all controllers of the following form for $C \in \tilde{\mathcal{C}}^\theta$:
    \begin{equation}\label{eq:weak_depend}
    C^{\theta}(x)=
        \begin{cases}
            C(x) & \text{if } D_{\mathrm{L}} \le ax + bC(x) \le D_{\mathrm{U}}\\
            \frac{D_{\mathrm{U}}-ax}{b} & \text{if } ax + bC(x) > D_{\mathrm{U}} \\
            \frac{D_{\mathrm{L}} - ax}{b} & \text{if } ax + bC(x) < D_{\mathrm{L}}.
        \end{cases}
    \end{equation}
\end{assum}

By this construction, every controller $C^\theta \in \mathcal{C}^{\theta}$ is safe for dynamics $\theta$. We will also assume that $\mathcal{C}^{\theta}$ is parameterizable by a scalar parameter $K \in \mathbb{R}$. This allows us to choose the optimal controller in $\mathcal{C}^{\theta}$ in terms of the parameter $K$.

\begin{assum}[Parametrization]\label{safety_assum}
For any $\theta$, there exists $K_{\mathrm{L}}^\theta,K_{\mathrm{U}}^\theta \in \mathbb{R}$ such that the $\mathcal{C}^\theta$ in Assumption \ref{weak_depend} can be parameterized as $\mathcal{C}^\theta = \{C^\theta_K : K \in [K_{\mathrm{L}}^\theta, K_{\mathrm{U}}^\theta]\}$. Furthermore, for any $\theta$, $T$ there exists a $K_{\mathrm{opt}}(\theta,T)$ satisfying
\[
     K_{\mathrm{opt}}(\theta,T) = \arg\min_{K \in [K_{\mathrm{L}}^\theta, K_{\mathrm{U}}^\theta]} J^*(\theta, C^{\theta}_K, T).
\]

\end{assum}

Our results require two more key assumptions on the class of controllers.

\begin{assum}[Average Cost Lipschitz in Optimal Controller]\label{parameterization_assum2}
    There exists $\epsilon_{\mathrm{A}\ref{parameterization_assum2}} = \tilde{\Omega}_T(1)$ such that for any $\norm{\theta - \theta^*}_{\infty}  \le \epsilon_{\mathrm{A}\ref{parameterization_assum2}}$ and $t \le T$,
    \[
        |J^*(\theta^*,C_{K_{\mathrm{opt}}(\theta, t)}^\theta,t) - J^*(\theta^*,C_{K_{\mathrm{opt}}(\theta^*, t)}^{\theta^*},t)| \le \tilde{O}_T\left(\norm{\theta - \theta^*}_{\infty} + \frac{1}{T^2}\right).
    \]
    
\end{assum}

Assumption \ref{parameterization_assum2} relates the expected cost under dynamics $\theta^*$ of the optimal controller for dynamics $\theta^*$ to the expected cost of the optimal controller for some other dynamics $\theta$ close to $\theta^*$. Intuitively, this is a form of Lipschitz continuity which implies that the performance of the optimal controller is not too sensitive to the choice of $\theta$. Some sort of continuity assumption is intuitively required for any form of certainty equivalence algorithm to achieve low regret guarantees.
        
\begin{assum}[Total Cost Lipschitz in Initial Position]\label{parameterization_assum3}
     There exist $\epsilon_{\mathrm{A}\ref{parameterization_assum3}}, \delta_{\mathrm{A}\ref{parameterization_assum3}} = \tilde{\Omega}_T(1)$ such that for any $\theta$ satisfying $\norm{\theta - \theta^*}_{\infty} \le \epsilon_{\mathrm{A}\ref{parameterization_assum3}}$ the following holds. For $t < T$, let $W' = \{w_i\}_{i=0}^{t-1}$. Then for any $K \in [K_{\mathrm{L}}^\theta, K_{\mathrm{U}}^\theta]$, there exists a set $\mathcal{Y}_{\mathrm{A}\ref{parameterization_assum3}} \in \mathbb{R}^{t}$ that depends only on $C_K^\theta$ such that the following holds. Define $E_{\mathrm{A}\ref{parameterization_assum3}}\left(C_K^\theta, W'\right)$ as the event that $W' \in \mathcal{Y}_{\mathrm{A}\ref{parameterization_assum3}}$. Then $\P(E_{\mathrm{A}\ref{parameterization_assum3}}\left(C_K^\theta, W'\right)) \ge 1-o_T(1/T^{10})$ and for any  $|x|, |y| \le 4\log^2(T)$ such that $|x-y| \le \delta_{\mathrm{A}\ref{parameterization_assum3}}$, conditional on event $E_{\mathrm{A}\ref{parameterization_assum3}}\left(C_K^\theta, W'\right)$,
    \begin{equation}\label{eq:param_assum3}
        \left|t \cdot J(\theta^*,C_{K}^\theta,t,x, W') - t \cdot J(\theta^*,C^{\theta}_{K},t, y, W')\right| \le \tilde{O}_T(|x-y| + \norm{\theta - \theta^*}_{\infty}).
    \end{equation}

\end{assum}

Assumption \ref{parameterization_assum3} relates the random variables of cost when starting at two different positions, $x$ and $y$, but with the same noise random variables $W'$. Intuitively, this implies that making a small non-optimal control will not have significant long-term impact on the total cost. Therefore, this assumption can be thought of as assuming the total cost is Lipschitz in the initial position.

The final assumption we consider is an assumption that the noise distribution has sufficiently large support, which we require for Theorem \ref{sufficiently_large_error} but not for Theorem \ref{performance}. Note that we only need the noise distribution to have large support relative to one of the two boundaries ($D_{\mathrm{U}}$ or $D_{\mathrm{L}}$). We will w.l.o.g. state Assumption \ref{assum_sufficiently_large_error} relative to boundary $D_{\mathrm{U}}$, however an equivalent assumption swapping $D_{\mathrm{L}}$ and $D_{\mathrm{U}}$ would also be sufficient for Theorem \ref{sufficiently_large_error}. 
\begin{assum}\label{assum_sufficiently_large_error}
     For any $z$,  define $P(\theta,K,z)$ as the largest $x$ such that $ ax + bC^{\theta}_{K}(x) \le z$. There exists a constant $\epsilon_{\mathrm{A}\ref{assum_sufficiently_large_error}} > 0$ such that the following equation holds for all $t \ge \sqrt{T}$ for sufficiently large $T$:
     \begin{equation}\label{eq:sufficiently_large_errora}
        \P_{w \sim \mathcal{D}}\left(w \ge P(\theta^*,K_{\mathrm{opt}}(\theta^*, t), D_{\mathrm{U}}) - D_{\mathrm{L}}\right) \ge \epsilon_{\mathrm{A}\ref{assum_sufficiently_large_error}} > 0.
    \end{equation}
\end{assum}
The quantity $P(\theta^*,K_{\mathrm{opt}}(\theta^*, t), D_{\mathrm{U}})$ will often be proportional to and greater than $D_{\mathrm{U}}$. Because $\mathcal{D}$ is constant relative to $T$, Assumption \ref{assum_sufficiently_large_error} implies that the boundary $D$ must satisfy $\norm{D}_{\infty} = O_T(1)$. When $\norm{D}_{\infty} = O_T(1)$, Assumption \ref{assum_sufficiently_large_error} effectively requires that the noise distribution $\mathcal{D}$ has a constant probability of spanning the distance between $D_{\mathrm{L}}$ and $D_{\mathrm{U}}$. Note that Assumption \ref{assum_sufficiently_large_error} is automatically satisfied for any $\norm{D}_{\infty} = O_T(1)$ when the noise distribution is Gaussian, unbounded, or bounded with a high enough variance. This assumption will be necessary to achieve regret of $\tilde{O}_T(\sqrt{T})$ in Theorem \ref{sufficiently_large_error}, as the variance from the noise distribution of Assumption \ref{assum_sufficiently_large_error} provides the controller with extra exploration that leads to better estimation. We will also provide a result for general classes of controllers that does not require this assumption, but achieves a worse regret rate (Theorem \ref{performance}).

\subsection{Theorems}\label{sec:baseline_theorems}
 We are now ready to present our first theorem. 

\begin{thm}\label{sufficiently_large_error}
In the setting of Problem \ref{problem_main} and under further Assumptions \ref{assum:baseline_cont}--\ref{assum_sufficiently_large_error}, there exists an algorithm $C^{\mathrm{alg}}$ (Algorithm \ref{alg:cap_large}) that with probability $1-o_T(1/T)$ achieves  $\tilde{O}_T(\sqrt{T})$ regret with respect to baseline $\mathcal{C}^{\theta^*}$ while also satisfying $\sup_{\theta \in \Theta} \P\left(C^{\mathrm{alg}} \text{ is safe with respect to $\theta$}\right) = 1-o_T(1/T)$.
\end{thm}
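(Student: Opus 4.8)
The plan is to design and analyze a certainty-equivalence (CE) algorithm that interleaves estimation of $\theta^*$ with robustly-safe control, and then to bound its safety and its regret separately. Concretely, I would maintain a least-squares estimate $\hat\theta_t$ of $\theta^*$ together with a confidence set $\Theta_t \ni \theta^*$ (valid with probability $1-o_T(1/T)$ via a self-normalized martingale bound), run the controller $C^{\hat\theta_t}_{K_{\mathrm{opt}}(\hat\theta_t,t)}$ that would be optimal if $\hat\theta_t$ were the truth, but clip each control so that $ax_t+bu_t \in [D_{\mathrm{L}},D_{\mathrm{U}}]$ holds for \emph{every} $\theta=(a,b)\in\Theta_t$. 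Before $\Theta_t$ has shrunk enough, I would fall back on the known safe controller $C^{\mathrm{init}}$ of Assumption \ref{assum:initial}. To keep the analysis of a fixed controller clean, I would organize time into geometrically growing epochs and hold the estimate fixed within each epoch; plausibly the integrality of $\log_2(T^{1/12})$ is what makes these epoch lengths integral.

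For safety, I would condition on $\{\forall t:\theta^*\in\Theta_t\}$, which holds with probability $1-o_T(1/T)$. On this event any control that is safe for all $\theta\in\Theta_t$ is in particular safe for $\theta^*$, so the robust clipping guarantees the expected-position constraint for $\theta^*$; the only thing to check is \emph{feasibility} of the robust clip, i.e. that a control simultaneously safe for all $\theta\in\Theta_t$ exists. This follows from Assumption \ref{assum:initial} during the initial period (the $b^*/\log T$ slack absorbs the width of $\Theta_t$) and thereafter from $\mathrm{size}(\Theta_t)$ shrinking faster than the slack decays. A union bound over the $T$ steps then gives overall safety with probability $1-o_T(1/T)$.

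The key new ingredient is the estimation rate. I would prove that under Assumption \ref{assum_sufficiently_large_error} the clipped nonlinear controller is self-exciting, in the sense that the Gram matrix $V_t=\sum_{s<t}(x_s,u_s)^\top(x_s,u_s)$ satisfies $\lambda_{\min}(V_t)=\tilde{\Omega}_T(t)$, yielding $\norm{\hat\theta_t-\theta^*}_{\infty}=\tilde{O}_T(1/\sqrt t)$. The intuition is that a purely linear law $u=-Kx$ makes the regressor $(x,u)$ collinear and hence cannot separately identify $a^*$ and $b^*$, whereas truncation bends the control law and Assumption \ref{assum_sufficiently_large_error} guarantees a constant probability (over $w_{t-1}$) that the state lands where the control is clipped. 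Each such event rotates the regressor out of the degenerate direction, and I would show that, conditional on the history, it injects $\Omega(1)$ expected mass into the weak eigendirection of $V_t$; a matrix martingale (Freedman-type) argument then lifts this conditional lower bound to the claimed high-probability linear growth. This is exactly the ``free exploration from safety'' phenomenon, and it is the step I expect to be the main obstacle, since the clipping events, the state, and the noise are all coupled through the dynamics, so isolating a clean conditional lower bound on the excitation requires care.

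Finally, I would decompose the realized regret into: (i) the cost of the initial $C^{\mathrm{init}}$ phase, which is $\tilde{O}_T(1)$ per step over $o_T(\sqrt T)$ steps; (ii) the per-step gap between the true-dynamics cost of $C^{\hat\theta_t}_{K_{\mathrm{opt}}(\hat\theta_t,t)}$ and the baseline cost, bounded by Assumption \ref{parameterization_assum2} as $\tilde{O}_T(\norm{\hat\theta_t-\theta^*}_{\infty}+1/T^2)$; (iii) the extra cost from robust clipping and from using $\hat\theta_t$'s control law rather than $\theta^*$'s, handled by a telescoping hybrid argument in which each single-step control difference of size $\tilde{O}_T(\norm{\hat\theta_t-\theta^*}_{\infty})$ is converted, via Assumption \ref{parameterization_assum3}, into a proportional change in the remaining total cost; and (iv) the deviation of realized cost from expected cost, controlled by subgaussian/martingale concentration as $\tilde{O}_T(\sqrt T)$. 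Plugging in the estimation bound, terms (ii) and (iii) sum to $\sum_{t=1}^T\tilde{O}_T(1/\sqrt t)=\tilde{O}_T(\sqrt T)$, and combining with (i) and (iv) gives total regret $\tilde{O}_T(\sqrt T)$ on the same high-probability event. A last union bound over the confidence-set, safety, and concentration events delivers both conclusions with probability $1-o_T(1/T)$.
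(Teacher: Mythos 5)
Your overall architecture is the same as the paper's: an epoch-based certainty-equivalence controller with robust safety clipping against a shrinking confidence set, a ``free exploration from clipping'' argument giving a $\tilde{O}_T(1/\sqrt{t})$ estimation rate, and the four-term regret decomposition (warm-up, CE suboptimality via Assumption \ref{parameterization_assum2}, clipping cost via Assumption \ref{parameterization_assum3}, and concentration). However, there is a genuine gap in the step you yourself flag as the crux. You invoke Assumption \ref{assum_sufficiently_large_error} to claim that, at each step, there is constant probability that ``the state lands where the control is clipped.'' But Assumption \ref{assum_sufficiently_large_error} only guarantees that the noise exceeds $P(\theta^*,K_{\mathrm{opt}}(\theta^*,t),D_{\mathrm{U}})-D_{\mathrm{L}}$ with constant probability, i.e.\ it is stated relative to the clipping threshold of the controller that is optimal for the \emph{true} dynamics. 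Your algorithm runs $C^{\hat\theta_t}_{K_{\mathrm{opt}}(\hat\theta_t,t)}$ with clipping determined by $\hat\theta_t$ and its confidence set, so its clipping threshold is (roughly) $P(\theta,K_{\mathrm{opt}}(\hat\theta_t,t),D_{\mathrm{U}})$ for $\theta$ in the confidence ball. None of Assumptions \ref{assum:baseline_cont}--\ref{assum_sufficiently_large_error} give continuity of $P$ or of $K_{\mathrm{opt}}$ in $\theta$, so even with $\norm{\hat\theta_t-\theta^*}_\infty$ small this threshold can exceed $P(\theta^*,K_{\mathrm{opt}}(\theta^*,t),D_{\mathrm{U}})$, in which case the constant-probability excitation events you need may essentially never occur, and the claims $\lambda_{\min}(V_t)=\tilde\Omega_T(t)$ and hence the $\tilde{O}_T(\sqrt{T})$ regret do not follow.

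The paper closes exactly this hole by \emph{not} using the plain least-squares estimate as the CE parameter: in Algorithm \ref{alg:cap_large}, $\hat\theta_s$ is chosen pessimistically within the confidence ball around the least-squares estimate $\hat\theta_s^{\mathrm{pre}}$ so as to minimize the worst-case clipping threshold (Equation \eqref{eq:choice_of_theta}). Lemma \ref{safe_condition_P} then shows that, on the good event $\theta^*$ lies in the ball, the algorithm's clipping threshold is at most $P(\theta^*,K_{\mathrm{opt}}(\theta^*,T_s),D_{\mathrm{U}})$, which is precisely what lets Assumption \ref{assum_sufficiently_large_error} be applied to conclude $|S_{T_s}|=\Omega_T(T_s)$ clipping events (Lemma \ref{s_t_lowerbound}); combined with the Gram-matrix lower bound restricted to clipping times (Lemma \ref{boundary_uncertainty}), this yields $\epsilon_s=\tilde{O}_T(1/\sqrt{T_s})$. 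To repair your proposal you would need either this pessimistic re-selection of the estimate (or some substitute device, e.g.\ an added continuity assumption on $\theta\mapsto P(\theta,K_{\mathrm{opt}}(\theta,t),D_{\mathrm{U}})$, which the theorem as stated does not grant you). The remaining parts of your outline --- safety via robust clipping and Assumption \ref{assum:initial}, the martingale-based excitation accounting, and the regret bookkeeping --- track the paper's proof and would go through once this step is fixed.
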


The key lemma in proving Theorem \ref{sufficiently_large_error} is a new estimation bound for the unknown system dynamics $\theta^*$ (Lemma \ref{boundary_uncertainty_cont}). Informally, this estimation bound shows that simply by obeying safety constraints, the unknown dynamics can be estimated at a rate of $1/\sqrt{t}$ without injecting any additional randomness into the controller. This faster rate of learning is because in order to be safe, the controller must frequently be non-linear, which in turn helps learn the unknown dynamics. This result of safe behavior leading to faster learning rates may also be of independent interest in other safe RL problems.

The more general result of this paper is Theorem \ref{performance}, which achieves a weaker regret rate of $\tilde{O}_T(T^{2/3})$ but applies for any subgaussian noise distribution (in particular, it drops Assumption~\ref{assum_sufficiently_large_error}).

\begin{thm}\label{performance}
    In the setting of Problem \ref{problem_main} and under further Assumptions \ref{assum:baseline_cont}--\ref{parameterization_assum3}, there exists an algorithm $C^{\mathrm{alg}}$ (Algorithm \ref{alg:cap}) that with probability $1-o_T(1/T)$ achieves  $\tilde{O}_T(T^{2/3})$ regret with respect to baseline $\mathcal{C}^{\theta^*}$ while also satisfying $\sup_{\theta \in \Theta} \P\left(C^{\mathrm{alg}} \text{ is safe with respect to $\theta$}\right) = 1-o_T(1/T)$.
\end{thm}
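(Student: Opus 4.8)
The plan is to give a certainty-equivalence algorithm (Algorithm \ref{alg:cap}) that continuously injects a small, mean-zero exploration perturbation of magnitude $\eta$ into the control and to set $\eta = T^{-1/6}$ at the end to balance exploration cost against estimation-induced suboptimality. Throughout, the algorithm maintains a confidence box $\Theta_t \subseteq \Theta$ from least squares on the regressors $(x_s,u_s)_{s<t}$, such that $\theta^* \in \Theta_t$ for all $t \le T$ with probability $1 - o_T(1/T)$, using a self-normalized martingale tail bound as in \citet{abbasi2011regret} together with the subgaussian assumption of Assumption \ref{problem_specifications} and a union bound over $t$. At each step $t$ the algorithm picks $\hat\theta_t \in \Theta_t$, computes $K_{\mathrm{opt}}(\hat\theta_t, T)$ from Assumption \ref{safety_assum}, and plays the truncated controller $C^{\hat\theta_t}_{K_{\mathrm{opt}}(\hat\theta_t)}$ of Assumption \ref{weak_depend}, but with the truncation carried out \emph{robustly} over all of $\Theta_t$, i.e. clipped so that $D_{\mathrm{L}} \le a x_t + b u_t \le D_{\mathrm{U}}$ holds simultaneously for every $(a,b) \in \Theta_t$, and then adds an exploration term of size at most $\eta$ chosen to stay inside this robustly safe interval. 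Before $\Theta_t$ is informative enough, the algorithm falls back on the known safe controller $C^{\mathrm{init}}$ of Assumption \ref{assum:initial}; the $b^*/\log(T)$ buffer there and the $D_{\mathrm{U}} - D_{\mathrm{L}} \ge 1/\log(T)$ bound of Assumption \ref{problem_specifications} guarantee the robustly safe interval is nonempty, and in fact of width $\gg \eta = T^{-1/6}$ away from the boundaries. Safety is then immediate: on the event $\{\theta^* \in \Theta_t \ \forall t\}$, which has probability $1 - o_T(1/T)$, the robust clipping forces $D_{\mathrm{L}} \le a^* x_t + b^* u_t \le D_{\mathrm{U}}$ at every step and the perturbation never leaves this interval, so $\sup_{\theta \in \Theta}\P(C^{\mathrm{alg}} \text{ safe}) = 1 - o_T(1/T)$ as required.

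Next I would establish the estimation rate $\norm{\hat\theta_t - \theta^*}_\infty = \tilde O_T\big(1/(\eta\sqrt t)\big)$, holding for all $t$ with probability $1 - o_T(1/T)$. The unit-variance process noise supplies $\Theta(1)$ excitation in the state, but because $u_t$ is otherwise a deterministic function of $x_t$ the two regressors are nearly collinear; the role of the injected perturbation is to decorrelate $u_t$ from $x_t$ and supply $\Omega(\eta^2)$ of conditional control variance, which lower-bounds the minimum eigenvalue of the least-squares design matrix by order $\eta^2 t$. Combining this with the self-normalized bound yields the stated rate, so that $\mathrm{size}(\Theta_t) = \tilde O_T(1/(\eta\sqrt t))$.

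For the regret I would decompose the excess total cost into an exploration cost and an estimation-suboptimality cost. For the exploration cost, compare the realized cost to the cost of the \emph{same} non-stationary robustly-truncated controller run with the perturbations turned off: taking the conditional expectation over each independent, symmetric $\pm\eta$ perturbation kills its first-order (linear) contribution, leaving the direct $r u_t^2$ and propagated second-order terms, which sum to $\tilde O_T(\eta^2 T)$; the fluctuation around this mean is controlled by a bounded-differences argument in which flipping a single perturbation shifts the subsequent trajectory's starting position by $O(\eta)$ and hence, by Assumption \ref{parameterization_assum3} (Lipschitz in the initial position), changes the total cost by $O(\eta)$, giving concentration $\tilde O_T(\eta\sqrt T)$, which is lower order. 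For the suboptimality cost I would telescope over time, treating the current realized position at each step as a fresh initial condition (licensed by Assumption \ref{parameterization_assum3}) and charging the per-step gap between playing $K_{\mathrm{opt}}(\hat\theta_t)$ and the baseline's $K_{\mathrm{opt}}(\theta^*)$, which by Assumption \ref{parameterization_assum2} is $\tilde O_T(\norm{\hat\theta_t - \theta^*}_\infty + 1/T^2) = \tilde O_T(1/(\eta\sqrt t))$ and sums to $\tilde O_T(\sqrt T/\eta)$. The cost of the robust (rather than exact-$\theta^*$) truncation and of the controller's non-stationarity is absorbed into this same telescoping since $\mathrm{size}(\Theta_t)\to 0$; a subgaussian martingale bound converts realized cost to conditional expectation at an additive $\tilde O_T(\sqrt T)$; and the initial $C^{\mathrm{init}}$ phase contributes only lower-order terms. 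Adding the two dominant pieces and choosing $\eta = T^{-1/6}$ balances $\eta^2 T$ against $\sqrt T/\eta$, both equal to $T^{2/3}$, yielding the claimed $\tilde O_T(T^{2/3})$ regret.

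The main obstacle is safe exploration: injecting enough perturbation to certify the $1/(\eta\sqrt t)$ estimation rate while never leaving the robustly safe interval, and then tying the cost of this time-varying, perturbed, robustly-truncated controller back to the single stationary baseline $C^{\theta^*}_{K_{\mathrm{opt}}(\theta^*)}$. The delicate point is that safe clipping can suppress the perturbation precisely when the state is near a boundary, threatening the persistent-excitation lower bound; I would handle this by showing the robustly safe interval exceeds width $2\eta$ on all but a controllably small fraction of steps, so a randomized $\pm\eta$ seed retains $\Omega(\eta^2)$ conditional variance on all but negligibly many steps and the one-sided bias introduced by clipping near boundaries contributes only $\tilde O_T(\eta\sqrt T)$. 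The second delicate point is that the linear-in-$\eta$ propagated effect of the perturbations must genuinely cancel: a naive per-step application of Assumption \ref{parameterization_assum3} would only give $O(\eta T) = T^{5/6}$ exploration cost and hence a worse rate, which is why the mean-zero structure and the bounded-differences concentration are essential to pushing the exploration cost down to $\tilde O_T(\eta^2 T)$.
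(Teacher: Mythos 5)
Your high-level architecture (robust clipping over a least-squares confidence box for safety, certainty equivalence for the control, and a balance of exploration cost against estimation-driven suboptimality) is sound, and your safety argument and estimation rate $\tilde O_T(1/(\eta\sqrt t))$ are believable. But the step your rate hinges on --- that continuously injected mean-zero perturbations of size $\eta$ cost only $\tilde O_T(\eta^2 T)$ --- is not justified under the paper's assumptions, and in the worst case it is false. The cancellation of the first-order term requires the expected cost-to-go, as a function of the current state, to have bounded \emph{second} differences at scale $\eta$: for a symmetric $\pm\eta$ perturbation, $\tfrac{1}{2}\left[V(x+b\eta)+V(x-b\eta)\right]-V(x)$ is $O(\eta^2)$ only if $V$ is second-order smooth ($C^{1,1}$/semi-concave). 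Assumptions \ref{parameterization_assum2} and \ref{parameterization_assum3} are only Lipschitz-type conditions, the controllers in $\tilde{\mathcal{C}}^\theta$ are arbitrary deterministic functions, and Assumption \ref{problem_specifications} gives a bounded noise density but no smoothness of that density, so one step of noise convolution does not upgrade Lipschitz to $C^{1,1}$. With only Lipschitz control of $V$, the exploration cost is $\Theta(\eta T)$ in the worst case (your own "naive" bound), and balancing $\eta T$ against $\sqrt T/\eta$ gives $\eta = T^{-1/4}$ and regret $T^{3/4}$, not $T^{2/3}$. Relatedly, your fix for clipping near the boundary addresses the wrong quantity: the robustly safe interval always has width $\Omega_T(1/\log T) \gg \eta$ by Assumption \ref{problem_specifications}, but on steps where the truncation of Assumption \ref{weak_depend} binds --- which can be a constant fraction of all steps, since that is exactly when the nominal control rides the boundary --- the nominal control sits at the \emph{edge} of that interval, so symmetric exploration around it is impossible. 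Biasing the perturbation inward on those steps costs $\Theta(\eta)$ each, i.e.\ $\Theta(\eta T)=\Theta(T^{5/6})$ in total; suppressing exploration there is the viable route, but then your persistent-excitation lower bound must be re-proved over an adaptively chosen subset of times, which your proposal does not do.

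The paper avoids both problems by front-loading all exploration into a warm-up phase of length $T^{2/3}$ with perturbation magnitude $1/\log(T)$, charging its cost at full price $\tilde O_T(T^{2/3})$ with no cancellation argument, and injecting no exploration afterwards: the estimation error then simply stays at $\tilde O_T(T^{-1/3})$, which Assumptions \ref{parameterization_assum2} and \ref{parameterization_assum3} convert --- per epoch, with only $\tilde O_T(1)$ policy switches rather than your $T$ per-step switches --- into $\tilde O_T(T\cdot T^{-1/3}) = \tilde O_T(T^{2/3})$ certainty-equivalence and safety-enforcement cost. If you want to rescue the continuous-exploration scheme, you would need an additional second-order smoothness assumption on the cost-to-go or on the noise density, which would make the result strictly less general than the theorem being proved.
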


Theorem \ref{performance} is an improvement on existing results in that it bounds the regret of constrained LQR learning for any subgaussian noise distribution. See Section \ref{proof_sketch_performance} and Section \ref{proof_sketch_sufficiently_large_error} for the proof sketches of Theorem \ref{performance} and  Theorem \ref{sufficiently_large_error} respectively. Previous works focus on linear controller baselines, and linear controllers have properties that allow for easier regret analysis. Theorems \ref{performance} and \ref{sufficiently_large_error} reduce these ``useful" properties of linear controllers to Assumptions  \ref{parameterization_assum2} and \ref{parameterization_assum3}. Therefore, many classes of non-linear controllers can be constructed as described in this section, and all that needs to be done to show that the result of the theorems hold with such a class of controllers as a baseline is to show that this class of controllers satisfies Assumptions \ref{parameterization_assum2} and \ref{parameterization_assum3}. Both of Assumptions \ref{parameterization_assum2} and \ref{parameterization_assum3} are simply Lipschitz conditions on the cost function (one with respect to the optimal controller and one with respect to the starting position), and therefore are likely to hold for many classes of controllers. In particular, \citet{schiffer2025stronger} shows that both of these assumptions are satisfied for the class of truncated linear controllers, and therefore Theorems \ref{sufficiently_large_error} and \ref{performance} apply for this baseline class of controllers. The properties in Assumptions \ref{parameterization_assum2} and \ref{parameterization_assum3} are the main tools that allow us to analyze the regret of nonlinear general baselines, and therefore these properties may be of independent interest outside of these theorems.

 The algorithms that achieve the regret bounds of Theorems \ref{sufficiently_large_error} and \ref{performance} follow the same general form. We outline the algorithm that achieves Theorem \ref{performance} below in Algorithm \ref{alg:cap_large_intuit}.
\begin{algorithm}
\caption{Outline of Algorithm \ref{alg:cap} for proof of Theorem \ref{performance} }\label{alg:cap_large_intuit}
\begin{algorithmic}[1]
\State Explore for $\tilde{\Theta}_T(T^{2/3})$ steps using controller $C^{\mathrm{init}}$ from Assumption \ref{assum:initial} with random noise.
\For{$s \in [0:\log(T^{1/3})-1]$} \label{line:intuit_start}
\State $\hat{\theta}_s \gets $ regularized least-squares estimate of $\theta^*$ using data seen so far 
\State $\epsilon_s \gets$ high probability bound on $\norm{\theta^* - \hat{\theta}_s}_{\infty}$
\State $C_s^{\mathrm{alg}} \gets$ optimal controller from baseline class for dynamics $\hat{\theta}_s$
\State For next $T^{2/3}2^s$ steps, use controller $C_s^{\mathrm{alg}}$ modified at each step to be safe for all dynamics $\theta$ satisfying $\norm{\theta - \hat{\theta}_s}_{\infty} \le \epsilon_s$ \label{line:intuit_end}
\EndFor
\end{algorithmic}
\end{algorithm}

This algorithm mostly behaves like a standard certainty equivalence algorithm, first calculating the regularized least-squares estimate of $\theta^*$ and then finding the best controller for this estimated dynamics. This algorithm deviates from standard certainty equivalence in the final line, where the algorithm enforces safety by modifying the controller $C_s^{\mathrm{alg}}$. Because $\theta^*$ with high probability satisfies $\norm{\theta^* - \hat{\theta}_s}_{\infty} \le \epsilon_s$, the modification in the final line guarantees safety for dynamics $\theta^*$ with high probability. The bulk of the theoretical work in proving Theorem \ref{alg:cap} is upper bounding the regret contributed by these safety modifications. Theorem \ref{sufficiently_large_error} follows a similar pattern with a slightly more complicated choice of $\hat{\theta}_s$. In the setting of Theorem \ref{sufficiently_large_error}, the large support of the noise distribution leads to the controls used by controller $C_s^{\mathrm{alg}}$ being non-linear by a constant amount for a constant fraction of the steps. This non-linearity allows the algorithm to learn at a faster rate than in Theorem \ref{performance} and results in the lower regret bound of $\tilde{O}_T(\sqrt{T})$. Note also that the length of the exploration period and the number of steps in each round of the loop are chosen differently for Algorithm \ref{alg:cap_large} than for Algorithm \ref{alg:cap}. See the proof sketches in the following section for more details.

\section{Proof Sketches of Main Results}
We will present the proof sketches (and formal proofs) of the main results in \textit{reverse} of the order in which they were stated in the previous section. We present the proofs in this manner because the result of Theorem \ref{performance} is a weaker result in a more general setting. We therefore build off of this proof in the subsequent proof of Theorem \ref{sufficiently_large_error} by strengthening the result of Theorem \ref{performance} in less general settings. 

\subsection{Proof Sketch of Theorem \ref{performance}}\label{proof_sketch_performance}
The full proof of Theorem \ref{performance} can be found in Appendix \ref{app:proof_of_performance}. 

First we state Algorithm~\ref{alg:cap}, which is the algorithm that achieves the guarantee of Theorem~\ref{performance}. But before presenting the algorithm, we need some additional notation. Fix a constant $\lambda > 0$. Define $z_t = (x_t, u_t)^{\top}$ and $V_t = \lambda I + \sum_{i=0}^{t-1} z_iz_i^\top$, where $I$ is the identity matrix. Define $X_t$ as the column vector $(x_1,...,x_{t})^{\top}$ and $Z_t$ as the matrix with rows $z_0^{\top},...,z_{t-1}^{\top}$. Define $B_{t} = \alpha\sqrt{\log(\det(V_t)) + \log(\lambda^2) + 2\log(T^2)} + \sqrt{\lambda}(\bar{a}^2 + \bar{b}^2) = \tilde{O}_T(1)$ where $\alpha$ is the subgaussian parameter of $\mathcal{D}$. The algorithm that achieves the regret bound of Theorem \ref{performance} is given as Algorithm \ref{alg:cap}.

\paragraph{Algorithm \ref{alg:cap} Intuition}
Algorithm \ref{alg:cap} can be broken into two phases: a warm-up exploration phase (Lines \ref{line:warmup_start}--\ref{line:warmup_end}) and a safe certainty equivalence phase (Lines \ref{line:exploit_start}--\ref{line:exploit_end}). In the warm-up phase, the controls are random which allows for sufficient exploration and learning of the unknown dynamics. In the certainty equivalence phase, $\hat{\theta}_s$ is the regularized least-square estimate of $\theta^*$ based on the data seen so far. $\epsilon_s$ is an upper bound on the distance between $\hat{\theta}_s$ and $\theta^*$ that holds with high probability. $C_s^{\mathrm{alg}}$ is the optimal controller from the baseline class for dynamics $\hat{\theta}_s$. Because $C_s^{\mathrm{alg}}$ is not guaranteed to be safe for dynamics $\theta^*$, we calculate $u_t^{\mathrm{safeU}}$ and $u_t^{\mathrm{safeL}}$ which are respectively the largest and smallest possible controls that satisfy the constraints for all dynamics $\theta$ within $\epsilon_s$ distance of $\hat{\theta}_s$ (which will with high probability include $\theta^*$). We then censor the control $C_s^{\mathrm{alg}}(x_t)$ with these two controls to guarantee with high probability that the final chosen control is safe with respect to dynamics $\theta^*$.
In order to show Theorem \ref{performance}, we must show that with probability $1-o_T(1/T)$, Algorithm \ref{alg:cap} is safe with respect to $\theta^*$ and that Algorithm \ref{alg:cap} has $\tilde{O}_T(T^{2/3})$ regret. To show the latter, we will decompose the regret into four main components and consider each separately.

\begin{algorithm}[H]
\caption{Safe LQR for General Baselines}\label{alg:cap}
\begin{algorithmic}[1]
\Require $D, \mathcal{D}, \Theta, C^{\mathrm{init}}, \{\mathcal{C}^{\theta}\}_{\theta \in \Theta}, T, \lambda$
\State{$\nu_T \gets T^{-1/3}$} \label{line:nu}
\For{$t \gets 0$ to $\frac{1}{\nu_T^2}-1$} \Comment{Safe warm-up exploration phase} \label{line:warmup_start}
        \State{$\phi_t \sim \mathrm{Rademacher}(0.5)$}
        \State{Use control $u_t = C^{\mathrm{init}}(x_t) + \frac{\phi_t}{\log(T)}$}
\EndFor\label{line:warmup_end}
\For{$s \gets 0$ to $\log_2(T\nu_T^2) - 1$} \label{line:exploit_start}\Comment{Safe certainty equivalence phase}
    \State{$T_s \gets \frac{2^s}{\nu_T^2}$}
    \State{$\hat{\theta}_s \gets (Z_{{T_s}}^{\top}Z_{{T_s}}+\lambda I)^{-1}Z_{{T_s}}^{\top}X_{{T_s}}$}
    \State{$C_s^{\mathrm{alg}} \gets C^{\hat{\theta}_s}_{K_{\mathrm{opt}}(\hat{\theta}_s, T_s)}$}
    \State{$\epsilon_s \gets B_{T_s}\sqrt{\frac{\max\left(V_{T_s}^{22}, V_{T_s}^{11}\right)}{V_{T_s}^{11}V_{T_s}^{22} - (V_{T_s}^{12})^2}}$}  \label{line:epsilon}
    \For{$t \gets T_s$ to $2T_s - 1$}
        \State{$u_t^{\mathrm{safeU}} \gets \max \left\{u :  \displaystyle\max_{\norm{\theta - \hat{\theta}_s}_{\infty} \le \epsilon_s} ax_t + bu \le D_{\mathrm{U}}\right\}$} \label{line:safeU}
        \State{$u_t^{\mathrm{safeL}} \gets \min\left\{ u: \displaystyle\min_{\norm{\theta - \hat{\theta}_s}_{\infty} \le \epsilon_s} ax_t + bu \ge D_{\mathrm{L}}\right\}$} \label{line:safeL}
        \State{Use control $u_t = \max\left(\min\left(C_s^{\mathrm{alg}}(x_t), u^{\mathrm{safeU}}_t\right),u^{\mathrm{safeL}}_t\right)$} \label{line:safety}
    \EndFor 
\EndFor \label{line:exploit_end}
\end{algorithmic}
\end{algorithm}

\paragraph{Safety of Algorithm \ref{alg:cap}}
We begin with analyzing the safety of Algorithm \ref{alg:cap}. The first loop (warm-up exploration) of Algorithm \ref{alg:cap} is safe with respect to dynamics $\theta^*$ as a result of Assumption \ref{assum:initial}. In the second loop (safe certainty equivalence), the control in Line \ref{line:safety} is chosen to enforce safety relative to all $\theta$ satisfying $\norm{\theta - \hat{\theta}_s}_{\infty} \le \epsilon_s$. By the choice of $\epsilon_s$, the true dynamics $\theta^*$ satisfy $\norm{\theta^* - \hat{\theta}_s}_{\infty} \le \epsilon_s$ for all $s$ with probability $1-o_T(1/T)$ (Lemma \ref{v_to_use}). Therefore, the control applied in Line \ref{line:safety} is safe with respect to $\theta^*$ for all $t$ with probability $1-o_T(1/T)$. Therefore, Algorithm \ref{alg:cap} is safe with respect to $\theta^*$ with probability $1-o_T(1/T)$.

\paragraph{Regret from warm-up period}
The first component of regret ($R_0$) is the cost of the warm-up exploration phase, which is the first $1/\nu_T^2$ steps of the algorithm. Using Assumption \ref{problem_specifications}, we can show that the positions and controls during this phase are with high probability bounded by $\tilde{O}_T(1)$ (Lemma \ref{bounded_approx}). Therefore, the cost during this phase can be bounded by $\tilde{O}_T(1/\nu_T^2)$ (Proposition \ref{warm_up_regret}). Importantly, after this initial exploration phase, $\epsilon_s = \tilde{O}_T(\nu_T)$ with probability $1-o_T(1/T)$ (Lemma \ref{initial_uncertainty}). This is a result of the Rademacher random variables in the warm-up phase.

\paragraph{Regret from certainty equivalence}
The second source of regret ($R_1$) comes from the certainty equivalence aspect of the algorithm. In other words, $R_1$ is the regret from the fact that $K_{\mathrm{opt}}(\hat{\theta}_s, T_s)$ is the optimal controller for dynamics $\hat{\theta}_s$ and not for dynamics $\theta^*$. By Lemma \ref{initial_uncertainty} and Lemma \ref{v_to_use}, with high probability $\norm{\hat{\theta}_s - \theta^*}_{\infty} \le \epsilon_s = \tilde{O}_T(\nu_T)$, so by Assumption \ref{parameterization_assum2} the expected cost of using controller $C_{K_{\mathrm{opt}}(\hat{\theta}_s, T_s)}^{\hat{\theta}_s}$ for $T_s$ steps is at most $\tilde{O}_T(T_s\norm{\hat{\theta}_s - \theta^*}_{\infty} + 1/T)$ more  than the expected cost of using $C_{K_{\mathrm{opt}}(\theta^*, T_s)}^{\theta^*}$ for $T_s$ steps. Using the aforementioned bound comparing $\hat{\theta}_s$ and $\theta^*$, this source of regret can therefore be upper-bounded by $\tilde{O}_T(T\nu_T)$ with probability $1-o_T(1/T)$ (Proposition \ref{non_optimal_controller}). 

\paragraph{Regret from deviation from expectation}
The third source of regret ($R_2$) comes from the fact that we defined regret as the difference between the cost of the algorithm (which is a random variable) and the expected cost of the best controller in the baseline class (which is nonrandom). To bound this regret term, we show that the cost of the algorithm concentrates within $\tilde{O}_T(\sqrt{T})$ of its expectation with probability $1-o_T(1/T)$ (Proposition \ref{r1b_bound}). For this result, we use a variant of McDiarmid's Inequality that applies to high probability events combined with Assumption \ref{parameterization_assum3} (Lemma \ref{concentration_of_cond_exp}).

\paragraph{Regret from enforcing safety}
The final source of regret ($R_3$) is a result of the times the algorithm ``enforces safety" on the controls by sometimes using controls $u_t^{\mathrm{safeU}}$ and $u_t^{\mathrm{safeL}}$. With probability $1-o_T(1/T)$, when the algorithm enforces safety, the chosen $u_t$ differs from $C_s^{\mathrm{alg}}(x_t)$ by $\tilde{O}_T(\epsilon_s)$ (Lemma \ref{offbyepsiloncontrol_propproof}). By Assumption \ref{parameterization_assum3} and Lemma \ref{initial_uncertainty}, the small differences between $C_s^{\mathrm{alg}}(x_t)$ and $u_t$ each increase the cost by at most $\tilde{O}_T(\nu_T)$  with probability $1-o_T(1/T)$. Therefore, the total cost of enforcing safety with these controls is $\tilde{O}_T(\nu_TT)$ with probability $1-o_T(1/T)$ (Proposition \ref{enforcing_safety}). 

\paragraph{Combining Regret Terms}

Putting these four sources of regret together, the total regret can be upper bounded as follows with probability $1-o_T(1/T)$:
{\small
\begin{equation}
    T \cdot J(\theta^*, C_{\text{alg}}, T,0,W) - T \cdot J^*(\theta^*, C^{\theta^*}_{K_{\mathrm{opt}}(\theta^*, T)}, T) \le R_0 + R_1 + R_2 + R_3 = \tilde{O}_T\left(\sqrt{T} + T\nu_T + \frac{1}{\nu^2_T} \right) = \tilde{O}_T(T^{2/3}),
\end{equation}
}
where the last line comes from the fact that $\nu_T = T^{-1/3}$. See Appendix \ref{app:proof_of_performance} and Equation \eqref{eq:all_decomposition} for a formal description of these four sources of regret.

\subsection{Proof Sketch of Theorem \ref{sufficiently_large_error}}\label{proof_sketch_sufficiently_large_error}
The full proof of Theorem \ref{sufficiently_large_error} can be found in Appendix \ref{app:suff_large_noise_case}. 

    \paragraph{Algorithm and Intuition}
    The algorithm that achieves the regret result of Theorem \ref{sufficiently_large_error} is Algorithm \ref{alg:cap_large}, which is very similar to Algorithm \ref{alg:cap}. Rather than restating the entire algorithm here, we defer the full algorithm to the appendix and instead highlight the main differences between Algorithm \ref{alg:cap_large} and Algorithm \ref{alg:cap}. The first modification is that for Algorithm \ref{alg:cap_large} we choose $\nu_T = T^{-1/4}$, which affects the lengths of the exploration and certainty equivalence periods. The second major difference is that we change how $\hat{\theta}_s$ is defined. Recall that in Algorithm \ref{alg:cap}, $\hat{\theta}_s$ is the regularized least-squares estimate of $\theta^*$. For this algorithm we instead denote the regularized least-squares estimate as
    \begin{equation}
    \hat{\theta}^{\text{pre}}_s = (Z_{{T_s}}^{\top}Z_{{T_s}}+\lambda I)^{-1}Z_{{T_s}}^{\top}X_{{T_s}}.
    \end{equation}
    Recall the function $P$ defined in Assumption \ref{assum_sufficiently_large_error}. We choose $\hat{\theta}_s$ as
    \begin{equation}\label{eq:choice_of_theta}
        \displaystyle  \hat{\theta}_s = \argmin_{\norm{\hat{\theta}_s - \hat{\theta}_s^{\text{pre}}}_{\infty} \le \epsilon_s} \min_{\norm{\theta -\hat{\theta}^{\text{pre}}_s}_{\infty} \le \epsilon_s} P(\theta, K_{\mathrm{opt}}(\hat{\theta}_s, T_s), D_{\mathrm{U}}).
    \end{equation}
    The choice of $\hat{\theta}_s$ described above is a technical way of ensuring that $C^{\mathrm{alg}}_s$ does sufficient exploration, which in turn guarantees a faster learning rate of the unknown dynamics. The key difference between the proof of Theorem \ref{sufficiently_large_error} and the proof of Theorem \ref{performance} is a new upper bound on $\epsilon_s$ which is stronger than Lemma \ref{initial_uncertainty}. Instead of $\epsilon_s = \tilde{O}_T(\nu_T)$ with probability $1-o_T(1/T)$, we show that $\epsilon_s = \tilde{O}_T\left(\frac{1}{\sqrt{T_s}}\right)$  with probability $1-o_T(1/T)$ (Lemma \ref{bounded_st}).  Informally, this means that with high probability, the estimated dynamics at time $t$ are at most $\tilde{O}_T\left(\frac{1}{\sqrt{t}}\right)$ different from $\theta^*$, and this is a faster learning rate than in Theorem \ref{performance}. This faster learning rate gives better upper bounds on the regret terms than in Theorem \ref{performance}. 

     \paragraph{Faster Learning Rate}
     Showing the faster learning rate requires two main results. The first result is that the uncertainty $\epsilon_s$ can be upper-bounded by $\tilde{O}_T(1/\sqrt{|S_{T_s}|})$, where $|S_{T_s}|$ is the number of times Algorithm \ref{alg:cap_large} uses the control $u_t^{\mathrm{safeU}}$ before time $T_s$ (Lemma \ref{boundary_uncertainty}). To prove this result, we prove a more general uncertainty bound in Lemma \ref{boundary_uncertainty_cont}. The key insight is that in order to maintain safety, the control  $u_t^{\mathrm{safeU}}$ will with high probability be sufficiently non-linear. This non-linearity combined with the variance in the position leads to a faster convergence rate of the upper bound in Lemma \ref{v_to_use}. The second result is that Algorithm \ref{alg:cap_large} uses the control $u_t^{\mathrm{safeU}}$ at least $\Omega_T(T_s)$ times before time $T_s$ (Lemma \ref{s_t_lowerbound}). The key insight to this result is that every time the position exceeds $P(\theta^*, K_{\mathrm{opt}}(\theta^*, T_s), D_{\mathrm{U}})$, Algorithm \ref{alg:cap_large} will use control $u_t^{\mathrm{safeU}}$. Assumption \ref{assum_sufficiently_large_error} says that the noise is large enough that (due to the choice of $\hat{\theta}_s$ in Equation \eqref{eq:choice_of_theta}) the position will exceed $P(\theta^*, K_{\mathrm{opt}}(\theta^*, T_s), D_{\mathrm{U}})$ in each round with constant probability. This implies that with probability $1-o_T(1/T)$, for every $s$, the control $u_t^{\mathrm{safeU}}$ is used a constant fraction of the times before time $T_s$. Combining these two results, we have for all $s$ that with probability $1-o_T(1/T)$, $\epsilon_s = \tilde{O}_T(1/\sqrt{|S_{T_s}|})$ and $|S_{T_s}| = \Omega_T(T_s)$. Therefore, we can conclude that with probability $1-o_T(1/T)$, we have $\epsilon_s = \tilde{O}_T\left(\frac{1}{\sqrt{T_s}}\right)$.

     \paragraph{Regret Proof Changes}
     Equipped with this tighter upper bound on $\epsilon_s$, we can bound $R_1$ (the regret of using controller $C_{K_{\mathrm{opt}}(\hat{\theta}_s, T_s)}^{\hat{\theta}_s}$ rather than $C_{K_{\mathrm{opt}}(\theta^*, T_s)}^{\theta^*}$) and $R_3$ (the regret of enforcing safety at every time step with controls $u_t^{\mathrm{safeU}}$ and $u_t^{\mathrm{safeL}}$) by $\tilde{O}_T(\sqrt{T})$ (Proposition \ref{non_optimal_controller_suff} and \ref{enforcing_safety_suff}, respectively). Because $\nu_T = T^{-1/4}$, $R_0$ is $\tilde{O}_T(\sqrt{T})$. Therefore, we can conclude as in the proof sketch of Theorem \ref{performance} that the regret of Algorithm \ref{alg:cap_large} is upper-bounded by $R_0 + R_1 + R_2 + R_3 = \tilde{O}_T(\sqrt{T})$.

\section{Discussion}\label{sec:discussion}

In this paper, we have presented new results for the safety-constrained LQR problem. We conclude by discussing some possible extensions of our work and remaining open questions.

While our results focus on positional constraints, we also expect that similar results would hold for algorithms similar to Algorithms \ref{alg:cap} and \ref{alg:cap_large} when there are also constraints on the controls. While we leave the formal derivations of results for control constraints to future work, we provide a brief discussion of how the algorithm and proofs would change. With the addition of control constraints, the algorithms can no longer use $u_t^{\mathrm{safeU}}$ or $u_t^{\mathrm{safeL}}$ as these constraints may not satisfy the control constraints. To address this, we believe that a slight modification to the way the algorithm chooses the controller $C^{\mathrm{alg}}_s$ will allow the algorithms to satisfy both control and position constraints with high probability and achieve the same regret results as in Theorems  \ref{sufficiently_large_error} and \ref{performance}. We propose choosing $C^{\mathrm{alg}}_s = C_K^{\hat{\theta}_s}$, where $K$ is chosen such that it satisfies positional constraints and control constraints $\tilde{\Theta}_T(\epsilon_s)$ tighter than the actual constraints. As long as $\norm{\hat{\theta}_s - \theta^*}_{\infty} \le \tilde{O}_T(\epsilon_s)$, this will guarantee both types of constraints are satisfied. The main additional result that needs to be assumed is that choosing this $C^{\mathrm{alg}}_s$ will not have significantly more regret than in the existing proofs. See Appendix \ref{sec:control_constraints} for more discussion on the generalization of our results to the setting with both position and control constraints.

Our results also focus on one-dimensional LQR, but we expect that many of the same results will generalize to higher dimensions. In higher dimensions, a natural generalization of our constraints is to consider a compact safe region that is defined as the intersection of a finite number of half-planes. Therefore, the goal would be to choose controls such that the expected position stays within this safe region. We expect that the uncertainty bounds proven in this paper will generalize naturally to higher dimensions, as our bounds are based on results in \citet{abbasi2011regret} that hold for higher dimensions. Therefore, we expect that the result of Theorem \ref{performance} will directly generalize to higher dimensions by replacing the controller $C_s^{\mathrm{alg}}$ with $C_K^{\hat{\theta}_s}$ where $K$ is chosen as the optimal control for constraints that are $\tilde{\Theta}_T(\epsilon_s)$ tighter than the true constraints. Whether Theorem \ref{sufficiently_large_error} generalize to higher dimensions is an open question we leave for future work, though in Appendix \ref{sec:highd}, we discuss stylized settings in which we expect that the $\tilde{O}_T(\sqrt{T})$ regret bounds from Theorem \ref{sufficiently_large_error} will generalize to higher dimensions. 

We also note that our algorithms require knowledge of $T$ in advance, as the value of $T$ determines the length of time spent in the warm-up exploration period. We expect that similar results will hold when $T$ is not known in advance, however this would require periods of exponentially growing length that alternate exploration versus exploitation (similar to as done in, e.g. \citet{li2021safe}). Because this greatly increases the complexity of the algorithm and analysis, we state and prove our results for $T$ known in advance. Finally, while we study general baselines that are more powerful than just safe linear controllers, the question of whether we can achieve $\tilde{O}_T(\sqrt{T})$ (or even $\tilde{O}_T(T^{2/3})$) regret on top of the cost of the best possible among \emph{all} safe controllers is still open.

\section*{Acknowledgements}

The authors would like to thank Na Li and Shahriar Talebi for helpful discussions. B.S. and L.J. received funding from NSF grant CBET-2112085 and B.S. received funding from the National Science Foundation Graduate Research Fellowship grant DGE 2140743.

\bibliographystyle{plainnat}

\bibliography{reference_arxiv}

\begin{thebibliography}{59}
\providecommand{\natexlab}[1]{#1}
\providecommand{\url}[1]{\texttt{#1}}
\expandafter\ifx\csname urlstyle\endcsname\relax
  \providecommand{\doi}[1]{doi: #1}\else
  \providecommand{\doi}{doi: \begingroup \urlstyle{rm}\Url}\fi

\bibitem[Abbasi-Yadkori and Szepesv{\'a}ri(2011)]{abbasi2011regret}
Yasin Abbasi-Yadkori and Csaba Szepesv{\'a}ri.
\newblock Regret bounds for the adaptive control of linear quadratic systems.
\newblock In \emph{Proceedings of the 24th Annual Conference on Learning Theory}, pages 1--26. JMLR Workshop and Conference Proceedings, 2011.

\bibitem[Abeille and Lazaric(2017)]{abeille2017thompson}
Marc Abeille and Alessandro Lazaric.
\newblock Thompson sampling for linear-quadratic control problems.
\newblock In \emph{Artificial intelligence and statistics}, pages 1246--1254. PMLR, 2017.

\bibitem[Athrey et~al.(2024)Athrey, Mazhar, Guo, De~Schutter, and Shi]{athrey2024regret}
Archith Athrey, Othmane Mazhar, Meichen Guo, Bart De~Schutter, and Shengling Shi.
\newblock Regret analysis of learning-based linear quadratic gaussian control with additive exploration.
\newblock In \emph{2024 European Control Conference (ECC)}, pages 1795--1801. IEEE, 2024.

\bibitem[Bemporad and Morari(2007)]{bemporad2007robust}
Alberto Bemporad and Manfred Morari.
\newblock Robust model predictive control: A survey.
\newblock In \emph{Robustness in identification and control}, pages 207--226. Springer, 2007.

\bibitem[Bemporad et~al.(2002)Bemporad, Morari, Dua, and Pistikopoulos]{bemporad2002explicit}
Alberto Bemporad, Manfred Morari, Vivek Dua, and Efstratios~N Pistikopoulos.
\newblock The explicit linear quadratic regulator for constrained systems.
\newblock \emph{Automatica}, 38\penalty0 (1):\penalty0 3--20, 2002.

\bibitem[Cheng et~al.(2019)Cheng, Orosz, Murray, and Burdick]{cheng2019end}
Richard Cheng, G{\'a}bor Orosz, Richard~M Murray, and Joel~W Burdick.
\newblock End-to-end safe reinforcement learning through barrier functions for safety-critical continuous control tasks.
\newblock In \emph{Proceedings of the AAAI conference on artificial intelligence}, volume~33, pages 3387--3395, 2019.

\bibitem[Choi and Seo(1999)]{choi1999lqr}
Jae~Weon Choi and Young~Bong Seo.
\newblock Lqr design with eigenstructure assignment capability [and application to aircraft flight control].
\newblock \emph{IEEE Transactions on Aerospace and Electronic Systems}, 35\penalty0 (2):\penalty0 700--708, 1999.

\bibitem[Cohen et~al.(2019)Cohen, Koren, and Mansour]{cohen2019learning}
Alon Cohen, Tomer Koren, and Yishay Mansour.
\newblock Learning linear-quadratic regulators efficiently with only sqrtt regret.
\newblock pages 1300--1309, 2019.

\bibitem[Combes(2015)]{combes2015extension}
Richard Combes.
\newblock An extension of mcdiarmid's inequality.
\newblock \emph{arXiv preprint arXiv:1511.05240}, 2015.

\bibitem[Dean et~al.(2018)Dean, Mania, Matni, Recht, and Tu]{dean2018regret}
Sarah Dean, Horia Mania, Nikolai Matni, Benjamin Recht, and Stephen Tu.
\newblock Regret bounds for robust adaptive control of the linear quadratic regulator.
\newblock \emph{Advances in Neural Information Processing Systems}, 31, 2018.

\bibitem[Dean et~al.(2019)Dean, Tu, Matni, and Recht]{dean2019safely}
Sarah Dean, Stephen Tu, Nikolai Matni, and Benjamin Recht.
\newblock Safely learning to control the constrained linear quadratic regulator.
\newblock In \emph{2019 American Control Conference (ACC)}, pages 5582--5588. IEEE, 2019.

\bibitem[Faradonbeh et~al.(2017)Faradonbeh, Tewari, and Michailidis]{faradonbeh2017finite}
Mohamad Kazem~Shirani Faradonbeh, Ambuj Tewari, and George Michailidis.
\newblock Finite time analysis of optimal adaptive policies for linear-quadratic systems.
\newblock \emph{arXiv preprint arXiv:1711.07230}, 2017.

\bibitem[Faradonbeh et~al.(2018{\natexlab{a}})Faradonbeh, Tewari, and Michailidis]{faradonbeh2018input}
Mohamad Kazem~Shirani Faradonbeh, Ambuj Tewari, and George Michailidis.
\newblock Input perturbations for adaptive regulation and learning.
\newblock \emph{arXiv preprint arXiv:1811.04258}, 2018{\natexlab{a}}.

\bibitem[Faradonbeh et~al.(2018{\natexlab{b}})Faradonbeh, Tewari, and Michailidis]{faradonbeh2018optimality}
Mohamad Kazem~Shirani Faradonbeh, Ambuj Tewari, and George Michailidis.
\newblock On optimality of adaptive linear-quadratic regulators.
\newblock \emph{arXiv preprint arXiv:1806.10749}, 2018{\natexlab{b}}.

\bibitem[Fefferman et~al.(2021)Fefferman, Pegueroles, Rowley, and Weber]{fefferman2021optimal}
Charles Fefferman, Bernat~Guill{\'e}n Pegueroles, Clarence~W Rowley, and Melanie Weber.
\newblock Optimal control with learning on the fly: a toy problem.
\newblock \emph{Revista matem{\'a}tica iberoamericana}, 38\penalty0 (1):\penalty0 175--187, 2021.

\bibitem[Fisac et~al.(2018)Fisac, Akametalu, Zeilinger, Kaynama, Gillula, and Tomlin]{fisac2018general}
Jaime~F Fisac, Anayo~K Akametalu, Melanie~N Zeilinger, Shahab Kaynama, Jeremy Gillula, and Claire~J Tomlin.
\newblock A general safety framework for learning-based control in uncertain robotic systems.
\newblock \emph{IEEE Transactions on Automatic Control}, 64\penalty0 (7):\penalty0 2737--2752, 2018.

\bibitem[Fulton and Platzer(2018)]{fulton2018safe}
Nathan Fulton and Andr{\'e} Platzer.
\newblock Safe reinforcement learning via formal methods: Toward safe control through proof and learning.
\newblock In \emph{Proceedings of the AAAI Conference on Artificial Intelligence}, volume~32, 2018.

\bibitem[Ganai et~al.(2024)Ganai, Gong, Yu, Herbert, and Gao]{ganai2024iterative}
Milan Ganai, Zheng Gong, Chenning Yu, Sylvia Herbert, and Sicun Gao.
\newblock Iterative reachability estimation for safe reinforcement learning.
\newblock \emph{Advances in Neural Information Processing Systems}, 36, 2024.

\bibitem[Garg et~al.(2024)Garg, Zhang, So, Dawson, and Fan]{garg2024learning}
Kunal Garg, Songyuan Zhang, Oswin So, Charles Dawson, and Chuchu Fan.
\newblock Learning safe control for multi-robot systems: Methods, verification, and open challenges.
\newblock \emph{Annual Reviews in Control}, 57:\penalty0 100948, 2024.

\bibitem[Gu et~al.(2022)Gu, Yang, Du, Chen, Walter, Wang, and Knoll]{gu2022review}
Shangding Gu, Long Yang, Yali Du, Guang Chen, Florian Walter, Jun Wang, and Alois Knoll.
\newblock A review of safe reinforcement learning: Methods, theory and applications.
\newblock \emph{arXiv preprint arXiv:2205.10330}, 2022.

\bibitem[Khosravi and Smith(2020)]{khosravi2020nonlinear}
Mohammad Khosravi and Roy~S Smith.
\newblock Nonlinear system identification with prior knowledge on the region of attraction.
\newblock \emph{IEEE Control Systems Letters}, 5\penalty0 (3):\penalty0 1091--1096, 2020.

\bibitem[Kiran et~al.(2021)Kiran, Sobh, Talpaert, Mannion, Al~Sallab, Yogamani, and P{\'e}rez]{kiran2021deep}
B~Ravi Kiran, Ibrahim Sobh, Victor Talpaert, Patrick Mannion, Ahmad~A Al~Sallab, Senthil Yogamani, and Patrick P{\'e}rez.
\newblock Deep reinforcement learning for autonomous driving: A survey.
\newblock \emph{IEEE Transactions on Intelligent Transportation Systems}, 23\penalty0 (6):\penalty0 4909--4926, 2021.

\bibitem[K{\"o}hler et~al.(2019)K{\"o}hler, Andina, Soloperto, M{\"u}ller, and Allg{\"o}wer]{kohler2019linear}
Johannes K{\"o}hler, Elisa Andina, Raffaele Soloperto, Matthias~A M{\"u}ller, and Frank Allg{\"o}wer.
\newblock Linear robust adaptive model predictive control: Computational complexity and conservatism.
\newblock In \emph{2019 IEEE 58th Conference on Decision and Control (CDC)}, pages 1383--1388. IEEE, 2019.

\bibitem[Koller et~al.(2018)Koller, Berkenkamp, Turchetta, and Krause]{koller2018learning}
Torsten Koller, Felix Berkenkamp, Matteo Turchetta, and Andreas Krause.
\newblock Learning-based model predictive control for safe exploration.
\newblock In \emph{2018 IEEE conference on decision and control (CDC)}, pages 6059--6066. IEEE, 2018.

\bibitem[Lee et~al.(2024)Lee, Rantzer, and Matni]{lee2024nonasymptotic}
Bruce Lee, Anders Rantzer, and Nikolai Matni.
\newblock Nonasymptotic regret analysis of adaptive linear quadratic control with model misspecification.
\newblock In \emph{6th Annual Learning for Dynamics \& Control Conference}, pages 980--992. PMLR, 2024.

\bibitem[Levine et~al.(2016)Levine, Finn, Darrell, and Abbeel]{levine2016end}
Sergey Levine, Chelsea Finn, Trevor Darrell, and Pieter Abbeel.
\newblock End-to-end training of deep visuomotor policies.
\newblock \emph{The Journal of Machine Learning Research}, 17\penalty0 (1):\penalty0 1334--1373, 2016.

\bibitem[Li et~al.(2021)Li, Das, Shamma, and Li]{li2021safe}
Yingying Li, Subhro Das, Jeff Shamma, and Na~Li.
\newblock Safe adaptive learning-based control for constrained linear quadratic regulators with regret guarantees.
\newblock \emph{arXiv preprint arXiv:2111.00411}, 2021.

\bibitem[Li et~al.(2023)Li, Zhang, Das, Shamma, and Li]{li2023non}
Yingying Li, Tianpeng Zhang, Subhro Das, Jeff Shamma, and Na~Li.
\newblock Non-asymptotic system identification for linear systems with nonlinear policies.
\newblock \emph{arXiv preprint arXiv:2306.10369}, 2023.

\bibitem[Lillicrap et~al.(2015)Lillicrap, Hunt, Pritzel, Heess, Erez, Tassa, Silver, and Wierstra]{lillicrap2015continuous}
Timothy~P Lillicrap, Jonathan~J Hunt, Alexander Pritzel, Nicolas Heess, Tom Erez, Yuval Tassa, David Silver, and Daan Wierstra.
\newblock Continuous control with deep reinforcement learning.
\newblock \emph{arXiv preprint arXiv:1509.02971}, 2015.

\bibitem[Lorenzen et~al.(2019)Lorenzen, Cannon, and Allg{\"o}wer]{lorenzen2019robust}
Matthias Lorenzen, Mark Cannon, and Frank Allg{\"o}wer.
\newblock Robust mpc with recursive model update.
\newblock \emph{Automatica}, 103:\penalty0 461--471, 2019.

\bibitem[Lu et~al.(2021)Lu, Cannon, and Koksal-Rivet]{lu2021robust}
Xiaonan Lu, Mark Cannon, and Denis Koksal-Rivet.
\newblock Robust adaptive model predictive control: Performance and parameter estimation.
\newblock \emph{International Journal of Robust and Nonlinear Control}, 31\penalty0 (18):\penalty0 8703--8724, 2021.

\bibitem[Mania et~al.(2019)Mania, Tu, and Recht]{mania2019certainty}
Horia Mania, Stephen Tu, and Benjamin Recht.
\newblock Certainty equivalence is efficient for linear quadratic control.
\newblock \emph{Advances in Neural Information Processing Systems}, 32, 2019.

\bibitem[Mania et~al.(2020)Mania, Jordan, and Recht]{mania2020active}
Horia Mania, Michael~I Jordan, and Benjamin Recht.
\newblock Active learning for nonlinear system identification with guarantees.
\newblock \emph{arXiv preprint arXiv:2006.10277}, 2020.

\bibitem[Marvi and Kiumarsi(2021)]{marvi2021safe}
Zahra Marvi and Bahare Kiumarsi.
\newblock Safe reinforcement learning: A control barrier function optimization approach.
\newblock \emph{International Journal of Robust and Nonlinear Control}, 31\penalty0 (6):\penalty0 1923--1940, 2021.

\bibitem[Mesbah(2016)]{mesbah2016stochastic}
Ali Mesbah.
\newblock Stochastic model predictive control: An overview and perspectives for future research.
\newblock \emph{IEEE Control Systems Magazine}, 36\penalty0 (6):\penalty0 30--44, 2016.

\bibitem[Moldovan and Abbeel(2012)]{moldovan2012safe}
Teodor~Mihai Moldovan and Pieter Abbeel.
\newblock Safe exploration in markov decision processes.
\newblock \emph{arXiv preprint arXiv:1205.4810}, 2012.

\bibitem[Muthirayan et~al.(2022)Muthirayan, Yuan, Kalathil, and Khargonekar]{muthirayan2022online}
Deepan Muthirayan, Jianjun Yuan, Dileep Kalathil, and Pramod~P Khargonekar.
\newblock Online learning for predictive control with provable regret guarantees.
\newblock In \emph{2022 IEEE 61st Conference on Decision and Control (CDC)}, pages 6666--6671. IEEE, 2022.

\bibitem[Oldewurtel et~al.(2008)Oldewurtel, Jones, and Morari]{oldewurtel2008tractable}
Frauke Oldewurtel, Colin~N Jones, and Manfred Morari.
\newblock A tractable approximation of chance constrained stochastic mpc based on affine disturbance feedback.
\newblock In \emph{2008 47th IEEE conference on decision and control}, pages 4731--4736. IEEE, 2008.

\bibitem[Oymak and Ozay(2019)]{oymak2019non}
Samet Oymak and Necmiye Ozay.
\newblock Non-asymptotic identification of lti systems from a single trajectory.
\newblock In \emph{2019 American control conference (ACC)}, pages 5655--5661. IEEE, 2019.

\bibitem[Priess et~al.(2014)Priess, Conway, Choi, Popovich, and Radcliffe]{priess2014solutions}
M~Cody Priess, Richard Conway, Jongeun Choi, John~M Popovich, and Clark Radcliffe.
\newblock Solutions to the inverse lqr problem with application to biological systems analysis.
\newblock \emph{IEEE Transactions on control systems technology}, 23\penalty0 (2):\penalty0 770--777, 2014.

\bibitem[Rawlings and Mayne(2009)]{rawlings2012postface}
J.B. Rawlings and D.Q. Mayne.
\newblock \emph{Model Predictive Control: Theory and Design}.
\newblock Nob Hill Pub., 2009.
\newblock ISBN 9780975937709.

\bibitem[Rubio et~al.(2016)Rubio, Seuret, Ariba, and Mannisi]{rubio2016optimal}
Alicia~Arce Rubio, Alexandre Seuret, Yassine Ariba, and Alessio Mannisi.
\newblock Optimal control strategies for load carrying drones.
\newblock \emph{Delays and Networked Control Systems}, pages 183--197, 2016.

\bibitem[Sattar and Oymak(2022)]{sattar2022non}
Yahya Sattar and Samet Oymak.
\newblock Non-asymptotic and accurate learning of nonlinear dynamical systems.
\newblock \emph{The Journal of Machine Learning Research}, 23\penalty0 (1):\penalty0 6248--6296, 2022.

\bibitem[Schiffer and Janson(2025)]{schiffer2025stronger}
Benjamin Schiffer and Lucas Janson.
\newblock Foundations of safe online reinforcement learning in the linear quadratic regulator: $\sqrt{T}$-regret.
\newblock \emph{arXiv prepint arXiv:2504.18657}, 2025.

\bibitem[Shabaani and Jalili-Kharaajoo(2003)]{shabaani2003application}
Karam Shabaani and Mahdi Jalili-Kharaajoo.
\newblock Application of adaptive lqr with repetitive control for ups systems.
\newblock In \emph{Proceedings of 2003 IEEE Conference on Control Applications, 2003. CCA 2003.}, volume~2, pages 1124--1129. IEEE, 2003.

\bibitem[Silver et~al.(2016)Silver, Huang, Maddison, Guez, Sifre, Van Den~Driessche, Schrittwieser, Antonoglou, Panneershelvam, Lanctot, et~al.]{silver2016mastering}
David Silver, Aja Huang, Chris~J Maddison, Arthur Guez, Laurent Sifre, George Van Den~Driessche, Julian Schrittwieser, Ioannis Antonoglou, Veda Panneershelvam, Marc Lanctot, et~al.
\newblock Mastering the game of go with deep neural networks and tree search.
\newblock \emph{nature}, 529\penalty0 (7587):\penalty0 484--489, 2016.

\bibitem[Simchowitz and Foster(2020)]{simchowitz2020naive}
Max Simchowitz and Dylan Foster.
\newblock Naive exploration is optimal for online lqr.
\newblock In \emph{International Conference on Machine Learning}, pages 8937--8948. PMLR, 2020.

\bibitem[Simchowitz et~al.(2018)Simchowitz, Mania, Tu, Jordan, and Recht]{simchowitz2018learning}
Max Simchowitz, Horia Mania, Stephen Tu, Michael~I Jordan, and Benjamin Recht.
\newblock Learning without mixing: Towards a sharp analysis of linear system identification.
\newblock In \emph{Conference On Learning Theory}, pages 439--473. PMLR, 2018.

\bibitem[Sun et~al.(2020)Sun, Oymak, and Fazel]{sun2020finite}
Yue Sun, Samet Oymak, and Maryam Fazel.
\newblock Finite sample system identification: Optimal rates and the role of regularization.
\newblock In \emph{Learning for dynamics and control}, pages 16--25. PMLR, 2020.

\bibitem[Tewari and Murphy(2017)]{tewari2017ads}
Ambuj Tewari and Susan~A Murphy.
\newblock From ads to interventions: Contextual bandits in mobile health.
\newblock \emph{Mobile health: sensors, analytic methods, and applications}, pages 495--517, 2017.

\bibitem[Wachi et~al.(2018)Wachi, Sui, Yue, and Ono]{wachi2018safe}
Akifumi Wachi, Yanan Sui, Yisong Yue, and Masahiro Ono.
\newblock Safe exploration and optimization of constrained mdps using gaussian processes.
\newblock In \emph{Proceedings of the AAAI Conference on Artificial Intelligence}, volume~32, 2018.

\bibitem[Wachi et~al.(2024)Wachi, Shen, and Sui]{wachi2024survey}
Akifumi Wachi, Xun Shen, and Yanan Sui.
\newblock A survey of constraint formulations in safe reinforcement learning.
\newblock \emph{arXiv preprint arXiv:2402.02025}, 2024.

\bibitem[Wang and Janson(2021)]{wang2021exact}
Feicheng Wang and Lucas Janson.
\newblock Exact asymptotics for linear quadratic adaptive control.
\newblock \emph{The Journal of Machine Learning Research}, 22\penalty0 (1):\penalty0 12136--12247, 2021.

\bibitem[Wang and Janson(2022)]{wang2022rate}
Feicheng Wang and Lucas Janson.
\newblock Rate-matching the regret lower-bound in the linear quadratic regulator with unknown dynamics.
\newblock \emph{arXiv preprint arXiv:2202.05799}, 2022.

\bibitem[Yao et~al.(2024)Yao, Liu, Cen, Zhu, Yu, Zhang, and Zhao]{yao2024constraint}
Yihang Yao, Zuxin Liu, Zhepeng Cen, Jiacheng Zhu, Wenhao Yu, Tingnan Zhang, and Ding Zhao.
\newblock Constraint-conditioned policy optimization for versatile safe reinforcement learning.
\newblock \emph{Advances in Neural Information Processing Systems}, 36, 2024.

\bibitem[Ye et~al.(2024)Ye, Chi, Liu, and Gupta]{ye2024online}
Lintao Ye, Ming Chi, Zhi-Wei Liu, and Vijay Gupta.
\newblock Online actuator selection and controller design for linear quadratic regulation with unknown system model.
\newblock \emph{IEEE Transactions on Automatic Control}, 2024.

\bibitem[Zhao and Li(2022)]{zhao2022adaptive}
Zichen Zhao and Qianxiao Li.
\newblock Adaptive sampling methods for learning dynamical systems.
\newblock In \emph{Mathematical and Scientific Machine Learning}, pages 335--350. PMLR, 2022.

\bibitem[Zheng and Li(2020)]{zheng2020non}
Yang Zheng and Na~Li.
\newblock Non-asymptotic identification of linear dynamical systems using multiple trajectories.
\newblock \emph{IEEE Control Systems Letters}, 5\penalty0 (5):\penalty0 1693--1698, 2020.

\bibitem[Ziemann and Sandberg(2024)]{ziemann2024regret}
Ingvar Ziemann and Henrik Sandberg.
\newblock Regret lower bounds for learning linear quadratic gaussian systems.
\newblock \emph{IEEE Transactions on Automatic Control}, 2024.

\end{thebibliography}

\appendix

\section{Notation}\label{app:notation}

\subsection{Big O Notation}
Throughout this paper, we use notation such as $o_T(\cdot)$, $O_T(\cdot)$, $\omega_T(\cdot)$, $\Omega_T(\cdot)$. 
\begin{itemize}
    \item $f(T) = O_T(g(T))$ if there exists $T_0$ and $M \in \mathbb{R}$ such that for $T \ge T_0$, $f(T) \le M\cdot g(T)$.
    \item $f(T) = o_T(g(T))$ if for every constant $\epsilon > 0$ there exists $T_0$ such that  for all $T \ge T_0$,  $f(T) \le \epsilon \cdot g(T)$.
    \item $f(T) = \tilde{O}_T(g(T))$ if there exists $T_0$ and $k, M \in \mathbb{R}$ such that for $T \ge T_0$, $f(T) \le M\cdot g(T) \cdot \log^k(T)$.
\end{itemize}
Note that $\Omega_T, \omega_t,$ and $\tilde{\Omega}_T$ are defined in the same way but with the inequality reversed. While this is standard notation, we want to highlight exactly how we are using this notation in our proofs. First, we note that the subscript $T$ is included to indicate that we will always be using this notation with respect to the variable $T$. Furthermore, we note that the constant $M$ that is ``hidden" by the big-O notation will always be a function of known problem specification parameters, such as $q,r,\Theta, \mathcal{D}, D$. Therefore, if an expression includes an $O_T(1)$ term, this constant does not depend on any other variables in the expression. For example, suppose we state that for all $K$,  $f(K) \le O_T(\sqrt{T})$. Then this means that there exists $T_0$ and $M$ (where $M$ is a function of known problem specification parameters) such that for all $K$ and $T \ge T_0$, $f(K) \le M \cdot \sqrt{T}$. Furthermore, we will use notation such as $f(T) = O_T(\epsilon)$ to mean that there exists $T_0$ and $M$ such that $f(T) \le M \cdot \epsilon$ for $T \ge T_0$, where $M$ does not depend on $\epsilon$ and only depends on the problem specification parameters $\{q,r,\Theta, \mathcal{D}, D\}$. Finally, note that we will use the computer science notation of $O_T()$, in that the functions $f(T)$ and $g(t)$  will always be non-negative.

\subsection{Miscellaneous Notation}
Throughout the proofs, any inequalities or equations involving random variables will represent inequality or equality almost surely unless otherwise stated. Throughout the paper, we will use the notation $\{x_i\}_{i=1}^n$ to represent the unordered but indexed set of $x_1,x_2,...,x_n$.

\subsection{Problem Specifications}
The notation below will be used throughout the appendix, however the variables may depend on the algorithm being studied within a section. For example, the event $E$ is defined slightly differently for each of the two algorithms, and therefore the reader should note which algorithm each section addresses. The notation never changes within a single section.
\begin{itemize}
    \item $q,r$ : coefficients for the cost at time $t$ of $qx_t^2 + ru_t^2$.
    \item $W = \{w_t\}_{t=0}^{T-1}$ : The noise random variables for the $T$-length trajectory.
    \item $\mathcal{D}$ : Distribution of $w_t$ with CDF $F_{\mathcal{D}}$ and pdf upper bound $B_P$
    \item $\Theta = [\underline{a}, \bar{a}] \times [\underline{b}, \bar{b}]$ : Given set of dynamics s.t. $\theta^* \in \Theta$ ($\mathrm{size}(\Theta) = \min(\bar{a} - \underline{a}, \bar{b} - \underline{b})$) 
    \item $\theta^* = (a^*,b^*)$ : The true (unknown) dynamics.
    \item $C^{\mathrm{init}}$ : The initial safe controller satisfying Assumption \ref{assum_init}.
    \item $D = (D_{\mathrm{L}}, D_{\mathrm{U}})$ : the expected-position boundary for the safety constraint.
    \item A set of controls $\{u_t\}$ are safe for dynamics $\{\theta_t\}$ if for all $t$, $D_{\mathrm{L}} \le a_tx_t + b_tu_t \le D_{\mathrm{U}}$.
    \item $H_t = (x_0,u_0,x_1,u_1,...,u_{t-1}, x_t)$ and $\mathcal{F}_t = \sigma(H_t)$.
    \item $J(\theta, C,T,x,W)$ : The random variable cost of using controller $C$ starting at position $x_0 = x$ for $T$ time steps under dynamics $\theta$ with noise random variables $W$.
    \item $J^*(\theta, C,T) = J^*(\theta, C,T,0) = \E[J(\theta,C,T,x,W) \mid \theta, C, T, x]$ and $J^*(\theta, C,T) = J^*(\theta, C,T,0)$.
    \item $J^*(\theta, C) = J^*(\theta, C, 0) = \lim_{T \rightarrow \infty} J^*(\theta, C,T,0)$.
    \item $\mathcal{C}^{\theta} = \{C_K^{\theta}\}_{K \in [K^\theta_{\mathrm{L}}, K^\theta_{\mathrm{U}}]}$ : a class of controllers that are safe for dynamics $\theta$
    \item $K_{\mathrm{opt}}(\theta, T)$ : The $K$ that maximizes $J^*(\theta, C_K^\theta, T, 0)$ for $K \in [K^\theta_{\mathrm{L}}, K^\theta_{\mathrm{U}}]$.
    \item $K_{\mathrm{opt}}(\theta)$ : The $K$ that maximizes $J^*(\theta, C_K^\theta)$ for $K \in [K^\theta_{\mathrm{L}}, K^\theta_{\mathrm{U}}]$.
    \item $C_K^{\mathrm{unc}}$ : The unconstrained linear controller with parameter $K$, i.e. $C_K^{\mathrm{unc}}(x) = -Kx$.
    \item $F_{\mathrm{opt}}(\theta)$ : The $K$ that maximizes $J^*(\theta, C_K^{\mathrm{unc}})$.
\end{itemize}

\subsection{Algorithm Notation}
\begin{itemize}
    \item $\nu_T$ : Algorithm specific parameter that is either $T^{-1/4}$ or $T^{-1/3}$.
    \item $s_e$ : The number of the last round of the safe exploitation phase.
    \item $T_s = \frac{2^s}{\nu_T^2}$ : The length \textbf{and} starting time of round $s$ of the safe exploitation phase. 
    \item $\epsilon_s$ : Uncertainty bound for $\theta^*$ used throughout the algorithm.
    \item $\hat{\theta}_s$ : An estimate of $\theta^*$ that is with high probability within $\epsilon_s$ distance of $\theta^*$
    \item $u_t^{\mathrm{safeU}}$ : Largest $u$ such that $\displaystyle\max_{\norm{\theta - \hat{\theta}_s}_{\infty} \le \epsilon_s} ax_t + bu \le D_{\mathrm{U}}$ 
    \item $u_t^{\mathrm{safeL}}$ : Smallest $u$ such that $\displaystyle\max_{\norm{\theta - \hat{\theta}_s}_{\infty} \le \epsilon_s} ax_t + bu \ge D_{\mathrm{L}}$.
    \item $C_s^{\mathrm{alg}}(x_t)$ : the controller that the algorithm uses in round $s$ of the safe exploitation phase with additional safety modifications, i.e. the algorithm in round $s$ of the safe exploitation phase uses control $u_t = \max\left(\min\left(C_s^{\mathrm{alg}}(x_t), u^{\mathrm{safeU}}_t\right),u^{\mathrm{safeL}}_t\right)$.
    \item $C^{\mathrm{alg}}$ : Controller of the corresponding algorithm as described in the previous point.
    \item $P(\theta, K,z)$ : See Assumption \ref{assum_sufficiently_large_error}.
\end{itemize}

\subsection{Proof Notation}
\begin{itemize}
    \item $ W_s = \{w_i\}_{i=   T_s}^{   T_{s+1}-1}$ : Noise random variables in the round $s$ of safe exploitation phase.
    \item $\left(C_{K^*}^{\theta^*}, \{C_{K^*_s}^{\theta^*}\}_{s=0}^{s_e}\right)$ : The expected cost minimizing set of controllers to use if the controller $C_{K^*}^{\theta^*}$ is used for the first $T_0$ steps and for time $t \ge T_0$, the controller used is $C_{K_s}^{\theta^*}$, where $s = \lfloor \log_2\left(t\nu_T^2\right)\rfloor$. The sequence $(x^*_0,x^*_1,...)$ are the corresponding positions of using these controllers.
    \item $(x'_0,x'_1,...)$ and $(u'_0,u'_1,...)$: Unless otherwise specified, these are the positions and controls of the algorithm being discussed in the current proof.
    \item $(\hat{x}_{T_0}, \hat{x}_{T_0+1},...)$ : Unless otherwise defined in the theorem/lemma statement, $\hat{x}_{   T_0},\hat{x}_{   T_0+1},...$ is the sequence of positions if the control at each time $t \ge T_0$ is $C^{\hat{\theta}_s}_{K_{\mathrm{opt}}(\hat{\theta}_s, T_s)}(x_t)$ for $s = \lfloor\log_2\left(t\nu_T^2\right)\rfloor$ and starting at $\hat{x}_{   T_0} = x'_{   T_0}$.
    \item $ E_{\mathrm{safe}} = \left\{ \forall t < T: D_{\mathrm{L}} \le a^*x'_t + b^*u'_t \le D_{\mathrm{U}}\right\}$ : Event that all controls are safe
    \item $E_1 = \left\{\forall t < T : |w_t| \le \log^2(T) \right\}$ : Event that all noise has magnitude less than $\log^2(T)$ 
    \item $E_0 = \left\{\forall s \le s_e : \norm{\theta^* - \hat{\theta}_s}_{\infty} \le \epsilon_s \right\}$ : Event that all estimates of $\theta^*$ are within $\epsilon_s$ of $\theta^*$.
    \item $E_2 = E_0 \bigcap \left\{ \max_{s \in [0:s_e]} \epsilon_s \le \tilde{O}_T(\nu_T)\right\}$.
    \item $E_2^s = \left\{\norm{\hat{\theta}_{s} - \theta^*}_\infty \le \epsilon_s \le c_T \cdot \nu_T \right\}$, where $c_T$ is the coefficient in the $\tilde{O}_T(\nu_T)$ of the definition of event $E_2$.
    \item $E = E_{\mathrm{safe}} \cap E_1 \cap E_2$
    \item $B_x = \log^3(T)$ : Used throughout the appendix to simplify notation.
\end{itemize}

\section{Additional Related Work}\label{app:related_works}
The constrained LQR problem is closely related to the problem of model predict control (MPC) with constraints. For example, there is a large body of work on robust model predictive control with known dynamics \citep{bemporad2007robust}. This is further extended to MPC with model uncertainties in robust adaptive MPC (RAMPC) in works such as \citet{kohler2019linear, lu2021robust}. There have also been significant work on stochastic MPC with soft constraints, for example \citet{mesbah2016stochastic, oldewurtel2008tractable}, which are closely related to the expected position constraints we use in this paper. In the context of constrained LQR with no noise, \citet{bemporad2002explicit} derive the optimal controller as a piece-wise affine function. In a different MPC setting with deterministic dynamics and noisy observations, \citet{muthirayan2022online} provide an algorithm that also achieves $O(T^{2/3})$ regret. Learning based MPC using an initial safe controller was also studied in \citet{koller2018learning}. MPC results on learning constraints include e.g. \citet{lorenzen2019robust, kohler2019linear}.  While these works provide algorithms to solve constrained optimization problems such as LQR, these works do not compare the asymptotic performance of their results to the optimal algorithm. In contrast, our work studies a similar problem but focuses on algorithmic regret analysis from an RL perspective, comparing our algorithm to some baseline representation of the ``best" algorithm.

The results in this paper are also closely related to general system identification, the idea of being able to (in any way) asymptotically estimate the unknown dynamics. There have been multiple works in this area including \citet{simchowitz2018learning, zhao2022adaptive, mania2020active}. A recent work closely related to the results of this paper is \citet{li2023non}, which describes learning rates for non-linear controllers in a similar setting. The results in \citet{li2023non}, however, require i.i.d. noise excitation in every step, while our uncertainty bounds after the warm-up phase actually require no such excitation. These works are most similar to our work in that our results rely on identifying the system dynamics to a high accuracy. However our focus is not simply on learning the system, but also on achieving provably low regret results. The new uncertainty bounds we use to achieve our results also apply to nonlinear controllers as in \citet{li2023non}, but our uncertainty bounds apply specifically to the setting with safety constraints.

\section{Proof of Theorem \ref{performance}}\label{app:proof_of_performance}

Before proving Theorem \ref{performance}, we extend Definition \ref{def:safe} to account for time-dependent dynamics.
\begin{definition}\label{safe_controls_timedep}
    A control $u_t$ and position $x_t$ are safe for dynamics $\theta_t$ if
    \[
        D_{\mathrm{L}} \le a_tx_t + b_tu_t \le D_{\mathrm{U}}.
    \]
    Similarly, a (possibly time-dependent) controller $C_t$ is safe for $T$ steps for dynamics $\{\theta_t\}$ if when the dynamics at time $t$ is $\theta_t$, the sequence of controls $C_0(H_0), C_1(H_1),...,C_{T-1}(H_{T-1})$ and the resulting positions $x_0,...,x_{T-1}$ are safe for dynamics $\theta_t$ at all times $t$.
\end{definition}

Note that in general, a controller being safe is a random event.

Theorem \ref{performance} makes two claims: the first is that Algorithm \ref{alg:cap} is safe for dynamics $\theta^*$ for all $T$ steps with high probability and the second bounds with high probability the regret of Algorithm \ref{alg:cap}. In Appendix \ref{sec:proofofsafety} we will prove the result about the safety of Algorithm \ref{alg:cap} and in Appendix \ref{sec:regret} we will prove the result about the regret of Algorithm \ref{alg:cap}.
\subsection{Proof of Safety of Algorithm \ref{alg:cap}}\label{sec:proofofsafety}

\begin{lemma}\label{safety_append}
    Under Assumptions \ref{assum_init}--\ref{parameterization_assum3} , Algorithm \ref{alg:cap} is safe for $T$ steps for dynamics $\theta^*$ with probability $1-o_T(1/T^2)$.
\end{lemma}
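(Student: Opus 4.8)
The plan is to prove that Algorithm \ref{alg:cap} is safe for all $T$ steps with probability $1-o_T(1/T^2)$ by splitting the trajectory into the warm-up exploration phase (Lines \ref{line:warmup_start}--\ref{line:warmup_end}) and the safe certainty equivalence phase (Lines \ref{line:exploit_start}--\ref{line:exploit_end}), and bounding the failure probability in each. First I would handle the warm-up phase. During these first $1/\nu_T^2$ steps the control is $u_t = C^{\mathrm{init}}(x_t) + \phi_t/\log(T)$ with $\phi_t$ Rademacher, so $|u_t - C^{\mathrm{init}}(x_t)| \le 1/\log(T)$ and hence $|a^*x_t + b^*u_t - (a^*x_t + b^*C^{\mathrm{init}}(x_t))| \le b^*/\log(T)$. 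Assumption \ref{assum:initial} guarantees that $a^*x + b^*C^{\mathrm{init}}(x)$ stays in $[D_{\mathrm{L}} + b^*/\log(T), D_{\mathrm{U}} - b^*/\log(T)]$ whenever $x$ lies in the window $[D_{\mathrm{L}} + F_{\mathcal{D}}^{-1}(T^{-4}), D_{\mathrm{U}} + F_{\mathcal{D}}^{-1}(1-T^{-4})]$, so the $b^*/\log(T)$ slack is exactly what absorbs the added Rademacher perturbation. The remaining task is to verify that each $x_t$ stays inside that window, which I would do inductively: conditional on the expected position being safe, $x_{t+1} = (a^*x_t + b^*u_t) + w_t$ lands in the window as long as $w_t$ does not exceed $F_{\mathcal{D}}^{-1}(1-T^{-4})$ in magnitude, an event of probability $1-o_T(1/T^2)$ after a union bound over the $O_T(T)$ warm-up steps.

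Next I would handle the safe certainty equivalence phase, which is where the real argument lives. The key structural fact is that in Line \ref{line:safety} the control is censored to lie in $[u_t^{\mathrm{safeL}}, u_t^{\mathrm{safeU}}]$, and by the definitions in Lines \ref{line:safeU}--\ref{line:safeL} these endpoints are chosen precisely so that \emph{every} dynamics $\theta$ with $\norm{\theta - \hat{\theta}_s}_\infty \le \epsilon_s$ satisfies $D_{\mathrm{L}} \le ax_t + bu_t \le D_{\mathrm{U}}$. Thus the deterministic, history-measurable statement is: \emph{if} $\norm{\theta^* - \hat{\theta}_s}_\infty \le \epsilon_s$ for the relevant $s$, then the applied control is safe for $\theta^*$ at time $t$. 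I would therefore reduce safety of the whole second phase to the single event $E_0 = \{\forall s \le s_e: \norm{\theta^* - \hat{\theta}_s}_\infty \le \epsilon_s\}$. This is exactly the content of Lemma \ref{v_to_use}, which (via the $\epsilon_s$ defined in Line \ref{line:epsilon} and $B_{T_s}$) establishes the self-normalized least-squares confidence bound and gives $\P(E_0) \ge 1-o_T(1/T^2)$. One subtlety I would check is that the censoring interval is nonempty, i.e. $u_t^{\mathrm{safeL}} \le u_t^{\mathrm{safeU}}$, so that the $\max/\min$ in Line \ref{line:safety} is well defined and actually lands in the interval; this follows because $D_{\mathrm{L}} < D_{\mathrm{U}}$ together with $b > 0$ for all $\theta \in \Theta$ (Assumption \ref{assum_init}), which ensures the feasible set of controls is a genuine interval.

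Finally I would combine the two phases by a union bound: the warm-up failure event, the event that some warm-up noise is too large, and the complement of $E_0$ each have probability $o_T(1/T^2)$, so their union does too, giving overall safety with probability $1-o_T(1/T^2)$ as claimed. The main obstacle I anticipate is not any single phase in isolation but making the reduction in the second phase fully rigorous: one must argue that on $E_0$ the safety of the applied control is guaranteed \emph{simultaneously} for all $t$ in all rounds $s$, and that the confidence bound $\epsilon_s$ from Lemma \ref{v_to_use} genuinely covers $\theta^*$ uniformly over $s$ rather than merely pointwise. This requires that Lemma \ref{v_to_use} be stated as a uniform-over-time high-probability guarantee (which the $2\log(T^2)$ term inside $B_{T_s}$ and the union over the logarithmically many rounds $s$ are designed to deliver), so that the single good event $E_0$ simultaneously controls every round. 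Everything else — the warm-up induction and the nonemptiness of the censoring interval — is routine once this uniform confidence statement is in hand.
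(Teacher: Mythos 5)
Your warm-up argument and your reduction of the second phase to the confidence event $E_0 = \{\forall s \le s_e : \norm{\theta^* - \hat{\theta}_s}_{\infty} \le \epsilon_s\}$ both match the paper's proof. The genuine gap is in how you dispose of the subtlety you yourself flagged: nonemptiness of the censoring interval $[u_t^{\mathrm{safeL}}, u_t^{\mathrm{safeU}}]$. You claim it follows from $D_{\mathrm{L}} < D_{\mathrm{U}}$ and $b > 0$, but that only shows the set of safe controls for a \emph{single fixed} $\theta$ is a nonempty interval. The controls in Lines \ref{line:safeU}--\ref{line:safeL} are defined by a \emph{robust} constraint over the entire confidence ball $\{\theta : \norm{\theta - \hat{\theta}_s}_{\infty} \le \epsilon_s\}$, and the intersection of the per-$\theta$ safe intervals shrinks by roughly $\epsilon_s(|x_t| + |u|)$ on each side. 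It is therefore nonempty only when $\epsilon_s(|x_t| + |u_t|)$ is small compared to $D_{\mathrm{U}} - D_{\mathrm{L}}$, which Assumption \ref{problem_specifications} allows to be as small as $1/\log(T)$. This is not merely a well-definedness issue but a correctness issue: if $u_t^{\mathrm{safeL}} > u_t^{\mathrm{safeU}}$, then Line \ref{line:safety} outputs $u_t = u_t^{\mathrm{safeL}} > u_t^{\mathrm{safeU}}$, and the robust construction then only guarantees $a^*x_t + b^*u_t \ge D_{\mathrm{L}}$, not the upper constraint, so $E_0$ alone does not imply safety and your union bound over three events is insufficient.

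Closing this gap is exactly the content of the paper's Lemma \ref{lemma:L_less_than_U}, and it requires machinery your proposal never invokes: the event $E_1$ (all $T$ noises bounded by $\log^2(T)$, not just the warm-up ones) together with Lemma \ref{bounded_approx} to keep $|x_t|, |u_t| \le B_x = \log^3(T)$, and the event $E_2$ via Lemma \ref{initial_uncertainty} giving $\epsilon_s = \tilde{O}_T(\nu_T) = o_T(1/\log(T))$, so that the shrinkage $\epsilon_s B_x$ is asymptotically smaller than the gap $D_{\mathrm{U}} - D_{\mathrm{L}} \ge 1/\log(T)$. Moreover the argument is necessarily an induction coupling safety and boundedness: safety of the control at time $t-1$ plus $E_1$ bounds $|x_t|$, which plus $E_2$ gives nonemptiness of the interval at time $t$, which plus $E_0$ gives safety at time $t$ (with the last warm-up control as the base case). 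Without this loop you cannot assert that positions stay bounded, since a single step with an empty censoring interval can push the state outside the region where the robust intervals remain nonempty. Your final union bound must therefore also include $\neg E_1$ and $\neg E_2$, the latter resting on Lemma \ref{initial_uncertainty} and hence on the Rademacher exploration of the warm-up phase.
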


\begin{proof}

We will first analyze the warm-up exploration phase (the first loop in Algorithm \ref{alg:cap} in Lines \ref{line:warmup_start}--\ref{line:warmup_end}). If the control at time $t-1$ was safe for dynamics $\theta^*$ as in Definition \ref{safe_controls_timedep}, then with probability at least $1-O_T(\frac{1}{T^4})$, the next position satisfies 
\[
x_t \in \left[ D_{\mathrm{L}} - F_{\mathcal{D}}^{-1}(1-\frac{1}{T^4}), D_{\mathrm{U}} + F_{\mathcal{D}}^{-1}(1-\frac{1}{T^4})\right].
\]
By Assumption \ref{assum:initial} on the controller $C^{\mathrm{init}}$, $D_{\mathrm{L}} + \frac{b^*}{\log(T)} \le a^*x+b^*C^{\mathrm{init}}(x) \le D_{\mathrm{U}} - \frac{b^*}{\log(T)}$ for all $x \in \left[ D_{\mathrm{L}} - F_{\mathcal{D}}^{-1}(1-\frac{1}{T^4}), D_{\mathrm{U}} + F_{\mathcal{D}}^{-1}(1-\frac{1}{T^4})\right]$. In Lines \ref{line:warmup_start}--\ref{line:warmup_end} of Algorithm \ref{alg:cap} the control is $C^{\mathrm{init}}(x_t) + \frac{\phi_t}{\log(T)}$ and $|\phi_t| = 1$. Therefore, if at time $t-1$ the algorithm's control was safe, then with probability $1-O_T\left(\frac{1}{T^4}\right)$ the control at time $t$ will satisfy $D_{\mathrm{L}} \le a^*x_t + b^*u_t \le D_{\mathrm{U}}$ and be safe. Furthermore, at time $0$, the position is $x_0=0$, therefore the first control is safe. Using this as a base case in a proof by induction with a union bound over all $1/\nu_T^2$ time steps $t$ in this loop, with probability $1-O_T(1/T^3)$, the first $1/\nu_T^2$ steps will be safe for dynamics $\theta^*$.

Now we will analyze the second loop in Algorithm \ref{alg:cap} (Lines \ref{line:exploit_start}--\ref{line:exploit_end}). Define $s_e = \log_2(T\nu_T^2) - 1$. Define the event $E_0$ as
\begin{equation}\label{eq:E0}
    E_0 = \left\{\forall s \le s_e : \norm{\theta^* - \hat{\theta}_s}_{\infty} \le \epsilon_s \right\}.
\end{equation} 
These $\epsilon_s$ are less than the right hand side of the equation in Lemma \ref{v_to_use}, and therefore by Lemma \ref{v_to_use}, under Assumptions \ref{problem_specifications} and \ref{assum_init},
\begin{equation}\label{eq:e0bound}
    \P(E_0) \ge 1-o_T(1/T^2).
\end{equation}
Informally, the next event we define is the combination of event $E_0$ and the event that the $\epsilon_s$ (defined in Line \ref{line:epsilon} of Algorithm \ref{alg:cap}) are decreasing at a sufficiently fast rate, which we will prove in Lemma \ref{initial_uncertainty}. Define
\begin{equation}\label{eq:E2}
    E_2 = E_0 \bigcap \left\{ \max_{s \in [0:s_e]} \epsilon_s \le \tilde{O}_T(\nu_T)\right\}.
\end{equation}
\begin{lemma}\label{initial_uncertainty}
    Under Assumptions \ref{assum_init}--\ref{parameterization_assum3}, with probability $1-o_T(1/T^2)$
    \[
     \max_{s \in [0:s_e]} \epsilon_s \le \tilde{O}_T(\nu_T).
    \]
\end{lemma}
The proof of Lemma \ref{initial_uncertainty} can be found in Appendix \ref{proof:initial_uncertainty}. Combining Lemma \ref{v_to_use} and Lemma \ref{initial_uncertainty} with a union bound gives that
\begin{equation}\label{eq:e2bound}
    \P(E_2) \ge  1-o_T(1/T^2).
\end{equation}
Define the event $E_1$ as 
\begin{equation}\label{eq:E1}
    E_1 = \left\{\forall t < T : |w_t| \le \log^2(T) \right\}.
\end{equation}
By Assumption \ref{problem_specifications}, the noise is sub-Gaussian, and therefore there exists a constant $\alpha$ such that for any $t$ and $x$, $\P(w_t \ge x) \le 2\exp(-x^2/\alpha)$. Taking $x = \log^2(T)$ and a union bound over all $w_t$, we have that 
\begin{equation}\label{eq:e1bound}
    \P(E_1) \ge 1 - \sum_{t=0}^{T-1} 2\exp\left(-\log^4(T)/\alpha\right) = 1 - o_T\left(\frac{1}{T^{\log(T)}}\right).
\end{equation}
We need one last lemma before concluding the proof.
\begin{lemma}\label{lemma:L_less_than_U}
    Under Assumptions \ref{assum_init}--\ref{parameterization_assum3}, conditional on $E_1 \cap E_2$ and for sufficiently large $T$, if $u_{T_0-1}$ is safe for dynamics $\theta^*$, then for all $t \in [T_0, T]$,
    \[
        u_t^{\mathrm{safeL}} \le u_t^{\mathrm{safeU}}.
    \]
\end{lemma}
The proof of Lemma \ref{lemma:L_less_than_U} can be found in Appendix \ref{proof:lemma:L_less_than_U}.

Under event $E_0$, $\hat{\theta}_s$ satisfies $\norm{\theta^* -\hat{\theta}_s}_{\infty} \le \epsilon_s$ for all $s \in [0: s_e]$ (which recall are the $s$ in the second for loop of Algorithm \ref{alg:cap}). Therefore, by the choice of $u_t^{\text{safeU}}$ and $u_t^{\text{safeL}}$ in Lines \ref{line:safeU} and \ref{line:safeL}, it must be the case that $a^*x_t + b^*u_t^{\text{safeU}} \le D_{\mathrm{U}}$ and $a^*x_t + b^*u_t^{\text{safeL}} \ge D_{\mathrm{L}}$. By the choice of $u_t$ in Line \ref{line:safety} of Algorithm \ref{alg:cap}, if $u_t^{\mathrm{safeL}} \le u_t^{\mathrm{safeU}}$  then $u_t^{\mathrm{safeL}} \le u_t \le u_t^{\mathrm{safeU}}$. This implies that
\begin{equation}
    D_{\mathrm{L}} \le a^*x_t + b^*u_t \le D_{\mathrm{U}}.
\end{equation}
Therefore, by Lemma \ref{lemma:L_less_than_U}, under $E_1 \cap E_2 \cap \{\text{$u_{T_0-1}$ is safe for dynamics $\theta^*$}\}$, all controls used in the second for loop (Lines \ref{line:exploit_start}--\ref{line:exploit_end}) in Algorithm \ref{alg:cap} are safe for dynamics $\theta^*$. By a union bound combining Equations \eqref{eq:e2bound} and \eqref{eq:e1bound} and the first paragraph of this proof, we have that 
\[
    \P(E_1 \cap E_2 \cap \{\text{$u_{T_0-1}$ is safe for dynamics $\theta^*$}\}) = 1-o_T(1/T^2).
\]
Because all of the steps in Algorithm \ref{alg:cap} are part of either the first or second loop, and the first loop steps are safe for dynamics $\theta^*$ with probability $1-o_T(1/T^2)$ and the second loop steps are safe for dynamics $\theta^*$ with probability $1-o_T(1/T^2)$, a union bound gives that the overall algorithm is safe for dynamics $\theta^*$ with probability $1-o_T(1/T^2)$.
\end{proof}

\subsection{Proof of Regret Bound of Algorithm \ref{alg:cap}}\label{sec:regret}

\begin{proof}
Define the event $E_{\mathrm{safe}}$ as the event that the controls used by the algorithm are safe at all times. If $x'_0,x'_1,...$ and $u'_0,u'_1,...$ are respectively the positions and controls of the algorithm, we have that
\begin{equation}\label{eq:safeE}
  E_{\mathrm{safe}} = \left\{ \forall t < T: D_{\mathrm{L}} \le a^*x'_t + b^*u'_t \le D_{\mathrm{U}}\right\},
\end{equation}
and by Lemma \ref{safety_append} we have that $\P(E_{\mathrm{safe}}) = 1-o_T(1/T^2)$. 
Now, define the event $E$ as
\begin{equation}\label{eq:E}
    E = E_{\mathrm{safe}} \cap E_1 \cap E_2.
\end{equation}
A union bound combining Equations \eqref{eq:e1bound}  and \eqref{eq:e2bound} gives that
\begin{equation}\label{eq:Ebound}
    \P(E) = \P( E_{\mathrm{safe}} \cap E_1 \cap E_2) \ge 1-o_T(1/T^2).
\end{equation}
The rest of the proof of Theorem \ref{performance} will focus on proving that the regret of Algorithm \ref{alg:cap} is $\tilde{O}_T(T^{2/3})$ with conditional probability at least $1-o_T(1/T)$ given $E$. Let $C^{\mathrm{alg}}$ be the (time-dependent) controller of Algorithm \ref{alg:cap}. Then the total cost of using Algorithm \ref{alg:cap} is $T \cdot J(\theta^*, C^{\mathrm{alg}},T, 0, W)$, and the regret we are trying to bound is (as in Equation \eqref{eq:regret_rv} using the notation $K_{\mathrm{opt}}$ from Assumption \ref{safety_assum}),
\begin{equation}\label{eq:regret_form}
    T\cdot J(\theta^*, C^{\mathrm{alg}},T,0,W) - T \cdot \bar{J}(\theta^*, C^{\theta^*}_{K_{\mathrm{opt}}(\theta^*, T)}, T).
\end{equation}
Define $W_s$ as the noise random variables from time $   T_s$ to $   T_{s+1} - 1$, so
\begin{equation}
    W_s = \{w_i\}_{i=   T_s}^{   T_{s+1}-1}.
\end{equation}
For any tuple $\left(K, \{K_s\}_{0 \le s \le s_e}\right)$ where $K, K_s \in (K_L^{\theta^*}, K_U^{\theta^*})$, define $x_0^{\left(K, \{K_s\}_{0 \le s \le s_e}\right)}, x_1^{\left(K, \{K_s\}_{0 \le s \le s_e}\right)},...$ as the random variable sequence of positions that result from starting at $x_0 = 0$ and using the controller that at each time $t < T_0$ uses controller $C^{\theta^*}_{K}$ and at each time $t \ge T_0$ uses the controller $C_{K_s}^{\theta^*}$, where $s = \lfloor \log_2\left(t\nu_T^2\right)\rfloor$. Define $\left(K^*, \{K^*_s\}_{0 \le s \le s_e}\right)$ as follows:
\begin{align*}
&\left(K^*, \{K_s^*\}_{0 \le s \le s_e}\right) \\
&= \argmin_{\left(K, \{K_s\}_{0 \le s \le s_e}\right)}  \E\left[\frac{1}{\nu_T^2}J\left(\theta^*,C^{\theta^*}_{K}, \frac{1}{\nu_T^2}, 0, \{w_t\}_{t=0}^{T_0-1}\right) + \sum_{s=0}^{s_e} T_sJ(\theta^*,C^{\theta^*}_{K_s}, T_s,  x_{   T_s}^{\left(K, \{K_s\}_{0 \le s \le s_e}\right)}, W_s)\right].
\end{align*}
Here the expectation is taken over both $w_t$ and $W_s$ (and recall that $x_{T_s}$ is a deterministic function of the $w_t$ and $W_s$ because $C_K^\theta$ is non-random for all $K,\theta$). We then define $x_0^*,x_1^*,...$ as the random variable sequence of positions such that $x_t^* = x_t^{\left(K^*, \{K^*_s\}_{0 \le s \le s_e}\right)}$. By construction, we could choose $K, K_s = K_{\mathrm{opt}}(\theta^*, T)$ for every $s$, and therefore it must be the case that
\[
\E\left[\frac{1}{\nu_T^2}J\left(\theta^*,C^{\theta^*}_{K^*}, \frac{1}{\nu_T^2}, 0, \{w_t\}_{t=0}^{T_0-1}\right) + \sum_{s=0}^{s_e} T_sJ(\theta^*,C^{\theta^*}_{K^*_s}, T_s,  x^*_{   T_s}, W_s)\right] \le T \cdot  \bar{J}\left(\theta^*, C^{\theta^*}_{K_{\mathrm{opt}}(\theta^*, T)}, T\right).
\]
Therefore, upper bounding the cost of Algorithm \ref{alg:cap} minus the cost of using $K^*$ for $T_0$ steps and then using the sequence of controllers $\{C^{\theta^*}_{K^*_s}\}$ each for $T_s$ steps is sufficient for upper bounding the regret in Equation \eqref{eq:regret_form}. Now we will bound
\begin{equation}\label{eq:goal_regret}
    T\cdot J(\theta^*, C^{\mathrm{alg}},T, 0, W) - \E\left[\frac{1}{\nu_T^2}J\left(\theta^*,C^{\theta^*}_{K^*}, \frac{1}{\nu_T^2},0,  \{w_t\}_{t=0}^{T_0-1}\right) + \sum_{s=0}^{s_e} T_sJ(\theta^*,C^{\theta^*}_{K^*_s}, T_s,  x^*_{   T_s}, W_s)\right].
\end{equation}
Note that we will upper bound the cost in terms of the parameter $\nu_T = T^{-1/3}$ in Line \ref{line:nu}. In order to bound the quantity in Equation \eqref{eq:goal_regret}, we will break this component of regret into four sources: the regret from the warm-up period (Lines \ref{line:warmup_start}--\ref{line:warmup_end}), the regret from using the estimates $\hat{\theta}_s$ instead of using $\theta^*$, the regret induced by the randomness of the trajectory, and the regret from enforcing safety.

The first source of regret is the regret incurred in the warm-up period of Algorithm \ref{alg:cap} (Lines \ref{line:warmup_start}--\ref{line:warmup_end}). Recall that $C^{\mathrm{alg}}_s$ is the controller used in Algorithm \ref{alg:cap} in the $s$ iteration of the second for loop. We will use Proposition \ref{warm_up_regret} to bound the cost incurred during the warm-up period.

\begin{proposition}[Regret from Warm-up Period]\label{warm_up_regret}
    Define $x'_0, x'_1,...$ as the sequence of random variables that are the positions of the controller $C^{\mathrm{alg}}$ defined in Algorithm \ref{alg:cap}. Define $R_0$ as the cost of the first $1/\nu_T^2$ steps, i.e.
    \begin{equation}
        R_0 = T \cdot J(\theta^*, C^{\mathrm{alg}}, T, 0, W) - \sum_{s=0}^{s_{e}}  T_s \cdot J(\theta^*,C^{\mathrm{alg}}_s, T_s,  x'_{   T_s}, W_s).
    \end{equation}
    Then under Assumptions \ref{assum_init}--\ref{parameterization_assum3} and conditional on event $E$,
    \[
        R_0 \stackrel{\text{a.s.}}{\le} \tilde{O}_T\left(\frac{1}{\nu_T^2} \right).
    \]
\end{proposition}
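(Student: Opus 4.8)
The plan is to recognize that $R_0$ is, up to negligible boundary terms, exactly the realized cost accrued during the warm-up phase (the first $T_0 = 1/\nu_T^2$ steps), and then to bound that cost by controlling the magnitudes of the positions and controls conditional on $E$. The realized cost per step is $q(x'_t)^2 + r(u'_t)^2$, so if every position and warm-up control is $\tilde{O}_T(1)$, then each of the $1/\nu_T^2$ warm-up steps contributes $\tilde{O}_T(1)$ and the total is $\tilde{O}_T(1/\nu_T^2)$.

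First I would make the telescoping precise. Writing $T\cdot J(\theta^*, C^{\mathrm{alg}}, T, 0, W) = q(x'_T)^2 + \sum_{t=0}^{T-1}\big(q(x'_t)^2 + r(u'_t)^2\big)$ and, for each round, $T_s \cdot J(\theta^*, C^{\mathrm{alg}}_s, T_s, x'_{T_s}, W_s) = q(x'_{T_{s+1}})^2 + \sum_{t=T_s}^{T_{s+1}-1}\big(q(x'_t)^2 + r(u'_t)^2\big)$, I use that the rounds exactly tile $[T_0, T)$ (since $2T_s = T_{s+1}$ and $T_{s_e+1}=T$) and that running $C^{\mathrm{alg}}_s$ from $x'_{T_s}$ with noise $W_s$ reproduces the realized sub-trajectory. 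The interior sums over $[T_0,T)$ then cancel, leaving $R_0 = q(x'_T)^2 + \sum_{t=0}^{T_0-1}\big(q(x'_t)^2 + r(u'_t)^2\big) - \sum_{s=0}^{s_e} q(x'_{T_{s+1}})^2$. The trailing sum is nonnegative, so dropping it only helps, and the single terminal term $q(x'_T)^2$ is $\tilde{O}_T(1)$; hence it suffices to bound the warm-up cost $\sum_{t=0}^{T_0-1}\big(q(x'_t)^2 + r(u'_t)^2\big)$.

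Second I would establish that conditional on $E$ every position and warm-up control is $\tilde{O}_T(1)$. On $E_{\mathrm{safe}}$ we have $D_{\mathrm{L}} \le a^*x'_t + b^*u'_t \le D_{\mathrm{U}}$, so $|a^*x'_t + b^*u'_t| \le \log^2(T)$ by Assumption \ref{problem_specifications}, and on $E_1$ we have $|w_t| \le \log^2(T)$. Since $x'_{t+1} = (a^*x'_t + b^*u'_t) + w_t$ and $x'_0 = 0$, a one-line induction gives $|x'_t| \le 2\log^2(T) = \tilde{O}_T(1)$ for all $t$. For the controls, rearranging safety gives $|b^*u'_t| \le |a^*x'_t + b^*u'_t| + |a^*x'_t| \le \log^2(T) + \bar{a}\cdot 2\log^2(T) = \tilde{O}_T(1)$, and since $b^* \ge \underline{b} > 0$ we obtain $|u'_t| \le \tilde{O}_T(1)$. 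These are precisely the bounds recorded by Lemma \ref{bounded_approx}, and they yield $q(x'_t)^2 + r(u'_t)^2 \le \tilde{O}_T(1)$ per step.

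Summing this per-step bound over the $T_0 = 1/\nu_T^2$ warm-up steps gives $\sum_{t=0}^{T_0-1}\big(q(x'_t)^2 + r(u'_t)^2\big) \le \tilde{O}_T(1/\nu_T^2)$, which combined with the boundary bookkeeping from the first step yields $R_0 \le \tilde{O}_T(1/\nu_T^2)$. I expect the only genuinely delicate point to be that first-step bookkeeping, namely confirming that the round decomposition really tiles the exploitation horizon and that the $O(\log T)$ leftover terminal $qx^2$ terms are dominated by the warm-up cost; the position and control bounds themselves are immediate once one conditions on $E_{\mathrm{safe}} \cap E_1$, which sidesteps any need to invoke Assumption \ref{assum:initial} directly or to worry about the domain on which $C^{\mathrm{init}}$ is guaranteed safe.
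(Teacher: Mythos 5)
Your proof is correct and follows essentially the same route as the paper: conditional on $E$ (which contains $E_{\mathrm{safe}} \cap E_1$), all warm-up positions and controls are polylogarithmically bounded, so the $1/\nu_T^2$ warm-up steps contribute total cost $\tilde{O}_T(1/\nu_T^2)$. The only differences are cosmetic — the paper cites Lemma \ref{bounded_approx} (via the safety established in Lemma \ref{safety_append}) for the position and control bounds that you derive directly from safety plus bounded noise, and it leaves the telescoping identification of $R_0$ with the warm-up cost implicit, whereas you carry that bookkeeping out explicitly.
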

The proof of Propposition \ref{warm_up_regret} can be found in Appendix \ref{proof:warm_up_regret}.
The second source of regret in Equation \eqref{eq:goal_regret} is that Algorithm \ref{alg:cap} uses a controller $C^{\hat{\theta}_s}_{K_{\mathrm{opt}}(\hat{\theta}_s, T_s)}$ instead of the controller $C_{K^*_s}^{\theta^*}$. This source of regret (denoted $R_1$) can be interpreted as the ``estimation cost" of using the estimated controller instead of the optimal controller, but without enforcing safety. We will use Proposition \ref{non_optimal_controller} to bound this source of regret.

\begin{proposition}[Regret from Non-optimal Controller]\label{non_optimal_controller}
 Define $R_1$ as 
 \[
    R_1 :=  \sum_{s=0}^{s_e}  \E\left[T_sJ(\theta^*,C^{\hat{\theta}_s}_{K_{\mathrm{opt}}(\hat{\theta}_s, T_s)}, T_s, 0, W_s) \cond \hat{\theta}_s \right] -  \E\left[\sum_{s=0}^{s_e} T_sJ(\theta^*,C^{\theta^*}_{K^*_s}, T_s,  x^*_{   T_s}, W_s)  \right].
 \]
 Note that $W_s$ is independent of $\hat{\theta}_s$ by construction. Then under Assumptions \ref{assum_init}--\ref{parameterization_assum3} and conditional on event $E_2$,
    \begin{equation}\label{eq:non_optimal_controller}
       R_1 \stackrel{\text{a.s.}}{\le} \tilde{O}_T\left(T\nu_T\right).
    \end{equation}
\end{proposition}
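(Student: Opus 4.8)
The plan is to telescope $R_1$ through two intermediate quantities and control the three resulting differences with Assumption \ref{parameterization_assum2}, the optimality of $K_{\mathrm{opt}}$, and Assumption \ref{parameterization_assum3}, respectively. Throughout I will use that there are only $s_e+1 = O_T(\log T)$ rounds and that $\sum_{s=0}^{s_e} T_s = \nu_T^{-2}\sum_{s=0}^{s_e} 2^s < \nu_T^{-2}\cdot T\nu_T^2 = T$, since $2^{s_e+1} = T\nu_T^2$.

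First, because $W_s$ is independent of $\hat{\theta}_s$, the conditional expectation in the first sum equals $T_s J^*(\theta^*, C^{\hat{\theta}_s}_{K_{\mathrm{opt}}(\hat{\theta}_s, T_s)}, T_s)$. Conditional on $E_2$ we have $\norm{\hat{\theta}_s - \theta^*}_\infty \le \epsilon_s \le \tilde{O}_T(\nu_T)$ for every $s$, so for $T$ large enough $\epsilon_s \le \epsilon_{\mathrm{A}\ref{parameterization_assum2}}$ and Assumption \ref{parameterization_assum2} (with $\theta = \hat{\theta}_s$, $t = T_s$) gives $T_s J^*(\theta^*, C^{\hat{\theta}_s}_{K_{\mathrm{opt}}(\hat{\theta}_s, T_s)}, T_s) \le T_s J^*(\theta^*, C^{\theta^*}_{K_{\mathrm{opt}}(\theta^*, T_s)}, T_s) + T_s\tilde{O}_T(\epsilon_s + 1/T^2)$. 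Summing over $s$ and using $\sum_s T_s < T$ and $\epsilon_s \le \tilde{O}_T(\nu_T)$, the accumulated error is $\tilde{O}_T(T\nu_T) + \tilde{O}_T(1/T) = \tilde{O}_T(T\nu_T)$; this is the only first-order contribution. Next, since $K_{\mathrm{opt}}(\theta^*, T_s)$ minimizes $J^*(\theta^*, C^{\theta^*}_K, T_s)$ over $K \in [K_{\mathrm{L}}^{\theta^*}, K_{\mathrm{U}}^{\theta^*}]$, in particular $J^*(\theta^*, C^{\theta^*}_{K_{\mathrm{opt}}(\theta^*, T_s)}, T_s) \le J^*(\theta^*, C^{\theta^*}_{K^*_s}, T_s)$, so I may replace the per-round optimal controller by the baseline parameter $K^*_s$ at no cost (the inequality runs in the right direction).

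It then remains to compare $\sum_s T_s J^*(\theta^*, C^{\theta^*}_{K^*_s}, T_s) = \sum_s T_s \E[J(\theta^*, C^{\theta^*}_{K^*_s}, T_s, 0, W_s)]$, started from $0$, with the second term $\E[\sum_s T_s J(\theta^*, C^{\theta^*}_{K^*_s}, T_s, x^*_{T_s}, W_s)]$, started from the random endpoint $x^*_{T_s}$ of the previous round. Conditioning on $x^*_{T_s}$ (which is independent of $W_s$) and applying Assumption \ref{parameterization_assum3} with $\theta = \theta^*$ (so the $\norm{\theta - \theta^*}_\infty$ term vanishes) to the starting positions $0$ and $x^*_{T_s}$, I get on the event $E_{\mathrm{A}\ref{parameterization_assum3}}$ that $T_s|J(\theta^*, C^{\theta^*}_{K^*_s}, T_s, 0, W_s) - J(\theta^*, C^{\theta^*}_{K^*_s}, T_s, x^*_{T_s}, W_s)| \le \tilde{O}_T(|x^*_{T_s}|)$. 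Since the baseline is safe for $\theta^*$, its expected position always lies in $[D_{\mathrm{L}}, D_{\mathrm{U}}] \subseteq [-\log^2 T, \log^2 T]$, so $\E|x^*_{T_s}| \le \log^2 T + \E|w| = \tilde{O}_T(1)$; summing over the $O_T(\log T)$ rounds contributes only $\tilde{O}_T(1)$, dominated by $\tilde{O}_T(T\nu_T)$.

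The main obstacle is this last step. Unlike the first two, the starting-position comparison is not immediate: because $|x^*_{T_s}|$ can exceed $\delta_{\mathrm{A}\ref{parameterization_assum3}}$, I must chain Assumption \ref{parameterization_assum3} across $O(|x^*_{T_s}|/\delta_{\mathrm{A}\ref{parameterization_assum3}})$ increments of size at most $\delta_{\mathrm{A}\ref{parameterization_assum3}}$ between $0$ and $x^*_{T_s}$; the intermediate starting positions all lie in $[-4\log^2 T, 4\log^2 T]$ (on $E_1$, where every $|w_t| \le \log^2 T$ forces $|x^*_t| \le 2\log^2 T$), which is exactly the regime in which Assumption \ref{parameterization_assum3} applies, so the chained bound is valid and telescopes to $\tilde{O}_T(|x^*_{T_s}|+1)$. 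The remaining care is to absorb the complementary event $E_{\mathrm{A}\ref{parameterization_assum3}}^c$ (probability $o_T(1/T^{10})$) together with $E_1^c$ into a negligible term, using only crude polynomial a priori bounds on positions, controls, and hence cost, whose product with the tiny failure probabilities is $o_T(1)$. Combining the three steps yields $R_1 \le \tilde{O}_T(T\nu_T)$ conditional on $E_2$.
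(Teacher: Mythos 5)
Your proposal is correct and takes essentially the same route as the paper's proof: the same telescoping into (i) an Assumption~\ref{parameterization_assum2} step costing $\tilde{O}_T\left(\sum_{s} T_s\epsilon_s\right) = \tilde{O}_T(T\nu_T)$, (ii) the one-sided optimality of $K_{\mathrm{opt}}(\theta^*,T_s)$ over $K^*_s$, and (iii) a change-of-starting-position step handled by chaining Assumption~\ref{parameterization_assum3} and absorbing the low-probability bad events, which is exactly the content of Equation~\eqref{eq:appl_of_7} and Lemma~\ref{start_invariant_inexpectation} (itself proved via Lemmas~\ref{offbyepsilon} and~\ref{offbyepsilon_exp}). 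Two minor remarks: making the optimality inequality $J^*(\theta^*,C^{\theta^*}_{K_{\mathrm{opt}}(\theta^*,T_s)},T_s)\le J^*(\theta^*,C^{\theta^*}_{K^*_s},T_s)$ explicit is if anything cleaner than the paper's absolute-value invocation of Assumption~\ref{parameterization_assum2} against $K^*_s$ (which implicitly needs that extra step), and in (iii) your ``crude polynomial a priori bounds'' on positions and cost must actually take the form $\mathrm{poly}(\log T)\cdot\left(1+\max_i |w_i|\right)$ combined with subgaussian moment/tail control (as in Lemmas~\ref{bounded_pos_cont} and~\ref{lemma:subgaussian_tail}, or Cauchy--Schwarz), not almost-sure polynomial bounds, since the noise is unbounded.
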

The proof of Proposition \ref{non_optimal_controller} can be found in Appendix \ref{proof:non_optimal_controller}.
It may appear odd that the starting positions of the two terms do not match in the definition of $R_1$ (or in the definition of $R_2$ below), but we do account for this difference in the proofs of Propositions \ref{non_optimal_controller} and \ref{r1b_bound}. The third source of regret (which we will denote $R_2$) comes from the fact that in Equation \eqref{eq:goal_regret} we are comparing the random variable $T\cdot J(\theta^*, C^{\mathrm{alg}},T,0,W)$ to an expectation. In order to show that this source of regret is small, we need to show a concentration inequality for the cost of repeatedly using controllers of the form $C_{K_{\mathrm{opt}}(\hat{\theta}_s,  T_s)}^{\hat{\theta}_s}$, which we do in Proposition \ref{r1b_bound}.

\begin{proposition}[Regret from Randomness]\label{r1b_bound}
    Define $\hat{x}_{   T_0},\hat{x}_{   T_0+1},...$ as the sequence of random variables representing the sequence of positions if the control at each time $t \ge    T_0$ is $C^{\hat{\theta}_s}_{K_{\mathrm{opt}}(\hat{\theta}_s, T_s)}(x_t)$ for $s = \lfloor\log_2\left(t\nu_T^2\right)\rfloor$ and starting at $\hat{x}_{   T_0} = x'_{   T_0}$. Define $R_2$ as 
    \[
        R_2 := \sum_{s=0}^{s_e}  T_sJ(\theta^*,C^{\hat{\theta}_s}_{K_{\mathrm{opt}}(\hat{\theta}_s, T_s)}, T_s,  \hat{x}_{   T_s}, W_s) -  \sum_{s=0}^{s_e}  \E\left[T_sJ(\theta^*,C^{\hat{\theta}_s}_{K_{\mathrm{opt}}(\hat{\theta}_s, T_s)}, T_s,  0, W_s) \; \cond \; \hat{\theta}_s \right].
    \]
    Then with conditional probability $1-o_T(1/T)$ given event $E$,
    \begin{equation}\label{eq:r1b_bound}
         R_2 \le \tilde{O}_T(\sqrt{T}).
    \end{equation}
\end{proposition}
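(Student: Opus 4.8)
The plan is to split $R_2$ into a pure ``starting-position'' discrepancy, controlled by Assumption~\ref{parameterization_assum3}, and a genuine concentration term handled by the high-probability McDiarmid variant. Writing $C_s \bydef C^{\hat{\theta}_s}_{K_{\mathrm{opt}}(\hat{\theta}_s, T_s)}$ for brevity, I would add and subtract $\sum_s T_s J(\theta^*, C_s, T_s, 0, W_s)$ to obtain $R_2 = A + B$ with
\begin{equation*}
A = \sum_{s=0}^{s_e}\left[T_s J(\theta^*, C_s, T_s, \hat{x}_{T_s}, W_s) - T_s J(\theta^*, C_s, T_s, 0, W_s)\right],
\end{equation*}
\begin{equation*}
B = \sum_{s=0}^{s_e}\left[T_s J(\theta^*, C_s, T_s, 0, W_s) - \E\!\left[T_s J(\theta^*, C_s, T_s, 0, W_s)\mid \hat{\theta}_s\right]\right].
\end{equation*}
The term $A$ isolates the effect of starting round $s$ at the realized position $\hat{x}_{T_s}$ rather than at $0$ under the identical noise block $W_s$, while $B$ is, conditionally on $\hat{\theta}_s$, a sum of centered functions of the independent block $W_s$ (recall $W_s$ is independent of $\hat{\theta}_s$).

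To bound $A$, I would first use the boundedness of the certainty-equivalence trajectory on $E$: since $C_s$ is safe for $\hat{\theta}_s$ and $\norm{\hat{\theta}_s - \theta^*}_{\infty} \le \epsilon_s = \tilde{O}_T(\nu_T)$ on $E_2$ (Lemma~\ref{initial_uncertainty}), the truncated controller keeps the realized positions at $\tilde{O}_T(1)$ on $E_1$ (cf.\ Lemma~\ref{bounded_approx}), so each $|\hat{x}_{T_s}| = \tilde{O}_T(1)$. Each summand of $A$ is then exactly the quantity controlled by Assumption~\ref{parameterization_assum3} with $x = \hat{x}_{T_s}$ and $y = 0$. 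Because $|\hat{x}_{T_s}|$ may exceed $\delta_{\mathrm{A}\ref{parameterization_assum3}}$, I would chain through $\tilde{O}_T(1)$ intermediate starting positions spaced by at most $\delta_{\mathrm{A}\ref{parameterization_assum3}}$, applying the Lipschitz bound on each step and intersecting the associated events $E_{\mathrm{A}\ref{parameterization_assum3}}$ (each of probability $1 - o_T(1/T^{10})$). This yields $|A| \le \sum_s \tilde{O}_T(|\hat{x}_{T_s}| + \epsilon_s) = \tilde{O}_T(1)$ on $E$ together with these events, which is negligible against $\sqrt{T}$.

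The crux is $B$, for which I would apply, within each round $s$ and conditionally on $\hat{\theta}_s$, the high-probability version of McDiarmid's inequality (Lemma~\ref{concentration_of_cond_exp}) to the map $W_s \mapsto T_s J(\theta^*, C_s, T_s, 0, W_s)$. The hard part will be verifying the bounded-differences property, since Assumption~\ref{parameterization_assum3} is stated as Lipschitzness in the \emph{initial position}, not in the noise. The key observation is that perturbing a single coordinate $w_i \to w_i'$ of $W_s$ changes $x_{i+1}$ by exactly $w_i' - w_i$ and leaves all later dynamics unchanged; hence, conditionally on the remaining coordinates, the post-$i$ trajectory is precisely that of $C_s$ started from a position shifted by $w_i' - w_i$ with the same remaining noise. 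Restricting to $E_1$ so that $|w_i|, |w_i'| \le \log^2 T$ and chaining as above, Assumption~\ref{parameterization_assum3} bounds the induced change in total cost by $\tilde{O}_T(|w_i' - w_i|) = \tilde{O}_T(1)$, giving per-coordinate constants $c_i = \tilde{O}_T(1)$. The McDiarmid variant then yields, for each $s$, a deviation of order $\tilde{O}_T\!\big(\sqrt{\sum_i c_i^2}\big) = \tilde{O}_T(\sqrt{T_s})$ with conditional failure probability $o_T(1/T^2)$. A union bound over the $s_e + 1 = O(\log T)$ rounds, together with $\sum_{s=0}^{s_e}\sqrt{T_s} = \tilde{O}_T(\sqrt{T_{s_e}}) = \tilde{O}_T(\sqrt{T})$ (the $T_s = 2^s/\nu_T^2$ form a geometric sequence dominated by its last term), gives $|B| \le \tilde{O}_T(\sqrt{T})$ with conditional probability $1 - o_T(1/T)$ given $E$. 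Combining with the bound on $A$ produces $R_2 = A + B \le \tilde{O}_T(\sqrt{T})$, as claimed.
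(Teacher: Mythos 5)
Your decomposition $R_2 = A + B$ and both main estimates mirror the paper's own proof. The paper handles your term $B$ with the same restricted-domain McDiarmid argument (Lemma~\ref{real_mcdiarmids}, applied through Lemmas~\ref{concentration_of_cond_exp} and~\ref{mcdiarmids_app}, whose bounded-differences verification is exactly your observation that perturbing a single $w_i$ amounts to shifting the initial position of the post-$i$ trajectory), and it handles your term $A$ with Lemma~\ref{offbyepsilon}, which is precisely your chaining of Assumption~\ref{parameterization_assum3} through starting positions spaced by $\delta_{\mathrm{A}\ref{parameterization_assum3}}$, preceded by the same use of Lemma~\ref{bounded_approx} to ensure $|\hat{x}_{T_s}| \le 4\log^2(T)$ on $E$.

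There is one step your plan passes over, and it is the only place the argument as written does not close. A McDiarmid inequality whose bounded-differences property holds only on a high-probability set of noise sequences concentrates the round-$s$ cost around the \emph{restricted} mean $\E\left[T_sJ(\theta^*,C^{\hat{\theta}_s}_{K_{\mathrm{opt}}(\hat{\theta}_s, T_s)}, T_s, 0, W_s) \mid E^{\mathrm{M}}_s, \hat{\theta}_s\right]$, not around the full conditional mean $\E\left[T_sJ(\theta^*,C^{\hat{\theta}_s}_{K_{\mathrm{opt}}(\hat{\theta}_s, T_s)}, T_s, 0, W_s) \mid \hat{\theta}_s\right]$ that appears in $R_2$. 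The two centerings differ by $\P(\neg E^{\mathrm{M}}_s \mid \hat{\theta}_s)\left|\E[\,\cdot \mid \neg E^{\mathrm{M}}_s, \hat{\theta}_s] - \E[\,\cdot \mid E^{\mathrm{M}}_s, \hat{\theta}_s]\right|$, and since the total cost is not a priori bounded on the complement of the good event (the noise, hence the positions and controls, can be arbitrarily large there), smallness of this gap does not follow from $\P(\neg E^{\mathrm{M}}_s \mid \hat{\theta}_s) = o_T(1/T^8)$ alone. The paper devotes a separate lemma to exactly this point (Lemma~\ref{uncond_vs_cond_regret}): because the cost is non-negative and is at most $T(q+r)B_x^2$ conditional on the good event, one obtains the one-sided bound $\E[\,\cdot \mid \hat{\theta}_s] \ge \E[\,\cdot \mid E^{\mathrm{M}}_s, \hat{\theta}_s] - \tilde{O}_T(1)$, which is precisely the direction needed to upper bound $R_2$; alternatively one could bound $\P(\neg E^{\mathrm{M}}_s \mid \hat{\theta}_s)\,\E[\,\cdot \mid \neg E^{\mathrm{M}}_s, \hat{\theta}_s]$ directly by sub-Gaussian tail integration as in Lemma~\ref{lemma:subgaussian_tail}. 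With this recentering step inserted, your outline becomes a complete proof, essentially identical to the paper's.
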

The proof of Proposition \ref{r1b_bound} can be found in Appendix \ref{proof:r1b_bound}.
 The final source of regret in Equation \eqref{eq:goal_regret} is the extra cost incurred by enforcing safety in Algorithm \ref{alg:cap} (Line \ref{line:safety}) rather than using the control given by $C_{K_{\mathrm{opt}}(\hat{\theta}_s, T_s)}^{\hat{\theta}_s}$. Each time we enforce safety we potentially incur an extra cost, but Proposition \ref{enforcing_safety} bounds this extra cost. 
\begin{proposition}[Regret from Enforcing Safety]\label{enforcing_safety}
   Define $\hat{x}_{   T_0},\hat{x}_{   T_0+1},...$ as the sequence of random variables representing the sequence of positions if the control at each time $t \ge    T_0$ is $C^{\hat{\theta}_s}_{K_{\mathrm{opt}}(\hat{\theta}_s, T_s)}(x_t)$ for $s = \lfloor\log_2\left(t\nu_T^2\right)\rfloor$ and starting at $\hat{x}_{   T_0} = x'_{   T_0}$. Define $R_3$ as (the random variable)
    \[
        R_3 := \sum_{s=0}^{s_e}  T_sJ(\theta^*,C^{\mathrm{alg}}_s, T_s,  x'_{   T_s}, W_s) - \sum_{s=0}^{s_e} T_sJ(\theta^*,C^{\hat{\theta}_s}_{K_{\mathrm{opt}}(\hat{\theta}_s, T_s)}, T_s,  \hat{x}_{   T_s}, W_s).
    \]
    Then under Assumptions \ref{assum_init}--\ref{parameterization_assum3}, with conditional probability $1-o_T(1/T)$ given event $E$,
    \[
        R_3 \le \tilde{O}_T(\nu_T T).
    \]
\end{proposition}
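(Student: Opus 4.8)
The plan is to bound $R_3$ round-by-round and, within each round, to interpolate between the censored trajectory produced by the algorithm and the uncensored trajectory via a telescoping ``hybrid'' argument. Write $\bar{C}_s := C^{\hat{\theta}_s}_{K_{\mathrm{opt}}(\hat{\theta}_s, T_s)}$ for the uncensored baseline controller used in round $s$. Within round $s$ both the algorithm's (censored) controller $C^{\mathrm{alg}}_s$ and $\bar{C}_s$ are driven by the same noise $W_s$; they differ only because the algorithm replaces $\bar{C}_s(x_t)$ by $u_t = \max(\min(\bar{C}_s(x_t), u_t^{\mathrm{safeU}}), u_t^{\mathrm{safeL}})$ whenever the safety censoring is active. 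The two key inputs are Lemma \ref{offbyepsiloncontrol_propproof}, which guarantees that conditional on $E$ each censoring step perturbs the control by at most $\tilde{O}_T(\epsilon_s)$, and Assumption \ref{parameterization_assum3}, which says that the \emph{total} $T_s$-step cost under a fixed baseline controller is Lipschitz (with constant independent of $T_s$) in the starting position. It is precisely the latter that prevents the small per-step perturbations from accumulating into a cost blow-up.

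First I would decompose the round-$s$ contribution $T_sJ(\theta^*,C^{\mathrm{alg}}_s,T_s,x'_{T_s},W_s) - T_sJ(\theta^*,\bar{C}_s,T_s,\hat{x}_{T_s},W_s)$ by inserting the intermediate quantity $T_sJ(\theta^*,\bar{C}_s,T_s,x'_{T_s},W_s)$, i.e.\ the cost of running the \emph{uncensored} controller from the algorithm's own round-start position $x'_{T_s}$. This splits the contribution into a ``same start $x'_{T_s}$, censored vs.\ uncensored controller'' term $(A)_s$ and a ``same controller $\bar{C}_s$, different starts $x'_{T_s}$ vs.\ $\hat{x}_{T_s}$'' term $(B)_s$. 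For $(A)_s$ I would run the telescoping argument: define hybrid trajectories $H^{(0)},\dots,H^{(T_s)}$ on the round, where $H^{(j)}$ uses the censored control for its first $j$ steps and $\bar{C}_s$ afterwards, so that $H^{(0)}$ is the fully uncensored run and $H^{(T_s)}$ is the algorithm. Consecutive hybrids $H^{(j)}$ and $H^{(j+1)}$ coincide up to step $j$ (hence share the position there), differ only by a single control perturbation of size $\tilde{O}_T(\epsilon_s)$ at step $j$, and thereafter both follow $\bar{C}_s$ from positions differing by $\tilde{O}_T(\epsilon_s)$. Bounding the immediate step-$j$ cost change directly and the remaining-cost change by Assumption \ref{parameterization_assum3}, each telescoping increment is $\tilde{O}_T(\epsilon_s)$; summing over the $T_s$ increments gives $(A)_s = \tilde{O}_T(T_s\epsilon_s)$.

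For $(B)_s$, both runs use $\bar{C}_s$ with the same noise, so Assumption \ref{parameterization_assum3} bounds it by $\tilde{O}_T(|x'_{T_s}-\hat{x}_{T_s}| + \epsilon_s)$; if the drift exceeds the Lipschitz radius $\delta_{\mathrm{A}\ref{parameterization_assum3}}$ I would chop it into $\tilde{O}_T(1)$ segments of length at most $\delta_{\mathrm{A}\ref{parameterization_assum3}}$ and apply the assumption to each. Under event $E$ --- bounded noise $E_1$, safety $E_{\mathrm{safe}}$, and the truncation structure of the baseline, which forces the expected next position into $[D_{\mathrm{L}},D_{\mathrm{U}}]$ regardless of the current position --- both $x'_{T_s}$ and $\hat{x}_{T_s}$ stay $\tilde{O}_T(1)$, so each $(B)_s = \tilde{O}_T(1)$ and the $O_T(\log T)$ of them sum to $\tilde{O}_T(1)$. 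Combining, and using $\max_s\epsilon_s = \tilde{O}_T(\nu_T)$ on $E_2$ (Lemma \ref{initial_uncertainty}) together with $\sum_s T_s = O_T(T)$,
\[
R_3 = \sum_{s=0}^{s_e}\big((A)_s + (B)_s\big) \;\le\; \tilde{O}_T\!\left(\sum_{s=0}^{s_e}T_s\epsilon_s\right) + \tilde{O}_T(1) \;\le\; \tilde{O}_T(\nu_T)\sum_{s=0}^{s_e}T_s + \tilde{O}_T(1) \;=\; \tilde{O}_T(\nu_T T).
\]
The $o_T(1/T)$ failure probability comes from a union bound over rounds and time steps of the high-probability events backing Lemma \ref{offbyepsiloncontrol_propproof} and Assumption \ref{parameterization_assum3}, intersected with $E$.

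I expect the main obstacle to be the hybrid argument for $(A)_s$: one must verify that Assumption \ref{parameterization_assum3} can legitimately be invoked at \emph{every} telescoping step, checking that all hybrid trajectories remain inside the $4\log^2(T)$ position window and that the induced starting-position gaps stay below $\delta_{\mathrm{A}\ref{parameterization_assum3}}$. Getting this accounting right is exactly what makes each increment contribute only $\tilde{O}_T(\epsilon_s)$ to the \emph{total} round cost rather than to every subsequent step, yielding the $\tilde{O}_T(T_s\epsilon_s)$ (rather than $\tilde{O}_T(T_s^2\epsilon_s)$) bound per round and hence the overall $\tilde{O}_T(\nu_T T)$ rate; by comparison, the cross-round drift in $(B)_s$ is minor bookkeeping once the per-step control-perturbation bound of Lemma \ref{offbyepsiloncontrol_propproof} and the boundedness of positions under $E$ are in hand.
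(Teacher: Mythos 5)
Your proposal is correct and follows essentially the same route as the paper: the paper decomposes $R_3$ exactly as your $(A)_s + (B)_s$ (this is Lemma \ref{offbyepsiloncontrol_propproof}, proved via the same hybrid/telescoping argument in Lemma \ref{offbyepsiloncontrol} and the same chopping of Assumption \ref{parameterization_assum3} beyond its Lipschitz radius in Lemma \ref{offbyepsilon}), and then invokes the per-step $\tilde{O}_T(\epsilon_s)$ censoring bound, which in the paper is Lemma \ref{bound_on_cont_diff_propproof} rather than Lemma \ref{offbyepsiloncontrol_propproof} as you cite it. The only other difference is bookkeeping: you fold the control-perturbation bound into each telescoping increment immediately, whereas the paper keeps the increments in terms of $|C_s^{\mathrm{alg}}(x'_t) - C^{\hat{\theta}_s}_{K_{\mathrm{opt}}(\hat{\theta}_s,T_s)}(x'_t)|$ and substitutes the $\tilde{O}_T(\epsilon_s)$ bound only at the end.
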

The proof of Proposition \ref{enforcing_safety} can be found in Appendix \ref{proof:enforcing_safety}.
Now we are ready to combine all of the sources of regret. To summarize, we have bounded and broken down the regret into
\begin{align*}
   & T\cdot J(\theta^*, C^{\mathrm{alg}},T,0,W) - T \cdot \bar{J}(\theta^*, C^{\theta^*}_{K_{\mathrm{opt}}(\theta^*, T)}, T) \\
   &\le T\cdot J(\theta^*, C^{\mathrm{alg}},T,0,W) - \E\left[\frac{1}{\nu_T^2}J\left(\theta^*,C^{\theta^*}_{K^*}, \frac{1}{\nu_T^2}, 0, \{w_t\}_{t=0}^{1/\nu_T^2-1}\right) + \sum_{s=0}^{s_e} T_sJ(\theta^*,C^{\theta^*}_{K^*_s}, T_s,  x^*_{   T_s}, W_s)\right]  \\
   &\le T\cdot J(\theta^*, C^{\mathrm{alg}},T,0,W) - \E\left[\sum_{s=0}^{s_e} T_s\bar{J}(\theta^*,C^{\theta^*}_{K^*_s}, T_s,  x^*_{   T_s}, W_s)\right]  \\
    &= \underbrace{\sum_{s=0}^{s_e}  \E\left[T_sJ(\theta^*,C^{\hat{\theta}_s}_{K_{\mathrm{opt}}(\hat{\theta}_s, T_s)}, T_s,  0, W_s) \cond \hat{\theta}_s \right] - \E\left[ \sum_{s=0}^{s_e} T_sJ(\theta^*,C^{\theta^*}_{K^*_s}, T_s,  x^*_{   T_s}, W_s) \right]}_{R_1} \\
    &\quad \quad \quad \quad + \underbrace{\sum_{s=0}^{s_e}  T_sJ(\theta^*,C^{\hat{\theta}_s}_{K_{\mathrm{opt}}(\hat{\theta}_s, T_s)}, T_s,  \hat{x}_{   T_s}, W_s) -  \sum_{s=0}^{s_e}  \E\left[T_sJ(\theta^*,C^{\hat{\theta}_s}_{K_{\mathrm{opt}}(\hat{\theta}_s, T_s)}, T_s,  0, W_s) \cond \hat{\theta}_s\right]}_{R_{2}} \\
    &\quad \quad \quad \quad + \underbrace{\sum_{s=0}^{s_e}  T_sJ(\theta^*,C_s^{\mathrm{alg}}, T_s,  x'_{   T_s}, W_s) - \sum_{s=0}^{s_e} T_sJ(\theta^*,C^{\hat{\theta}_s}_{K_{\mathrm{opt}}(\hat{\theta}_s, T_s)}, T_s,  \hat{x}_{   T_s}, W_s)}_{R_{3}} \\
    &\quad \quad \quad \quad + \underbrace{T \cdot J(\theta^*, C^{\mathrm{alg}}, T,0,W) - \sum_{s=0}^{s_e}  T_sJ(\theta^*,C^{\mathrm{alg}}_s, T_s,  x'_{   T_s}, W_s)}_{R_0}. \numberthis \label{eq:all_decomposition}
\end{align*}

Now we will use Propositions \ref{warm_up_regret}, \ref{non_optimal_controller}, \ref{r1b_bound}, and \ref{enforcing_safety} to bound the above quantity. Conditional on event $E$,  Proposition \ref{warm_up_regret} and Proposition \ref{non_optimal_controller} respectively imply that $R_0 \le \tilde{O}_T(1/\nu_T^2)$ and $R_1 \le \tilde{O}_T(\nu_TT)$. Proposition \ref{r1b_bound} and Proposition \ref{enforcing_safety} respectively imply that conditional on event $E$ with conditional probability $1-o_T(1/T)$, $R_2 \le \tilde{O}_T(\sqrt{T})$ and $R_3 \le \tilde{O}_T(\nu_TT)$. Therefore, applying a union bound gives that the bounds on $R_0,R_1,R_2,R_3$ all hold conditional on event $E$ with probability $1-o_T(1/T)$. Putting these bounds into Equation \eqref{eq:all_decomposition}, we have that conditional on event $E$ with probability $1-o_T(1/T)$,
\[
   T\cdot J(\theta^*, C^{\mathrm{alg}},T,0,W) - T \cdot \bar{J}(\theta^*, C^{\theta^*}_{K_{\mathrm{opt}}(\theta^*, T)}, T) \le R_1 + R_{2} + R_3 + R_0 \le \tilde{O}_T\left(\sqrt{T} + \frac{1}{\nu_T^2} + T\nu_T\right).
\]
Choosing $\nu_T = T^{-1/3}$ (as in Algorithm \ref{alg:cap}) will minimize this regret upper bound giving a total regret upper bound of $\tilde{O}_T(T^{2/3})$. Because the probability of event $E$ is $1-o_T(1/T)$, by a union bound the regret bound holds with unconditional probability $1-o_T(1/T)$.
\end{proof}

\section{Proofs of Propositions from Appendix \ref{app:proof_of_performance}}\label{sec:theorem1_prop}

\subsection{Proof of Proposition \ref{warm_up_regret} (Regret of Warm-up)}\label{proof:warm_up_regret}
\begin{proof}
To bound the cost of the warm-up phase, we need the following lemma. Informally, Lemma \ref{bounded_approx} shows that when the noise is relatively small and the controller is ``close" to being safe with respect to dynamics $\theta^*$, the position stays relatively small. Note that in this lemma we define $B_x := \log^3(T)$, which we will use throughout the proofs in the rest of the appendices.

\begin{lemma}\label{bounded_approx}
    Let $|x_0| \le 4\log^2(T)$. Suppose for all $t < T$, the control used by controller $C_t$ at time $t$ is safe for fixed dynamics $\theta_t$ and for all $t \le T$,
    \begin{equation}\label{eq:bounded_approx1}
        \norm{\theta^* - \theta_t}_{\infty} \le \frac{1}{\log(T)}.
    \end{equation}
    Then under Assumptions \ref{assum_init}--\ref{parameterization_assum3},  for sufficiently large $T$ and conditioned on event $E_1$, using this controller $C_t$ with dynamics $\theta^*$ for $T$ steps starting at $x_0$ will give positions ($x_0,...,x_{T}$) and controls ($u_0,...,u_{T-1}$) satisfying the following equations.
    \begin{equation}\label{eq:bounded_approx2a}
        |x_t|\stackrel{\text{a.s.}}{\le} 4\log^2(T) < \log^3(T) := B_x
    \end{equation}
    \begin{equation}\label{eq:bounded_approx2b}
        |u_t| \stackrel{\text{a.s.}}{\le} O_T(\log^2(T)) < \log^3(T) := B_x.
    \end{equation}
    Furthermore, if $x_0$ and the controller $C_t$ are deterministic, then the  positions ($x_0,...,x_{T}$) and controls ($u_0,...,u_{T-1}$) satisfy
\begin{equation}\label{eq:bounded_approx2c}
        \E[|x_t|] \le 4\log^2(T) < \log^3(T) := B_x
    \end{equation}
    \begin{equation}\label{eq:bounded_approx2d}
        \E[|u_t|] \le O_T(\log^2(T)) < \log^3(T) := B_x.
    \end{equation}

\end{lemma}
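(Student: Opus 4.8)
The plan is to prove all four bounds by a single contraction-style induction on $t$, exploiting that $C_t$ is safe for a dynamics $\theta_t$ that is within $1/\log(T)$ of $\theta^*$, so the realized one-step map stays close to the constraint box. First I would bound the control in terms of the position: since $C_t$ is safe for $\theta_t$ we have $D_{\mathrm{L}} \le a_t x_t + b_t u_t \le D_{\mathrm{U}}$ almost surely, and because $\norm{\theta^* - \theta_t}_\infty \le 1/\log(T)$ gives $b_t \ge \underline{b}/2 > 0$ and $|a_t| \le \bar{a}+1$ for large $T$, solving for $u_t$ yields
\[
    |u_t| \le \frac{\max(|D_{\mathrm{L}}|, D_{\mathrm{U}}) + |a_t||x_t|}{b_t} \le \frac{2}{\underline{b}}\bigl(\log^2(T) + (\bar{a}+1)|x_t|\bigr),
\]
using $\max(|D_{\mathrm{L}}|, D_{\mathrm{U}}) \le \log^2(T)$ from Assumption \ref{problem_specifications}. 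This already reduces \eqref{eq:bounded_approx2b} and \eqref{eq:bounded_approx2d} to the position bounds.

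Next I would set up the one-step recursion for the position. Writing $a^* x_t + b^* u_t = (a_t x_t + b_t u_t) + (a^* - a_t)x_t + (b^* - b_t)u_t$ and using $\norm{\theta^* - \theta_t}_\infty \le 1/\log(T)$ together with $|a_t x_t + b_t u_t| \le \log^2(T)$ gives
\[
    |a^* x_t + b^* u_t| \le \log^2(T) + \frac{|x_t| + |u_t|}{\log(T)} \le d_T + c_T|x_t|,
\]
where, after substituting the control bound, $c_T = \frac{1}{\log(T)}\bigl(1 + \tfrac{2(\bar{a} + 1)}{\underline{b}}\bigr) = O_T(1/\log(T))$ and $d_T = \log^2(T) + O_T(\log(T))$. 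The crucial point is that the gain $c_T$ multiplying $|x_t|$ tends to $0$, so the map $x_t \mapsto x_{t+1}$ is contractive with a driving term of order $\log^2(T)$.

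For the almost-sure statement I condition on $E_1$, so $|w_t| \le \log^2(T)$ and hence $|x_{t+1}| \le 2\log^2(T) + O_T(\log(T)) + c_T|x_t|$. Assuming inductively that $|x_t| \le 4\log^2(T)$ (the base case $|x_0| \le 4\log^2(T)$ is given), the term $c_T|x_t| = O_T(\log(T))$ is lower order, so $|x_{t+1}| \le 2\log^2(T) + O_T(\log(T)) \le 4\log^2(T)$ for $T$ large, closing the induction and proving \eqref{eq:bounded_approx2a}; then \eqref{eq:bounded_approx2b} follows from the control bound. The expectation versions are identical except that, since $x_0$ and $C_t$ are deterministic, $u_t$ is a deterministic function of $x_t$ and I take expectations of the recursion: $\E[|x_{t+1}|] \le d_T + c_T\E[|x_t|] + \E[|w_t|]$, where $\E[|w_t|] \le \sqrt{\E[w_t^2]} = 1$ by the variance-$1$ assumption; the same induction with threshold $4\log^2(T)$ then yields \eqref{eq:bounded_approx2c} and hence \eqref{eq:bounded_approx2d}.

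The main obstacle, though a mild one, is verifying that the threshold $4\log^2(T)$ is genuinely self-sustaining, i.e.\ that the constant and $O_T(\log(T))$ contributions (from $\max(|D_{\mathrm{L}}|, D_{\mathrm{U}})$, the driving term $d_T$, the $\log^2(T)$ noise bound, and $c_T \cdot 4\log^2(T)$) sum to at most $4\log^2(T)$ for sufficiently large $T$. This is exactly where the factor of $4$ (rather than, say, $2$) earns its keep: it leaves enough slack above the $2\log^2(T)$ leading term to absorb every lower-order contribution, and it relies essentially on $\theta_t$ being within $1/\log(T)$ of $\theta^*$, which forces the discrepancy between the safe-for-$\theta_t$ box and the realized dynamics to contribute only an $O_T(1/\log(T))$ multiplicative gain.
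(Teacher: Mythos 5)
Your proof is correct and follows essentially the same route as the paper's: solve the safety constraint $D_{\mathrm{L}} \le a_t x_t + b_t u_t \le D_{\mathrm{U}}$ for the control to bound $|u_t|$ linearly in $|x_t|$, then run an induction in which the model mismatch $\norm{\theta^*-\theta_t}_\infty \le 1/\log(T)$ contributes only an $O_T(1/\log(T))$ gain, so the threshold $4\log^2(T)$ is self-sustaining under $E_1$ (and in expectation, using $\E[|w_t|]\le 1$). The only cosmetic difference is that you substitute the control bound to induct on the position alone, whereas the paper carries both the position and control bounds through a simultaneous induction; the ingredients and the closing estimates are the same.
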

The proof of Lemma \ref{bounded_approx} can be found in Appendix \ref{proof:bounded_approx}.

Now we will use this lemma to bound the total cost of the warm-up phase of the algorithm. The controller for the first $1/\nu_T^2$ steps is safe for dynamics $\theta^*$ under event $E$ as shown in Lemma \ref{safety_append}. This means by Lemma \ref{bounded_approx}, conditional on event $E$, the position and controls during this warm-up period are both bounded in magnitude by $B_x$ (defined in Lemma \ref{bounded_approx}) almost surely for sufficiently large $T$. Because the cost at time $t$ is $qx_t^2 + ru_t^2$, this implies that the total cost of the first $1/\nu_T^2$ steps is upper bounded by $O_T((q+r)\frac{B_x^2}{\nu_T^2}) = \tilde{O}_T(1/\nu_T^2)$. 
\end{proof}

\subsection{Proof of Proposition \ref{non_optimal_controller} (Regret of Non-optimal Controller)}\label{proof:non_optimal_controller}

\begin{proof}
 First, we will use Lemma \ref{start_invariant_inexpectation} to rewrite the expression in Proposition \ref{non_optimal_controller} in a form amenable to Assumption \ref{parameterization_assum2}.
    \begin{lemma}\label{start_invariant_inexpectation}
        Under Assumptions \ref{assum_init}--\ref{parameterization_assum3} , for every $s \in [0: s_e]$ the following hold.
    \begin{equation}\label{eq:nonopt1}
        \left|\E\left[T_sJ(\theta^*,C^{\theta^*}_{K^*_s}, T_s,  x^*_{   T_s}, W_s)\right] - \E\left[T_sJ(\theta^*,C^{\theta^*}_{K^*_s}, T_s, 0, W_s)\right]\right| \le \tilde{O}_T(1)
    \end{equation}
    \end{lemma}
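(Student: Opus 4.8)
The plan is to reduce the difference of expectations to a comparison of the cost along a single noise sequence $W_s$ started from two different positions, and then invoke the Lipschitz-in-initial-position bound of Assumption \ref{parameterization_assum3}. The crucial simplification is that both terms use the \emph{same} baseline controller $C^{\theta^*}_{K^*_s}$ evaluated under the \emph{true} dynamics $\theta^*$, so Assumption \ref{parameterization_assum3} applies with $\theta = \theta^*$, in which case $\norm{\theta - \theta^*}_\infty = 0$ and its bound depends only on the gap between the two starting positions. Since $x^*_{T_s}$ is a deterministic function of $\{w_i\}_{i<T_s}$ while $W_s = \{w_i\}_{i=T_s}^{T_{s+1}-1}$, the starting position and the rollout noise are independent, so after conditioning on $x^*_{T_s}$ (noting $K^*_s \in [K_{\mathrm{L}}^{\theta^*},K_{\mathrm{U}}^{\theta^*}]$ by construction) it suffices to control $\E_{W_s}\!\big[\,\big|T_sJ(\theta^*,C^{\theta^*}_{K^*_s},T_s,x^*_{T_s},W_s) - T_sJ(\theta^*,C^{\theta^*}_{K^*_s},T_s,0,W_s)\big|\,\big]$ for each fixed value of $x^*_{T_s}$.

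First I would bound the starting position. Because every baseline controller is safe for $\theta^*$, we have $a^*x^*_t + b^*u^*_t \in [D_{\mathrm{L}},D_{\mathrm{U}}]$ at each step, so $x^*_{T_s}$ has magnitude at most $\max(|D_{\mathrm{L}}|,|D_{\mathrm{U}}|) + |w_{T_s-1}| \le 2\log^2(T) < 4\log^2(T)$ on $E_1$ (this is also immediate from Lemma \ref{bounded_approx}). Since $\delta_{\mathrm{A}\ref{parameterization_assum3}} = \tilde{\Omega}_T(1)$ but $|x^*_{T_s}|$ can be as large as $4\log^2(T)$, Assumption \ref{parameterization_assum3} cannot be applied in a single step, so I would chain: insert $0 = y_0,y_1,\dots,y_m = x^*_{T_s}$ with $|y_{i+1}-y_i| \le \delta_{\mathrm{A}\ref{parameterization_assum3}}$ and $m = \lceil 4\log^2(T)/\delta_{\mathrm{A}\ref{parameterization_assum3}}\rceil = \tilde{O}_T(1)$. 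All the $y_i$ lie in $[-4\log^2(T),4\log^2(T)]$ and share the single event $E_{\mathrm{A}\ref{parameterization_assum3}}(C^{\theta^*}_{K^*_s},W_s)$ (which depends only on the controller and $W_s$), so applying Assumption \ref{parameterization_assum3} to each consecutive pair and summing gives, conditional on $E_{\mathrm{A}\ref{parameterization_assum3}}$,
\[
\big|T_sJ(\theta^*,C^{\theta^*}_{K^*_s},T_s,x^*_{T_s},W_s) - T_sJ(\theta^*,C^{\theta^*}_{K^*_s},T_s,0,W_s)\big| \le \sum_{i=0}^{m-1}\tilde{O}_T(|y_{i+1}-y_i|) = \tilde{O}_T(|x^*_{T_s}|) = \tilde{O}_T(1).
\]

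Finally I would pass to expectations by splitting on $G := E_{\mathrm{A}\ref{parameterization_assum3}}(C^{\theta^*}_{K^*_s},W_s) \cap \{|x^*_{T_s}| \le 4\log^2(T)\}$. On $G$ the displayed pointwise bound contributes at most $\tilde{O}_T(1)$. For the complement, Assumption \ref{parameterization_assum3} gives $\P(E_{\mathrm{A}\ref{parameterization_assum3}}^c) \le o_T(1/T^{10})$ and Equation \eqref{eq:e1bound} gives $\P(|x^*_{T_s}| > 4\log^2(T)) \le \P(E_1^c) = o_T(1/T^{\log(T)})$, so $\P(G^c) \le o_T(1/T^{10})$; by Cauchy--Schwarz the $G^c$ contribution is at most $\sqrt{\E[(T_sJ(\cdots,x^*_{T_s},W_s)+T_sJ(\cdots,0,W_s))^2]}\cdot\sqrt{\P(G^c)}$, and bounding each position by $\max(|D_{\mathrm{L}}|,|D_{\mathrm{U}}|)+|w_t|$ (safety), each control linearly in the position, and using the uniformly bounded fourth moments of the subgaussian noise shows the cost has second moment $\tilde{O}_T(T_s^2)$, making this term $\tilde{O}_T(T_s)\cdot o_T(1/T^5) = o_T(1)$. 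Adding the two contributions yields the claimed $\tilde{O}_T(1)$ bound. The main obstacle is precisely this last step: one must ensure that the rare event $G^c$ --- where the per-step Lipschitz control is unavailable and the cost may be large --- contributes only $o_T(1)$, which works only because the polynomially small $\P(G^c)$ dominates the at-most-polynomial (in $T_s$) growth of the cost's second moment, itself a consequence of the baseline's safety (positions controlled by the noise plus a constant) and the subgaussianity of $\mathcal{D}$.
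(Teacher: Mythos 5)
Your proof is correct and follows essentially the same route as the paper's: the paper packages your chaining step as Lemma \ref{offbyepsilon} and your expectation-passing step as Lemma \ref{offbyepsilon_exp} (applied with $x = x^*_{T_s}$, $y = 0$, $\epsilon = 0$), splitting on the same good event $E_{\mathrm{A}\ref{parameterization_assum3}} \cap \{|x^*_{T_s}| \le 4\log^2(T)\}$ and controlling the rare-event contribution through moment bounds that, like yours, rest on safety of the baseline controller and subgaussian tails. The only cosmetic difference is that you dispatch the bad event with a single Cauchy--Schwarz step against an $\tilde{O}_T(T_s^2)$ second-moment bound on the cost, whereas the paper bounds the conditional cross-terms ($\E[w_m^2 \mid \neg E^*]\P(\neg E^*)$, $\E[|x|w_m \mid \neg E^*]\P(\neg E^*)$, etc.) one at a time via Lemma \ref{lemma:condsplit} and Lemma \ref{lemma:subgaussian_tail}.
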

    The proof of Lemma \ref{start_invariant_inexpectation} can be found in Appendix \ref{proof:start_invariant_inexpectation}.
   By Lemma \ref{initial_uncertainty}, there exists a $c_T = \tilde{O}_T(1)$ such that under event $E_2$, $\max_s \epsilon_s \le c_T \cdot \nu_T$. For $s \in [0: s_e]$, define
    \begin{equation}
        E_2^s = \left\{\norm{\hat{\theta}_{s} - \theta^*}_\infty \le \epsilon_s \le c_T \cdot \nu_T \right\}.
    \end{equation}
    Informally, the event $E_2^s$ is the event that the bounds in event $E_2$ hold at time $s$. Note that because $E_2^s \subseteq E_2$, by Equation \eqref{eq:e2bound},
    \begin{equation}\label{eq:e2bounds}
        \P(E_2^s) \ge \P(E_2) \ge 1-o_T(1/T^2).
    \end{equation}
    We will also use the following application of Assumption \ref{parameterization_assum2} that holds under event $E_2^s$. Conditional on event $E_2^s$,
\begin{align*}
    &\left|\E\left[T_sJ(\theta^*,C^{\hat{\theta}_s}_{K_{\mathrm{opt}}(\hat{\theta}_s, T_s)}, T_s, 0, W_s) - T_sJ(\theta^*,C^{\theta^*}_{K^*_s}, T_s, 0, W_s)\cond \hat{\theta}_s \right]\right| \\
    &= \left|T_s\bar{J}(\theta^*,C^{\hat{\theta}_s}_{K_{\mathrm{opt}}(\hat{\theta}_s, T_s)}, T_s) - T_s\bar{J}(\theta^*,C^{\theta^*}_{K^*_s}, T_s)\right| \\
    &\le \tilde{O}_T\left(T_s\epsilon_s + \frac{T_s}{T^2}\right). && \text{Assumption \ref{parameterization_assum2}} \numberthis \label{eq:appl_of_7}
\end{align*}
    We can now use the triangle inequality with Equation \eqref{eq:nonopt1} to rewrite the left side of Equation \eqref{eq:non_optimal_controller} and apply Equation \eqref{eq:appl_of_7}. Formally, conditional on event $E_2$,
    {\small
    \begin{align*}
        &\sum_{s=0}^{s_e} \E\left[T_sJ(\theta^*,C^{\hat{\theta}_s}_{K_{\mathrm{opt}}(\hat{\theta}_s, T_s)}, T_s, 0, W_s)\cond \hat{\theta}_s  \right] -\E\left[ \sum_{s=0}^{s_e}  T_sJ(\theta^*,C^{\theta^*}_{K^*_s}, T_s,  x^*_{   T_s}, W_s) \right]  \\
        &=\sum_{s=0}^{s_e} \E\left[T_sJ(\theta^*,C^{\hat{\theta}_s}_{K_{\mathrm{opt}}(\hat{\theta}_s, T_s)}, T_s, 0, W_s)\cond \hat{\theta}_s  \right] - \sum_{s=0}^{s_e}  \E\left[T_sJ(\theta^*,C^{\theta^*}_{K^*_s}, T_s,  x^*_{   T_s}, W_s) \right]  \\
        &\le \tilde{O}_T(1) + \sum_{s=0}^{s_e} \E\left[T_sJ(\theta^*,C^{\hat{\theta}_s}_{K_{\mathrm{opt}}(\hat{\theta}_s, T_s)}, T_s, 0, W_s) \cond \hat{\theta}_s \right] - \sum_{s=0}^{s_e} \E\left[ T_sJ(\theta^*,C^{\theta^*}_{K^*_s}, T_s, 0, W_s) \right] && \text{By Equation  \eqref{eq:nonopt1}}  \\
        &= \tilde{O}_T(1) + \sum_{s=0}^{s_e} \E\left[T_sJ(\theta^*,C^{\hat{\theta}_s}_{K_{\mathrm{opt}}(\hat{\theta}_s, T_s)}, T_s, 0, W_s) - T_sJ(\theta^*,C^{\theta^*}_{K^*_s}, T_s, 0, W_s)\cond \hat{\theta}_s  \right]  \\
        &\le \tilde{O}_T(1) + \sum_{s=0}^{s_e} \left|\E\left[T_sJ(\theta^*,C^{\hat{\theta}_s}_{K_{\mathrm{opt}}(\hat{\theta}_s, T_s)}, T_s, 0, W_s) - T_sJ(\theta^*,C^{\theta^*}_{K^*_s}, T_s, 0, W_s)\cond \hat{\theta}_s  \right]\right|  \\
        &\le \tilde{O}_T(1) + \tilde{O}_T\left(\sum_{s=0}^{s_e} T_s\epsilon_s + \frac{T_s}{T^2}\right) && \text{By Equation \eqref{eq:appl_of_7}} \\
        &\le \tilde{O}_T(T\nu_T).
    \end{align*}
    }
\end{proof}

\subsection{Proof of Proposition \ref{r1b_bound} (Concentration of Cost)}\label{proof:r1b_bound}

\begin{proof}
    The following lemma is a result of McDiarmid's inequality and shows that the random variable corresponding to $T_sJ(\theta^*,C^{\hat{\theta}_s}_{K_{\mathrm{opt}}(\hat{\theta}_s, T_s)}, T_s, 0, W)$ concentrates around a conditional expectation.

    \begin{lemma}\label{concentration_of_cond_exp}
        Under Assumptions \ref{assum_init}--\ref{parameterization_assum3} , for every $s \in [0: s_e]$ there exists an event $E^{\mathrm{M}}_s$ such that $E^{\mathrm{M}}_s$ depends only on the random variables in $W_s$ and $\hat{\theta}_s$, such that $E^{\mathrm{M}}_s \subseteq \{\forall t \in [T_s : T_{s+1}-1], |w_t| \le \log^2(T)\}$, and such that conditional on $E_2^s$, $\P(E^{\mathrm{M}}_s \mid \hat{\theta}_s) \ge 1-o_T(1/T^8)$ and for $\epsilon \ge 1/T$ and for sufficiently large $T$,
        \begin{align*}
       & \P\left(\left|T_sJ(\theta^*,C^{\hat{\theta}_s}_{K_{\mathrm{opt}}(\hat{\theta}_s, T_s)}, T_s, 0, W_s) - \E\left[T_sJ(\theta^*,C^{\hat{\theta}_s}_{K_{\mathrm{opt}}(\hat{\theta}_s, T_s)}, T_s,0,W_s) \cond E^{\mathrm{M}}_s, \hat{\theta}_s\right] \right| \ge \epsilon \cond \hat{\theta}_s \right) \\
        &\le \frac{1}{T^8} + 2\exp\left(-\frac{\epsilon^2}{2T_sc^2}\right)
        \end{align*}
        for some $c = \tilde{O}_T(1)$.
    \end{lemma}

The proof of Lemma \ref{concentration_of_cond_exp} can be found in Appendix \ref{proof:concentration_of_cond_exp}. We also want that taking expectation conditional on $E^{\mathrm{M}}_s$ does not significantly change the expected cost.

    \begin{lemma}\label{uncond_vs_cond_regret}
        Under Assumptions \ref{assum_init}--\ref{parameterization_assum3}, if $E^{\mathrm{M}}_s \subseteq \{\forall t \in [T_s : T_{s+1}-1], |w_t| \le \log^2(T)\}$ and conditional on event $E_2^s$ we have $\P(E^{\mathrm{M}}_s) \ge 1-o_T(1/T^8)$, then conditional on event $E_2^s$,
        \begin{equation}
            \E\left[T_sJ(\theta^*, C^{\hat{\theta}_s}_{K_{\mathrm{opt}}(\hat{\theta}_s, T_s)}, T_s, 0, W_s)\cond \hat{\theta}_s \right] \stackrel{\text{a.s.}}{\ge} \E\left[ T_sJ(\theta^*, C^{\hat{\theta}_s}_{K_{\mathrm{opt}}(\hat{\theta}_s, T_s)}, T_s, 0, W_s) \cond E^{\mathrm{M}}_s, \hat{\theta}_s\right] -  \tilde{O}_T(1),
        \end{equation}
        where the term $\tilde{O}_T(1)$ does not depend on $s$.
    \end{lemma}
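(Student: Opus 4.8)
The plan is to exploit the nonnegativity of the cost together with the fact that $E^{\mathrm{M}}_s$ is an extremely high probability event on which the cost is deterministically bounded. Write $X = T_sJ(\theta^*, C^{\hat{\theta}_s}_{K_{\mathrm{opt}}(\hat{\theta}_s, T_s)}, T_s, 0, W_s)$ and $p = \P(E^{\mathrm{M}}_s \mid \hat{\theta}_s)$, and observe that $X \ge 0$ since it is a sum of quadratic costs $qx_t^2 + ru_t^2$. By the law of total expectation,
\[
\E[X \mid \hat{\theta}_s] = p\,\E[X \mid E^{\mathrm{M}}_s, \hat{\theta}_s] + (1-p)\,\E[X \mid (E^{\mathrm{M}}_s)^c, \hat{\theta}_s] \;\ge\; p\,\E[X \mid E^{\mathrm{M}}_s, \hat{\theta}_s],
\]
where the inequality uses $X \ge 0$ to drop the second term. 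Rearranging, $\E[X \mid E^{\mathrm{M}}_s, \hat{\theta}_s] - \E[X \mid \hat{\theta}_s] \le (1-p)\,\E[X \mid E^{\mathrm{M}}_s, \hat{\theta}_s]$, so it suffices to bound $\E[X \mid E^{\mathrm{M}}_s, \hat{\theta}_s]$ by $\tilde{O}_T(T)$ and then invoke the hypothesis $1-p = o_T(1/T^8)$.

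To bound $X$ on $E^{\mathrm{M}}_s$, I would apply Lemma \ref{bounded_approx} with constant dynamics $\theta_t = \hat{\theta}_s$. On event $E_2^s$ we have $\norm{\hat{\theta}_s - \theta^*}_{\infty} \le c_T \nu_T \le \frac{1}{\log(T)}$ for sufficiently large $T$ (since $\nu_T \to 0$ faster than $1/\log(T)$), which verifies the dynamics-closeness hypothesis \eqref{eq:bounded_approx1}; and the baseline controller $C^{\hat{\theta}_s}_{K_{\mathrm{opt}}(\hat{\theta}_s, T_s)}$ is safe for its own dynamics $\hat{\theta}_s$ by construction (Assumption \ref{weak_depend}). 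Since $E^{\mathrm{M}}_s \subseteq \{|w_t| \le \log^2(T) \text{ for all } t \in [T_s, T_{s+1}-1]\}$, the noise over the relevant $T_s$-step window is bounded, so Lemma \ref{bounded_approx} applied to the trajectory starting at $0$ gives that all positions and controls have magnitude at most $B_x = \log^3(T)$ almost surely under this conditioning. Hence the per-step cost is at most $(q+r)B_x^2 = \tilde{O}_T(1)$ and $X \le T_s (q+r) B_x^2 = \tilde{O}_T(T_s) \le \tilde{O}_T(T)$ almost surely on $E^{\mathrm{M}}_s$, so $\E[X \mid E^{\mathrm{M}}_s, \hat{\theta}_s] \le \tilde{O}_T(T)$.

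Combining, conditional on $E_2^s$,
\[
\E[X \mid E^{\mathrm{M}}_s, \hat{\theta}_s] - \E[X \mid \hat{\theta}_s] \le (1-p)\,\tilde{O}_T(T) = o_T(1/T^8)\cdot \tilde{O}_T(T) = o_T(1/T^7) \le \tilde{O}_T(1),
\]
which rearranges to the claimed inequality. Uniformity in $s$ is automatic: the deterministic bound uses only $T_s \le T$ and the hypothesis $1-p \le o_T(1/T^8)$ holds uniformly over $s$, so the additive $\tilde{O}_T(1)$ term does not depend on $s$. I expect the only real care point to be verifying the hypotheses of Lemma \ref{bounded_approx} under the $E^{\mathrm{M}}_s$-conditioning rather than under the global event $E_1$ — specifically that the noise-window containment of $E^{\mathrm{M}}_s$ and the $E_2^s$ dynamics bound together suffice — after which the argument is an elementary total-expectation bound powered by nonnegativity of the cost.
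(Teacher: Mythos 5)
Your proof is correct and follows essentially the same route as the paper: the law of total expectation plus nonnegativity of the cost to drop the complement term, followed by the deterministic bound $\E[X \mid E^{\mathrm{M}}_s, \hat{\theta}_s] \le T(q+r)B_x^2 = \tilde{O}_T(T)$ obtained from Lemma \ref{bounded_approx} using the noise containment in $E^{\mathrm{M}}_s$ and the dynamics closeness from $E_2^s$. The care point you flag about verifying Lemma \ref{bounded_approx}'s hypotheses under the $E^{\mathrm{M}}_s$-conditioning is handled in the paper exactly as you describe.
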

    The proof of Lemma \ref{uncond_vs_cond_regret} can be found in Appendix \ref{proof:uncond_vs_cond_regret}.
    Combining Lemma \ref{concentration_of_cond_exp} for $\epsilon = c\sqrt{T_s}\log(T)$ and Lemma \ref{uncond_vs_cond_regret} for sufficiently large $T$, we have the following conditional on event $E_2^s$:
    \begin{align*}
         &\P\left( T_sJ(\theta^*,C^{\hat{\theta}_s}_{K_{\mathrm{opt}}(\hat{\theta}_s, T_s)}, T_s, 0, W_s) -  \E\left[T_sJ(\theta^*,C^{\hat{\theta}_s}_{K_{\mathrm{opt}}(\hat{\theta}_s, T_s)}, T_s, 0, W_s) \cond \hat{\theta}_s \right] \ge c\sqrt{T_s}\log(T) + \tilde{O}_T(1) \cond \hat{\theta}_s  \right)\\
         &\le \frac{1}{T^8} + 2\exp\left(-\frac{\log^2(T)}{2}\right). \numberthis \label{eq:one_Ws}
    \end{align*}
    Now applying a union bound over all $s \in [0:s_e]$ gives the following result:
    {\fontsize{10}{10}
    \begin{align*}
      &\P\left( \sum_{s=0}^{s_e} T_sJ(\theta^*,C^{\hat{\theta}_s}_{K_{\mathrm{opt}}(\hat{\theta}_s, T_s)}, T_s, 0, W_s) -   \sum_{s=0}^{s_e} \E\left[T_sJ(\theta^*,C^{\hat{\theta}_s}_{K_{\mathrm{opt}}(\hat{\theta}_s, T_s)}, T_s, 0, W_s) \cond \hat{\theta}_s \right] \ge \sum_{s=0}^{s_e} \left(c\sqrt{T_s}\log(T) + \tilde{O}_T(1)\right) \right)  \\
      &\le \P\left( \exists s \in [0: s_e] : T_sJ(\theta^*,C^{\hat{\theta}_s}_{K_{\mathrm{opt}}(\hat{\theta}_s, T_s)}, T_s, 0, W_s) -  \E\left[T_sJ(\theta^*,C^{\hat{\theta}_s}_{K_{\mathrm{opt}}(\hat{\theta}_s, T_s)}, T_s, 0, W_s)\cond \hat{\theta}_s  \right] \ge c\sqrt{T_s}\log(T) + \tilde{O}_T(1) \right)  \\
      &\le \sum_{s = 0}^{s_e} \P\left( T_sJ(\theta^*,C^{\hat{\theta}_s}_{K_{\mathrm{opt}}(\hat{\theta}_s, T_s)}, T_s, 0, W_s) -  \E\left[T_sJ(\theta^*,C^{\hat{\theta}_s}_{K_{\mathrm{opt}}(\hat{\theta}_s, T_s)}, T_s, 0, W_s)\cond \hat{\theta}_s  \right] \ge c\sqrt{T_s}\log(T) + \tilde{O}_T(1) \right)  \\
      &\le \sum_{s  = 0}^{s_e}  \left(\frac{1}{T^8} +  2\exp\left(-\frac{\log^2(T)}{2}\right) + \P(\neg E_2^{s}) \right) \quad \quad \quad \quad \quad \quad \quad  \text{Equation \eqref{eq:one_Ws}}\\
      &\le \tilde{O}_T\left(\frac{1}{T^2}\right). \quad \quad \quad \quad \quad \quad \quad \quad \quad \quad \quad \quad \quad \quad \quad \quad \quad \quad \quad \quad \quad \text{Equation \eqref{eq:e2bounds}} \numberthis \label{eq:firstmcdiarmapp}
    \end{align*}   
    }
    Note that
    \begin{equation}\label{eq:sumofsqrtT}
          \sum_{s=0}^{s_e} c\sqrt{T_s}\log(T) = \tilde{O}_T(\sqrt{T}),
    \end{equation}
    therefore combining Equations \eqref{eq:sumofsqrtT} and \eqref{eq:firstmcdiarmapp}, we have that 
    \begin{align*}
      &\P\left(\sum_{s=0}^{s_e}  T_sJ(\theta^*,C^{\hat{\theta}_s}_{K_{\mathrm{opt}}(\hat{\theta}_s, T_s)}, T_s, 0, W_s) - \sum_{s=0}^{s_e}  \E\left[T_sJ(\theta^*,C^{\hat{\theta}_s}_{K_{\mathrm{opt}}(\hat{\theta}_s, T_s)}, T_s, 0, W_s) \cond \hat{\theta}_s \right] \ge  \tilde{O}_T(\sqrt{T}) \right) \\
      &\le \tilde{O}_T\left(\frac{1}{T^2}\right). \numberthis \label{eq:final_concent} 
    \end{align*}
    Equation \eqref{eq:final_concent} differs from the desired result of Proposition \ref{r1b_bound} in that the first summation is over trajectories starting at position $0$ as opposed to $\hat{x}_{T_s}$. Therefore, the last part of this proof is to bound
    \[
     \left|\sum_{s=0}^{s_e}  T_sJ(\theta^*,C^{\hat{\theta}_s}_{K_{\mathrm{opt}}(\hat{\theta}_s, T_s)}, T_s,  \hat{x}_{   T_s}, W_s) - \sum_{s=0}^{s_e}  T_sJ(\theta^*,C^{\hat{\theta}_s}_{K_{\mathrm{opt}}(\hat{\theta}_s, T_s)}, T_s,  0, W_s)\right|.
    \]
     To do this, we will use the following lemma that is a consequence of Assumption \ref{parameterization_assum3}.
    \begin{lemma}\label{offbyepsilon}
    Under Assumptions \ref{assum_init}--\ref{safety_assum} and \ref{parameterization_assum3}, if $\norm{\theta - \theta^*}_{\infty} = \epsilon \le  \epsilon_{\mathrm{A}\ref{parameterization_assum3}}$, then for any $K \in (K_L^{\theta}, K_U^{\theta})$, $t \le T$, and $|x|, |y| \le 4\log^2(T) $ and any noise random variables $W'$, conditional on event $E_{\mathrm{A}\ref{parameterization_assum3}}(C_K^\theta, W')$,
    \[
        \left|t \cdot J(\theta^*,C_K^\theta,t,x, W') - t \cdot J(\theta^*,C_K^\theta,t,y, W') \right| = \tilde{O}_T\left(|x-y| + \epsilon\right).
    \]
    \end{lemma}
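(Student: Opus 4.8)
The plan is to reduce the general claim---which must hold for arbitrary starting positions $|x|,|y|\le 4\log^2(T)$---to the ``local'' Lipschitz estimate of Assumption \ref{parameterization_assum3}, which only guarantees Equation \eqref{eq:param_assum3} when the two starting positions are within $\delta_{\mathrm{A}\ref{parameterization_assum3}}$ of each other. The essential structural observation is that the conditioning event $E_{\mathrm{A}\ref{parameterization_assum3}}(C_K^\theta, W')$ depends only on $C_K^\theta$ and $W'$, and in particular \emph{not} on the starting position. Consequently a single conditioning event simultaneously validates the local Lipschitz bound for every pair of nearby starting positions, which is exactly what is needed to chain local estimates together into a global one.

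First I would construct a chain of intermediate points. Since $|x|,|y|\le 4\log^2(T)$ we have $|x-y|\le 8\log^2(T)$. Set $m=\lceil |x-y|/\delta_{\mathrm{A}\ref{parameterization_assum3}}\rceil$ and choose equally spaced points $x=z_0,z_1,\dots,z_m=y$ lying monotonically on the segment joining $x$ and $y$, so that $|z_i-z_{i+1}|\le\delta_{\mathrm{A}\ref{parameterization_assum3}}$ for each $i$ and $\sum_{i=0}^{m-1}|z_i-z_{i+1}|=|x-y|$. Because each $z_i$ lies between $x$ and $y$, it automatically satisfies $|z_i|\le 4\log^2(T)$, so every consecutive pair $(z_i,z_{i+1})$ is eligible for Equation \eqref{eq:param_assum3}. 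Crucially, since $\delta_{\mathrm{A}\ref{parameterization_assum3}}=\tilde{\Omega}_T(1)$, we have $m\le 8\log^2(T)/\delta_{\mathrm{A}\ref{parameterization_assum3}}+1=\tilde{O}_T(1)$.

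Next I would apply the triangle inequality across the chain and invoke Assumption \ref{parameterization_assum3} on each consecutive pair, all under the single event $E_{\mathrm{A}\ref{parameterization_assum3}}(C_K^\theta, W')$ (here $\norm{\theta-\theta^*}_\infty=\epsilon\le\epsilon_{\mathrm{A}\ref{parameterization_assum3}}$ by hypothesis):
\begin{align*}
\left| t \cdot J(\theta^*, C_K^\theta, t, x, W') - t \cdot J(\theta^*, C_K^\theta, t, y, W') \right|
&\le \sum_{i=0}^{m-1} \left| t \cdot J(\theta^*, C_K^\theta, t, z_i, W') - t \cdot J(\theta^*, C_K^\theta, t, z_{i+1}, W') \right| \\
&\le \sum_{i=0}^{m-1} \tilde{O}_T\!\left(|z_i - z_{i+1}| + \epsilon\right).
\end{align*}
Because the constants hidden by $\tilde{O}_T$ in Equation \eqref{eq:param_assum3} are uniform (depending only on the problem parameters $q,r,D,\mathcal{D},\Theta$ and not on $x,y,t,K,\theta$), the right-hand side is $\tilde{O}_T\big(|x-y| + m\epsilon\big)$. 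Substituting $\sum_i |z_i-z_{i+1}|=|x-y|$ and $m=\tilde{O}_T(1)$ collapses this to $\tilde{O}_T(|x-y|+\epsilon)$, which is the claim.

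This is a telescoping argument and presents no genuine analytic obstacle beyond the bookkeeping. The one point requiring care is verifying $m=\tilde{O}_T(1)$, so that the accumulated $m\epsilon$ term remains $\tilde{O}_T(\epsilon)$; this is precisely where both $\delta_{\mathrm{A}\ref{parameterization_assum3}}=\tilde{\Omega}_T(1)$ and the a priori bound $|x|,|y|\le 4\log^2(T)$ are used. The only conceptual step is recognizing that the same event $E_{\mathrm{A}\ref{parameterization_assum3}}(C_K^\theta, W')$ governs all intermediate comparisons---because that event is position-independent---which is what makes chaining under a single conditioning legitimate.
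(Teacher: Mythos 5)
Your proof is correct and follows essentially the same route as the paper's: discretize the segment from $x$ to $y$ into $\tilde{O}_T(1)$ steps of length at most $\delta_{\mathrm{A}\ref{parameterization_assum3}}$, apply Assumption \ref{parameterization_assum3} to each consecutive pair under the single position-independent event $E_{\mathrm{A}\ref{parameterization_assum3}}(C_K^\theta, W')$, and telescope via the triangle inequality, using $\delta_{\mathrm{A}\ref{parameterization_assum3}} = \tilde{\Omega}_T(1)$ and $|x|,|y| \le 4\log^2(T)$ to keep the accumulated $\epsilon$ terms at $\tilde{O}_T(\epsilon)$. The only cosmetic difference is how the non-integer step count is handled (you take $m = \lceil |x-y|/\delta_{\mathrm{A}\ref{parameterization_assum3}}\rceil$ equally spaced points, while the paper shrinks the step size $\delta$ to make $|x-y|/\delta$ an exact integer), which does not affect the argument.
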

    The proof of Lemma \ref{offbyepsilon} can be found in Appendix \ref{proof:offbyepsilon}.

   In order to use Lemma \ref{offbyepsilon}, we must show that $|\hat{x}_{T_s}| \le 4\log^2(T)$. Recall that $\hat{x}_{T_s}$ is the position at time $T_s$ if the position at time $T_0$ is $\hat{x}_{T_0} = x'_{T_0}$, where $x'_{T_0}$ is the position of the controller $C^{\mathrm{alg}}$ at time $T_0$.  Because $E_{\mathrm{safe}} \subseteq E$, under event $E$ we have that $C^{\mathrm{alg}}$ is safe for dynamics $\theta^*$. Therefore by Lemma \ref{bounded_approx},  $|x'_{T_0}| \le 4\log^2(T)$. Because $E_2 \subseteq E$, under event $E$ we also have that $\norm{\hat{\theta}_s - \theta^*}_{\infty} \le \tilde{O}_T(\nu_T)$ for all $s \in [0: s_e]$ and sufficiently large $T$. Therefore, since $\hat{x}_{T_0} = x'_{T_0}$ and the control $C_{K_{\mathrm{opt}}(\hat{\theta}_s, T_s)}^{\hat{\theta}_s}(x)$ is safe with respect to $\hat{\theta}_s$ for any $x$, again by Lemma \ref{bounded_approx} we have that under event $E$ and for sufficiently large $T$, $|\hat{x}_{   T_s}| \le 4\log^2(T)$. 
   Now we can apply Lemma \ref{offbyepsilon} to get that, conditional on event $E \cap \bigcap_{s=0}^{s_e} E_{\mathrm{A}\ref{parameterization_assum3}}(C_{K_{\mathrm{opt}}(\hat{\theta}_s, T_s)}^{\hat{\theta}_s}, W_s)$,
    \begin{align*}
        &\left|\sum_{s=0}^{s_e}  T_sJ(\theta^*,C^{\hat{\theta}_s}_{K_{\mathrm{opt}}(\hat{\theta}_s, T_s)}, T_s,  \hat{x}_{   T_s}, W_s) - \sum_{s=0}^{s_e}  T_sJ(\theta^*,C^{\hat{\theta}_s}_{K_{\mathrm{opt}}(\hat{\theta}_s, T_s)}, T_s,  0, W_s)\right| \\
        &\le \sum_{s=0}^{s_e} \left| T_sJ(\theta^*,C^{\hat{\theta}_s}_{K_{\mathrm{opt}}(\hat{\theta}_s, T_s)}, T_s,  \hat{x}_{   T_s}, W_s) -  T_sJ(\theta^*,C^{\hat{\theta}_s}_{K_{\mathrm{opt}}(\hat{\theta}_s, T_s)}, T_s,  0, W_s)\right| \\
        &\le \sum_{s=0}^{s_e} \tilde{O}_T\left(\hat{x}_{   T_s} + \norm{\hat{\theta}_s - \theta^*}_{\infty}\right)\\ 
        &\le \tilde{O}_T(1). \numberthis \label{eq:application_of_offbyepsilon}
    \end{align*}
   A union bound gives that $\P(\bigcap_{s=0}^{s_e} E_{\mathrm{A}\ref{parameterization_assum3}}(C_{K_{\mathrm{opt}}(\hat{\theta}_s, T_s)}^{\hat{\theta}_s}, W_s)) = 1-o_T(1/T^2)$. Combining Equation \eqref{eq:final_concent} with Equation \eqref{eq:application_of_offbyepsilon} with a union bound gives that conditional on event $E$ with probability $1-o_T(1/T)$, 
    \begin{equation}\label{eq:event}
    \sum_{s=0}^{s_e}  T_sJ(\theta^*,C^{\hat{\theta}_s}_{K_{\mathrm{opt}}(\hat{\theta}_s, T_s)}, T_s, \hat{x}_{   T_s}, W_s) - \sum_{s=0}^{s_e} \E\left[ T_sJ(\theta^*,C^{\hat{\theta}_s}_{K_{\mathrm{opt}}(\hat{\theta}_s, T_s)}, T_s,0, W_s)\cond \hat{\theta}_s  \right] \le  \tilde{O}_T(\sqrt{T}),
    \end{equation}
    which is the desired result of Proposition \ref{r1b_bound}.    
\end{proof}

\subsection{Proof of Proposition \ref{enforcing_safety} (Regret of Enforcing Safety)}\label{proof:enforcing_safety}

\begin{proof}
Intuitively, $R_3$ is the regret caused by enforcing safety and deviating from the controller $C_{K_{\mathrm{opt}}(\hat{\theta}_s, T_s)}^{\hat{\theta}_s}$. Lemma \ref{offbyepsiloncontrol_propproof} bounds the cost of deviating from $C_{K_{\mathrm{opt}}(\hat{\theta}_s, T_s)}^{\hat{\theta}_s}$ as a sum over all times the algorithm deviates.

\begin{lemma}\label{offbyepsiloncontrol_propproof}
   Recall $u_t^{\mathrm{safeU}}$ and $u_t^{\mathrm{safeL}}$ defined in Algorithm \ref{alg:cap} Lines \ref{line:safeU} and \ref{line:safeL}. Let $X^U_t$ and $X^L_t$ be the indicators for the events that at time $t$, 
    $C^{\mathrm{alg}}(x'_t) = u_t^{\mathrm{safeU}}$ or $C^{\mathrm{alg}}(x'_t) = u_t^{\mathrm{safeL}}$, respectively. Under Assumptions \ref{assum_init}--\ref{parameterization_assum3} and conditional on event $E$, with probability $1-o_T(1/T)$
    
\begin{align*}
&\sum_{s=0}^{s_e}  T_sJ(\theta^*,C^{\mathrm{alg}}, T_s,  x'_{   T_s}, W_s) - \sum_{s=0}^{s_e} T_sJ(\theta^*,C^{\hat{\theta}_s}_{K_{\mathrm{opt}}(\hat{\theta}_s, T_s)}, T_s,  \hat{x}_{   T_s}, W_s) \\
&\le \tilde{O}_T\left(\sum_{s=0}^{s_e} \epsilon_sT_s \right) + \sum_{s=0}^{s_e} \sum_{t=T_s}^{T_{s+1}-1} X^U_{t} \cdot \tilde{O}_T\left(\left|u_t^{\mathrm{safeU}} - C_{K_{\mathrm{opt}}(\hat{\theta}_s, T_s)}^{\hat{\theta}_s}(x'_t)\right|\right) + X^L_{t}\cdot \tilde{O}_T\left(\left|u_t^{\mathrm{safeL}} - C_{K_{\mathrm{opt}}(\hat{\theta}_s, T_s)}^{\hat{\theta}_s}(x'_t)\right|\right).
\end{align*}
\end{lemma}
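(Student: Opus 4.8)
The plan is to read the left-hand side as a clean comparison of two full trajectories over the certainty-equivalence phase. The first sum is exactly the realized cost of the algorithm from time $T_0$ onward, since $C^{\mathrm{alg}}$ passes through each $x'_{T_s}$ and inside round $s$ applies $C^{\hat{\theta}_s}_{K_{\mathrm{opt}}(\hat{\theta}_s,T_s)}$ with the $u_t^{\mathrm{safeU}}/u_t^{\mathrm{safeL}}$ censoring; the second sum is the cost of the uncensored certainty-equivalence trajectory $\hat{x}$ started at $\hat{x}_{T_0}=x'_{T_0}$. I would insert the intermediate quantity $\sum_{s} T_s J(\theta^*, C^{\hat{\theta}_s}_{K_{\mathrm{opt}}(\hat{\theta}_s,T_s)}, T_s, x'_{T_s}, W_s)$ — the uncensored controller run inside each round but started from the algorithm's own position $x'_{T_s}$ — and split the target into (I) a within-round term comparing the censored to the uncensored controller from the common start $x'_{T_s}$, and (II) a start-mismatch term comparing the uncensored controller started at $x'_{T_s}$ versus $\hat{x}_{T_s}$.

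For (I), I would run a hybrid argument inside each round $s$. Ordering the safety-enforcement times of round $s$ as $t_1<\dots<t_{N_s}$ (those $t$ with $X^U_t=1$ or $X^L_t=1$), define trajectories $Y^{(0)},\dots,Y^{(N_s)}$ that start at $x'_{T_s}$, apply the censored control at the first $k$ of these times and the uncensored control $C^{\hat{\theta}_s}_{K_{\mathrm{opt}}(\hat{\theta}_s,T_s)}$ elsewhere, so that $Y^{(0)}$ is the uncensored run and $Y^{(N_s)}$ is the algorithm's round-$s$ trajectory. Consecutive hybrids $Y^{(k)}$ and $Y^{(k-1)}$ agree up to $t_k$, where they share the algorithm's position $x'_{t_k}$; at $t_k$ they differ only in control, contributing an immediate gap of $\tilde{O}_T(|u_{t_k}^{\mathrm{safeU}}-C^{\hat{\theta}_s}_{K_{\mathrm{opt}}(\hat{\theta}_s,T_s)}(x'_{t_k})|)$ (or its $\mathrm{safeL}$ analogue) via the difference-of-squares bound, using that all controls and positions are bounded by $B_x$ under event $E$ (Lemma \ref{bounded_approx}). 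From $t_k+1$ onward both follow the single controller $C^{\hat{\theta}_s}_{K_{\mathrm{opt}}(\hat{\theta}_s,T_s)}$ from positions differing by $b^*$ times that same control gap, so Lemma \ref{offbyepsilon} (with $\theta=\hat{\theta}_s$, $\norm{\hat{\theta}_s-\theta^*}_{\infty}\le\epsilon_s$) bounds the remaining-round cost gap by $\tilde{O}_T(|u_{t_k}^{\mathrm{safeU}}-C^{\hat{\theta}_s}_{K_{\mathrm{opt}}(\hat{\theta}_s,T_s)}(x'_{t_k})|+\epsilon_s)$. Telescoping over $k$ and summing over rounds yields the displayed sum of per-deviation control gaps, plus an overhead $\tilde{O}_T(\sum_s N_s\epsilon_s)\le\tilde{O}_T(\sum_s \epsilon_s T_s)$, which is precisely the first term of the claimed bound.

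For (II), both trajectories use the single controller $C^{\hat{\theta}_s}_{K_{\mathrm{opt}}(\hat{\theta}_s,T_s)}$ inside round $s$ and differ only in their starts $x'_{T_s}$ and $\hat{x}_{T_s}$, both of which lie in $[-4\log^2 T,4\log^2 T]$ under event $E$ by Lemma \ref{bounded_approx}; Lemma \ref{offbyepsilon} then bounds each round's contribution by $\tilde{O}_T(|x'_{T_s}-\hat{x}_{T_s}|+\epsilon_s)=\tilde{O}_T(1)$, and since $s_e=\tilde{O}_T(1)$ the whole term is a lower-order $\tilde{O}_T(1)$ contribution absorbed into $\tilde{O}_T(\sum_s\epsilon_s T_s)$. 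Throughout I would condition on $E$ together with the high-probability Assumption \ref{parameterization_assum3} events $E_{\mathrm{A}\ref{parameterization_assum3}}(C^{\hat{\theta}_s}_{K_{\mathrm{opt}}(\hat{\theta}_s,T_s)},\cdot)$ required by each call to Lemma \ref{offbyepsilon}; since there are at most $T$ such calls (one per deviation and one per round), a union bound keeps the total failure probability $o_T(1/T)$. The main obstacle is the bookkeeping that charges each deviation's long-run effect exactly once and measures it at the algorithm's own position $x'_{t_k}$: this is what forces the within-round decomposition, since Lemma \ref{offbyepsilon} controls only a single fixed controller's sensitivity to its initial condition and cannot absorb perturbations compounding across the changing controllers $C^{\hat{\theta}_s}_{K_{\mathrm{opt}}(\hat{\theta}_s,T_s)}$; isolating that cross-round compounding into term (II) and killing it with the uniform position bound of Lemma \ref{bounded_approx} is the crux of the argument.
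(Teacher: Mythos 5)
Your proposal is correct and takes essentially the same route as the paper's proof: the paper likewise inserts the intermediate trajectory $\sum_{s} T_s J(\theta^*, C^{\hat{\theta}_s}_{K_{\mathrm{opt}}(\hat{\theta}_s,T_s)}, T_s, x'_{T_s}, W_s)$, handles the start-position mismatch with Lemma \ref{offbyepsilon} under the boundedness guaranteed by Lemma \ref{bounded_approx}, and proves your step (I) as a standalone result (Lemma \ref{offbyepsiloncontrol}) via exactly your hybrid/telescoping argument, charging each swapped control its immediate difference-of-squares cost plus a downstream Lemma \ref{offbyepsilon} term. The only cosmetic difference is that the paper's hybrids swap the control at every time step of the round (giving the $T_s\epsilon_s$ overhead directly), whereas yours swap only at the deviation times (giving $N_s\epsilon_s \le T_s\epsilon_s$), which changes nothing in the final bound.
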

The proof of Lemma \ref{offbyepsiloncontrol_propproof} can be found in Appendix \ref{proof:offbyepsiloncontrol_propproof}. We also remind the reader that the coefficients of the $\tilde{O}_T(\cdot)$ terms in Lemma \ref{offbyepsiloncontrol_propproof} do not depend on $t$ or $s$, and are a function of known problem parameters and $\log(T)$ factors. The next tool we need is to be able to bound the difference in control when applying safety in Algorithm \ref{alg:cap} compared to the control when not applying safety. We can do that as follows.

\begin{lemma}\label{bound_on_cont_diff_propproof}
Under Assumptions \ref{assum_init}--\ref{parameterization_assum3} and conditional on event $E$, for any $t$ such that $1/\nu_T^2 \le t \le T$, if $s = \lfloor \log_2\left(t\nu_T^2\right)\rfloor$ and  $u_t^{\mathrm{safeU}} \le C^{\hat{\theta}_s}_{K_{\mathrm{opt}}(\hat{\theta}_s, T_s)}(x'_t)$ (which is equivalent to $C^{\mathrm{alg}}(x'_t) = u_t^{\mathrm{safeU}}$), then,
\begin{equation}\label{eq:enforcingsafety10app}
            |u_t^{\mathrm{safeU}} - C_{K_{\mathrm{opt}}(\hat{\theta}_s, T_s)}^{\hat{\theta}_s}(x'_t)|  \le \tilde{O}_T(\epsilon_{s}).
        \end{equation}
        Similarly, if $u_t^{\mathrm{safeL}} \ge C^{\hat{\theta}_s}_{K_{\mathrm{opt}}(\hat{\theta}_s, T_s)}(x'_t)$, then conditional on event $E$,
\begin{equation}\label{eq:enforcingsafety10bpp}
            |u_t^{\mathrm{safeL}} - C^{\hat{\theta}_s}_{K_{\mathrm{opt}}(\hat{\theta}_s, T_s)}(x'_t)|   \le \tilde{O}_T(\epsilon_s).
        \end{equation}
\end{lemma}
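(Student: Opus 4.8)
The plan is to characterize $u_t^{\mathrm{safeU}}$ through its binding safety constraint and compare it directly to $C := C^{\hat{\theta}_s}_{K_{\mathrm{opt}}(\hat{\theta}_s, T_s)}(x'_t)$, which is itself safe for the estimated dynamics $\hat{\theta}_s$ by the truncation construction in Assumption \ref{weak_depend}. First I would write the inner maximization defining $u_t^{\mathrm{safeU}}$ in closed form. Writing $\hat{\theta}_s = (\hat{a}_s, \hat{b}_s)$ and parametrizing $\theta = (\hat{a}_s + \delta_a, \hat{b}_s + \delta_b)$ with $|\delta_a|, |\delta_b| \le \epsilon_s$, the objective is separable and linear in each of $\delta_a, \delta_b$ over a box, so the worst-case expected position is
\[
    g(u) := \max_{\norm{\theta - \hat{\theta}_s}_{\infty} \le \epsilon_s}(ax'_t + bu) = \hat{a}_s x'_t + \hat{b}_s u + \epsilon_s|x'_t| + \epsilon_s|u|.
\]
Under event $E$ we have $\norm{\hat{\theta}_s - \theta^*}_{\infty} \le \epsilon_s = \tilde{O}_T(\nu_T)$ (from $E_0$ and $E_2$) and $b^* \ge \underline{b} > 0$ (Assumption \ref{assum_init}), so $\hat{b}_s - \epsilon_s \ge \underline{b} - 2\epsilon_s \ge \underline{b}/2 > 0$ for sufficiently large $T$. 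This makes $g$ continuous and strictly increasing on $\mathbb{R}$, so $u_t^{\mathrm{safeU}}$ is pinned down as the unique solution of $g(u_t^{\mathrm{safeU}}) = D_{\mathrm{U}}$, i.e. $\hat{a}_s x'_t + \hat{b}_s u_t^{\mathrm{safeU}} = D_{\mathrm{U}} - \epsilon_s|x'_t| - \epsilon_s|u_t^{\mathrm{safeU}}|$.

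Next I would exploit that $C$ is safe for $\hat{\theta}_s$. By Assumption \ref{weak_depend}, every controller in $\mathcal{C}^{\hat{\theta}_s}$ satisfies $\hat{a}_s x + \hat{b}_s C^{\hat{\theta}_s}_K(x) \le D_{\mathrm{U}}$ for all $x$, so in particular $\hat{a}_s x'_t + \hat{b}_s C \le D_{\mathrm{U}}$. Subtracting the binding identity from this inequality, the $\hat{a}_s x'_t$ terms cancel and I obtain
\[
    \hat{b}_s\left(C - u_t^{\mathrm{safeU}}\right) \le \epsilon_s\left(|x'_t| + |u_t^{\mathrm{safeU}}|\right).
\]
To finish, I would bound the right-hand side. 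In the case under consideration, $u_t^{\mathrm{safeU}} \le C$, and since $u_t^{\mathrm{safeL}} \le u_t^{\mathrm{safeU}}$ by Lemma \ref{lemma:L_less_than_U} (which holds under $E_1 \cap E_2 \supseteq E$), the control actually used is $u'_t = u_t^{\mathrm{safeU}}$. Hence Lemma \ref{bounded_approx}, applied under event $E$ (where all controls are safe for $\theta^*$), gives $|x'_t| \le 4\log^2(T)$ and $|u_t^{\mathrm{safeU}}| = |u'_t| \le O_T(\log^2(T))$, both $\tilde{O}_T(1)$. Combining with $\hat{b}_s \ge \underline{b}/2$ yields $0 \le C - u_t^{\mathrm{safeU}} \le \tfrac{\epsilon_s \cdot \tilde{O}_T(1)}{\underline{b}/2} = \tilde{O}_T(\epsilon_s)$, which is Equation \eqref{eq:enforcingsafety10app}. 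The lower-boundary claim \eqref{eq:enforcingsafety10bpp} follows by the symmetric argument, replacing the maximum by the minimum $\hat{a}_s x'_t + \hat{b}_s u - \epsilon_s|x'_t| - \epsilon_s|u|$ and $D_{\mathrm{U}}$ by $D_{\mathrm{L}}$.

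The main obstacle is the piecewise-linearity introduced by the $|u|$ term in $g$: I need the sign argument to confirm the worst case over the uncertainty ball has the stated closed form and that $g$ is genuinely monotone, so that $u_t^{\mathrm{safeU}}$ is determined by an equality rather than escaping to $\pm\infty$; this is exactly where the positivity $\hat{b}_s > \epsilon_s$ (from Assumption \ref{assum_init} under event $E$) is essential. A secondary point requiring care is justifying the $\tilde{O}_T(1)$ bound on $|u_t^{\mathrm{safeU}}|$ itself, which is legitimate precisely because in the regime $u_t^{\mathrm{safeU}} \le C$ this quantity equals the algorithm's realized, and hence Lemma \ref{bounded_approx}-bounded, control.
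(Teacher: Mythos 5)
Your proof is correct, and it reaches the same bound by a genuinely cleaner route than the paper's. The paper works in position space under the \emph{true} dynamics: it first shows $u_t^{\mathrm{safeU}}$ is itself safe for $\theta^*$ (using Lemma \ref{lemma:L_less_than_U}, hence bounded via Lemma \ref{bounded_approx}), characterizes $u_t^{\mathrm{safeU}}$ only existentially (some $\theta$ in the uncertainty box attains $ax'_t + bu_t^{\mathrm{safeU}} = D_{\mathrm{U}}$), uses the case hypothesis to produce a second existential witness with $ax'_t + bC^{\hat{\theta}_s}_{K_{\mathrm{opt}}(\hat{\theta}_s, T_s)}(x'_t) \ge D_{\mathrm{U}}$, and then transfers these constraints plus the $\hat{\theta}_s$-safety of the baseline control to $\theta^*$, paying $O(B_x\epsilon_s)$ per transfer, so that both $a^*x'_t+b^*u_t^{\mathrm{safeU}}$ and $a^*x'_t+b^*C^{\hat{\theta}_s}_{K_{\mathrm{opt}}(\hat{\theta}_s, T_s)}(x'_t)$ land within $6B_x\epsilon_s$ of $D_{\mathrm{U}}$; dividing by $b^*$ finishes. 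You instead stay entirely in the estimated-dynamics frame: the closed form $g(u) = \hat{a}_sx'_t+\hat{b}_su+\epsilon_s|x'_t|+\epsilon_s|u|$ together with strict monotonicity (which needs $\hat{b}_s > \epsilon_s$, guaranteed under $E$ for large $T$) pins $u_t^{\mathrm{safeU}}$ down by an exact identity, and subtracting the pointwise $\hat{\theta}_s$-safety inequality furnished by the truncation construction of Assumption \ref{weak_depend} gives the bound in one step, with the case hypothesis used only to fix the sign of the difference and to identify $u_t^{\mathrm{safeU}}$ with the realized, Lemma \ref{bounded_approx}-bounded control. Your version eliminates the paper's three $\theta\to\theta^*$ transfer inequalities and, as a bonus, establishes that the maximum defining $u_t^{\mathrm{safeU}}$ is attained and unique, which the paper takes for granted. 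The one advantage of the paper's framing is that it adapts verbatim to Algorithm \ref{alg:cap_large} in Appendix \ref{sec:equivalent}, where $u_t^{\mathrm{safeU}}$ is centered at $\hat{\theta}_s^{\mathrm{pre}}$ while the baseline controller is safe for $\hat{\theta}_s$; in your argument the subtraction would pick up an extra cross term bounded by $\norm{\hat{\theta}_s - \hat{\theta}_s^{\mathrm{pre}}}_{\infty}\left(|x'_t| + |C^{\hat{\theta}_s}_{K_{\mathrm{opt}}(\hat{\theta}_s, T_s)}(x'_t)|\right) = \tilde{O}_T(\epsilon_s)$, so the adaptation is easy but no longer a literal symmetry.
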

The proof of Lemma \ref{bound_on_cont_diff_propproof} can be found in section \ref{proof:bound_on_cont_diff_propproof}.
Combining Lemmas \ref{offbyepsiloncontrol_propproof} and \ref{bound_on_cont_diff_propproof}, we have that conditional on event $E$, with probability $1-o_T(1/T)$,

\begin{align*}
    R_{3} &= \sum_{s=0}^{s_e}  T_sJ(\theta^*,C^{\mathrm{alg}}, T_s,  x'_{   T_s}, W_s) - \sum_{s=0}^{s_e} T_sJ(\theta^*,C^{\hat{\theta}_s}_{K_{\mathrm{opt}}(\hat{\theta}_s, T_s)}, T_s,  \hat{x}_{   T_s}, W_s) \\
    & \le \tilde{O}_T\left(\sum_{s=0}^{s_e} \epsilon_sT_s \right)  + \sum_{s=0}^{s_e} \sum_{t=T_s}^{T_{s+1}-1} \Big( X^U_{t}\cdot\tilde{O}_T\left(\left|u_t^{\mathrm{safeU}} - C_{K_{\mathrm{opt}}(\hat{\theta}_s, T_s)}^{\hat{\theta}_s}(x'_t)\right|\right) \\
    &\quad \quad + X^L_{t}\cdot \tilde{O}_T\left(\left|u_t^{\mathrm{safeL}} - C_{K_{\mathrm{opt}}(\hat{\theta}_s, T_s)}^{\hat{\theta}_s}(x'_t)\right|\right)\Big) && \text{Lemma \ref{offbyepsiloncontrol_propproof}} \\
    &  \le \tilde{O}_T\left(\sum_{s=0}^{s_e} \epsilon_sT_s \right)  + \sum_{s=0}^{s_e} \sum_{t=T_s}^{T_{s+1}-1}X_{t}^U \cdot\tilde{O}_T\left(\epsilon_{s }\right) + X_t^L \cdot\tilde{O}_T\left(\epsilon_{s }\right) && \text{Lemma \ref{bound_on_cont_diff_propproof}}\\
    &  \le \tilde{O}_T(T\nu_T) +\sum_{t=1/\nu_T^2}^{T-1} X_{t}^U\cdot \tilde{O}_T\left(\nu_T\right) + X_t^L \cdot \tilde{O}_T(\nu_T) && \text{$E_2 \subseteq E$} \\
    &\le \tilde{O}_T( T\nu_T) + \tilde{O}_T(T\nu_T)\\
    &=  \tilde{O}_T(T\nu_T).
\end{align*}
The key application of event $E$ in the above result is that $E_2 \subseteq E$ implies that under event $E$, $\max_{s \in [0:s_e]} \epsilon_s = \tilde{O}_T(\nu_T) $.
\end{proof}

\section{Proofs of Lemmas from Appendix \ref{sec:theorem1_prop}}
\subsection{Proof of Lemma \ref{lemma:L_less_than_U}}\label{proof:lemma:L_less_than_U}
Recall the notation that $x'_t$ is the position at time $t$ when using the controller $C^{\mathrm{alg}}$.
We will prove the following. For sufficiently large $T$ and any $t \in [T_0: T]$, if  $C^{\mathrm{alg}}(x'_{t-1})$ is safe for dynamics $\theta^*$, then conditional on $E_1 \cap E_2$, we have that both $u_t^{\mathrm{safeL}} \le u_t^{\mathrm{safeU}}$ and $C^{\mathrm{alg}}(x'_t)$ is safe for dynamics $\theta^*$. Because we assume in this lemma that $u_{T_0 - 1} = C^{\mathrm{alg}}(x'_{T_0-1})$ is safe with respect to dynamics $\theta^*$, this will prove by induction the desired result that conditional on $E_1 \cap E_2$ and for sufficiently large $T$, $u_t^{\mathrm{safeL}} \le u_t^{\mathrm{safeU}}$ for all $t \in [T_0: T]$.

Fix a given $t$, and define $s = \lfloor \log_2(t/\nu_T^2) \rfloor$. Assume $C^{\mathrm{alg}}(x'_{t-1})$ is safe for dynamics $\theta^*$. Then under event $E_1$, we have that $|x'_t| \le \norm{D}_{\infty} + |w_{t-1}| \le B_x$. Let $v = \frac{D_{\mathrm{U}} - a^*x'_{t} - 4\epsilon_sB_x}{b^*}$. We will show that $u_t^{\mathrm{safeU}} \ge v$. Note that $a^*x'_t + b^*v = D_{\mathrm{U}} - 4\epsilon_sB_x$. For sufficiently large $T$, because $D_{\mathrm{U}} - D_{\mathrm{L}} \ge \frac{1}{\log(T)}$ (Assumption \ref{problem_specifications}) and $\epsilon_s = \tilde{O}_T(\nu_T) = o_T(1/\log(T))$ under $E_1 \cap E_2$, this implies that  
\[
    D_{\mathrm{L}} \le a^*x'_t + b^*v \le D_{\mathrm{U}}.
\]
Therefore $v$ is safe for dynamics $\theta^*$, which implies by Lemma \ref{bounded_approx} that under event $E_1$ and for sufficiently large $T$,
\[
    |v| \le B_x.
\]
Under event $E_1 \cap E_2$, $\norm{\theta^* - \hat{\theta}_s}_{\infty} \le \epsilon_s$, therefore by the above results we have that under $E_1 \cap E_2$ and for sufficiently large $T$,
 \begin{align*}
     \max_{\norm{\hat{\theta}_s - \theta}_{\infty} \le \epsilon_s} ax'_t + bv &\le a^*x'_t + b^*v + 2\epsilon_s|x'_t| + 2\epsilon_s |v| \\
     &\le  a^*x'_t + b^*v + 4\epsilon_sB_x && \text{$|v| \le B_x, |x'_t| \le B_x$} \\
    &= D_{\mathrm{U}}. && \text{Def of $v$}
 \end{align*}
 This implies by the definition of $u_t^{\mathrm{safeU}}$ that
 \[
    u_t^{\mathrm{safeU}} \ge v = \frac{D_{\mathrm{U}} - a^*x'_t - 4\epsilon_sB_x}{b^*}.
 \]
 By the same logic, we also have that
 \[
    u_t^{\mathrm{safeL}} \le \frac{D_{\mathrm{L}} - a^*x'_t + 4\epsilon_sB_x}{b^*}.
 \]
For sufficiently large $T$ under event $E_2$, $\frac{8\epsilon_s B_x}{b^*} = \tilde{O}_T(\nu_T) \le \frac{1}{\log(T)}$.  Therefore, using that $D_{\mathrm{U}} \ge D_{\mathrm{L}} + \frac{1}{\log(T)}$ by Assumption \ref{problem_specifications}, we can conclude that under event $E_1 \cap E_2$ for sufficiently large $T$, 
 \begin{align*}
     u_t^{\mathrm{safeL}} &\le \frac{D_{\mathrm{L}} - a^*x'_t + 4\epsilon_sB_x}{b^*} \\
     &\le \frac{D_{\mathrm{U}} - a^*x'_t - 4\epsilon_sB_x}{b^*} \\
     &\le u_t^{\mathrm{safeU}}.
 \end{align*}
 This implies that $u_t^{\mathrm{safeL}} \le C^{\mathrm{alg}}(x'_t) \le u_t^{\mathrm{safeU}}$, which by construction under event $E_1 \cap E_2$ implies that $D_{\mathrm{L}} \le a^*x'_t + b^*C^{\mathrm{alg}}(x'_t) \le D_{\mathrm{U}}$. Finally, this gives that $C^{\mathrm{alg}}(x'_t)$ is safe for dynamics $\theta^*$. Therefore, we have shown the two desired results that $u_t^{\mathrm{safeL}}  \le u_t^{\mathrm{safeU}}$ and $C^{\mathrm{alg}}(x'_t)$ is safe for dynamics $\theta^*$. 

 As mentioned above, this implies by induction the desired result that $u_t^{\mathrm{safeL}} \le u_t^{\mathrm{safeU}}$ for all $t \in [T_0,T]$ conditional on $E_1 \cap E_2$ as long as $C^{\mathrm{alg}}(x'_{T_0-1})$ is safe with respect to $\theta^*$.

\subsection{Proof of Lemma \ref{bounded_approx} (Bounded positions and controls)}\label{proof:bounded_approx}
\begin{proof}
    Define $\gamma_T = \max_{t \in [T]} \norm{\theta^* - \theta_t}_{\infty}$, and we know that $\gamma_T \le \frac{1}{\log(T)}$ by assumption. At time $t$, the control used by controller $C_t$ is safe for dynamics $\theta_t$ by assumption of the lemma, so by Definition \ref{safe_controls_timedep}, for all $t$, if $u_t = C_t(x_t)$ then
    \begin{equation}\label{eq:bounded_approx3a}
        D_{\mathrm{L}} \le a_tx_t + b_tu_t \le D_{\mathrm{U}}.
    \end{equation}
    By definition of $\gamma_T$, this implies that
    \begin{equation}\label{eq:bounded_approxeps}
        D_{\mathrm{L}} - \gamma_T |x_t| - \gamma_T |u_t| \le a^*x_t + b^*u_t \le D_{\mathrm{U}} + \gamma_T |x_t| + \gamma_T |u_t|.
    \end{equation}
    The right inequality in Equation \eqref{eq:bounded_approxeps} implies that 
    \[
        b^*u_t - \gamma_T |u_t| \le D_{\mathrm{U}} + \gamma_T |x_t| -  a^*x_t,
    \]
    which for $u_t \ge 0$ implies that $|u_t| \le \frac{\norm{D}_{\infty} + a^*|x_t| + \gamma_T|x_t|}{b^* - \gamma_T}$. The left inequality in Equation \eqref{eq:bounded_approxeps} implies the same for $u_t \le 0$, and therefore we have that Equation \eqref{eq:bounded_approxeps} implies that
    \begin{equation}\label{eq:bounded_approx3}
        |u_t| \le \frac{\norm{D}_{\infty} + a^*|x_t| + \gamma_T|x_t|}{b^* - \gamma_T}.
    \end{equation}
    First we prove Equations \eqref{eq:bounded_approx2a} and \eqref{eq:bounded_approx2b} by induction.
    
    \textbf{Base Case:} At time $t = 0$, we have by assumption that $|x_0| \le 4\log^2(T)$. Furthermore, Equation \eqref{eq:bounded_approx3} implies that 
    \begin{align*}
        |u_0| &\le \frac{\norm{D}_{\infty} + a^*|x_0| + \gamma_T|x_0|}{b^*-\gamma_T} && \text{Equation \eqref{eq:bounded_approx3}} \\
        &\le \frac{\norm{D}_{\infty} + (a^* + \gamma_T)4\log^2(T)}{b^*-\frac{1}{\log(T)}} && \text{Equation \eqref{eq:bounded_approx1}}\\
        &\le \frac{\log^2(T) + (a^* + \frac{1}{\log(T)})4\log^2(T)}{b^*-\frac{1}{\log(T)}} && \text{Assumption \ref{problem_specifications}}\\
        &\le \frac{\log^2(T) + (a^* + b^*/2)4\log^2(T)}{b^*/2} && \text{Sufficiently large $T$}\\
        &\le \frac{2(1+4a^* + 2b^*)\log^2(T)}{b^*} \numberthis \label{eq:bounded_approx_longv} \\
        &< B_x,
    \end{align*}
    for $T$ sufficiently large such that $2(1 + 4a^*+2b^*)/b^* \le \log(T)$ and $1/\log(T) \le b^*/2$.
    
    \textbf{Induction Hypothesis:} Assume Equations \eqref{eq:bounded_approx2a} and  \eqref{eq:bounded_approx2b} are true for all times less than or equal to $t$. 
    
    \textbf{Induction Step:} Now we will prove that Equations \eqref{eq:bounded_approx2a} and \eqref{eq:bounded_approx2b}  hold at time $t+1$.
    \begin{align*}
        |x_{t+1}| &=  |a^*x_t + b^*u_t + w_t| \\
        &=  |a_tx_t + b_tu_t + w_t + (a^*-a_t)x_t + (b^* -b_t)u_t| \\
        &\le |a_tx_t + b_tu_t| + |w_t| + |(a^*-a_t)x_t| + |(b^* -b_t)u_t| && \text{Triangle Inequality}  \\
        &\stackrel{\text{a.s.}}{\le} \norm{D}_{\infty} + \log^2(T) + \gamma_T |x_t| + \gamma_T |u_t| && \text{Equation \eqref{eq:bounded_approx1}, Equation \eqref{eq:bounded_approx3a}, event $E_1$}\\
        &\le \norm{D}_{\infty} + \frac{1}{\log(T)}\left(|x_t| + |u_t| \right) + \log^2(T) && \text{Equation \eqref{eq:bounded_approx1}}\\
        &\le \norm{D}_{\infty} + \frac{2}{\log(T)}B_x + \log^2(T) && \text{Ind. Hyp.}\\
        &\le \norm{D}_{\infty} + 3\log^2(T)\\
        &\le 4\log^2(T)  && \text{Assumption \ref{problem_specifications}} \\
        &< \log^3(T) \\
        &= B_x.
    \end{align*}

    Above we need $T$ large enough such that $\log(T) > 4$. Since we showed that $|x_{t+1}| \le 4\log^2(T)$,  this also implies by Equations \eqref{eq:bounded_approx3} and \eqref{eq:bounded_approx_longv} that for sufficiently large $T$, $|u_{t+1}| < B_x$. Therefore we have shown Equations \eqref{eq:bounded_approx2a} and \eqref{eq:bounded_approx2b} for time $t+1$, completing the induction proof.

    Now we will prove Equations \eqref{eq:bounded_approx2c} and \eqref{eq:bounded_approx2d} with a similar proof by induction. If the controller $C_t$ is non-random and $x_0$ is not random, this implies that $\E[|x_0|] = |x_0| \le 4\log^2(T)$ and $\E[|u_0|] = |u_0| \le  \frac{2(5+4a^*)\log^2(T)}{b^*}$ by Equation \eqref{eq:bounded_approx_longv}. This proves the base case. For the inductive step, we have that
    \begin{align*}
        &\E[|x_{t+1}|] \\
        &=  \E[|a^*x_t + b^*u_t + w_t|] \\
        &\le \E[|a_tx_t + b_tu_t|] + \E[|w_t|] + \E[|(a^*-a_t)x_t|] + \E[|(b^* -b_t)u_t|] && \text{Triangle Inequality}  \\
        &\le \norm{D}_{\infty} + \log^2(T) + \gamma_T \E[|x_t|] + \gamma_T \E[|u_t|] && \text{Equations \eqref{eq:bounded_approx1}, \eqref{eq:bounded_approx3a}, $w_t$ sub-Gaussian}\\
        &\le \norm{D}_{\infty} + \frac{1}{\log(T)}\left(\E[|x_t|] + \E[|u_t|] \right) + \log^2(T) && \text{Equation \eqref{eq:bounded_approx1}}\\
        &\le \norm{D}_{\infty} + \frac{2}{\log(T)}B_x + \log^2(T) && \text{Ind. Hyp.}\\
        &\le \norm{D}_{\infty} + 3\log^2(T)\\
        &\le 4\log^2(T)  && \text{Assumption \ref{problem_specifications}} \\
        &< \log^3(T) \\
        &= B_x.
    \end{align*}
    We have shown that $\E[|x_{t+1}|] \le 4\log^2(T)$, therefore by Equation \eqref{eq:bounded_approx3} and the same algebraic steps as used in Equation \eqref{eq:bounded_approx_longv}, we have that for sufficiently large $T$,
    \begin{align*}
        \E[|u_{t+1}|] &\le \frac{\norm{D}_{\infty} + a^*\E[|x_{t+1}|] + \gamma_T\E[|x_{t+1}|]}{b^*-\gamma_T} \\
        &\le \frac{\norm{D}_{\infty} + (a^* + \gamma_T)4\log^2(T)}{b^*-\frac{1}{\log(T)}}  \\
        &\le \frac{2(1+4a^*+2b^*)\log^2(T)}{b^*}\\
        &< B_x.
    \end{align*}
    This completes the second proof by induction, proving Equations  \eqref{eq:bounded_approx2c} and \eqref{eq:bounded_approx2d}.
\end{proof}

\subsection{Proof of Lemma \ref{start_invariant_inexpectation} }\label{proof:start_invariant_inexpectation}

\begin{proof}
For this proof, we need the following version of Lemma \ref{offbyepsilon} that applies for expectations rather than with high probability.
\begin{lemma}\label{offbyepsilon_exp}
 Let $x,y$ be two random variables independent of noises $W' = \{w_{i}'\}_{i=0}^{t-1}$ such that for some $L = \tilde{O}_T(1)$, both $\P(|x| \ge L)\E[x^2 \mid |x| \ge L] = o_T\left(\frac{1}{T^{10}}\right)$ and $ \P(|y| \ge L)\E[y^2 \mid |y| \ge L] = o_T\left(\frac{1}{T^{10}}\right)$ and $\P(|x| \le 4\log^2(T)) = 1-o_T(1/T^{11})$ and $\P(|y| \le 4\log^2(T))=1-o_T(1/T^{11})$. Then under Assumptions \ref{assum_init}--\ref{safety_assum} and \ref{parameterization_assum3}, if $\norm{\theta - \theta^*}_{\infty} = \epsilon \le  \epsilon_{\mathrm{A}\ref{parameterization_assum3}}$, then for any $K \in (K_L^{\theta}, K_U^{\theta})$ and $t \le T$,
    \begin{align*}
        &\left|\E\left[t \cdot J(\theta^*,C_{K}^\theta,t,x, W') - t \cdot J(\theta^*,C_{K}^\theta,t,y, W')\right] \right| =  \tilde{O}_T \left(\E[|x-y|]+\epsilon  + \frac{1}{T^2}\right). \numberthis \label{eq:ofbyepsilon2}
    \end{align*}
\end{lemma}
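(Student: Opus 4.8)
The plan is to transfer the pathwise, high-probability bound of Lemma \ref{offbyepsilon} to a bound in expectation by splitting the sample space into a \emph{good} event, on which Lemma \ref{offbyepsilon} applies verbatim, and a \emph{bad} event, whose contribution is controlled by the moment and tail hypotheses on $x$ and $y$ together with subgaussianity of the noise. Throughout, the cost $t \cdot J(\theta^*, C_K^\theta, t, x, W')$ is a deterministic function of $x$ and $W'$, and $x,y$ are independent of $W'$, which is what makes the decomposition tractable.

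First I would define the good event $G = \{|x| \le 4\log^2(T)\} \cap \{|y| \le 4\log^2(T)\} \cap E_{\mathrm{A}\ref{parameterization_assum3}}(C_K^\theta, W')$. By the hypotheses $\P(|x| \le 4\log^2(T)) = 1-o_T(1/T^{11})$ and $\P(|y| \le 4\log^2(T)) = 1-o_T(1/T^{11})$, together with $\P(E_{\mathrm{A}\ref{parameterization_assum3}}(C_K^\theta,W')) \ge 1-o_T(1/T^{10})$ from Assumption \ref{parameterization_assum3}, a union bound gives $\P(G^c) = o_T(1/T^{10})$. On $G$ every hypothesis of Lemma \ref{offbyepsilon} holds, so pointwise $|t \cdot J(\theta^*,C_K^\theta,t,x,W') - t \cdot J(\theta^*,C_K^\theta,t,y,W')| \le \tilde{O}_T(|x-y| + \epsilon)$ with a hidden constant uniform over $G$. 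Taking expectations and using $\E[|x-y|\mathbf{1}_G] \le \E[|x-y|]$ bounds the good-event contribution by $\tilde{O}_T(\E[|x-y|] + \epsilon)$.

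The core of the argument, and the step I expect to be the main obstacle, is showing that the bad-event contribution $\E\big[(t \cdot J(\theta^*,C_K^\theta,t,x,W') + t \cdot J(\theta^*,C_K^\theta,t,y,W'))\mathbf{1}_{G^c}\big]$ is $\tilde{O}_T(1/T^2)$. This requires a pathwise, non-exponential cost bound valid from an \emph{arbitrary} starting position. The key observation is that since $C_K^\theta$ is safe for $\theta$ (Assumption \ref{weak_depend}), we have $ax_i + bC_K^\theta(x_i) \in [D_{\mathrm{L}}, D_{\mathrm{U}}]$, so writing $a^*x_i + b^*u_i = (ax_i + bu_i) + (a^*-a)x_i + (b^*-b)u_i$ and using $|u_i| \le (\norm{D}_{\infty} + \bar{a}|x_i|)/\underline{b}$ gives $|x_{i+1}| \le \rho|x_i| + O_T(1) + |w_i|$ with $\rho = \epsilon(1 + \bar{a}/\underline{b})$. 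Provided $\epsilon_{\mathrm{A}\ref{parameterization_assum3}}$ is taken small enough that $\rho < 1$ (which is without loss of generality, since shrinking $\epsilon_{\mathrm{A}\ref{parameterization_assum3}}$ preserves $\tilde{\Omega}_T(1)$), the positions contract geometrically, yielding $t \cdot J(\theta^*,C_K^\theta,t,x,W') \le \tilde{O}_T\big(x^2 + t + \sum_i w_i^2\big)$. The difficulty is precisely that the naive recursion has growth factor $a^* + b^*\bar{a}/\underline{b}$, which may exceed $1$ and would produce a useless $e^{\Omega(T)}$ bound; the improvement to a contractive $\rho$ hinges on the near-safety of the baseline controller under the true dynamics.

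With this cost bound, I would decompose $G^c \subseteq \{|x| > 4\log^2(T)\} \cup \{|y| > 4\log^2(T)\} \cup \neg E_{\mathrm{A}\ref{parameterization_assum3}}(C_K^\theta,W')$ and bound the expectation on each piece. The tail hypothesis $\P(|x| \ge L)\E[x^2 \mid |x| \ge L] = \E[x^2\mathbf{1}_{|x|\ge L}] = o_T(1/T^{10})$, together with the probability bound on $\{|x| > 4\log^2(T)\}$, controls $\E[x^2\mathbf{1}_{G^c}]$ on the large-$x$ parts, while on parts where $x$ is bounded we use $x^2 \le 16\log^4(T)$ times the small probability of the conditioning event. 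Since $E_{\mathrm{A}\ref{parameterization_assum3}}$ depends only on $W'$ and $x$ is independent of $W'$, on the $\neg E_{\mathrm{A}\ref{parameterization_assum3}}$ piece I would factor $\E[x^2\mathbf{1}_{\neg E}] = \E[x^2]\,\P(\neg E)$ using $\E[x^2] = \tilde{O}_T(1)$ (finite because the bulk satisfies $|x| \le L$ and the tail is $o_T(1/T^{10})$), and handle the noise contribution $\E[(t + \sum_i w_i^2)\mathbf{1}_{\neg E}]$ by Cauchy--Schwarz with $\E[(\sum_i w_i^2)^2] = O_T(T^2)$ and $\P(\neg E) = o_T(1/T^{10})$. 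The analogous estimates hold for $y$. Summing these pieces yields $\tilde{O}_T(1/T^2)$, and combining with the good-event bound via the triangle inequality gives the claimed $\tilde{O}_T(\E[|x-y|] + \epsilon + 1/T^2)$.
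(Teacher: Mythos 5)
Your proof is correct and follows essentially the same route as the paper's: the same good/bad event split (your $G$ is the paper's $E^*$), the same pointwise application of Lemma \ref{offbyepsilon} on the good event, and the same control of the bad event by combining a polynomial pathwise cost bound with the tail/moment hypotheses on $x,y$, their independence from $W'$, and Cauchy--Schwarz. Your inline contraction argument (with $\rho = \epsilon(1+\bar{a}/\underline{b})$, after harmlessly shrinking $\epsilon_{\mathrm{A}\ref{parameterization_assum3}}$, which is legitimate since only smallness of $\epsilon$ is used downstream) is precisely the content of the paper's Lemma \ref{bounded_pos_cont}, which the paper invokes at the corresponding step; your form of the bound, $\tilde{O}_T\left(x^2 + t + \sum_i (w_i')^2\right)$, even spares you the cross terms between $|x|$ and the maximal noise that the paper has to bound separately in its Equation \eqref{eq:expectation}.
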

The proof of Lemma \ref{offbyepsilon_exp} can be found in Appendix \ref{proof:offbyepsilon_exp}.
We also need the following generalization of Lemma \ref{bounded_approx}, which bounds the positions for any starting position $x$. 

\begin{lemma}\label{bounded_pos_cont}
    Let $x_0,x_1,...x_T$ be the sequences of positions when starting at position $x_0 = x$ and using controller $C_t$ at time $t$. Suppose that the control $C_t(x_t)$ is safe for dynamics $\theta_t$ and $\norm{\theta_t - \theta^*} \le \frac{1}{\log(T)}$ for all $t < T$. For sufficiently large $T$ under Assumption \ref{problem_specifications},
    \[
        \forall t \le T, \: |x_t| = O_T(|x| + \norm{D}_{\infty} + \max_{i \le t-1} |w_i|).
    \]
    \[
        \forall t < T, \: |C_t(x_t)| = O_T(|x| + \norm{D}_{\infty} + \max_{i \le t-1} |w_i|).
    \]
\end{lemma}
The proof of Lemma \ref{bounded_pos_cont} can be found in Appendix \ref{proof:bounded_pos_cont}.

    Because $C_{K^*}^{\theta^*}, \{C_{K^*_s}^{\theta^*}\}_{s=0}^{s_e}$ are safe for dynamics $\theta^*$, the sequence $x_0^*, x_1^*,...$ starts at $x_0^* = 0$, and $\norm{D}_{\infty} \le \log^2(T)$ by Assumption \ref{problem_specifications}, Lemma \ref{bounded_pos_cont} implies that
    \begin{equation}\label{eq:app_of_bounded_pos}
        |x^*_{   T_s}| = O_T\left(\max_{i \le    T_s-1} |w_i| + \log^2(T)\right).
    \end{equation}
    \begin{lemma}\label{lemma:subgaussian_tail}
        Suppose $w_t$ for $t < T$ are sub-Gaussian and $F$ is an event such that $\P(F) = 1-o_T(1/T^{11})$. Then 
        \[
            \E[\max_{i \le t} w_i^2 \mid \neg F]\P( \neg F) = o_T\left(\frac{1}{T^{10}}\right).
        \]
    \end{lemma}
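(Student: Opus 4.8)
The plan is to rewrite the conditional quantity as an unconditional (truncated) expectation and then apply H\"older's inequality, which cleanly decouples the magnitude of the maximum from the small probability of $\neg F$. First I would observe that
\[
\E\left[\max_{i \le t} w_i^2 \;\middle|\; \neg F\right]\P(\neg F) = \E\left[\left(\max_{i \le t} w_i^2\right)\mathbf{1}_{\neg F}\right],
\]
so it suffices to bound the right-hand side. The conceptual obstacle here is that $F$ is an arbitrary event whose complement, while of small probability, may be correlated with the noises $\{w_i\}$ in a complicated way; in particular I cannot assume the maximum is small on $\neg F$, and must instead control it through its moments.

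Next I would apply H\"older's inequality with conjugate exponents $p = 12$ and $q = 12/11$, using that $\max_i w_i^2 \ge 0$ and $\mathbf{1}_{\neg F}^q = \mathbf{1}_{\neg F}$:
\[
\E\left[\left(\max_{i \le t} w_i^2\right)\mathbf{1}_{\neg F}\right] \le \left(\E\left[\left(\max_{i \le t} w_i^2\right)^{12}\right]\right)^{1/12}\big(\P(\neg F)\big)^{11/12}.
\]
To bound the first factor I would use $\max_{i \le t} w_i^{24} \le \sum_{i \le t} w_i^{24}$ together with the standard fact that a subgaussian random variable has all moments bounded by a constant depending only on its subgaussian parameter (a problem specification parameter), so that $\E[w_i^{24}] = O_T(1)$. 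Since there are at most $T$ terms, this gives $\E\big[(\max_{i \le t} w_i^2)^{12}\big] = \E\big[\max_{i\le t} w_i^{24}\big] = O_T(T)$, and hence a first factor of $O_T(T^{1/12})$. For the second factor, the hypothesis $\P(\neg F) = o_T(1/T^{11})$ yields $\big(\P(\neg F)\big)^{11/12} = o_T(T^{-121/12})$.

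Finally I would multiply the two factors to obtain
\[
O_T(T^{1/12}) \cdot o_T(T^{-121/12}) = o_T(T^{-120/12}) = o_T\left(\tfrac{1}{T^{10}}\right),
\]
which is exactly the claimed bound. The only real subtlety is choosing the H\"older exponent large enough that the polynomial growth $T^{1/p}$ of the moment factor is dominated by the decay of the probability factor: writing $x = 1/p$, the requirement $x - 11(1-x) = 12x - 11 \le -10$ forces $x \le 1/12$, i.e.\ $p \ge 12$, so $p = 12$ is the smallest integer that works and produces precisely the target exponent $-10$. I expect this exponent bookkeeping, rather than any analytic difficulty, to be the main thing to get right.
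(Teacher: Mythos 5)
Your proof is correct, but it takes a genuinely different route from the paper's. The paper defines $w_m = \max_{i\le t}|w_i|$ and argues by truncation: it applies its conditional-splitting lemma (Lemma \ref{lemma:condsplit}) at the threshold $L=\log^6(T)$, so that the first term is $\P(\neg F)\log^6(T)=o_T(1/T^{10})$, and then controls the tail term by integrating the bound $\P(w_m^2\ge w)=O_T\bigl(te^{-w/(2\alpha)}\bigr)$ over $[\log^6(T),\infty)$, which decays faster than any polynomial in $T$. You instead decouple the maximum from the bad event via H\"older's inequality with exponents $(12,\,12/11)$, bounding $\E\bigl[\max_{i\le t}w_i^{24}\bigr]\le \sum_{i\le t}\E[w_i^{24}]=O_T(T)$ and raising $\P(\neg F)=o_T(1/T^{11})$ to the $11/12$ power; your exponent bookkeeping ($T^{1/12}\cdot T^{-121/12}=T^{-10}$, and $p\ge 12$ being forced) is right. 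Each approach has its advantages: yours uses the subgaussian hypothesis only through finiteness of the $24$th moment (and needs no independence of the $w_i$, which the paper's product bound on $\P(\forall i:|w_i|\le\sqrt{w})$ implicitly uses), so it is more elementary and applies to heavier-tailed noise; the paper's truncation argument exploits the full subgaussian tail, so the identical argument yields $o_T(T^{-k})$ for any fixed $k$ just by adjusting the polylog threshold, whereas your method would require raising the H\"older exponent (and hence the number of finite moments used) to strengthen the conclusion, and it reuses Lemma \ref{lemma:condsplit}, which the paper needs elsewhere anyway.
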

    The proof of Lemma \ref{lemma:subgaussian_tail} can be found in Appendix \ref{proof:lemma:subgaussian_tail}. 
    Define $F = \{|x^{*}_{   T_s}| < \log^3(T)\}$. Event $E_1$ implies $F$ by Lemma \ref{bounded_approx}, and therefore $\P(F) \ge \P(E_1) = 1-o_T(1/T^{11})$. Therefore, we have by Equation \eqref{eq:app_of_bounded_pos} that
    \begin{align*}
        &\P(\neg F)\E[ |x^{*}_{   T_s}|^2 \mid \neg F] \\
        &= O_T\left(\P(\neg F)\E\left[\max_{i \le    T_s-1} w_i^2 \cond \neg F\right] \right) + \tilde{O}_T\left(\P(\neg F)\right)  && \text{[Eq. \eqref{eq:app_of_bounded_pos} and $(a+b)^2 \le 2a^2 + 2b^2$]}\\
        &= o_T\left(\frac{1}{T^{10}}\right). && \text{Lemma \ref{lemma:subgaussian_tail}, $\P(\neg F) = o_T(1/T^{11})$} \numberthis \label{eq:lemma13app}
    \end{align*}
    Also, note that Lemma \ref{bounded_approx} implies that $\P(x^*_{T_s} \le 4\log^2(T)) \ge \P(E_1) = 1-o_T(1/T^{11}) $.
    We can therefore apply Lemma \ref{offbyepsilon_exp} with $x = x_{   T_s}^*, y= 0, L = \log^3(T), \epsilon=0$. Applying Lemma \ref{offbyepsilon_exp} gives the following desired result.
    \begin{align*}    &\E\left[\left|T_sJ(\theta^*,C^{\theta^*}_{K^*_s}, T_s,  x^*_{   T_s}, W_s) - T_sJ(\theta^*,C^{\theta^*}_{K^*_s}, T_s,  0, W_s)\right| \right] \\
    &= \tilde{O}_T\left( \E\left[|x^*_{   T_s}|\right] + \frac{1}{T^2} \right)  && \text{Lemma \ref{offbyepsilon_exp} } \\
    & =  \tilde{O}_T(1). && \text{Lemma \ref{bounded_approx} for sufficiently large $T$} 
    \end{align*}
    Note that we can apply the expectation form of Lemma \ref{bounded_approx} in the second inequality above because $(C_{K^*}^{\theta^*}, \{C_{K^*_s}^{\theta^*}\}_{s=0}^{s_e})$ are non-random controllers. 
\end{proof}

\subsection{Proof of Lemma \ref{concentration_of_cond_exp} (Concentration of Conditional Expected Cost)}\label{proof:concentration_of_cond_exp}
\begin{proof}
We will use the following form of McDiarmid's Inequality for high probability events.

\begin{lemma}[McDiarmid's Inequality \cite{combes2015extension}]\label{real_mcdiarmids}
Let $f$ be a function such that $f : \mathcal{X}_1 \times \mathcal{X}_2 ... \times \mathcal{X}_n \to \mathbb{R}$ and let $\mathcal{Y} \in \mathcal{X}_1 \times \mathcal{X}_2 ... \times \mathcal{X}_n$ be a subset of the domain such that for some $c$, if $(x_1,...,x_n), (x'_1,...,x'_n) \in \mathcal{Y}$, then
\[
    |f(x_1,...,x_n) - f(x_1',...,x_n')| \le \sum_{i:x_i \ne x_i'} c.
\]
Let $X_1,X_2,...,X_n$ be independent random variables and $X_i \in \mathcal{X}_i$ for all $i$. Define $p = 1 - \P((X_1,...,X_n) \in \mathcal{Y})$ and let $m = \E[f(X_1,...,X_n) \mid (X_1,...,X_n) \in \mathcal{Y}]$. Then for any $\epsilon > 0$,
\[
\P(|f(X_1,...,X_n) - m| \ge \epsilon) \le 2p + 2\exp\left(-\frac{2\max(0, \epsilon - pnc)^2}{nc^2}\right).
\]

\end{lemma}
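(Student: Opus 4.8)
The plan is to reduce Lemma \ref{real_mcdiarmids} to the classical bounded-differences inequality by replacing $f$ with a globally Lipschitz surrogate that coincides with $f$ on the good set $\mathcal{Y}$. Writing $d(x,y) := |\{i : x_i \neq y_i\}|$ for the Hamming distance on $\mathcal{X}_1 \times \cdots \times \mathcal{X}_n$, I would define the infimal-convolution extension
\[
g(x) := \inf_{y \in \mathcal{Y}} \big( f(y) + c\, d(x,y) \big).
\]
First I would verify two properties of $g$. \emph{Agreement on $\mathcal{Y}$:} for $x \in \mathcal{Y}$, taking $y = x$ gives $g(x) \le f(x)$, while the hypothesis $f(x) - f(y) \le c\, d(x,y)$ for $x,y \in \mathcal{Y}$ gives $f(y) + c\,d(x,y) \ge f(x)$ and hence $g(x) \ge f(x)$, so $g = f$ on $\mathcal{Y}$. \emph{Global Lipschitzness:} since $d(\cdot, y)$ changes by at most $1$ when a single coordinate is altered, $g$ satisfies $|g(x) - g(x')| \le c$ whenever $x, x'$ differ in one coordinate, i.e. $g$ obeys the bounded-differences condition with constant $c$ on the \emph{entire} product space (assuming $\mathcal{Y}\neq\emptyset$; otherwise $p=1$ and the bound is trivial).

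Next I would apply the classical McDiarmid inequality to $g$, which, with all per-coordinate constants equal to $c$, yields $\P(|g(X) - \E[g(X)]| \ge t) \le 2\exp(-2t^2/(nc^2))$ for all $t \ge 0$. The remaining work is to trade the centering constant $\E[g(X)]$ for the conditional mean $m = \E[f(X) \mid X \in \mathcal{Y}]$. Let $G = \{X \in \mathcal{Y}\}$, so $\P(G^c) = p$. Because $g = f$ on $\mathcal{Y}$ we have $\E[g(X) \mid G] = m$, and decomposing $\E[g(X)] = (1-p)m + p\,\E[g(X)\mid G^c]$ gives $|\E[g(X)] - m| = p\,|\E[g(X)\mid G^c] - m|$. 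The key estimate is that for any $x \in G^c$ and any $x' \in G \subseteq \mathcal{Y}$ we have $|g(x) - g(x')| \le c\, d(x,x') \le cn$ by global Lipschitzness; averaging over $x \sim (X \mid G^c)$ and $x' \sim (X \mid G)$ yields $|\E[g(X)\mid G^c] - m| \le cn$, and hence $|\E[g(X)] - m| \le pnc$.

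Finally I would assemble the pieces. Conditioning on whether $X \in \mathcal{Y}$ and using $f = g$ on $G$,
\[
\P(|f(X) - m| \ge \epsilon) \le \P(G^c) + \P(|g(X) - m| \ge \epsilon) = p + \P(|g(X) - m| \ge \epsilon).
\]
Since $|g(X) - m| \le |g(X) - \E[g(X)]| + pnc$, the event $\{|g(X) - m| \ge \epsilon\}$ forces $|g(X) - \E[g(X)]| \ge \epsilon - pnc$, which is vacuous unless $\epsilon > pnc$; applying the classical bound with $t = \max(0, \epsilon - pnc)$ then produces the stated exponential term, giving $\P(|f(X)-m|\ge\epsilon) \le p + 2\exp(-2\max(0,\epsilon-pnc)^2/(nc^2))$. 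This is in fact slightly stronger than the claimed bound, since $p \le 2p$. The step I expect to be the main obstacle is the centering estimate $|\E[g(X)] - m| \le pnc$: the surrogate $g$ must be controlled on the bad set $G^c$, where $f$ itself may be arbitrarily large, and it is precisely the Hamming-Lipschitz extension (rather than a naive truncation of $f$) that keeps $g$ on $G^c$ comparable to its values on $\mathcal{Y}$ and thereby bounds the displacement of the mean by a quantity proportional to $p$.
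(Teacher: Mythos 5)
Your proof is correct. Note first that the paper itself gives no proof of Lemma \ref{real_mcdiarmids}: it is imported directly from the cited reference \cite{combes2015extension}, so the only meaningful comparison is with that reference's argument, and your route is essentially the same one used there: extend $f$ from $\mathcal{Y}$ to the whole product space via the Hamming infimal convolution $g(x)=\inf_{y\in\mathcal{Y}}\left(f(y)+c\,d(x,y)\right)$, check that $g$ agrees with $f$ on $\mathcal{Y}$ and has bounded differences $c$ globally, apply classical McDiarmid to $g$, and control the recentering error $|\E[g(X)]-m|\le pnc$ using the fact that $g$ oscillates by at most $cn$ over the entire space. All the steps in your write-up check out, including the degenerate cases $\mathcal{Y}=\emptyset$ and $\epsilon\le pnc$. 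Your assembly is in fact slightly cleaner than the standard one: by paying for the bad event $G^{\mathsf{c}}=\{X\notin\mathcal{Y}\}$ once and then applying the two-sided McDiarmid bound to $g$, you obtain the constant $p$ in place of $2p$ (the $2p$ in the stated lemma arises when each one-sided tail is separately charged a $p$), so your bound strictly implies the one in the lemma. The only caveat worth flagging is a purely technical one that the cited reference shares: $g$ is an infimum over a possibly uncountable set, so its measurability requires either mild regularity assumptions on $\mathcal{X}_i$, $\mathcal{Y}$, $f$ or a routine outer-probability argument; this does not affect the substance of the proof.
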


Define the function $f_{\hat{\theta}_s}(W_m)$ as
    \[
        f_{\hat{\theta}_s}(W_m) =  T_sJ\left(\theta^*,C^{\hat{\theta}_s}_{K_{\mathrm{opt}}(\hat{\theta}_s, T_s)}, T_s, 0, W_m\right).
    \]
 We want to apply McDiarmid's Inequality to $f_{\hat{\theta}_s}$ conditional on $\hat{\theta}_s$ when $E_2^s$ holds, which requires the following bounded difference result.

    \begin{lemma}\label{mcdiarmids_app}
       Under Assumptions \ref{assum_init}--\ref{parameterization_assum3}, given $\hat{\theta}_s$ there exists a fixed $\mathcal{Y}_s \in [-\log^2(T), \log^2(T)]^{T_s}$ such that the event $E^{\mathrm{M}}_s := \{W_s \in \mathcal{Y}_s\}$ satisfies ${\P(E^{\mathrm{M}}_s\mid \hat{\theta}_s) \ge 1-o_T(1/T^8)}$, and conditional on $\hat{\theta}_s$ and $E_2^s$, if $E^{\mathrm{M}}_s$ holds when $W_s = \{w_i\}_{i=   T_s}^{   T_{s+1}-1}$ and when $W'_s =\{w_i'\}_{i=   T_s}^{   T_{s+1} -1}$, then
        \[
            \left|  f_{\hat{\theta}_s}(W_s) - f_{\hat{\theta}_s}(W'_s)\right| \le \sum_{i =    T_s, w_i \ne w'_i}^{   T_{s+1}-1} c
        \]
        for some $c = \tilde{O}_T(1)$.
    \end{lemma}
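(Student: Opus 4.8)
The plan is to verify the bounded–differences hypothesis of Lemma \ref{real_mcdiarmids} for the function $f_{\hat{\theta}_s}$ on a suitably chosen high-probability set $\mathcal{Y}_s$, using Assumption \ref{parameterization_assum3} (through Lemma \ref{offbyepsilon}) to supply the per-coordinate Lipschitz constant. First I would define $\mathcal{Y}_s$ as the set of noise sequences $W_s = (w_{T_s}, \dots, w_{T_{s+1}-1})$ satisfying: (i) $|w_i| \le \log^2(T)$ for every $i$, so that $\mathcal{Y}_s \subseteq [-\log^2(T), \log^2(T)]^{T_s}$; and (ii) for every split index $i \in [T_s : T_{s+1}-1]$, the suffix $(w_{i+1}, \dots, w_{T_{s+1}-1})$ lies in the set $\mathcal{Y}_{\mathrm{A}\ref{parameterization_assum3}}$ that Assumption \ref{parameterization_assum3} associates to the controller $C^{\hat{\theta}_s}_{K_{\mathrm{opt}}(\hat{\theta}_s, T_s)}$ for the corresponding trajectory length. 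This $\mathcal{Y}_s$ is fixed given $\hat{\theta}_s$, as required. Condition (i) fails with probability $o_T(1/T^{\log T})$ by the subgaussian tail bound and a union bound over the $T_s \le T$ coordinates (exactly as in Equation \eqref{eq:e1bound}), and each of the $\le T_s$ events in (ii) fails with probability $o_T(1/T^{10})$; a union bound then gives $\P(E^{\mathrm{M}}_s \mid \hat{\theta}_s) \ge 1 - o_T(1/T^9) \ge 1 - o_T(1/T^8)$.

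Next, to establish the bounded-difference property, I would fix two sequences $W_s, W'_s \in \mathcal{Y}_s$, let $i_1 < \dots < i_k$ be the coordinates on which they differ, and interpolate in increasing index order: set $W^{(0)} = W_s$ and let $W^{(j)}$ agree with $W'_s$ on $i_1, \dots, i_j$ and with $W_s$ elsewhere, so that $W^{(k)} = W'_s$ and $W^{(j-1)}, W^{(j)}$ differ only at coordinate $i_j$. The two trajectories driven by $W^{(j-1)}$ and $W^{(j)}$ (both started at $0$ and using the deterministic controller $C^{\hat{\theta}_s}_{K_{\mathrm{opt}}(\hat{\theta}_s, T_s)}$) are identical through time $i_j$ — including the control at time $i_j$, which does not depend on $w_{i_j}$ — so the per-step costs coincide up to time $i_j$ and the positions at time $i_j+1$ differ by exactly $|w_{i_j} - w'_{i_j}| \le 2\log^2(T)$. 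Crucially, every coordinate beyond $i_j$ is still a $W_s$-value in both hybrids, so the shared suffix after index $i_j$ is a suffix of $W_s$, and $|f_{\hat{\theta}_s}(W^{(j-1)}) - f_{\hat{\theta}_s}(W^{(j)})|$ equals the difference of total costs over the remaining steps starting from two nearby positions under this common noise suffix.

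This last difference is controlled by Lemma \ref{offbyepsilon}. Under $E_2^s$ we have $\norm{\hat{\theta}_s - \theta^*}_{\infty} \le \epsilon_s \le c_T \nu_T = o_T(1)$, which for large $T$ is below both $1/\log(T)$ and $\epsilon_{\mathrm{A}\ref{parameterization_assum3}}$; hence Lemma \ref{bounded_approx} (applied to the bounded hybrid noise and the controller, which is safe for $\hat{\theta}_s$) keeps both trajectories within $[-4\log^2(T), 4\log^2(T)]$, and condition (ii) guarantees the event $E_{\mathrm{A}\ref{parameterization_assum3}}$ holds for the relevant $W_s$-suffix. Lemma \ref{offbyepsilon} then yields $|f_{\hat{\theta}_s}(W^{(j-1)}) - f_{\hat{\theta}_s}(W^{(j)})| \le \tilde{O}_T(|w_{i_j} - w'_{i_j}| + \norm{\hat{\theta}_s - \theta^*}_{\infty}) = \tilde{O}_T(\log^2(T)) =: c$, with $c = \tilde{O}_T(1)$ independent of $j$ and $s$. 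Telescoping over $j = 1, \dots, k$ gives $|f_{\hat{\theta}_s}(W_s) - f_{\hat{\theta}_s}(W'_s)| \le kc = \sum_{i : w_i \ne w'_i} c$, which is exactly the hypothesis of Lemma \ref{real_mcdiarmids}.

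The main obstacle I anticipate is the bookkeeping around condition (ii): I must confirm that the suffix events needed at every interpolation step are all suffix events of the single fixed sequence $W_s$, not of the hybrids $W^{(j)}$. This holds precisely because interpolating in increasing index order leaves every coordinate beyond $i_j$ untouched, so the shared suffix is a $W_s$-suffix; this is what lets one union bound over the split points of $W_s$ cover all invocations of Lemma \ref{offbyepsilon}, and it is why the bound holds for arbitrary pairs in $\mathcal{Y}_s$ — as Lemma \ref{real_mcdiarmids} demands — rather than only for single-coordinate perturbations.
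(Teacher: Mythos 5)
Your proposal is correct and follows essentially the same argument as the paper's proof: the same event $E^{\mathrm{M}}_s$ (bounded noise intersected with the Assumption \ref{parameterization_assum3} suffix events at every split index), the same union bound for its probability, and the same coordinate-by-coordinate hybrid/telescoping argument combining Lemma \ref{bounded_approx} and Lemma \ref{offbyepsilon} to obtain a per-coordinate difference of $\tilde{O}_T(\log^2(T)) = \tilde{O}_T(1)$. The only (immaterial) difference is the interpolation direction: your hybrids leave a common suffix drawn from $W_s$, while the paper's hybrids $W^i$ leave a common suffix drawn from $W'_s$; both are covered because $E^{\mathrm{M}}_s$ is required to hold for both sequences.
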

    The proof of Lemma \ref{mcdiarmids_app} can be found in Appendix \ref{proof:mcdiarmids_app}.
    We will now apply Lemma \ref{real_mcdiarmids} for the function $f_{\hat{\theta}_s}$ conditional on $\hat{\theta}_s$ and $E_2^s$ using Lemma \ref{mcdiarmids_app}. Conditional on $E_2^s$ (where $c$ is from Lemma \ref{mcdiarmids_app}), the following holds for $\epsilon \ge 1/T$ and $T$ sufficiently large.
    \begin{align*}
    &\P\left(\left|f_{\hat{\theta}_s}(W_s) - \E[f_{\hat{\theta}_s}(W_s) \mid E^{\mathrm{M}}_s]\right| \ge \epsilon \cond \hat{\theta}_s \right) \\
    &\le 2\P(\neg E^{\mathrm{M}}_s \mid \hat{\theta}_s) + 2\exp\left(-\frac{2\max\left(0, \epsilon - cT_s \P(\neg E^{\mathrm{M}}_s \mid \hat{\theta}_s )\right)^2}{T_sc^2}\right) \\  
    &=  o_T\left(\frac{1}{T^8}\right) + 2\exp\left(-\frac{\epsilon^2}{2T_sc^2}\right) \quad \quad \quad \quad \quad \quad \text{[$\epsilon \ge 1/T$, $\P( \neg E^{\mathrm{M}}_s| \hat{\theta}_s) = o_T(1/T^8)$, suff. large $T$ ]} \\
    &\le \frac{1}{T^8} + 2\exp\left(-\frac{\epsilon^2}{2T_sc^2}\right).  \quad \quad \quad \quad \quad  \quad \quad \quad \quad  \text{[Suff. large $T$]}
    \end{align*}
\end{proof}

\subsection{Proof of Lemma \ref{uncond_vs_cond_regret} (Unconditional Cost vs Conditional Cost)}\label{proof:uncond_vs_cond_regret}
\begin{proof}

By the Law of Total Expectation,
{\fontsize{10}{10}
  \begin{align*}
            &\E\left[T_sJ(\theta^*,C^{\hat{\theta}_s}_{K_{\mathrm{opt}}(\hat{\theta}_s, T_s)}, T_s, 0, W_s)\cond \hat{\theta}_s \right] \\
            &= \E\left[ T_sJ(\theta^*,C^{\hat{\theta}_s}_{K_{\mathrm{opt}}(\hat{\theta}_s, T_s)}, T_s, 0, W_s) \cond E^{\mathrm{M}}_s, \hat{\theta}_s  \right]\P(E^{\mathrm{M}}_s \mid \hat{\theta}_s)  \\
            &\qquad \qquad + \E\left[ T_sJ(\theta^*,C^{\hat{\theta}_s}_{K_{\mathrm{opt}}(\hat{\theta}_s, T_s)}, T_s, 0, W_s) \cond \neg E^{\mathrm{M}}_s, \hat{\theta}_s \right]\P(\neg E^{\mathrm{M}}_s \mid \hat{\theta}_s) 
            \\
            &\ge \E\left[ T_sJ(\theta^*,C^{\hat{\theta}_s}_{K_{\mathrm{opt}}(\hat{\theta}_s, T_s)}, T_s, 0, W_s) \cond E^{\mathrm{M}}_s, \hat{\theta}_s  \right]\P(E^{\mathrm{M}}_s| \hat{\theta}_s) \quad \quad \quad \quad \quad \quad \quad \quad \quad \quad  \text{Cost is non-negative}\\
            &= \E\left[  T_sJ(\theta^*,C^{\hat{\theta}_s}_{K_{\mathrm{opt}}(\hat{\theta}_s, T_s)}, T_s, 0, W_s) \cond E^{\mathrm{M}}_s, \hat{\theta}_s  \right]  - \E\left[ T_sJ(\theta^*,C^{\hat{\theta}_s}_{K_{\mathrm{opt}}(\hat{\theta}_s, T_s)}, T_s, 0, W_s) \cond E^{\mathrm{M}}_s, \hat{\theta}_s  \right] \P(\neg E^{\mathrm{M}}_s| \hat{\theta}_s)\\
            &= \E\left[  T_sJ(\theta^*,C^{\hat{\theta}_s}_{K_{\mathrm{opt}}(\hat{\theta}_s, T_s)}, T_s, 0, W_s) \cond E^{\mathrm{M}}_s, \hat{\theta}_s  \right]  - o_T\left(\frac{1}{T}\right)\E\left[  T_sJ(\theta^*,C^{\hat{\theta}_s}_{K_{\mathrm{opt}}(\hat{\theta}_s, T_s)}, T_s, 0, W_s) \cond E^{\mathrm{M}}_s, \hat{\theta}_s  \right] \\
            &= \E\left[ T_sJ(\theta^*,C^{\hat{\theta}_s}_{K_{\mathrm{opt}}(\hat{\theta}_s, T_s)}, T_s, 0, W_s) \cond E^{\mathrm{M}}_s, \hat{\theta}_s  \right] -  o_T((q+r)B_x^2) \\
            &= \E\left[ T_sJ(\theta^*,C^{\hat{\theta}_s}_{K_{\mathrm{opt}}(\hat{\theta}_s, T_s)}, T_s, 0, W_s) \cond E^{\mathrm{M}}_s, \hat{\theta}_s  \right] -  \tilde{O}_T(1).
    \end{align*}
    }
    To see the step from the 5th to the 6th line, note that $E^{\mathrm{M}}_s \subseteq \{\forall t \in [T_s : T_{s+1}-1], |w_t| \le \log^2(T)\}$ by assumption and that $E_2^s$ implies that for sufficiently large $T$, $\norm{\theta^* - \hat{\theta}_s} \le \frac{1}{\log(T)}$, therefore by Lemma \ref{bounded_approx} we have that the magnitudes of the positions and controls are all bounded by $B_x$ conditional on events $E_2^s$ and $E^{\mathrm{M}}_s$. Therefore, the cost at each time step conditional on these events is at most $(q+r)B^2_x$, which gives that conditional on event $E_2^s$,
    \begin{align*}
        &\E\left[  T_sJ(\theta^*,C^{\hat{\theta}_s}_{K_{\mathrm{opt}}(\hat{\theta}_s, T_s)}, T_s, 0, W_s) \cond E^{\mathrm{M}}_s, \hat{\theta}_s  \right] \\
        &\le T_s (q+r)B_x^2   \qquad \text{$E_2^s$, $E^{\mathrm{M}}_s \subseteq \{\forall t \in [T_s : T_{s+1}-1], |w_t| \le \log^2(T)\}$,  Lemma \ref{bounded_approx}}\\
        &\le T(q+r)B_x^2.
    \end{align*}
\end{proof}

\subsection{Proof of Lemma \ref{offbyepsilon}}\label{proof:offbyepsilon}
\begin{proof}
    If $|x-y| \le \delta_{\mathrm{A}\ref{parameterization_assum3}}$ then this follows directly from Assumption \ref{parameterization_assum3}. Now for the rest of this proof assume $|x-y| > \delta_{\mathrm{A}\ref{parameterization_assum3}}$ and WLOG assume $x \le y$. Choose $\delta$ to be the largest real number satisfying $\delta \le  \delta_{\mathrm{A}\ref{parameterization_assum3}}$ such that $\frac{|x-y|}{\delta}$ is an integer. Because $\delta_{\mathrm{A}\ref{parameterization_assum3}} < |x-y|$, there must exist an integer in the range $\left[\frac{|x-y|}{\delta_{\mathrm{A}\ref{parameterization_assum3}}}, \frac{2|x-y|}{\delta_{\mathrm{A}\ref{parameterization_assum3}}}\right]$. Therefore, $\delta \ge \delta_{\mathrm{A}\ref{parameterization_assum3}}/2 = \tilde{\Omega}_T(1)$ by definition of $\delta_{\mathrm{A}\ref{parameterization_assum3}}$ . Because  $|x|, |y| < 4\log^2(T)$ and $x \le y$, we know that for all $i \in [0: \frac{|x-y|}{\delta}]$, we have $|x+ i\delta| \le 4\log^2(T)$. For $i \in [0:\frac{|x-y|}{\delta} - 1]$, by Assumption \ref{parameterization_assum3}, under event $E_{\mathrm{A}\ref{parameterization_assum3}}(C_K^\theta, W')$
    \begin{align*}
        &|t \cdot J(\theta^*, C_{K}^\theta, t,  x + i\delta, W') - t \cdot J(\theta^*, C_{K}^\theta, t, x+ (i+1)\delta, W')|  = \tilde{O}_T(\delta+\epsilon).
    \end{align*}
    By the triangle inequality, this implies that conditional on event $E_{\mathrm{A}\ref{parameterization_assum3}}(C_K^\theta, W')$,
    \begin{align*}
        &\left|t \cdot J(\theta^*,C_{K}^\theta,t,x, W') - t \cdot J(\theta^*,C_{K}^\theta,t,y, W') \right| \\
        &\le \sum_{i=0}^{\frac{|x-y|}{\delta}-1} \left|t \cdot J(\theta^*, C_{K}^\theta, t,  x + i\delta, W') - t \cdot J(\theta^*, C_{K}^\theta, t, x+ (i+1)\delta, W')\right| \\
        &= \tilde{O}_T\left(\frac{|x-y|}{\delta} (\delta+\epsilon)\right)  \\
        &= \tilde{O}_T\left(|x-y| + \frac{8\log^2(T)}{\delta}\epsilon\right) && \text{$|x|,|y| < 4\log^2(T)$} \\
        &= \tilde{O}_T\left(|x-y| + \epsilon\right). && \text{$\delta = \tilde{\Omega}_T(1)$}
    \end{align*}
\end{proof}

\subsection{Proof of Lemma \ref{offbyepsiloncontrol_propproof} (Cost of safety controls)}\label{proof:offbyepsiloncontrol_propproof}
\begin{proof}
   
    The first tool for this proof is the following lemma, which informally states that being off by a small amount of control has a small impact on the overall cost.
    \begin{lemma}\label{offbyepsiloncontrol}
            Under Assumptions \ref{assum_init}--\ref{parameterization_assum3}, with conditional probability $1-o_T(1/T)$ given event $E$, for all $s \in [0:s_e]$,
            
        \begin{align*}
            &|T_s \cdot J(\theta^*, C_{K_{\mathrm{opt}}(\hat{\theta}_s, T_s)}^{\hat{\theta}_s}, T_s, x'_{   T_s}, W_s)  - T_s \cdot J(\theta^*, C^{\mathrm{alg}}_s, T_s, x'_{   T_s}, W_s)| \\
            &= \tilde{O}_T\left( \sum_{t=   T_s}^{   T_{s+1}-1} |C_{K_{\mathrm{opt}}(\hat{\theta}_s, T_s)}^{\hat{\theta}_s}(x'_t) - C^{\mathrm{alg}}_s(x'_{   t})|\right) + \tilde{O}_T(T_s\epsilon_s).
        \end{align*}
    \end{lemma}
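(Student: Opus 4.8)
The plan is to prove Lemma~\ref{offbyepsiloncontrol} by a \emph{single-step controller-swap} (hybrid) telescoping argument, in which the total cost of the base controller $C^{\hat\theta_s}_{K_{\mathrm{opt}}(\hat\theta_s,T_s)}$ is morphed into the total cost of the censored algorithm controller $C^{\mathrm{alg}}_s$ one time step at a time. For each $\tau \in \{T_s, T_s+1, \dots, T_{s+1}\}$ I would define $P_\tau$ to be the total cost over round $s$ (times $T_s$ through $T_{s+1}-1$, run under $\theta^*$, starting from $x'_{T_s}$ with noise $W_s$) of the trajectory that applies the censored controller $C^{\mathrm{alg}}_s$ at times $T_s,\dots,\tau-1$ and the base controller $C^{\hat\theta_s}_{K_{\mathrm{opt}}(\hat\theta_s,T_s)}$ at times $\tau,\dots,T_{s+1}-1$. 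Then $P_{T_s} = T_s\cdot J(\theta^*,C^{\hat\theta_s}_{K_{\mathrm{opt}}(\hat\theta_s,T_s)},T_s,x'_{T_s},W_s)$ and $P_{T_{s+1}} = T_s\cdot J(\theta^*,C^{\mathrm{alg}}_s,T_s,x'_{T_s},W_s)$, so the quantity to bound is $|P_{T_s}-P_{T_{s+1}}| \le \sum_{\tau=T_s}^{T_{s+1}-1}|P_\tau-P_{\tau+1}|$.

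The crux is bounding a single swap $|P_\tau - P_{\tau+1}|$. Since $P_\tau$ and $P_{\tau+1}$ both run the censored controller on $[T_s,\tau)$, their trajectories coincide up to time $\tau$, both arriving at the algorithm's position $x'_\tau$; and since both run the \emph{base} controller on $(\tau,T_{s+1})$, the only difference is the single control at time $\tau$, namely $\tilde u_\tau := C^{\hat\theta_s}_{K_{\mathrm{opt}}(\hat\theta_s,T_s)}(x'_\tau)$ for $P_\tau$ versus $u'_\tau := C^{\mathrm{alg}}_s(x'_\tau)$ for $P_{\tau+1}$. This contributes two pieces. First, the instantaneous cost at time $\tau$ differs by $r(\tilde u_\tau^2 - u'^2_\tau)$; under event $E$ both controls are bounded by $B_x$ (Lemma~\ref{bounded_approx}), so this is $\tilde{O}_T(|\tilde u_\tau - u'_\tau|)$. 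Second, the two trajectories enter time $\tau+1$ at positions differing by $|\hat x_{\tau+1}-x'_{\tau+1}| = b^*|\tilde u_\tau - u'_\tau|$, after which both evolve under the \emph{same} base controller and the same noise $\{w_i\}_{i=\tau+1}^{T_{s+1}-1}$; I would bound this downstream difference using Lemma~\ref{offbyepsilon} with $\theta=\hat\theta_s$ and $\epsilon = \norm{\hat\theta_s-\theta^*}_\infty \le \epsilon_s$, giving $\tilde{O}_T(|\hat x_{\tau+1}-x'_{\tau+1}| + \epsilon_s) = \tilde{O}_T(|\tilde u_\tau - u'_\tau| + \epsilon_s)$. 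The essential reason for swapping one step at a time with the base controller placed downstream is that Lemma~\ref{offbyepsilon} only applies to members of the baseline class: $C^{\hat\theta_s}_{K_{\mathrm{opt}}(\hat\theta_s,T_s)}$ is one, whereas the censored $C^{\mathrm{alg}}_s$ need not be. This construction never requires any Lipschitz property for the censored controller. Summing the two pieces gives $|P_\tau-P_{\tau+1}| = \tilde{O}_T(|\tilde u_\tau - u'_\tau| + \epsilon_s)$.

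It then remains to telescope and discharge the hypotheses. Summing over the $T_s$ values of $\tau$ and using $\tilde u_\tau - u'_\tau = C^{\hat\theta_s}_{K_{\mathrm{opt}}(\hat\theta_s,T_s)}(x'_\tau) - C^{\mathrm{alg}}_s(x'_\tau)$ yields exactly $\tilde{O}_T(\sum_{t=T_s}^{T_{s+1}-1}|C^{\hat\theta_s}_{K_{\mathrm{opt}}(\hat\theta_s,T_s)}(x'_t) - C^{\mathrm{alg}}_s(x'_t)|) + \tilde{O}_T(T_s\epsilon_s)$, the claimed bound. To invoke Lemma~\ref{offbyepsilon} I must verify its hypothesis $|\hat x_{\tau+1}|,|x'_{\tau+1}|\le 4\log^2(T)$: under $E$ the algorithm is safe for $\theta^*$, so $|x'_{\tau+1}|\le 4\log^2(T)$ by Lemma~\ref{bounded_approx}; and since $\tilde u_\tau$ is safe for $\hat\theta_s$ with $\norm{\hat\theta_s-\theta^*}_\infty\le\epsilon_s = o_T(1/\log(T))$ and $|w_\tau|\le\log^2(T)$ under $E_1$, the same bounded-position estimate gives $|\hat x_{\tau+1}|\le 4\log^2(T)$ for large $T$. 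For the probabilistic bookkeeping, each downstream application of Lemma~\ref{offbyepsilon} holds on its conditioning event $E_{\mathrm{A}\ref{parameterization_assum3}}(\cdot)$ of probability $1-o_T(1/T^{10})$; union bounding over all $s\le s_e$ and all $\tau<T$ (at most $T$ events) keeps the total failure probability at $o_T(1/T)$, so the bound holds for every $s$ with conditional probability $1-o_T(1/T)$ given $E$. The main obstacle is conceptual rather than computational: seeing that a one-step swap with the baseline controller downstream reduces each increment to a pure shift of initial position for an in-class controller, which is exactly the quantity that Assumption~\ref{parameterization_assum3} (via Lemma~\ref{offbyepsilon}) controls.
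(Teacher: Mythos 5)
Your proposal is correct and follows essentially the same argument as the paper: the paper's proof also uses a one-step hybrid/telescoping construction (its controllers $C_t^i$ use $C_s^{\mathrm{alg}}$ before time $i$ and $C^{\hat{\theta}_s}_{K_{\mathrm{opt}}(\hat{\theta}_s,T_s)}$ afterward), bounds the instantaneous cost gap via Lemma \ref{bounded_approx}, bounds the downstream effect of the $b^*|\tilde u_\tau - u'_\tau|$ position shift via Lemma \ref{offbyepsilon} applied to the baseline controller, and finishes with the same union bounds over $\tau$ and $s$. Your observation that the baseline controller must be placed downstream because Lemma \ref{offbyepsilon} only applies to in-class controllers is precisely the structural point the paper's proof exploits.
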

    The proof of Lemma \ref{offbyepsiloncontrol} can be found in Appendix \ref{proof:offbyepsiloncontrol}.
    
    The control $C_{K_{\mathrm{opt}}(\hat{\theta}_s, T_s)}^{\hat{\theta}_s}(x'_t)$ is safe for dynamics $\hat{\theta}_s$ and conditional on event $E$, $\norm{\hat{\theta}_s - \theta^*}_{\infty} \le \tilde{O}_T(\nu_T) \le 1/\log(T)$ for sufficiently large $T$. The controller $C_s^{\mathrm{alg}}$ is safe for dynamics $\theta^*$ for all $T$ steps conditional on event $E$ by definition of $E$. These together imply by Lemma \ref{bounded_approx} that, conditional on event $E$ and for sufficiently large $T$, for all $t \in [T_s, T_{s+1}-1]$,

    \begin{equation}\label{eq:xs_lem8}
        |x'_{t}|, |\hat{x}_{t}| \le 4\log^2(T) \le B_x.
    \end{equation}
    By Lemma \ref{offbyepsilon} and \ref{offbyepsiloncontrol}, we have that conditional on event $E$, with probability $1-o_T(1/T)$,
    {\small
    \begin{align*}
    &\sum_{s=0}^{s_e}  T_sJ(\theta^*,C^{\mathrm{alg}}_s, T_s,  x'_{   T_s},W_s) - \sum_{s=0}^{s_e} T_sJ(\theta^*,C^{\hat{\theta}_s}_{K_{\mathrm{opt}}(\hat{\theta}_s, T_s)}, T_s,  \hat{x}_{   T_s},W_s) \\
    &=  \tilde{O}_T(1) + \sum_{s=0}^{s_e}  T_sJ(\theta^*,C^{\mathrm{alg}}_s, T_s,  x'_{   T_s},W_s) - \sum_{s=0}^{s_e} T_sJ(\theta^*,C^{\hat{\theta}_s}_{K_{\mathrm{opt}}(\hat{\theta}_s, T_s)}, T_s,  x'_{   T_s},W_s) \quad \quad  \text{Eq. \eqref{eq:xs_lem8}, Lemma \ref{offbyepsilon}}\\
    &=  \tilde{O}_T( 1)+ \sum_{s=0}^{s_e} \left(  T_sJ(\theta^*,C^{\mathrm{alg}}_s, T_s,  x'_{   T_s},W_s) - T_sJ(\theta^*,C^{\hat{\theta}_s}_{K_{\mathrm{opt}}(\hat{\theta}_s, T_s)}, T_s,  x'_{   T_s},W_s)\right) \\
    &= \tilde{O}_T(1) +\tilde{O}_T\left(\sum_{s=0}^{s_e} \left(T_s\epsilon_s + \sum_{t =    T_s}^{   T_{s+1}-1} |C^{\mathrm{alg}}_s(x'_t) - C_{K_{\mathrm{opt}}(\hat{\theta}_s, T_s)}^{\hat{\theta}_s}(x'_t)|\right)\right) \quad \quad \quad \quad\quad \quad \text{Lemma \ref{offbyepsiloncontrol} } \\
    &=  \tilde{O}_T\left(\sum_{s=0}^{s_e} \epsilon_sT_s \right) + \tilde{O}_T\left(\sum_{s=0}^{s_e} \sum_{t =    T_s}^{   T_{s+1}-1} X^U_{t} \cdot \left|u_t^{\mathrm{safeU}} - C_{K_{\mathrm{opt}}(\hat{\theta}_s, T_s)}^{\hat{\theta}_s}(x'_t)\right| + X^L_{t} \cdot \left|u_t^{\mathrm{safeL}} - C_{K_{\mathrm{opt}}(\hat{\theta}_s, T_s)}^{\hat{\theta}_s}(x'_t)\right|\right).
    \end{align*}
    }
    We applied Lemma \ref{offbyepsilon} for every $s\in [0:s_e]$, so $\tilde{O}_T(1)$ times. Since Lemma \ref{offbyepsilon} holds with probability $1-o_T(1/T^{10})$, a union bound gives the first inequality holds with probability $1-o_t(1/T^9)$. Another union bound combining this with the single application of Lemma \ref{offbyepsiloncontrol} gives that the probability of the above result is $1-o_T(1/T)$. The final line simplified using the fact that the two controls are equal if $X_t^L = X_t^U = 0$.
\end{proof}

\subsection{Proof of Lemma \ref{bound_on_cont_diff_propproof} (Difference in Safety Controls)}\label{proof:bound_on_cont_diff_propproof}
\begin{proof}
    By symmetry, it is sufficient to show the first part of the lemma statement for $u_t^{\mathrm{safeU}}$. 
    
    Because $C^{\mathrm{alg}}$ is safe for dynamics $\theta^*$ under event $E$ and $E \subseteq E_1$, we have by Lemma \ref{bounded_approx} that under event $E$,
    \begin{equation}\label{safeL_boundedapprox0}
        |x'_t| \le 4\log^2(T).
    \end{equation}
    Under event $E$ and for sufficiently large $T$, $\norm{\theta^* - \hat{\theta}_s}_{\infty} \le \epsilon_s \le \frac{1}{\log(T)}$. This implies by construction of $u_t^{\mathrm{safeU}}$ that under event $E$ and for sufficiently large $T$, $a^*x'_t + b^*u_t^{\mathrm{safeU}} \le D_{\mathrm{U}}$. By Lemma \ref{lemma:L_less_than_U}, we also have that under event $E$ and for sufficiently large $T$, $u_t^{\mathrm{safeU}} \ge u_t^{\mathrm{safeL}}$. Therefore, by construction of $u_t^{\mathrm{safeL}}$ we have that under event $E$ and for sufficiently large $T$, $a^*x'_t + b^*u_t^{\mathrm{safeU}} \ge a^*x'_t + b^*u_t^{\mathrm{safeL}} \ge  D_{\mathrm{L}}$. Together, this shows that $u_t^{\mathrm{safeU}}$ is safe for dynamics $\theta^*$. By Lemma \ref{bounded_approx} and Equation \eqref{safeL_boundedapprox0}, this gives that under event $E$ and for sufficiently large $T$,
    \begin{equation}\label{safeL_boundedapprox}
        |u_t^{\mathrm{safeU}}| \le B_x.
    \end{equation}
    Because any control used by controller $C^{\hat{\theta}_s}_{K_{\mathrm{opt}}(\hat{\theta}_s, T_s)}$ is safe for dynamics $\hat{\theta}_s$, by Lemma \ref{bounded_approx} we also have that under event $E$ for sufficiently large $T$,
    \begin{equation}\label{safeL_boundedapprox2}
        |C^{\hat{\theta}_s}_{K_{\mathrm{opt}}(\hat{\theta}_s, T_s)}(x'_t)| \le B_x.
    \end{equation}
    Also, note that by Algorithm \ref{alg:cap} Line \ref{line:safeU}, $u_t^{\mathrm{safeU}}$ satisfies, for some $\theta$ such that $\norm{\theta -  \hat{\theta}_s}_{\infty} \le \epsilon_s$,
    \begin{equation}\label{eq:ab1}
        ax'_t + bu_t^{\mathrm{safeU}} = D_{\mathrm{U}}.
    \end{equation}
    Under event $E$, $\norm{\theta^* - \hat{\theta}_s}_{\infty} \le \epsilon_s$, which implies that $\norm{\theta^* - \theta}_{\infty} \le 2\epsilon_s \le \tilde{O}_T(\nu_T) \le 1/\log(T)$ for sufficiently large $T$. Therefore, applying Lemma \ref{bounded_approx} gives that under event $E$ and for sufficiently large $T$, 
    \begin{align*}
     D_{\mathrm{U}} &\ge a^*x'_t + b^*u_t^{\mathrm{safeU}} && \text{$u_t^{\mathrm{safeU}}$ safe for $\theta^*$} \\
     &\ge ax'_t + bu_t^{\mathrm{safeU}} - |u_t^{\mathrm{safeU}}|2\epsilon_s - |x'_t|2\epsilon_s  && \text{$\norm{\theta^* - \theta}_{\infty} \le 2\epsilon_s$}\\
     &\ge D_{\mathrm{U}} - 4B_x\epsilon_s. && \text{ Equations \eqref{safeL_boundedapprox0},\eqref{safeL_boundedapprox}, and \eqref{eq:ab1} } \numberthis \label{eq:enforcingsafety3}
    \end{align*}    
    If $u_t^{\mathrm{safeU}} \le C^{\hat{\theta}_s}_{K_{\mathrm{opt}}(\hat{\theta}_s, T_s)}(x'_t)$, then there must exist some $\theta$ such that $\norm{\hat{\theta}_s - \theta}_{\infty} \le \epsilon_s$ and
    \begin{equation}\label{eq:ab2}
        ax'_t + bC^{\hat{\theta}_s}_{K_{\mathrm{opt}}(\hat{\theta}_s, T_s)}(x'_t) \ge D_{\mathrm{U}}.
    \end{equation}
    Under event $E$, $\norm{\theta^* - \theta}_{\infty} \le 2\epsilon_s \le \tilde{O}_T(\nu_T) \le 1/\log(T)$ for sufficiently large $T$, therefore under event $E$ and for sufficiently large $T$,
    \begin{align*}
        &a^*x'_t + b^* C^{\hat{\theta}_s}_{K_{\mathrm{opt}}(\hat{\theta}_s, T_s)}(x'_t) \\
        &\ge ax'_t + b C^{\hat{\theta}_s}_{K_{\mathrm{opt}}(\hat{\theta}_s, T_s)}(x'_t) -2\epsilon_s|x'_t| - 2\epsilon_s\left|C^{\hat{\theta}_s}_{K_{\mathrm{opt}}(\hat{\theta}_s, T_s)}(x'_t)\right| \\
        &\ge  D_{\mathrm{U}} - 4B_x\epsilon_s.  && \text{Equations \eqref{safeL_boundedapprox0},\eqref{safeL_boundedapprox2}, and \eqref{eq:ab2}}\numberthis \label{eq:enforcingsafety5}
    \end{align*}
    Finally, because $C^{\hat{\theta}_s}_{K_{\mathrm{opt}}(\hat{\theta}_s, T_s)}(x'_t)$ is safe for dynamics $\hat{\theta}_s$, 
    \begin{equation}\label{eq:enforcingsafety6}
        \hat{a}_sx'_t +\hat{b}_s C^{\hat{\theta}_s}_{K_{\mathrm{opt}}(\hat{\theta}_s, T_s)}(x'_t) \le D_{\mathrm{U}}.
    \end{equation}
    Using that under event $E$, $\norm{\theta^* - \hat{\theta}_s}_{\infty} \le \epsilon_s \le \tilde{O}_T(\nu_T) \le 1/\log(T)$ for sufficiently large $T$, Equations \ref{safeL_boundedapprox0}, \ref{safeL_boundedapprox2}, and \eqref{eq:enforcingsafety6} imply that under event $E$ and for sufficiently large $T$,
    \begin{equation}\label{eq:enforcingsafety7}
        a^*x'_t + b^*C^{\hat{\theta}_s}_{K_{\mathrm{opt}}(\hat{\theta}_s, T_s)}(x'_t) \le D_{\mathrm{U}} + 2B_x\epsilon_s.
    \end{equation}
    Combining Equations \eqref{eq:enforcingsafety5} and \eqref{eq:enforcingsafety7}, if $u_t^{\mathrm{safeU}} \le C^{\hat{\theta}_s}_{K_{\mathrm{opt}}(\hat{\theta}_s, T_s)}(x'_t)$ then under event $E$ and for sufficiently large $T$,
    \[
        D_{\mathrm{U}} - 4B_x\epsilon_s \le a^*x'_t + b^*C^{\hat{\theta}_s}_{K_{\mathrm{opt}}(\hat{\theta}_s, T_s)}(x'_t) \le D_{\mathrm{U}} + 2B_x\epsilon_s.
    \]
    Combining this with  Equation \eqref{eq:enforcingsafety3} gives that under event $E$ and for sufficiently large $T$,
    \[
        |(a^*x'_t + b^*u_t^{\mathrm{safeU}}) - (a^*x'_t + b^*C^{\hat{\theta}_s}_{K_{\mathrm{opt}}(\hat{\theta}_s, T_s)}(x'_t))| = 6B_x\epsilon_s.
    \]
    This implies the desired result that under event $E$ and for sufficiently large $T$,
    \[
        |u_t^{\mathrm{safeU}} -  C^{\hat{\theta}_s}_{K_{\mathrm{opt}}(\hat{\theta}_s, T_s)}(x'_t)| = 6B_x\epsilon_s/b^*.
    \]
\end{proof}
    \subsection{Proof of Lemma \ref{mcdiarmids_app} (McDiarmid's Condition)}\label{proof:mcdiarmids_app}
    \begin{proof}

    First, we will construct the event $E_s^{\mathrm{M}}$. Define
    \[
        E_s^{\mathrm{M}} = \{\forall t \in [T_s: T_{s+1}-1], |w_t| \le \log^2(T)\} \cap \bigcap_{i=   T_s}^{   T_{s+1}-1} E_{\mathrm{A}\ref{parameterization_assum3}}\left(C^{\hat{\theta}_s}_{K_{\mathrm{opt}}(\hat{\theta}_s, T_s)}, \{w_t\}_{t=i}^{   T_{s+1}-1}\right).
    \]
    Note because $\P(\{\forall t \in [T_s : T_{s+1}-1], |w_t| \le \log^2(T)\}) \ge \P(E_1) = 1-o_T(1/T^{10})$ and because under event $E_2^s$,  $\P\left(E_{\mathrm{A}\ref{parameterization_assum3}}\left(C^{\hat{\theta}_s}_{K_{\mathrm{opt}}(\hat{\theta}_s, T_s)}, \{w_t\}_{t=i}^{   T_{s+1}-1}\right) \cond \hat{\theta}_s \right) = 1-o_T(1/T^{10})$ we have by a union bound that $\P(E_s^{\mathrm{M}}  \mid \hat{\theta}_s  ) = 1-o_T(1/T^{9})$. Suppose $E_s^{\mathrm{M}}$ holds for $W_s$ and $W'_s$. For $i \in [   T_s,    T_{s+1}]$, define $W^i$ as follows.
   \[
    W^i = \{w_{   T_s},w_{   T_s+1},...,w_{i-1}, w_i',w_{i+1}', w_{i+2}',...w_{   T_{s+1}-1}'\}.
   \]
   In other words, $W^i$ includes noise $w_t$ for $t < i$ and includes $w'_t$ for $t \ge i$. For $i \in [   T_s,    T_{s+1} - 1]$, we will first bound 
    \[
        \left|  f_{\hat{\theta}_s}(W^i) - f_{\hat{\theta}_s}(W^{i+1})\right|.
    \]
    First, note that if $w_i = w_i'$, then $W^i = W^{i+1}$ and therefore $f_{\hat{\theta}_s}(W^i) = f_{\hat{\theta}_s}(W^{i+1})$. Now, assume $w_i \ne w_i'$. Let $x^i_{0},..,x^i_{T_s}$ be the series of positions when the noise random variables are $W^i$, $x^i_0 = 0$, and the controller used is $C_{K_{\mathrm{opt}}(\hat{\theta}_s, T_s)}^{\hat{\theta}_s}$.  Conditional on $E_2^s$, $\norm{\hat{\theta}_s - \theta^*}_{\infty} \le \tilde{O}(\nu_T) \le 1/\log(T)$ for sufficiently large $T$. Because $E^{\mathrm{M}}_s$ holds for $W_s, W_s'$, we have that $E_1$ holds for $W^i$ for all $i$. Therefore by Lemma \ref{bounded_approx} for sufficiently large $T$, $|x^i_{t}| \le 4\log^2(T)$ for all $i,t$. For any $t \le i$, $x^i_t = x^{i+1}_t$. Therefore, the difference in the two trajectories $\{x_t^i\}$ and $\{x_t^{i+1}\}$ only occurs at and after time $i+1$. The first difference occurs at time $i+1$ when $x^i_{i+1} = x^{i+1}_{i+1} - w_{i} + w'_{i} $.   For the next $T_{s+1}-i-1$ steps,  the difference in cost of the two trajectories $\{x^i_t\}$ and $\{x^{i+1}_t\}$ is
    {\fontsize{10}{10}
    \begin{align*}
        &(T_{s+1}-i-1)J(\theta^*, C^{\hat{\theta}_s}_{K_{\mathrm{opt}}(\hat{\theta}_s, T_s)},T_{s+1}-i-1, x^{i+1}_{i+1}, \{w_{t}'\}_{t=i+1}^{   T_{s+1}-1}) \\
        &\quad \quad -  (T_{s+1}-i-1)J(\theta^*, C^{\hat{\theta}_s}_{K_{\mathrm{opt}}(\hat{\theta}_s, T_s)}, T_{s+1}-i-1, x^{i}_{i+1}, \{w_{t}'\}_{t=i+1}^{   T_{s+1}-1} ) \\
        &= \tilde{O}_T\left(|x_{i+1}^{i+1} - x^i_{i+1}| + |\hat{\theta}_s - \theta^*|\right) && \text{Lemma \ref{offbyepsilon}, $|x_t^i| \le 4\log^2(T)$ } \\
        &= \tilde{O}_T\left(|x^{i+1}_{i+1} - x^{i+1}_{i+1} + w_{i} - w'_{i}| + \nu_T\right)&& \text{Event $E_2^s$, $x^i_{i+1} = x^{i+1}_{i+1} - w_{i} + w'_{i} $}  \\
        &= \tilde{O}_T\left(|w_{i} - w_i'| + \nu_T\right)  \\
        &= \tilde{O}_T\left(2\log^2(T) + \nu_T\right)  && \text{$W,W'$ satisfy event $E_s^{\mathrm{M}}$} \\
        &= \tilde{O}_T(1). \numberthis \label{eq:mcdi1}
    \end{align*}
    }
    We have therefore shown that for some $c = \tilde{O}_T(1)$,
        \begin{align*}
        |f_{\hat{\theta}_s}(W^i) - f_{\hat{\theta}_s}(W^{i+1})| \le c.
    \end{align*}
    Because $W_s = W^{   T_{s+1}}$ and $W_s' = W^{   T_{s}}$, we have by the triangle inequality that
    \begin{align*}
        |f_{\hat{\theta}_s}(W_s) - f_{\hat{\theta}_s}(W_s')| &= |f_{\hat{\theta}_s}(W^{   T_{s+1}}) - f_{\hat{\theta}_s}(W^{   T_s})| \\
        &\le \sum_{i=   T_s}^{   T_{s+1}-1} |f_{\hat{\theta}_s}(W^i) - f_{\hat{\theta}_s}(W^{i+1})| \\
        &= \sum_{i=   T_s, w_i \ne w_i'}^{   T_{s+1}-1} |f_{\hat{\theta}_s}(W^i) - f_{\hat{\theta}_s}(W^{i+1})|  \\
        &\le \sum_{i=   T_s, w_i \ne w_i'}^{   T_{s+1}-1} c.
    \end{align*}
\end{proof}

\subsection{Proof of Lemma \ref{offbyepsilon_exp}}\label{proof:offbyepsilon_exp}
\begin{proof}

Define $E^* = \{|x|, |y| \le 4\log^2(T)\} \cap E_{\mathrm{A}\ref{parameterization_assum3}}(C_K^\theta, W')$. By assumption of the lemma, we have that $\P(|x| \le 4\log^2(T)) = 1 - o_T(1/T^{11})$ and $\P(|y| \le 4\log^2(T)) = 1 - o_T(1/T^{11})$. Because $\norm{\theta - \theta^*}_{\infty} \le \epsilon_{\mathrm{A}\ref{parameterization_assum3}}$, $\P(E_{\mathrm{A}\ref{parameterization_assum3}}(C_K^\theta, W')) = 1 - o_T(1/T^{10})$. Therefore, by a union bound we have that $\P(E^*) = 1-o_T(1/T^{10})$. By the Law of Total Expectation,
\begin{align*}
    &\E[\left|t \cdot J(\theta^*, C_K^\theta, t, x, W') - t \cdot J(\theta^*, C_K^\theta, t, y, W')\right|]\\
    &= \E\left[\left|t \cdot J(\theta^*, C_K^\theta, t, x, W') - t \cdot J(\theta^*, C_K^\theta, t, y, W') \right| \cond E^* \right]\P(E^*) \\
    &\quad \quad \quad \quad \quad + \E\left[\left|t \cdot J(\theta^*, C_K^\theta, t, x, W') - t \cdot J(\theta^*, C_K^\theta, t, y, W')\right| \cond \neg E^*\right]\P(\neg E^*) \\
    &= \E\left[\tilde{O}_T\left(|x-y| + \epsilon \right) \cond E^* \right]\P(E^*) \\
    &\quad \quad \quad \quad \quad  + \E\left[\left|t \cdot J(\theta^*, C_K^\theta, t, x, W') - t \cdot J(\theta^*, C_K^\theta, t, y, W')\right| \cond \neg E^*\right]\P(\neg E^*) && \text{Lemma \ref{offbyepsilon}} \\    
    &= \tilde{O}_T\Big(\E\left[|x-y| \cond E^*\right]\P\left(E^*\right)+ \epsilon \Big)  \\
    &\quad \quad \quad \quad \quad  + \E\left[\left|t \cdot J(\theta^*, C_K^\theta, t, x, W') - t \cdot J(\theta^*, C_K^\theta, t, y, W')\right| \cond \neg E^*\right]\P(\neg E^*)  \\
    &= \tilde{O}_T\Big(\E\left[|x-y|\right]+ \epsilon \Big)  
 + \E\left[\left|t \cdot J(\theta^*, C_K^\theta, t, x, W') - t \cdot J(\theta^*, C_K^\theta, t, y, W')\right| \cond \neg E^*\right]\P(\neg E^*)  &&  \text{LoTE}
\end{align*}
Therefore, all we must show is that
\[
\E\left[\left|t \cdot J(\theta^*, C_K^\theta, t,  x, W') - t \cdot J(\theta^*, C_K^\theta, t,  y, W')\right| \cond \neg E^*\right]\P(\neg E^*) = \tilde{O}_T(T^{-2}).
\]
Define $w_m = \max_{w \in W'} |w|$. By Lemma \ref{bounded_pos_cont}, we can bound the position and controls at every time step in terms of $w_m$ to get that
\begin{align*}
&\left|t \cdot J(\theta^*, C_K^\theta, t, x,W') - t \cdot J(\theta^*, C_K^\theta, t,  y, W')\right|  \\
&= T(q+r)O_T\left((w_m + x + \norm{D}_{\infty})^2 + (w_m + y + \norm{D}_{\infty})^2\right)  && \text{Triangle Inequality, Lemma \ref{bounded_pos_cont}} \\
&= O_T\left(T\left((w_m + x + \norm{D}_{\infty})^2+ (w_m + y + \norm{D}_{\infty})^2\right)\right) \\
&= \tilde{O}_T\left(T\left(w_m^2 + w_m|x| + |x|^2 + w_m|y| + |y|^2 + w_m + |x| + |y| + 1\right)\right). && \text{Assum \ref{problem_specifications} ($\norm{D}_{\infty} \le \log^2(T)$)}
\end{align*}
Therefore, we have that
{\fontsize{10}{10}
\begin{align*}
    &\E\left[\left|t \cdot J(\theta^*, C_K^\theta, t,  x, W') - t \cdot J(\theta^*, C_K^\theta, t,  y, W')\right| \mid \neg E^*\right]\P(\neg E^*)\\
    &=  \tilde{O}_T\Big(T\big(\E[w_m^2 \mid \neg E^*]\P(\neg E^*) + \E[w_m \mid \neg E^*]\P(\neg E^*)  + \E[|y|w_m \mid \neg E^*]\P(\neg E^*) +\E[ |y|^2 \mid \neg E^*]\P(\neg E^*) \\
    & \quad  + \E[|x|w_m \mid \neg E^*]\P(\neg E^*) +\E[ |x|^2 \mid \neg E^*]\P(\neg E^*) +  \E[|x| \mid \neg E^*]\P(\neg E^*) 
 +  \E[|y| \mid \neg E^*]\P(\neg E^*) + \P(\neg E^*) \big)\Big).\numberthis \label{eq:expectation}
\end{align*}
}
Therefore, it is sufficient to show that $\E[w_m \mid \neg E^*] \P(\neg E^*)$, $\E[w_m^2 \mid \neg E^*] \P(\neg E^*)$, $\E[|x| \mid \neg E^*] \P(\neg E^*)$, $\E[x^2 \mid \neg E^*]\P(\neg E^*)$ , $\E[|y| \mid \neg E^*] \P(\neg E^*)$, $\E[y^2 \mid \neg E^*] \P(\neg E^*)$,  $\E[|x|w_m \mid \neg E^*] \P(\neg E^*)$, $\E[|y|w_m \mid \neg E^*]\P(\neg E^*)$ are all $\tilde{O}_T(\frac{1}{T^3})$. We will use the following probability result.

\begin{lemma}\label{lemma:condsplit}
    Suppose $X$ is a non-negative random variable. Then for any $L \ge 0$ and any event $E$, we have that
    \[
        \E\left[X \cond E\right]\P(E) \le \P(E)L + \P(X \ge L)\E\left[X \cond X \ge L\right]
    \]
\end{lemma}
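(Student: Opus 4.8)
The plan is to convert the conditional-expectation notation into an unconditional expectation against an indicator and then split according to whether $X$ falls below or above the threshold $L$. Writing $\mathbf{1}_E$ for the indicator of the event $E$, the left-hand side is exactly $\E[X\mathbf{1}_E]$, and the second summand on the right is exactly $\E[X\mathbf{1}_{\{X \ge L\}}]$ (since $\P(X \ge L)\E[X \mid X \ge L] = \E[X\mathbf{1}_{\{X \ge L\}}]$). Thus the claim reduces to proving the clean inequality $\E[X\mathbf{1}_E] \le L\,\P(E) + \E[X\mathbf{1}_{\{X \ge L\}}]$.

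First I would decompose the indicator as $\mathbf{1}_E = \mathbf{1}_{E \cap \{X < L\}} + \mathbf{1}_{E \cap \{X \ge L\}}$ and split $\E[X\mathbf{1}_E]$ into the two corresponding pieces. On the event $E \cap \{X < L\}$ we have $X < L$, so pointwise $X\,\mathbf{1}_{E \cap \{X < L\}} \le L\,\mathbf{1}_E$, which upon taking expectations gives $\E[X\,\mathbf{1}_{E \cap \{X < L\}}] \le L\,\P(E)$. For the second piece, since $E \cap \{X \ge L\} \subseteq \{X \ge L\}$ and $X \ge 0$, we have pointwise $X\,\mathbf{1}_{E \cap \{X \ge L\}} \le X\,\mathbf{1}_{\{X \ge L\}}$, hence $\E[X\,\mathbf{1}_{E \cap \{X \ge L\}}] \le \E[X\,\mathbf{1}_{\{X \ge L\}}]$. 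Adding the two bounds yields the desired inequality.

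The argument is essentially a one-line measure-theoretic splitting, so I do not expect a serious obstacle; the only point requiring care is the use of the non-negativity of $X$ in the second piece, which is precisely what allows enlarging the event from $E \cap \{X \ge L\}$ to $\{X \ge L\}$ while preserving the inequality (without $X \ge 0$ this step could fail). I would also flag the convention that $\E[X \mid E]\,\P(E)$ is to be read as $\E[X\mathbf{1}_E]$, so that the statement is meaningful and holds for every event $E$, including the degenerate case $\P(E) = 0$ where both sides vanish.
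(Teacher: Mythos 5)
Your proposal is correct and takes essentially the same route as the paper's proof: both split on $\{X < L\}$ versus $\{X \ge L\}$, bound the first piece by $L\,\P(E)$, and use the non-negativity of $X$ to enlarge the event $E \cap \{X \ge L\}$ to $\{X \ge L\}$. The only difference is presentational — you work with indicators and pointwise inequalities, whereas the paper manipulates conditional expectations (its helper inequality \eqref{eq:A_to_B} for nested events is exactly your pointwise monotonicity step $X\mathbf{1}_{E \cap \{X \ge L\}} \le X\mathbf{1}_{\{X \ge L\}}$ after taking expectations), so your phrasing has the minor advantage of remaining valid verbatim when the conditioning events have probability zero.
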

\begin{proof}

    For any events $A,B$ such that $A \subseteq B$, we have that
    \begin{align*}
        \E[X \mid B]\P(B) &= \E[X \mid A,B]\P(A \mid B)\P(B) + \E[X \mid \neg A, B]\P(\neg A \mid B)\P(B) \\
        &= \E[X \mid A]\P(A) + \E[X \mid \neg A, B]\P(\neg A \mid B)\P(B)  && \text{$A \subseteq B$} \\
        &\ge \E[X \mid A]\P(A). \numberthis \label{eq:A_to_B}
    \end{align*}
    Therefore, we can conclude that 
    \begin{align*}
        &\E\left[X \cond E\right]\P(E) \\
        &= \E\left[X \cond E, X \le L\right] \P( X \le L \mid E)\P(E) + \E\left[X \cond E, X \ge L\right] \P(X \ge L \mid E)\P(E) \\
        &\le \P(E) L +  \E\left[X \cond E, X \ge L\right] \P(X \ge L \mid E)\P(E)  \\
        &\le \P(E) L +  \E\left[X \cond E, X \ge L\right] \P(E, X \ge L)  \\
        &\le \P(E) L +  \E\left[X \cond X \ge L\right] \P(X \ge L). && \text{Eq \eqref{eq:A_to_B}}
    \end{align*}
\end{proof}

Now, note that by the assumption on $x$ and definition of $E^*$ (where $L$ is from the lemma statement),
\begin{align*}
    \E[x^2 \mid \neg E^*]\P(\neg E^*) &\le \P(\neg E^*)L^2 + \P(|x| \ge L)\E[|x|^2 \mid |x| \ge L] && \text{Lemma \ref{lemma:condsplit}} \\
    &= \tilde{O}_T\left( \frac{1}{T^{10}}\right) + \tilde{O}_T\left(\frac{1}{T^{10}} \right)\\
    &= \tilde{O}_T\left( \frac{1}{T^{10}}\right) .
\end{align*}
This also implies by the Cauchy--Schwarz inequality that 
\begin{align*}
    \E[|x| \mid \neg E^*]\P(\neg E^*) &\le \sqrt{\E[x^2 \mid \neg E^*]}\P(\neg E^*) \\
    &= \sqrt{ \E[x^2 \mid \neg E^*]\P(\neg E^*)}\sqrt{\P(\neg E^*)} \\
    &=  \tilde{O}_T\left( \frac{1}{T^{5}}\right).
\end{align*}
 By Lemma \ref{lemma:subgaussian_tail}, because $\P(E^*) = 1-o_T(1/T^{11})$ we have that
\[
\E[w_m^2 \mid \neg E^*]\P(\neg E^*)  = \tilde{O}_T\left(\frac{1}{T^{10}}\right).
\]
Once again, by the Cauchy-Schwarz inequality this implies that $\E[w_m \mid \neg E^*] = \tilde{O}_T\left( \frac{1}{T^{5}}\right)$. 

By the subgaussian assumption on $\mathcal{D}$ and a union bound, we have that
\begin{align*}
    \P(w_m \ge \log^3(T)) &\le \sum_{w \in W'} \P(|w| \ge \log^3(T)) \\
    &\le t \cdot 2e^{-\Omega_T(\log^6(T))} \\
    &\le o_T(1/T^{11}). \numberthis \label{eq:wm_subgaussian}
\end{align*}
Finally, we have by the independence of $x$ and $w_m$ and the assumption on $x$ that
{\fontsize{10}{10}
\begin{align*}
    &\E[|x|w_m \mid \neg E^*]\P(\neg E^*) \\
    &\le \P(\neg E^*)L\log^3(T) \\
    & \quad \quad \quad + \P(|x| \ge L, w_m \ge \log^3(T))\E\left[|x|w_m  \cond |x| \ge L, w_m \ge \log^3(T)\right]  \\
    & \quad \quad \quad  +\P(|x| \le L, w_m \ge \log^3(T))\E[|x|w_m \mid |x| \le L, w_m \ge \log^3(T)] \\
    & \quad \quad \quad + \P(|x| \ge L, w_m \le \log^3(T))\E[|x|w_m \mid |x| \ge L, w_m \le \log^3(T)] \quad \quad \quad \quad \text{Lemma \ref{lemma:condsplit}}\\
    &\le \P(\neg E^*)L\log^3(T) \\
    & \quad \quad \quad + \P(|x| \ge L)\P(w_m \ge \log^3(T))\E\left[|x| \cond |x| \ge L\right] \E\left[w_m  \cond w_m \ge \log(T)\right] \\
    & \quad \quad \quad  +L\P(w_m \ge \log^3(T))\E[w_m \mid w_m \ge \log^3(T)] \\
    & \quad \quad \quad + \log^3(T)\P(|x| \ge L)\E\left[|x| \cond |x| \ge L\right] \quad \quad \quad \quad \quad \quad \quad \quad \text{[Ind of $x$ and $w_m$]} 
    \\ 
    &= \tilde{O}_T\left( \frac{1}{T^{10}}\right) + \tilde{O}_T\left( \frac{1}{T^{20}}\right) + \tilde{O}_T\left( \frac{1}{T^{10}}\right) + \tilde{O}_T\left(\frac{1}{T^{10}} \right) \quad \quad \quad \quad \text{[Def of $E^*$, Lemma \ref{lemma:subgaussian_tail}, Eq \eqref{eq:wm_subgaussian}, Assum on $x$]}\\
    &= \tilde{O}_T\left( \frac{1}{T^{10}}\right) .
\end{align*}
}
Note that by symmetry, all of the above results also hold for $y$. Returning to Equation \eqref{eq:expectation}, we have that 
\[
\E\left[\left|t \cdot J(\theta^*, C_K^\theta, t,  x, W') - t \cdot J(\theta^*, C_K^\theta, t,  y, W')\right| \mid \neg E^*\right]\P(\neg E^*) = \tilde{O}_T\left(\frac{1}{T^{2}}\right).
\]
This completes the proof that
\[
\E\left[\left|t \cdot J(\theta^*, C_K^\theta, t,  x, W') - t \cdot J(\theta^*, C_K^\theta, t,  y, W')\right| \right] = \tilde{O}_T\left(\E[|x-y|] + \epsilon + \frac{1}{T^{2}}\right).
\]
\end{proof}
\subsection{Proof of Lemma \ref{bounded_pos_cont}}\label{proof:bounded_pos_cont}
\begin{proof}
    Define $\gamma_T = \max_{t \le T-1} \norm{\theta_t - \theta^*}_{\infty} \le \frac{1}{\log(T)}$. Because the control at time $t$ is safe for dynamics $\theta_t$, we have $D_{\mathrm{L}} \le a_tx_t + b_tu_t \le D_{\mathrm{U}}$ for all $t$. By the triangle inequality,
    \[
        |x_{t+1}|  = |a^*x_t + b^*u_t + w_t| \le |w_t| + \norm{D}_{\infty} + \gamma_T(|x_t| + |u_t|).
    \]
    As in Equation \eqref{eq:bounded_approx3}, 
    \[
        |u_t| \le \frac{\norm{D}_{\infty}  + a^*|x_t| + \gamma_T|x_t|}{b^* - \gamma_T} = \frac{\norm{D}_{\infty}  + (a^* + \gamma_T)|x_t|}{b^* - \gamma_T}.
    \]
    For sufficiently large $T$, $\gamma_T \le b^*/2$, and therefore for sufficiently large $T$,
    \[
        |x_{t+1}| \le |w_t| + \norm{D}_{\infty}  + \gamma_T \left(|x_t| + \frac{\norm{D}_{\infty}  + a^*|x_t| + \gamma_T|x_t|}{b^* - \gamma_T}\right) = O_T(|w_t| + \norm{D}_{\infty} + \gamma_T|x_t|).
    \]
    Using $x_0 = x$ as the base-case, this recursive relationship implies that for all $t$,
    \begin{align*}
        |x_t| &\le O_T\left(\sum_{i=0}^{t-1} (|w_i| + \norm{D}_{\infty} + |x|)\gamma_T^{t-1-i} \right) \\
        &\le O_T\left(\left(\max_{i \le t-1} |w_i| + \norm{D}_{\infty} + |x|\right)\sum_{i=0}^{t-1} \gamma_T^i\right) \\
        &\le O_T\left(\left(\max_{i \le t-1} |w_i| + \norm{D}_{\infty} + |x|\right)\sum_{i=0}^{t-1} \left(\frac{1}{\log(T)}\right)^i\right) \\
        &=  O_T\left(\left(\max_{i \le t-1} |w_i| + \norm{D}_{\infty} + |x|\right)\frac{1}{1-\frac{1}{\log(T)}}\right).
    \end{align*}
   This implies that for sufficiently large $T$, $|x_{t}| = O_T(\max_{i \le t-1} |w_i| + \norm{D}_{\infty} + |x|)$
    and
    \[
        |u_{t}| \le \frac{\norm{D}_{\infty} + (a^*+\gamma_T)O_T(\max_{i \le t-1} |w_i| + \norm{D}_{\infty} + |x|)}{b^* - \gamma_T} = O_T(\max_{i \le t-1} |w_i| + \norm{D}_{\infty} + |x|),
    \]
    which are exactly the desired bounds.
    \end{proof}

\subsection{Proof of Lemma \ref{lemma:subgaussian_tail}}\label{proof:lemma:subgaussian_tail}
\begin{proof}
    Define $w_m = \max_{i \le t} |w_i|$. Because $w_t$ is sub-Gaussian, there exists $\alpha > 0$ such that for any $w \ge 0$, $\P(|w_t| \ge w) \le 2e^{-w^2/(2\alpha)}$. Therefore, we have for any $w \ge 0$,
\begin{align*}
    \P(w_m^2 \ge w) &= 1 - \P\left(\forall i \le t, |w_i| \le \sqrt{w}\right) \\
    &\le 1 - \left(1 - 2e^{-w/(2\alpha)}\right)^t \\
    &= O_T(te^{-w/(2\alpha)}). 
\end{align*}
This implies by the Law of Total Expectation that
\begin{align*}
    \E[w_m^2 \mid \neg F]\P(\neg F) &\le \P(\neg F)\log^6(T) + \P(w_m \ge \log^3(T))\E[w_m^2 \mid w_m \ge \log^3(T)] && \text{Lemma \ref{lemma:condsplit}}\\
    &=  o_T\left( \frac{1}{T^{10}}\right)+ \int_{\log^6(T)}^{\infty} \P(w_m^2 \ge w) dw \\
    &=  o_T\left( \frac{1}{T^{10}}\right)+ O_T\left(\int_{\log^6(T)}^{\infty} te^{-w/(2\alpha)}dw \right)\\
    &=  o_T\left( \frac{1}{T^{10}}\right)+ O_T\left(2t\alpha e^{-\log^6(T)/(2\alpha)}\right)\\
    &= o_T\left(\frac{1}{T^{10}}\right).
\end{align*}
\end{proof}

\subsection{Proof of Lemma \ref{offbyepsiloncontrol}}\label{proof:offbyepsiloncontrol}

\begin{proof}

    Fix a value of $s$. For $i \in [0: T_s]$, define the controller $C_t^i$ as the controller that at time $t < i$ uses controller $C^{\mathrm{alg}}_s$ and at time $t \ge i$ uses controller $C_{K_{\mathrm{opt}}(\hat{\theta}_s, T_s)}^{\hat{\theta}_s}$. We will compare the cost of controller $C_t^i$ versus controller $C_t^{i+1}$ over $T_s$ steps starting at position $x_{T_s}'$. Note that the cost of the first $i$ steps is the same, as $C_t^i = C_t^{i+1} = C^{\mathrm{alg}}_s$ for $t < i$. Therefore
    \[
        |i \cdot J(\theta^*, C^{i+1}_t, i, x_{   T_s}', \{w_{j}\}_{j=   T_s}^{   T_s+i-1})  - i \cdot J(\theta^*, C^{i}_t, i, x_{   T_s}',  \{w_{j}\}_{j=   T_s}^{   T_s+i-1})|  = 0.
    \]
    The position at time $i$ when using either $C_t^i$ or $C_t^{i+1}$ is $x'_{T_s+i}$. Conditional on event $E$ and for sufficiently large $T$, by Lemma \ref{bounded_approx} we have that $|C_s^{\mathrm{alg}}(x_{   T_s + i}')|, |C_{K_{\mathrm{opt}}(\hat{\theta}_s, T_s)}^{\hat{\theta}_s}(x_{   T_s + i}')| \le B_x$. Therefore conditional on event $E$ and for sufficiently large $T$,
    \begin{align*}
        &r\left(C_s^{\mathrm{alg}}(x_{   T_s + i}')^2 - C_{K_{\mathrm{opt}}(\hat{\theta}_s, T_s)}^{\hat{\theta}_s}(x_{   T_s + i}')^2\right) \\
        &\le 2r|C_s^{\mathrm{alg}}(x_{   T_s + i}') - C_{K_{\mathrm{opt}}(\hat{\theta}_s, T_s)}^{\hat{\theta}_s}(x_{   T_s + i}')| + r\left(C_s^{\mathrm{alg}}(x_{   T_s + i}') - C_{K_{\mathrm{opt}}(\hat{\theta}_s, T_s)}^{\hat{\theta}_s}(x_{   T_s + i}')\right)^2 \\
        &\le r(2 + 2B_x)|C_s^{\mathrm{alg}}(x_{   T_s + i}') - C_{K_{\mathrm{opt}}(\hat{\theta}_s, T_s)}^{\hat{\theta}_s}(x_{   T_s + i}')|. \numberthis \label{eq:offbyepsilonone1}   
    \end{align*}
    The difference in the next position when at position $x'_{   T_s+i}$ and using control $C_s^{\mathrm{alg}}(x_{   T_s + i}')$ versus $C_{K_{\mathrm{opt}}(\hat{\theta}_s, T_s)}^{\hat{\theta}_s}(x_{   T_s + i}')$ is
    \begin{align*}
        &\left| a^*x_{   T_s + i}' + b^*C_s^{\mathrm{alg}}(x_{   T_s + i}') + w_{   T_s + i} - (a^*x_{   T_s + i}' + b^*C_{K_{\mathrm{opt}}(\hat{\theta}_s, T_s)}^{\hat{\theta}_s}(x_{   T_s + i}') + w_{   T_s + i})\right| \\
        &= b^*|C_s^{\mathrm{alg}}(x_{   T_s + i}') - C_{K_{\mathrm{opt}}(\hat{\theta}_s, T_s)}^{\hat{\theta}_s}(x_{   T_s + i}')|. \numberthis \label{eq:bstar}
    \end{align*}
Under event $E$, the controls $C_{K_{\mathrm{opt}}(\hat{\theta}_s, T_s)}^{\hat{\theta}_s}(x_{   T_s + i}')$ and $C_s^{\mathrm{alg}}(x_{   T_s + i}')$ are safe for dynamics $\hat{\theta}_s$ and $\theta^*$, respectively and $\norm{\theta^* - \hat{\theta}_s}_{\infty} \le \epsilon_s \le \tilde{O}_T(\nu_T)$. Therefore, by Lemma \ref{bounded_approx}, conditional on event $E$ and for sufficiently large $T$, we have that $|x'_{T_s + i}| \le 4\log^2(T)$ and that
    \[
        |a^*x_{   T_s + i}' + b^*C_s^{\mathrm{alg}}(x_{   T_s + i}') + w_{   T_s + i}|, |a^*x_{   T_s + i}' + b^*C_{K_{\mathrm{opt}}(\hat{\theta}_s, T_s)}^{\hat{\theta}_s}(x_{   T_s + i}') + w_{   T_s + i}| \le 4\log^2(T).
    \]
    Conditional on event $E$ and for sufficiently large $T$, we therefore have by Lemma \ref{offbyepsilon} that conditional on $E_{\mathrm{A}\ref{parameterization_assum3}}(C_{K_{\mathrm{opt}}({\hat{\theta}_s}, T_s)}^{\hat{\theta}_s}, \{w_{j}\}_{j=   T_s+i+1}^{   T_{s+1}-1})$, we can bound the difference in future cost of the next $T_s - i- 1$ steps starting at time $   T_s + i+1$ when using control $C_s^{\mathrm{alg}}(x_{   T_s + i}')$ versus $C_{K_{\mathrm{opt}}(\hat{\theta}_s, T_s)}^{\hat{\theta}_s}(x_{   T_s + i}')$ as follows.
    \begin{align*}
        &(T_s-i-1)\Big|J(\theta^*, C_{K_{\mathrm{opt}}(\hat{\theta}_s, T_s)}^{\hat{\theta}_s}, T_s-i-1, a^*x_{   T_s + i}' + b^*C_s^{\mathrm{alg}}(x_{   T_s + i}') + w_{   T_s + i},  \{w_{j}\}_{j=   T_s+i+1}^{   T_{s+1}-1})  \\
        &\quad \quad - J(\theta^*, C_{K_{\mathrm{opt}}(\hat{\theta}_s, T_s)}^{\hat{\theta}_s}, T_s-i-1, a^*x_{   T_s + i}' + b^*C_{K_{\mathrm{opt}}(\hat{\theta}_s, T_s)}^{\hat{\theta}_s}(x_{   T_s + i}') + w_{   T_s + i},  \{w_{j}\}_{j=   T_s+i+1}^{   T_{s+1}-1})\Big| \\
        &= \tilde{O}_T\left(b^*|C_s^{\mathrm{alg}}(x_{   T_s + i}') - C_{K_{\mathrm{opt}}(\hat{\theta}_s, T_s)}^{\hat{\theta}_s}(x_{   T_s + i}')| + \epsilon_s\right). \quad \quad \quad \quad \quad \quad \text{[Eq \eqref{eq:bstar}, Lemma \ref{offbyepsilon}]} \numberthis \label{eq:offbyepsilonone3}
    \end{align*}
    Therefore, the difference in total cost between $C_t^i$ and $C_t^{i+1}$ conditional on event $E$ with probability $\P(E_{\mathrm{A}\ref{parameterization_assum3}}(C_{K_{\mathrm{opt}}({\hat{\theta}_s} T_s)}^{\hat{\theta}_s}, \{w_{j}\}_{j=   T_s+i+1}^{   T_{s+1}-1})) = 1-o_T(1/T^{10})$ is
    {\fontsize{10}{10}
    \begin{align*}
        &|T_s \cdot J(\theta^*, C_t^{i+1}, T_s, x_{   T_s}', W_s)  - T_s \cdot J(\theta^*, C^i_t, T_s, x_{   T_s}', W_s)  \\
        &\le \Big|i \cdot J(\theta^*, C_t^{i+1}, i, x_{   T_s}', \{w_{j}\}_{j=   T_s}^{   T_s+i})  - i \cdot J(\theta^*, C^{i}_t, i, x_{   T_s}', \{w_{j}\}_{j=   T_s}^{   T_s+i})\Big| + r\left(C_s^{\mathrm{alg}}(x_{   T_s + i}')^2 - C_{K_{\mathrm{opt}}(\hat{\theta}_s, T_s)}^{\hat{\theta}_s}(x_{   T_s + i}')^2\right) \\
        &\quad\quad  + (T_s-i-1)\Big|J(\theta^*, C_{K_{\mathrm{opt}}(\hat{\theta}_s, T_s)}^{\hat{\theta}_s}, T_s-i-1, a^*x_{   T_s + i}' + b^*C_s^{\mathrm{alg}}(x_{   T_s + i}') + w_{   T_s + i},  \{w_{j}\}_{j=   T_s+i+1}^{   T_{s+1}-1}) \\
        &\quad \quad - J(\theta^*, C_{K_{\mathrm{opt}}(\hat{\theta}_s, T_s)}^{\hat{\theta}_s}, T_s-i-1, a^*x_{   T_s + i}' + b^*C_{K_{\mathrm{opt}}(\hat{\theta}_s, T_s)}^{\hat{\theta}_s}(x_{   T_s + i}') + w_{   T_s + i},  \{w_{j}\}_{j=   T_s+i+1}^{   T_{s+1}-1})\Big| \\
        &\le 0 + r(2+2B_x)|C_s^{\mathrm{alg}}(x_{   T_s + i}') - C_{K_{\mathrm{opt}}(\hat{\theta}_s, T_s)}^{\hat{\theta}_s}(x_{   T_s + i}')|  \\
        &\quad \quad + \tilde{O}_T\left(b^*|C_s^{\mathrm{alg}}(x_{   T_s + i}') - C_{K_{\mathrm{opt}}(\hat{\theta}_s, T_s)}^{\hat{\theta}_s}(x_{   T_s + i}')| + \epsilon_s \right)\quad \text{Eq.  \eqref{eq:offbyepsilonone1}, \eqref{eq:offbyepsilonone3}} \\
        &= \tilde{O}_T\left(|C_s^{\mathrm{alg}}(x_{   T_s + i}') - C_{K_{\mathrm{opt}}(\hat{\theta}_s, T_s)}^{\hat{\theta}_s}(x_{   T_s + i}')| + \epsilon_s \right). \numberthis \label{eq:offbyepsilon_i}
    \end{align*}
    }
    We can use Equation \eqref{eq:offbyepsilon_i} for all $i \in [0:T_s-1]$, the triangle inequality, and a union bound to get that conditional on event $E$, with probability $1-o_T(1/T^9)$
    \begin{align*}
         &|T_s \cdot J(\theta^*, C_{K_{\mathrm{opt}}(\hat{\theta}_s, T_s)}^{\hat{\theta}_s}, T_s, x_{   T_s}', W_s)  - T_s \cdot J(\theta^*, C^{\mathrm{alg}}_s, T_s, x_{   T_s}', W_s)| \\
         &\le \sum_{i=0}^{T_s-1}  |T_s \cdot J(\theta^*, C^{i+1}_t, T_s, x_{   T_s}', W_s)  - T_s \cdot J(\theta^*, C^i_t, T_s, x_{   T_s}', W_s)| \\
         &= \tilde{O}_T\left(\sum_{i=0}^{T_s-1} |C_s^{\mathrm{alg}}(x_{   T_s + i}') - C_{K_{\mathrm{opt}}(\hat{\theta}_s, T_s)}^{\hat{\theta}_s}(x_{   T_s + i}')| + T_s\epsilon_s\right). \numberthis \label{eq:lemma_16_eq}
    \end{align*}

    The above was for a fixed value of $s$. Taking a union bound over all $s \in [0:s_e]$, we have that with conditional probability $1-o_T(1/T)$ given event $E$, the desired result holds for all $s \in [0: s_e]$.
\end{proof}

\section{Proofs of Sufficiently Large Noise Case}\label{app:suff_large_noise_case}

\subsection{Proof of Theorem \ref{sufficiently_large_error}}\label{proof:sufficiently_large_error}
First, we present the algorithm which is used to prove Theorem \ref{sufficiently_large_error}.
\begin{proof}
\begin{algorithm}[H]
\caption{Safe LQR for Large Noise}\label{alg:cap_large}
\begin{algorithmic}[1]
\Require $D, \mathcal{D}, \Theta, C^{\mathrm{init}}, \{\mathcal{C}^{\theta}\}_{\theta \in \Theta}, T, \lambda$
\State{$\nu_T \gets T^{-1/4}$} 
\For{$t \gets 0$ to $\frac{1}{\nu_T^2}-1$} \Comment{Safe warm-up exploration phase}
        \State{$\phi_t \sim \mathrm{Rademacher}(0.5)$}
        \State{Use control $u_t = C^{\mathrm{init}}(x_t) + \frac{\phi_t}{\log(T)}$}
\EndFor
\For{$s \gets 0$ to $\log_2(T\nu_T^2) - 1$} \Comment{Safe certainty equivalence phase}
    \State{$T_s \gets \frac{2^s}{\nu_T^2}$}
    \State{$\epsilon_s \gets B_{T_s} \sqrt{\frac{\max\left(V_{T_s}^{22}, V_{T_s}^{11}\right)}{V_{T_s}^{11}V_{T_s}^{22} - (V_{T_s}^{12})^2}}$}  
        \State{$\hat{\theta}^{\text{pre}}_s \gets (Z_{{T_s}}^{\top}Z_{{T_s}}+\lambda I)^{-1}Z_{{T_s}}^{\top}X_{{T_s}}$}
    \State{$\hat{\theta}_s \gets  \argmin_{\norm{\theta' - \hat{\theta}_s^{\text{pre}}}_{\infty} \le \epsilon_s} \min_{\norm{\theta -\hat{\theta}^{\text{pre}}_s}_{\infty} \le \epsilon_s} P(\theta, K_{\mathrm{opt}}(\theta', T_s), D_{\mathrm{U}})$}
    \State{$C_s^{\mathrm{alg}} \gets C^{\hat{\theta}_s}_{K_{\mathrm{opt}}(\hat{\theta}_s, T_s)}$}
    \For{$t \gets T_s$ to $2T_s - 1$}
        \State{$u_t^{\mathrm{safeU}} \gets \max \left\{ u :  \displaystyle\max_{\norm{\theta - \hat{\theta}_s^{\mathrm{pre}}}_{\infty} \le \epsilon_s} ax_t + bu \le D_{\mathrm{U}}\right\}$}
        \State{$u_t^{\mathrm{safeL}} \gets \min \left\{ u : \displaystyle\min_{\norm{\theta - \hat{\theta}_s^{\mathrm{pre}}}_{\infty} \le \epsilon_s} ax_t + bu \ge D_{\mathrm{L}}\right\}$}
        \State{Use control $u_t = \max\left(\min\left(C_s^{\mathrm{alg}}(x_t), u^{\mathrm{safeU}}_t\right),u^{\mathrm{safeL}}_t\right)$}
    \EndFor 
\EndFor
\end{algorithmic}
\end{algorithm}

Importantly, we note that Algorithm \ref{alg:cap_large} fundamentally only differs from Algorithm \ref{alg:cap} in two ways. The first is that $\nu_T$ changes from $T^{-1/3}$ (in Algorithm \ref{alg:cap}) to $T^{-1/4}$ (in Algorithm \ref{alg:cap_large}), which changes $T_s$ as well. The second is that the definition of $\hat{\theta}_s$ changes between the two algorithms.  Note that the definition of $\hat{\theta}_s^{\mathrm{pre}}$  in Algorithm \ref{alg:cap_large} is equivalent to the definition of $\hat{\theta}_s$ in Algorithm \ref{alg:cap}.  This means that the definitions of $u_t^{\mathrm{safeU}}$ and $u_t^{\mathrm{safeL}}$ are the same in both Algorithm \ref{alg:cap} and Algorithm \ref{alg:cap_large}.  Of course, the changes in $\nu_T$ and the definition of $\hat{\theta}_s$ change the entire trajectory of the algorithm, which affects all of the other variables as well. However, all other differences in the algorithm trajectory can be derived from these two changes.

For the rest of Appendix \ref{app:suff_large_noise_case}, let $C^{\mathrm{alg}}$ be the controller of Algorithm \ref{alg:cap_large}. Because Algorithm \ref{alg:cap_large} and Algorithm \ref{alg:cap} differ, we will now redefine the important events and lemmas from Appendix \ref{app:proof_of_performance} with respect to Algorithm \ref{alg:cap_large} (and the corresponding $\hat{\theta}_s$), and use this notation for the rest of Appendix \ref{app:suff_large_noise_case}. Define $s_e = \log_2(T\nu_T^2) - 1$, and let
\begin{equation}\label{eq:E0_large}
    E_0 := \left\{\forall s \le s_e : \norm{\theta^* - \hat{\theta}_s^{\mathrm{pre}}}_{\infty} \le \epsilon_s \right\}.
\end{equation} 
By Lemma  \ref{v_to_use} we have that with probability $1-o_T(1/T^2)$, $\norm{\theta^* - \hat{\theta}_s^{\mathrm{pre}}}_{\infty} \le \epsilon_s$. Therefore, 
\[
    \P(E_0) = 1-o_T(1/T^2).
\]
Also note that because $\norm{\hat{\theta}_s - \hat{\theta}_s^{\mathrm{pre}}}_{\infty} \le \epsilon_s$ by construction, we have by the triangle inequality that under event $E_0$, $\norm{\theta^* - \hat{\theta}_s}_{\infty} \le 2\epsilon_s$. 

For the rest of this section, define
\begin{equation}\label{eq:E2_large}
    E_2 := E_0 \bigcap \left\{ \max_{s \in [0:s_e]} \epsilon_s = \tilde{O}_T(\nu_T)\right\}.
\end{equation}
We also have the following equivalent result to Lemma \ref{initial_uncertainty}, but with respect to the $\epsilon_s$ in Algorithm \ref{alg:cap_large}.
\begin{lemma}\label{initial_uncertainty_large}
    Under Assumptions \ref{assum_init}--\ref{parameterization_assum3}, with probability $1-o_T(1/T^2)$
    \[
     \max_{s \in [0:s_e]} \epsilon_s = \tilde{O}_T(\nu_T).
    \]
\end{lemma}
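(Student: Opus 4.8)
The plan is to follow the proof of Lemma~\ref{initial_uncertainty} essentially verbatim, since the only quantity that needs to be controlled, $\epsilon_s$, depends on the trajectory solely through the Gram matrix $V_{T_s} = \lambda I + \sum_{i=0}^{T_s-1} z_iz_i^\top$, and the source of the bound — the excitation injected during the warm-up phase — is unchanged (only $\nu_T$ and the later definition of $\hat\theta_s$ differ, neither of which affects the warm-up controls $u_t = C^{\mathrm{init}}(x_t) + \phi_t/\log(T)$). Writing the entries of $V_{T_s}$ as $V^{11}, V^{22}, V^{12}$, the defining formula gives $\epsilon_s^2 = B_{T_s}^2 \cdot \frac{\max(V^{11},V^{22})}{\det V_{T_s}}$. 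The key algebraic simplification is that, using $\det V_{T_s} = \lambda_{\min}(V_{T_s})\lambda_{\max}(V_{T_s})$ and $\lambda_{\max}(V_{T_s}) \ge \max(V^{11},V^{22})$ (the largest eigenvalue of a PSD matrix dominates its largest diagonal entry), the numerator cancels to leave
\begin{equation*}
    \epsilon_s^2 \le \frac{B_{T_s}^2}{\lambda_{\min}(V_{T_s})}.
\end{equation*}

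Next I would reduce the problem to a single minimum-eigenvalue estimate at time $T_0 = 1/\nu_T^2$. Since $V_{T_s} \succeq V_{T_0}$ for every $s \ge 0$, monotonicity of $\lambda_{\min}$ gives $\lambda_{\min}(V_{T_s}) \ge \lambda_{\min}(V_{T_0})$, so it suffices to show that with probability $1-o_T(1/T^2)$,
\begin{equation*}
    \lambda_{\min}(V_{T_0}) = \tilde\Omega_T\!\left(\tfrac{1}{\nu_T^2}\right),
\end{equation*}
which, together with $B_{T_s} = \tilde O_T(1)$, immediately yields $\epsilon_s \le \tilde O_T(\nu_T)$ uniformly in $s$. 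Two remarks make this step clean and avoid circularity with the safety analysis. First, $\lambda_{\min}(V_{T_0})$ depends only on the warm-up trajectory, whose safety follows directly from Assumption~\ref{assum:initial} (as in the first paragraph of Lemma~\ref{safety_append}) and not from the event $E_2$ that this lemma helps establish; conditioning on $E_1$, Lemma~\ref{bounded_approx} then bounds the warm-up positions and controls by $B_x$. Second, $B_{T_s} = \tilde O_T(1)$ because $\log\det V_{T_s} = \tilde O_T(1)$ on $E_1$, using that the whole trajectory stays polynomially bounded (Lemma~\ref{bounded_approx}, with safety propagated inductively exactly as in Lemma~\ref{lemma:L_less_than_U}).

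The core estimate is the lower bound on $\lambda_{\min}(V_{T_0}) = \lambda + \min_{\norm{v}_2 = 1}\sum_{t<T_0}\langle v, z_t\rangle^2$. For a unit vector $v = (v_1,v_2)$ I would decompose each term using the fact that the Rademacher perturbation is fresh: $\langle v, z_t\rangle = g_t(v) + \frac{v_2}{\log(T)}\phi_t$, where $g_t(v) = v_1 x_t + v_2 C^{\mathrm{init}}(x_t)$ is $\mathcal F_t$-measurable and $\phi_t$ is independent of $\mathcal F_t$ with $\phi_t^2 = 1$. Expanding,
\begin{equation*}
    \sum_{t<T_0}\langle v, z_t\rangle^2 = \sum_{t<T_0} g_t(v)^2 + \frac{v_2^2}{\log^2(T)}\,T_0 + \frac{2v_2}{\log(T)}\sum_{t<T_0} g_t(v)\phi_t,
\end{equation*}
where the martingale $\sum_t g_t(v)\phi_t$ is $\tilde O_T(\sqrt{T_0})$ with high probability by Azuma's inequality, hence negligible compared to $T_0/\log^2(T)$ because $T_0 = \sqrt T \gg \log^4(T)$. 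When $\lvert v_2\rvert$ is bounded below, the injected term $v_2^2 T_0/\log^2(T) = \tilde\Omega_T(T_0)$ dominates; when $\lvert v_2\rvert$ is small then $\lvert v_1\rvert \approx 1$ and $\sum_t g_t(v)^2 \gtrsim \sum_{t<T_0} x_t^2$, which is $\tilde\Omega_T(T_0)$ because the fresh variance-$1$ noise $w_t$ forces $\E[x_{t+1}^2 \mid \mathcal F_t] \ge 1$ and a martingale-difference concentration lower-bounds the sum. A covering argument over the unit circle, using that $v \mapsto \sum_t\langle v, z_t\rangle^2$ is $\tilde O_T(T_0)$-Lipschitz on $E_1$, upgrades these pointwise bounds to a uniform one, and a union bound over the net and the Azuma events keeps the failure probability at $o_T(1/T^2)$.

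I expect the main obstacle to be precisely this uniform minimum-eigenvalue bound, and within it the ``position-dominant'' directions $v \approx (1,0)$: there the injected Rademacher excitation contributes nothing, so the lower bound must instead be extracted from the process noise $w_t$ driving $\sum x_t^2 = \tilde\Omega_T(T_0)$, which requires an anti-concentration/conditional-variance argument that is delicate because $x_t$ and the feedback control $C^{\mathrm{init}}(x_t)$ are correlated. Once both regimes and the covering are in place, combining $\epsilon_s^2 \le B_{T_s}^2/\lambda_{\min}(V_{T_0})$ with $B_{T_s} = \tilde O_T(1)$ and $\lambda_{\min}(V_{T_0}) = \tilde\Omega_T(1/\nu_T^2)$ gives $\max_{s}\epsilon_s = \tilde O_T(\nu_T)$ with probability $1-o_T(1/T^2)$, as claimed.
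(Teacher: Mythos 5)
Your proposal is correct, and its opening reduction is exactly the paper's entire proof of this lemma: the paper disposes of it in one sentence by noting that the proof of Lemma~\ref{initial_uncertainty} uses only the warm-up phase and is agnostic both to the value of $\nu_T$ and to the modified definition of $\hat{\theta}_s$ (which $\epsilon_s$ never references). Where you genuinely depart from the paper is in how you establish the underlying warm-up bound. The paper's proof of Lemma~\ref{initial_uncertainty} invokes the $S$-restricted form of Lemma~\ref{v_to_use} with $S$ the warm-up indices, lower-bounds the determinant through the Lagrange identity $\det V^S \ge \sum_{i<j}(u_i x_j - u_j x_i)^2$, and controls that double sum via the conditional variance of the Rademacher perturbations combined with the anti-concentration machinery of Lemmas~\ref{sum_of_depend_rand} and~\ref{logt_xvals}; you instead pass to $\epsilon_s^2 \le B_{T_s}^2/\lambda_{\min}(V_{T_s}) \le B_{T_s}^2/\lambda_{\min}(V_{T_0})$ and prove the persistent-excitation bound $\lambda_{\min}(V_{T_0}) = \tilde{\Omega}_T(1/\nu_T^2)$ directly, via the excitation/feedback decomposition, a two-regime case analysis in $v_2$, Azuma for the cross term, anti-concentration for $\sum_t x_t^2$, and an $\varepsilon$-net over the unit circle. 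Your route is the more standard system-identification argument, and it makes the ``only the warm-up matters'' claim transparent through eigenvalue monotonicity, whereas the paper relies (implicitly, through the ``for any $S$'' clause of Lemma~\ref{v_to_use}) on the equivalent monotonicity of the diagonal entries of the inverse Gram matrix; in exchange, the paper's determinant route avoids the covering argument entirely and reuses Lemma~\ref{sum_of_depend_rand}, which it needs anyway for the harder boundary-hitting bound of Lemma~\ref{boundary_uncertainty_cont}. Two details to attend to in a write-up: for the lower bound on $\sum_t x_t^2$ you can simply cite Lemma~\ref{logt_xvals} rather than argue from $\E[x_{t+1}^2 \mid \mathcal{F}_t] \ge 1$ plus martingale concentration, since the latter needs a truncation step (the squared increments are unbounded); and the bound $B_{T_s} = \tilde{O}_T(1)$ for $s \ge 1$ requires boundedness of the post-warm-up trajectory, i.e., the inductive safety propagation you flag --- a point the paper's own proof also elides (Equation~\eqref{eq:bounded_bt} conditions only on $N \cap E_1$), so it is not a gap relative to the paper.
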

The proof of Lemma \ref{initial_uncertainty} relies only on the first $\nu_T$ steps and is written agnostic to the choice of $\nu_T$, and therefore the result of Lemma \ref{initial_uncertainty_large} follows directly from that proof.  Lemma \ref{initial_uncertainty_large} implies that we have
\[
    \P(E_2) = 1-o_T(1/T^2).
\]
For this section, $E_1$ will still refer to the same event as in Equation \eqref{eq:E1}. We also define the event $E_{\mathrm{safe}}$ the same way as in Equation \eqref{eq:safeE} except with respect to the positions and controls of Algorithm \ref{alg:cap_large}, and finally we define the event $E = E_1 \cap E_2 \cap E_{\mathrm{safe}}$ (the same as in Appendix \ref{sec:regret}). Therefore by a union bound we still have that $\P(E) = 1-o_T(1/T^2)$. Using this new notation and Lemma \ref{initial_uncertainty_large}, we can proceed to the main proof. 

The safety of $C^{\mathrm{alg}}$ follows from an equivalent version of Lemma \ref{safety_append}, except stated for Algorithm \ref{alg:cap_large} instead of Algorithm \ref{alg:cap}. The proof follows as in the proof of Lemma \ref{safety_append} except using Lemma \ref{initial_uncertainty_large} instead of Lemma \ref{initial_uncertainty}, and using the above definitions of $E_0,E_1$ and $E_2$ with respect to Algorithm \ref{alg:cap_large}. An equivalent statement of Lemma \ref{lemma:L_less_than_U} holds except for the $u_t^{\mathrm{safeU}}$ and $u_t^{\mathrm{safeL}}$ coming from Algorithm \ref{alg:cap_large}. Note that the only place that the proof of Lemma \ref{lemma:L_less_than_U} relies on $\nu_T$ is that it requires that $\epsilon_s = \tilde{O}_T(\nu_T)$ and that $\tilde{O}_T(\nu_T) = o_T(1/\log(T))$ at multiple points in the proof, which still holds under the new definitions of $E_2$ and $\nu_T$. Finally, as noted above, the $u_t^{\mathrm{safeU}}$ and $u_t^{\mathrm{safeL}}$ are constructed in the same way for both algorithms, and therefore the rest of the proof of Lemma \ref{safety_append} follows directly.

The rest of this section will focus on bounding the regret of Algorithm \ref{alg:cap_large} to be $\tilde{O}_T(\sqrt{T})$ with probability $1-o_T(1/T)$. Informally, the key idea behind the regret bound of Algorithm \ref{alg:cap_large} is that with high probability, the uncertainty upper bound $\epsilon_s$ will decrease at a rate proportional to $1/\sqrt{   T_s}$. This is formalized in Lemma \ref{bounded_st}.

    \begin{lemma}\label{bounded_st}
        Under Assumptions \ref{assum_init}--\ref{assum_sufficiently_large_error}, given event $E$ with conditional probability $1-o_T(1/T)$,
        \[
            \max_{s \in [0: s_e]} \epsilon_s\sqrt{T_s}  = \tilde{O}_T(1).
        \]
    \end{lemma}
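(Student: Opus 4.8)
The plan is to decompose the bound into two pieces that together yield the $1/\sqrt{T_s}$ rate. Let $S_{T_s}$ denote the set of time steps $t < T_s$ at which Algorithm \ref{alg:cap_large} applies the upper safety control $u_t^{\mathrm{safeU}}$ (i.e., at which $C^{\mathrm{alg}}_s(x_t) > u_t^{\mathrm{safeU}}$), and let $|S_{T_s}|$ be its cardinality. I would first establish that conditional on $E$, with high probability $\epsilon_s = \tilde{O}_T(1/\sqrt{|S_{T_s}|})$ for every $s$ (the content of Lemma \ref{boundary_uncertainty}), and second that conditional on $E$, with high probability $|S_{T_s}| = \Omega_T(T_s)$ for every $s$ (Lemma \ref{s_t_lowerbound}). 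Combining the two gives $\epsilon_s \sqrt{T_s} = \tilde{O}_T(\sqrt{T_s/|S_{T_s}|}) = \tilde{O}_T(1)$, and a union bound over the $O(\log T)$ values of $s \in [0:s_e]$ preserves the $1-o_T(1/T)$ probability.

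For the first piece, recall that $\epsilon_s$ is governed by the design matrix $V_{T_s}$ through Line \ref{line:epsilon}, roughly $\epsilon_s \asymp B_{T_s}\sqrt{\max(V_{T_s}^{22},V_{T_s}^{11})/\det(V_{T_s})}$, so a faster rate requires a lower bound on the spread of the $z_t=(x_t,u_t)^\top$ in the direction orthogonal to wherever the bulk of the data already lies. The key observation is that whenever $t \in S_{T_s}$ the chosen control equals $u_t^{\mathrm{safeU}}$, which by construction (Line \ref{line:safeU}) solves $ax_t + bu = D_{\mathrm{U}}$ for some $\theta=(a,b)$ in the uncertainty set, and hence is an affine function of $x_t$ with a different slope than the nominal linear controller $C^{\mathrm{alg}}_s$ would use. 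This forced nonlinearity means that the points $\{z_t : t \in S_{T_s}\}$ do not lie on the same line as the points coming from the unconstrained linear control; I would quantify this constant separation and show that it injects $\Omega_T(|S_{T_s}|)$ variance into the smallest eigenvalue direction of $V_{T_s}$, yielding the claimed $\epsilon_s = \tilde{O}_T(1/\sqrt{|S_{T_s}|})$. This is the general uncertainty bound that Lemma \ref{boundary_uncertainty_cont} supplies.

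For the second piece, I would argue that $|S_{T_s}|$ is large because the algorithm is frequently forced to enforce the upper boundary. Specifically, whenever the realized position $x_t$ exceeds the threshold $P(\theta^*, K_{\mathrm{opt}}(\theta^*,T_s),D_{\mathrm{U}})$, the nominal control $C^{\mathrm{alg}}_s(x_t)$ would violate the upper constraint and the algorithm instead uses $u_t^{\mathrm{safeU}}$, placing $t$ in $S_{T_s}$. The choice of $\hat{\theta}_s$ in Equation \eqref{eq:choice_of_theta} is precisely what makes the operative threshold no larger than $P(\theta^*,K_{\mathrm{opt}}(\theta^*,T_s),D_{\mathrm{U}})$ (up to the $\tilde{O}_T(\nu_T)$ slack from $\epsilon_s$, controlled under $E_2$). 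Since the expected position $a^*x_{t-1}+b^*u_{t-1}$ is at least $D_{\mathrm{L}}$ under safety, a single noise draw with $w_{t-1} \ge P(\theta^*,K_{\mathrm{opt}}(\theta^*,T_s),D_{\mathrm{U}}) - D_{\mathrm{L}}$ forces $x_t$ past the threshold, and Assumption \ref{assum_sufficiently_large_error} guarantees this happens with probability at least $\epsilon_{\mathrm{A}\ref{assum_sufficiently_large_error}}>0$. This yields a constant per-step conditional probability of entering $S_{T_s}$, and an Azuma/Freedman-type concentration argument over the $T_s$ steps then gives $|S_{T_s}| = \Omega_T(T_s)$ with probability $1-o_T(1/T)$.

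The main obstacle is the first piece: turning the qualitative statement ``the safety control is nonlinear'' into a quantitative lower bound on the smallest eigenvalue of $V_{T_s}$ that is uniform over the random set $S_{T_s}$ and over the data-dependent choice of $\hat{\theta}_s$. The subtlety is that $|S_{T_s}|$, the positions $x_t$, and the estimate $\hat{\theta}_s$ are all mutually dependent through the trajectory, so one cannot simply treat the contributions of the forced steps as i.i.d.; handling this coupling---rather than the comparatively routine concentration argument in the second piece---is where the real work of Lemma \ref{boundary_uncertainty_cont} lies.
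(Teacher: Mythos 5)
Your decomposition is exactly the paper's: the paper proves Lemma \ref{bounded_st} by combining Lemma \ref{boundary_uncertainty} (conditional on $E$, $\max_{s}\epsilon_s\sqrt{|S_{T_s}|} = \tilde{O}_T(1)$ with high probability) with Lemma \ref{s_t_lowerbound} (conditional on $E$, $\min_{s}|S_{T_s}|/T_s = \Omega_T(1)$ with high probability) via a union bound. Your sketches of the two ingredients---the forced affine form of $u_t^{\mathrm{safeU}}$ feeding a determinant lower bound on $V_{T_s}$, and the threshold-crossing argument driven by Assumption \ref{assum_sufficiently_large_error} together with a martingale/coupling concentration step---also track the paper's proofs of those lemmas, including your correct identification of the trajectory-dependence of $|S_{T_s}|$, $x_t$, and $\hat{\theta}_s$ as the real difficulty (the paper handles it with a dedicated anti-concentration lemma for dependent nonnegative increments).

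The one flaw is your claim that $|S_{T_s}| = \Omega_T(T_s)$ holds \emph{for every} $s$, including $s=0$. It cannot: $S_{T_0}$ only counts times $t < T_0$, which all lie in the warm-up phase of Algorithm \ref{alg:cap_large}, where the control is $C^{\mathrm{init}}(x_t) + \phi_t/\log(T)$ and $u_t^{\mathrm{safeU}}$ is never used, so $S_{T_0} = \emptyset$ and your combination $\epsilon_0\sqrt{T_0} = \tilde{O}_T\left(\sqrt{T_0/|S_{T_0}|}\right)$ is vacuous. This is precisely why the paper states Lemma \ref{s_t_lowerbound} only for $s \in [1:s_e]$. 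The $s=0$ case needs a separate (easy) argument, which the paper supplies: by Lemma \ref{initial_uncertainty_large} (the warm-up Rademacher-excitation bound), conditional on $E$ one has $\epsilon_0 = \tilde{O}_T(\nu_T)$, and since $\sqrt{T_0} = 1/\nu_T$ this gives $\epsilon_0\sqrt{T_0} = \tilde{O}_T(1)$ directly. With that patch, your argument is the paper's proof.
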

    The proof of Lemma \ref{bounded_st} can be found in Appendix \ref{proof:bounded_st}.
   
   Define event $E_3$ as the event
    \[
        E_3 = \left\{\max_{s \in [0: s_e]} \epsilon_s\sqrt{T_s}  = \tilde{O}_T(1)\right\}.
    \]
    By Lemma \ref{bounded_st}, $\P(E_3) = 1-o_T(1/T)$. 
    We can decompose the regret of Algorithm \ref{alg:cap_large} into the same components of regret as in Appendix \ref{sec:regret}. The first two propositions stated below are exactly equivalent to their counterparts in Appendix \ref{sec:regret}.
    
\begin{proposition}[Regret from Warm-up Period]\label{warm_up_regret_large}
    Define $x'_0, x'_1,...$ as the sequence of random variables that are the positions of the controller $C^{\mathrm{alg}}$ defined in Algorithm \ref{alg:cap_large}. Define $R_0$ as the cost of the first $1/\nu_T^2$ steps, i.e.
    \begin{equation}
        R_0 = T \cdot J(\theta^*, C^{\mathrm{alg}}, T, 0, W) - \sum_{s=0}^{s_{e}}  T_s \cdot J(\theta^*,C^{\mathrm{alg}}_s, T_s,  x'_{   T_s}, W_s).
    \end{equation}
    Then under Assumptions \ref{assum_init}--\ref{parameterization_assum3} and conditional on event $E$,
    \[
        R_0 \stackrel{\text{a.s.}}{\le} \tilde{O}_T\left(\frac{1}{\nu_T^2} \right).
    \]
\end{proposition}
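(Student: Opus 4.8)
The plan is to observe that the warm-up phase of Algorithm \ref{alg:cap_large} is structurally identical to that of Algorithm \ref{alg:cap}: in both cases, for $t \in [0, 1/\nu_T^2 - 1]$, the control is $u_t = C^{\mathrm{init}}(x_t) + \frac{\phi_t}{\log(T)}$ with $\phi_t \sim \mathrm{Rademacher}(0.5)$. The only difference is the value of $\nu_T$ (now $T^{-1/4}$), which changes the \emph{number} of warm-up steps but not the nature of each step. Consequently, the proof will be essentially a verbatim adaptation of the proof of Proposition \ref{warm_up_regret} to the redefined event $E$ and the new $\nu_T$, and the argument splits into a safety observation followed by a per-step cost bound.

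First I would invoke the safety of the warm-up phase for Algorithm \ref{alg:cap_large}. As noted in the surrounding text, the analog of Lemma \ref{safety_append} holds for Algorithm \ref{alg:cap_large} (its proof goes through using Lemma \ref{initial_uncertainty_large} in place of Lemma \ref{initial_uncertainty} and the redefined events $E_0, E_1, E_2$). Since $E_{\mathrm{safe}} \subseteq E$, under event $E$ every control used by $C^{\mathrm{alg}}$ in the warm-up phase is safe for dynamics $\theta^*$, and in particular each such control is safe for the \emph{fixed} dynamics $\theta^*$ (so the hypothesis $\norm{\theta^* - \theta_t}_\infty \le \frac{1}{\log(T)}$ of Lemma \ref{bounded_approx} is satisfied trivially with $\theta_t = \theta^*$).

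Next I would apply Lemma \ref{bounded_approx}. Crucially, the statement and proof of Lemma \ref{bounded_approx} are agnostic to the choice of $\nu_T$: it bounds positions and controls for any controller whose controls are safe for nearby dynamics, conditioned on $E_1$. Since $E \subseteq E_1$ and the warm-up controls are safe for $\theta^*$ under $E$, Lemma \ref{bounded_approx} gives that conditional on $E$ and for sufficiently large $T$, the positions and controls during the warm-up satisfy $|x_t|, |u_t| \stackrel{\text{a.s.}}{\le} B_x = \log^3(T)$. Therefore the per-step cost $qx_t^2 + ru_t^2$ is at most $(q+r)B_x^2$ almost surely, and summing over the $1/\nu_T^2$ warm-up steps yields
\[
R_0 \stackrel{\text{a.s.}}{\le} (q+r) B_x^2 \cdot \frac{1}{\nu_T^2} = \tilde{O}_T\!\left(\frac{1}{\nu_T^2}\right),
\]
since $B_x^2 = \log^6(T)$ is a polylogarithmic factor absorbed into the $\tilde{O}_T$ notation.

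The main point to verify — rather than a genuine obstacle — is simply that none of the ingredients (the safety of the warm-up controls, and Lemma \ref{bounded_approx}) depend on the specific value of $\nu_T$ beyond setting the length of the phase. Because the warm-up control rule is literally unchanged and Lemma \ref{bounded_approx} is $\nu_T$-agnostic, the bound transfers immediately; the only bookkeeping needed is to confirm that $E$ (as redefined for Algorithm \ref{alg:cap_large} via the new $E_0, E_2$) still contains $E_{\mathrm{safe}} \cap E_1$, which it does by construction.
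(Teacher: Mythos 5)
Your proof is correct and takes essentially the same approach as the paper: the paper's own proof simply notes that the argument of Proposition \ref{warm_up_regret} carries over because safety under the redefined event $E$ still holds, which is precisely the transfer you spell out (safety of the warm-up controls under $E$, then Lemma \ref{bounded_approx} to bound positions and controls by $B_x$, then summing the per-step cost $(q+r)B_x^2$ over the $1/\nu_T^2$ warm-up steps). Your additional check that Lemma \ref{bounded_approx} and the warm-up control rule are $\nu_T$-agnostic is exactly the implicit justification in the paper.
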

The proof of Proposition \ref{warm_up_regret_large} can be found in Appendix \ref{proof:warm_up_regret_large}.

\begin{proposition}[Regret from Randomness]\label{r1b_bound_large}
    Define $\hat{x}_{   T_0},\hat{x}_{   T_0+1},...$ as the sequence of random variables representing the sequence of positions if the control at each time $t \ge    T_0$ is $C^{\hat{\theta}_s}_{K_{\mathrm{opt}}(\hat{\theta}_s, T_s)}(x_t)$ for $s = \lfloor\log_2\left(t\nu_T^2\right)\rfloor$ and starting at $\hat{x}_{   T_0} = x'_{   T_0}$. Define $R_2$ as 
    \[
        R_2 := \sum_{s=0}^{s_e}  T_sJ(\theta^*,C^{\hat{\theta}_s}_{K_{\mathrm{opt}}(\hat{\theta}_s, T_s)}, T_s,  \hat{x}_{   T_s}, W_s) -  \sum_{s=0}^{s_e}  \E\left[T_sJ(\theta^*,C^{\hat{\theta}_s}_{K_{\mathrm{opt}}(\hat{\theta}_s, T_s)}, T_s,  0, W_s) \; \cond \; \hat{\theta}_s \right].
    \]
    Then with conditional probability $1-o_T(1/T)$ given event $E$,
    \begin{equation}\label{eq:r1b_bound_suff}
         R_2 \le \tilde{O}_T(\sqrt{T}).
    \end{equation}
\end{proposition}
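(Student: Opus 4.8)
The plan is to mirror the proof of Proposition~\ref{r1b_bound} essentially verbatim, since $R_2$ has exactly the same form and the only structural differences between Algorithm~\ref{alg:cap} and Algorithm~\ref{alg:cap_large} — the value of $\nu_T$ and the definition of $\hat{\theta}_s$ — do not enter the concentration argument in any essential way. First I would establish a per-round concentration bound: for each $s \in [0:s_e]$, invoke Lemma~\ref{concentration_of_cond_exp} (McDiarmid's inequality applied to the map $W_s \mapsto T_sJ(\theta^*, C^{\hat{\theta}_s}_{K_{\mathrm{opt}}(\hat{\theta}_s, T_s)}, T_s, 0, W_s)$) to show that this cost concentrates within $\tilde{O}_T(\sqrt{T_s})$ of its conditional expectation given the high-probability event $E_s^{\mathrm{M}}$, and then Lemma~\ref{uncond_vs_cond_regret} to replace that conditional expectation by the unconditional one at an additive cost of $\tilde{O}_T(1)$. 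Taking $\epsilon = c\sqrt{T_s}\log(T)$ in Lemma~\ref{concentration_of_cond_exp} yields a conditional-on-$E_2^s$ failure probability of $\tfrac{1}{T^8} + 2\exp(-\log^2(T)/2) = o_T(1/T^7)$.

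Next I would union-bound over all $s \in [0:s_e]$, adding $\P(\neg E_2^s) = o_T(1/T^2)$ when passing to the unconditional statement; since there are only $O(\log T)$ rounds, the total failure probability stays at $\tilde{O}_T(1/T^2) = o_T(1/T)$. This shows that, on this event, the trajectory started at position $0$ deviates from its conditional expectation by at most $\sum_{s=0}^{s_e} \tilde{O}_T(\sqrt{T_s})$, and since $T_s = 2^s/\nu_T^2$ is geometric this sum is dominated by its last term and equals $\tilde{O}_T(\sqrt{T})$ (the change to $\nu_T = T^{-1/4}$ only affects hidden constants and logs, not the $\sqrt{T}$ rate). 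The final step corrects the starting position: the statement compares a trajectory started at $\hat{x}_{T_s}$ against expectations of trajectories started at $0$, so I would apply Lemma~\ref{offbyepsilon} to bound the discrepancy by $\sum_s \tilde{O}_T(|\hat{x}_{T_s}| + \norm{\hat{\theta}_s - \theta^*}_\infty) = \tilde{O}_T(1)$, using that $|\hat{x}_{T_s}| \le 4\log^2(T)$ (from Lemma~\ref{bounded_approx}, applicable because $C^{\mathrm{alg}}$ is safe under $E_{\mathrm{safe}}$ and $C^{\hat{\theta}_s}_{K_{\mathrm{opt}}(\hat{\theta}_s, T_s)}$ is safe for $\hat{\theta}_s$) and that $\norm{\hat{\theta}_s - \theta^*}_\infty \le 2\epsilon_s = \tilde{O}_T(\nu_T)$ under the redefined event $E_2$ for Algorithm~\ref{alg:cap_large}.

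The only point requiring genuine care is confirming that the supporting lemmas transfer to the new $\hat{\theta}_s$. Lemma~\ref{concentration_of_cond_exp} rests on the bounded-difference condition of Lemma~\ref{mcdiarmids_app}, whose proof uses only that $C^{\hat{\theta}_s}_{K_{\mathrm{opt}}(\hat{\theta}_s, T_s)}$ is safe for $\hat{\theta}_s$ and that $\norm{\theta^* - \hat{\theta}_s}_\infty \le \tilde{O}_T(\nu_T)$; both hold for Algorithm~\ref{alg:cap_large}, since the $\argmin$ construction defining $\hat{\theta}_s$ keeps it within $2\epsilon_s$ of $\theta^*$ under $E_0$ (as already observed after Equation~\eqref{eq:E0_large}) and leaves $C^{\hat{\theta}_s}_{K_{\mathrm{opt}}(\hat{\theta}_s, T_s)}$ safe for $\hat{\theta}_s$ by definition. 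Thus the main obstacle is purely bookkeeping — re-verifying that the redefined events $E_0, E_2, E$ together with the new $\hat{\theta}_s$ satisfy the hypotheses of Lemmas~\ref{concentration_of_cond_exp}, \ref{uncond_vs_cond_regret}, and \ref{offbyepsilon}. I would emphasize that nowhere does this argument exploit the sharper rate $\epsilon_s = \tilde{O}_T(1/\sqrt{T_s})$ of Lemma~\ref{bounded_st}: the $\sqrt{T}$ concentration rate for $R_2$ is insensitive to the improved estimation bound and follows from the coarser $\epsilon_s = \tilde{O}_T(\nu_T)$ guaranteed by event $E_2$ alone, which is precisely why this proposition is identical to its counterpart for Algorithm~\ref{alg:cap}.
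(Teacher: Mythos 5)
Your proposal is correct and follows essentially the same route as the paper's own proof: both reduce the claim to re-verifying that Lemmas \ref{concentration_of_cond_exp} (via the bounded-difference condition of Lemma \ref{mcdiarmids_app}), \ref{uncond_vs_cond_regret}, \ref{offbyepsilon}, and \ref{bounded_approx} transfer to Algorithm \ref{alg:cap_large}, using that the new $\hat{\theta}_s$ satisfies $\norm{\hat{\theta}_s - \theta^*}_{\infty} \le 2\epsilon_s = \tilde{O}_T(\nu_T) \le \min(\epsilon_{\mathrm{A}\ref{parameterization_assum3}}, 1/\log(T))$ under the redefined events, that $s_e = \tilde{O}_T(1)$, and then repeating the concentration-plus-union-bound argument of Proposition \ref{r1b_bound}. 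Your closing observation—that the $\tilde{O}_T(\sqrt{T})$ rate for $R_2$ needs only the coarse bound $\epsilon_s = \tilde{O}_T(\nu_T)$ and not the sharper Lemma \ref{bounded_st}—is exactly consistent with the paper, whose proof of this proposition conditions only on $E$ and never invokes $E_3$.
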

The proof of Proposition \ref{r1b_bound_large} can be found in Appendix \ref{proof:r1b_bound_large}. The next two propositions have different regret bounds than their counterparts in Appendix \ref{sec:regret}.

\begin{proposition}[Regret from Non-optimal Controller with Sufficiently Large Noise]\label{non_optimal_controller_suff}
 Define $R_1$ as 
 \[
    R_1 :=  \sum_{s=0}^{s_e}  \E\left[T_sJ(\theta^*,C^{\hat{\theta}_s}_{K_{\mathrm{opt}}(\hat{\theta}_s, T_s)}, T_s, 0, W_s) \cond \hat{\theta}_s \right] -  \E\left[\sum_{s=0}^{s_e} T_sJ(\theta^*,C^{\theta^*}_{K^*_s}, T_s,  x^*_{   T_s}, W_s)  \right].
 \]
 Note that $W_s$ is independent of $\hat{\theta}_s$ by construction. Then under Assumptions \ref{assum_init}--\ref{assum_sufficiently_large_error} and conditional on event $E_2 \cap E_3$,
    \begin{equation}\label{eq:non_optimal_controller_sufflarge}
       R_1 \stackrel{\text{a.s.}}{\le} \tilde{O}_T\left(\sqrt{T}\right).
    \end{equation}
\end{proposition}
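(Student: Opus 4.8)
The plan is to follow the proof of Proposition~\ref{non_optimal_controller} essentially verbatim up to the final summation, and then to exploit the sharper uncertainty bound available under event $E_3$. Observe first that $R_1$ is the exact same random quantity appearing in Proposition~\ref{non_optimal_controller}; only the probabilistic control on $\epsilon_s$ changes. So I would begin by invoking Lemma~\ref{start_invariant_inexpectation} to absorb the mismatch in starting positions between $x^*_{T_s}$ and $0$ in the two cost terms, which contributes at most $\tilde{O}_T(1)$ per round $s$ and hence $\tilde{O}_T(1)$ in total after summing over the $s_e+1 = O(\log T)$ rounds. (Lemma~\ref{start_invariant_inexpectation} concerns only the fixed baseline controllers $C^{\theta^*}_{K^*_s}$ together with sub-Gaussian tail bounds, so it is insensitive to the change in $\nu_T$ and to the modified definition of $\hat{\theta}_s$.)

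Next I would apply Assumption~\ref{parameterization_assum2}. Under event $E_2$ we have $E_0$, and since $\norm{\hat{\theta}_s - \hat{\theta}^{\mathrm{pre}}_s}_\infty \le \epsilon_s$ by construction, the triangle inequality gives $\norm{\theta^* - \hat{\theta}_s}_\infty \le 2\epsilon_s$. For sufficiently large $T$ this is at most $\epsilon_{\mathrm{A}\ref{parameterization_assum2}}$ (as $\epsilon_s = \tilde{O}_T(\nu_T) = o_T(1)$ under $E_2$ while $\epsilon_{\mathrm{A}\ref{parameterization_assum2}} = \tilde{\Omega}_T(1)$), so Assumption~\ref{parameterization_assum2} yields, for each $s$,
\[
\left| T_s \bar{J}(\theta^*, C^{\hat{\theta}_s}_{K_{\mathrm{opt}}(\hat{\theta}_s, T_s)}, T_s) - T_s \bar{J}(\theta^*, C^{\theta^*}_{K^*_s}, T_s)\right| \le \tilde{O}_T\!\left(T_s \epsilon_s + \frac{T_s}{T^2}\right).
\]
Summing over $s$ and combining with the $\tilde{O}_T(1)$ term from the previous step reproduces exactly the bound $R_1 \le \tilde{O}_T(1) + \tilde{O}_T\!\left(\sum_{s=0}^{s_e} T_s\epsilon_s + T_s/T^2\right)$ established inside the proof of Proposition~\ref{non_optimal_controller}.

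The only---and decisive---difference lies in how I bound $\sum_{s} T_s\epsilon_s$. In Proposition~\ref{non_optimal_controller} one used the uniform bound $\epsilon_s = \tilde{O}_T(\nu_T)$ from $E_2$, producing $\tilde{O}_T(\nu_T T)$. Here I would instead condition additionally on $E_3$, under which $\epsilon_s \sqrt{T_s} = \tilde{O}_T(1)$, i.e.\ $\epsilon_s = \tilde{O}_T(1/\sqrt{T_s})$, so that $T_s\epsilon_s = \tilde{O}_T(\sqrt{T_s})$. Since $T_s = 2^s/\nu_T^2$ doubles across rounds, $\sum_{s=0}^{s_e}\sqrt{T_s}$ is a geometric series dominated by its largest term $\sqrt{T_{s_e}}$; because $T_{s_e} = 2^{s_e}/\nu_T^2 = \Theta(T)$, this gives $\sum_s \sqrt{T_s} = \tilde{O}_T(\sqrt{T})$. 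The residual $\sum_s T_s/T^2 = O(T)/T^2 = o_T(1)$ is negligible. Hence $R_1 \le \tilde{O}_T(\sqrt{T})$ conditional on $E_2\cap E_3$, as claimed.

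The substantive work behind this proposition lives entirely in Lemma~\ref{bounded_st} (which supplies the high-probability event $E_3$ via the free-exploration argument powered by Assumption~\ref{assum_sufficiently_large_error}); within this proof itself there is no genuine obstacle. The one point demanding care is simply the geometric-sum bookkeeping: ensuring that the faster per-round decay $\epsilon_s = \tilde{O}_T(1/\sqrt{T_s})$ collapses $\sum_s T_s\epsilon_s$ from order $\nu_T T$ down to order $\sqrt{T}$, which is exactly the mechanism by which the large-support assumption converts the $T^{2/3}$ rate into a $\sqrt{T}$ rate.
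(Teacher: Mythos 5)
Your proposal is correct and follows essentially the same route as the paper: reuse the proof of Proposition~\ref{non_optimal_controller} (the large-noise analogue of Lemma~\ref{start_invariant_inexpectation} plus Assumption~\ref{parameterization_assum2} under the adjusted event with $\norm{\theta^* - \hat{\theta}_s}_\infty \le 2\epsilon_s$), and then replace the uniform bound $\epsilon_s = \tilde{O}_T(\nu_T)$ with the $E_3$ bound $\epsilon_s = \tilde{O}_T(1/\sqrt{T_s})$ from Lemma~\ref{bounded_st}, so that $\sum_s T_s\epsilon_s = \tilde{O}_T(\sum_s \sqrt{T_s}) = \tilde{O}_T(\sqrt{T})$. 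Your geometric-sum bookkeeping and attention to the factor of~$2$ in the triangle inequality for the modified $\hat{\theta}_s$ match the paper's argument exactly.
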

The proof of Proposition \ref{non_optimal_controller_suff} can be found in Appendix \ref{proof:non_optimal_controller_suff}.

\begin{proposition}[Regret from Enforcing Safety with Sufficiently Large Noise]\label{enforcing_safety_suff}
    Define $\hat{x}_{   T_0},\hat{x}_{   T_0+1},...$ as the sequence of random variables representing the sequence of positions if the control at each time $t \ge    T_0$ is $C^{\hat{\theta}_s}_{K_{\mathrm{opt}}(\hat{\theta}_s, T_s)}(\hat{x}_t)$ for $s = \lfloor\log_2\left(t\nu_T^2\right)\rfloor$ and starting at $\hat{x}_{   T_0} = x'_{   T_0}$. Define $R_3$ as (the random variable)
    \[
        R_3 := \sum_{s=0}^{s_e}  T_sJ(\theta^*,C^{\mathrm{alg}}_s, T_s,  x'_{   T_s}, W_s) - \sum_{s=0}^{s_e} T_sJ(\theta^*,C^{\hat{\theta}_s}_{K_{\mathrm{opt}}(\hat{\theta}_s, T_s)}, T_s,  \hat{x}_{   T_s}, W_s).
    \]
    Then under Assumptions \ref{assum_init}--\ref{assum_sufficiently_large_error}, with conditional probability $1-o_T(1/T)$ given event $E \cap E_3$,
    \[
        R_3 \le \tilde{O}_T(\sqrt{T}).
    \]
\end{proposition}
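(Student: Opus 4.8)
The plan is to mirror the structure of the proof of Proposition \ref{enforcing_safety}, replacing the weaker uncertainty bound $\epsilon_s = \tilde{O}_T(\nu_T)$ (valid under $E_2$) with the sharper bound $\epsilon_s = \tilde{O}_T(1/\sqrt{T_s})$ that holds under event $E_3$ (Lemma \ref{bounded_st}), and then carrying out the geometric summation. First I would verify that the two workhorse lemmas used in the proof of Proposition \ref{enforcing_safety}, namely Lemma \ref{offbyepsiloncontrol_propproof} and Lemma \ref{bound_on_cont_diff_propproof}, carry over to Algorithm \ref{alg:cap_large}. The only structural difference is that in Algorithm \ref{alg:cap_large} the censoring controls $u_t^{\mathrm{safeU}}, u_t^{\mathrm{safeL}}$ are defined relative to $\hat{\theta}_s^{\mathrm{pre}}$ rather than $\hat{\theta}_s$, while the baseline controller $C_s^{\mathrm{alg}} = C^{\hat{\theta}_s}_{K_{\mathrm{opt}}(\hat{\theta}_s, T_s)}$ uses $\hat{\theta}_s$. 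However, under event $E_0$ (as redefined in Equation \eqref{eq:E0_large}) we have $\norm{\theta^* - \hat{\theta}_s^{\mathrm{pre}}}_\infty \le \epsilon_s$ and $\norm{\hat{\theta}_s - \hat{\theta}_s^{\mathrm{pre}}}_\infty \le \epsilon_s$ by construction, so $\norm{\theta^* - \hat{\theta}_s}_\infty \le 2\epsilon_s$; since the proofs of these lemmas only use that the relevant dynamics are within $\tilde{O}_T(\epsilon_s)$ of $\theta^*$, they apply verbatim up to replacing $\epsilon_s$ by $2\epsilon_s$, which does not affect the $\tilde{O}_T(\cdot)$ bounds.

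Given these two lemmas, exactly as in the proof of Proposition \ref{enforcing_safety} I would obtain, conditional on event $E$ with probability $1-o_T(1/T)$,
\begin{align*}
R_3 &\le \tilde{O}_T\left(\sum_{s=0}^{s_e} \epsilon_s T_s\right) + \sum_{s=0}^{s_e} \sum_{t=T_s}^{T_{s+1}-1} \left(X_t^U + X_t^L\right)\tilde{O}_T(\epsilon_s) \\
&\le \tilde{O}_T\left(\sum_{s=0}^{s_e} \epsilon_s T_s\right) + \sum_{s=0}^{s_e} T_s \cdot \tilde{O}_T(\epsilon_s),
\end{align*}
where in the second line I bound the number of censoring events in round $s$ trivially by $T_s$. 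The crucial step is now to invoke event $E_3$, which gives $\epsilon_s \le \tilde{O}_T(1/\sqrt{T_s})$ for every $s$, so that $\epsilon_s T_s \le \tilde{O}_T(\sqrt{T_s})$. Both sums then reduce to $\tilde{O}_T\left(\sum_{s=0}^{s_e}\sqrt{T_s}\right)$.

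Finally I would evaluate this geometric sum. Since $T_s = 2^s/\nu_T^2$, we have $\sqrt{T_s} = 2^{s/2}/\nu_T$, a geometric series in $s$ with ratio $\sqrt{2}$, so $\sum_{s=0}^{s_e}\sqrt{T_s} = \tilde{O}_T\left(\sqrt{T_{s_e}}\right) = \tilde{O}_T(\sqrt{T})$ because $T_{s_e} = \Theta_T(T)$. This yields $R_3 \le \tilde{O}_T(\sqrt{T})$ conditional on $E \cap E_3$ with probability $1-o_T(1/T)$, using a union bound over the probability-$1-o_T(1/T)$ events in the lemmas. The main obstacle is not any single hard estimate but rather the careful bookkeeping required to confirm that every lemma from Appendix \ref{sec:theorem1_prop} transfers to Algorithm \ref{alg:cap_large} despite the $\hat{\theta}_s^{\mathrm{pre}}$-versus-$\hat{\theta}_s$ distinction in the definitions of the censoring controls; once that is settled, the entire improvement over Proposition \ref{enforcing_safety} comes for free from substituting $\epsilon_s = \tilde{O}_T(1/\sqrt{T_s})$ in place of $\epsilon_s = \tilde{O}_T(\nu_T)$ and summing the resulting geometric series.
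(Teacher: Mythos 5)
Your proposal is correct and follows essentially the same route as the paper's proof: verify that Lemmas \ref{offbyepsiloncontrol_propproof} and \ref{bound_on_cont_diff_propproof} transfer to Algorithm \ref{alg:cap_large} (the paper's Appendix \ref{sec:equivalent} carries out exactly the constant-factor bookkeeping you describe for the $\hat{\theta}_s^{\mathrm{pre}}$-versus-$\hat{\theta}_s$ distinction), then substitute $\epsilon_s = \tilde{O}_T(1/\sqrt{T_s})$ from Lemma \ref{bounded_st} under event $E_3$ and sum. The only cosmetic difference is that you evaluate $\sum_{s=0}^{s_e}\sqrt{T_s}$ as a geometric series dominated by its last term, while the paper bounds each term by $\sqrt{T}$ and uses $s_e = \tilde{O}_T(1)$; both give $\tilde{O}_T(\sqrt{T})$.
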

The proof of Proposition \ref{enforcing_safety_suff} can be found in Appendix \ref{proof:enforcing_safety_suff}.

     Using Equation \eqref{eq:all_decomposition} combined with Propositions \ref{warm_up_regret_large}, \ref{r1b_bound_large}, \ref{non_optimal_controller_suff} and \ref{enforcing_safety_suff}, the total regret is upper bounded by the following conditioned on event $E \cap E_3$, with conditional probability $1-o_T(1/T)$
     \[
       T\cdot J(\theta^*, C^{\mathrm{alg}},T) - T \cdot \bar{J}(\theta^*, C^{\theta^*}_{K_{\mathrm{opt}}(\theta^*, T)}, T) \le R_0 + R_1 + R_{2} + R_3 = \tilde{O}_T\left(\frac{1}{\nu_T^2} + \sqrt{T}\right).
     \]
     Because $\nu_T = T^{-1/4}$ in Algorithm \ref{alg:cap_large}, this gives total regret of $\tilde{O}_T(\sqrt{T})$ conditional on $E_3 \cap E$. Since $\P(E_3) = 1-o_T(1/T)$ and $\P(E) = 1-o_T(1/T)$, a union bound gives that the regret of Algorithm \ref{alg:cap_large} is $\tilde{O}_T(\sqrt{T})$ with unconditional probability $1-o_T(1/T)$.
\end{proof}

\subsection{Proof of Lemma \ref{bounded_st}(Uncertainty bounds using boundary times)}\label{proof:bounded_st}
\begin{proof}

To prove this lemma, we will show that the controller $C^{\mathrm{alg}}$ uses the control $u_i^{\mathrm{safeU}}$ ``sufficiently frequently". Let $S_t$ be the set of times $i < t$ when the control used by Algorithm \ref{alg:cap_large} is $u_i^{\mathrm{safeU}}$. Formally, if $u'_0,u'_1,...u'_{T-1}$ are the sequence of controls used by $C^{\mathrm{alg}}$, then
\begin{equation}
    S_t = \{i < t : u'_i = u_i^{\mathrm{safeU}}\}.
\end{equation}
\begin{lemma}\label{s_t_lowerbound}
    Under Assumptions \ref{assum_init}--\ref{assum_sufficiently_large_error} and conditional on event $E$ with conditional probability $1-o_T(1/T)$,
    \[
        \min_{s \in [1:s_e]} \frac{|S_{T_s}|}{T_s} = \Omega_T(1).
    \]
\end{lemma}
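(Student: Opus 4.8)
The plan is to show that Algorithm~\ref{alg:cap_large} is \emph{forced} to play the upper safety control $u_t^{\mathrm{safeU}}$ on a constant fraction of steps, by isolating a position threshold above which $u_t^{\mathrm{safeU}}$ must be used and then arguing that the noise drives the position above this threshold with constant probability at each step. The central deterministic claim I would prove first is: conditional on event $E$, for any $t$ in certainty-equivalence round $s$, if $x'_t \ge P(\theta^*, K_{\mathrm{opt}}(\theta^*, T_s), D_{\mathrm{U}})$ then $u'_t = u_t^{\mathrm{safeU}}$, i.e.\ $t \in S_{t+1}$. Since $u'_t = u_t^{\mathrm{safeU}}$ exactly when $C^{\mathrm{alg}}_s(x'_t) \ge u_t^{\mathrm{safeU}}$ (using $u_t^{\mathrm{safeL}} \le u_t^{\mathrm{safeU}}$ from Lemma~\ref{lemma:L_less_than_U}, valid on $E_1 \cap E_2$), this reduces to showing $\max_{\norm{\theta - \hat{\theta}^{\mathrm{pre}}_s}_\infty \le \epsilon_s}\big(a x'_t + b\, C^{\mathrm{alg}}_s(x'_t)\big) \ge D_{\mathrm{U}}$ for all sufficiently large $x'_t$.

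The main obstacle is verifying this threshold implication, because the truncation in $C^{\mathrm{alg}}_s = C^{\hat\theta_s}_{K_{\mathrm{opt}}(\hat\theta_s,T_s)}$ is taken with respect to $\hat\theta_s$ while the safety max ranges over a neighborhood of $\hat{\theta}^{\mathrm{pre}}_s$, and the relevant threshold must be controlled via the nonstandard choice of $\hat\theta_s$ in Equation~\eqref{eq:choice_of_theta}. I would handle two regions. Writing $\hat P_s := P(\hat\theta_s, K_{\mathrm{opt}}(\hat\theta_s,T_s), D_{\mathrm{U}})$ and using the truncation form of Assumption~\ref{weak_depend}: for $x'_t > \hat P_s$ the controller is in its upper-truncation regime, $C^{\mathrm{alg}}_s(x'_t) = (D_{\mathrm{U}} - \hat a_s x'_t)/\hat b_s$, so taking $\theta = \hat\theta_s$ (which is feasible since $\norm{\hat\theta_s - \hat{\theta}^{\mathrm{pre}}_s}_\infty \le \epsilon_s$ by construction) yields $a x'_t + b\,C^{\mathrm{alg}}_s(x'_t) = D_{\mathrm{U}}$, hence the max is $\ge D_{\mathrm{U}}$. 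For $x'_t \le \hat P_s$ the control equals the base controller $C_{K_{\mathrm{opt}}(\hat\theta_s,T_s)}$, and using that $a x + b C_K(x) \le D_{\mathrm{U}}$ for all feasible $\theta$ iff $x \le \min_{\norm{\theta - \hat{\theta}^{\mathrm{pre}}_s}_\infty\le\epsilon_s} P(\theta, K_{\mathrm{opt}}(\hat\theta_s,T_s), D_{\mathrm{U}})$, the algorithm switches to $u_t^{\mathrm{safeU}}$ precisely once $x'_t$ exceeds $g(\hat\theta_s) := \min_{\norm{\theta - \hat{\theta}^{\mathrm{pre}}_s}_\infty\le\epsilon_s} P(\theta, K_{\mathrm{opt}}(\hat\theta_s,T_s), D_{\mathrm{U}}) \le \hat P_s$. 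The crucial inequality is then $g(\hat\theta_s) \le P(\theta^*, K_{\mathrm{opt}}(\theta^*,T_s), D_{\mathrm{U}})$: since $\hat\theta_s$ minimizes $\theta' \mapsto \min_\theta P(\theta, K_{\mathrm{opt}}(\theta',T_s), D_{\mathrm{U}})$ over feasible $\theta'$, and $\theta^*$ is itself feasible on $E_0$ (so $\norm{\theta^* - \hat{\theta}^{\mathrm{pre}}_s}_\infty \le \epsilon_s$), plugging in $\theta' = \theta^*$ and then $\theta = \theta^*$ gives $g(\hat\theta_s) \le \min_\theta P(\theta, K_{\mathrm{opt}}(\theta^*,T_s),D_{\mathrm{U}}) \le P(\theta^*, K_{\mathrm{opt}}(\theta^*,T_s),D_{\mathrm{U}})$. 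Thus $x'_t \ge P(\theta^*, K_{\mathrm{opt}}(\theta^*,T_s),D_{\mathrm{U}}) \ge g(\hat\theta_s)$ forces $u'_t = u_t^{\mathrm{safeU}}$.

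Next I would run the noise argument. On $E \subseteq E_{\mathrm{safe}}$ every control is safe, so $a^* x'_t + b^* u'_t \ge D_{\mathrm{L}}$ and therefore $x'_{t+1} \ge D_{\mathrm{L}} + w_t$. By Assumption~\ref{assum_sufficiently_large_error}, with probability at least $\epsilon_{\mathrm{A}\ref{assum_sufficiently_large_error}}$ the noise satisfies $w_t \ge P(\theta^*, K_{\mathrm{opt}}(\theta^*,T_s),D_{\mathrm{U}}) - D_{\mathrm{L}}$, which gives $x'_{t+1} \ge P(\theta^*, K_{\mathrm{opt}}(\theta^*,T_s),D_{\mathrm{U}})$ and hence $u'_{t+1} = u_{t+1}^{\mathrm{safeU}}$ by the threshold implication (valid since $T_s \ge 1/\nu_T^2 = \sqrt T$, so the assumption applies with $t = T_s$). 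Defining, for $t$ in round $s-1$, the indicator $\chi_t = \mathbb{1}\{w_t \ge P(\theta^*, K_{\mathrm{opt}}(\theta^*,T_{s-1}),D_{\mathrm{U}}) - D_{\mathrm{L}}\}$, the $\{\chi_t\}$ are i.i.d.\ Bernoulli with parameter $\ge \epsilon_{\mathrm{A}\ref{assum_sufficiently_large_error}}$ (they are functions of the noise alone), and on $E$ each $\chi_t = 1$ with $t+1 < T_s$ contributes a distinct element $t+1$ to $S_{T_s}$. This explains why the minimum is taken over $s \ge 1$: the warm-up phase never plays $u^{\mathrm{safeU}}$, so $|S_{T_s}|$ is lower-bounded using the \emph{preceding} certainty-equivalence round $s-1$, which has length $T_{s-1} = T_s/2$.

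Finally I would close with concentration and a union bound. A Hoeffding (Chernoff) bound gives $\sum_{t=T_{s-1}}^{T_s-2}\chi_t \ge \tfrac12 \epsilon_{\mathrm{A}\ref{assum_sufficiently_large_error}}(T_{s-1}-1)$ with failure probability at most $\exp(-c\,T_{s-1}) \le \exp(-c'\sqrt T)$; calling this event $G_s$, a union bound over the $s_e = O(\log T)$ rounds yields $\P(\bigcup_s G_s^c) = O(\log T)\exp(-c'\sqrt T) = o_T(1/T)$ unconditionally. On $E \cap \bigcap_s G_s$ we get $|S_{T_s}| \ge \tfrac12 \epsilon_{\mathrm{A}\ref{assum_sufficiently_large_error}}(T_{s-1}-1) = \Omega_T(T_s)$ simultaneously for all $s \in [1:s_e]$, so $\min_s |S_{T_s}|/T_s = \Omega_T(1)$. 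Transferring to the conditional statement, $\P(\bigcap_s G_s \mid E) \ge 1 - \P(\bigcup_s G_s^c)/\P(E) = 1 - o_T(1/T)$ because $\P(E) = 1 - o_T(1/T)$, which is exactly the claimed bound. I expect the probabilistic steps to be routine; the delicate part is the region-split characterization of the algorithm's switching position and its comparison to $P(\theta^*, K_{\mathrm{opt}}(\theta^*,T_s),D_{\mathrm{U}})$ through the argmin defining $\hat\theta_s$.
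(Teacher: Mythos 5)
Your proposal is correct and, at its core, follows the same strategy as the paper's proof: your ``central deterministic claim'' is exactly the paper's Lemma \ref{safe_condition_P}, and your key inequality chain (optimality of $\hat{\theta}_s$ in Equation \eqref{eq:choice_of_theta} evaluated at $\theta' = \theta^*$, then taking $\theta = \theta^*$ in the inner minimum, both feasible on $E_0$) is verbatim the paper's argument that $P_s(\hat{\theta}_s,\epsilon_s) \le P(\theta^*, K_{\mathrm{opt}}(\theta^*, T_s), D_{\mathrm{U}})$; the noise step via safety ($x'_{t+1} \ge D_{\mathrm{L}} + w_t$ on $E$) combined with Assumption \ref{assum_sufficiently_large_error} is also identical. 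Where you genuinely depart is the concentration step. The paper works with the trajectory-dependent indicators $X_i = 1_{i \in S_{T_s}}$, lower-bounds their conditional expectation given the history and $E$ (which forces the manipulation $\P(\cdot \mid \cdot, E) \ge \epsilon_{\mathrm{A}\ref{assum_sufficiently_large_error}} - \P(\neg E)$), and then needs a stochastic-domination/coupling argument against i.i.d.\ Bernoulli variables before Hoeffding can be applied. You instead anchor the indicators to the raw noise, $\chi_t = 1_{\{w_t \ge P(\theta^*, K_{\mathrm{opt}}(\theta^*, T_{s-1}), D_{\mathrm{U}}) - D_{\mathrm{L}}\}}$, which are i.i.d.\ because the threshold is deterministic, observe that on $E$ each success with $t+1 < T_s$ contributes a distinct element to $S_{T_s}$, and then apply plain Hoeffding unconditionally followed by the transfer $\P(\cdot \mid E) \ge 1 - \P(\cdot^{\mathsf{c}})/\P(E)$. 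This buys a cleaner proof: the coupling machinery and the conditional-probability bookkeeping disappear, and the resulting bound ($|S_{T_s}| = \Omega_T(T_{s-1}) = \Omega_T(T_s)$ using the preceding round of length $T_s/2$) is just as strong as the paper's bound over $[T_0, T_s)$. One minor remark: your region-split verification of the threshold claim (upper-truncation regime of $C_s^{\mathrm{alg}}$ versus base regime) is more detailed than the paper's one-line ``by definition of $u_t^{\mathrm{safeU}}$'' in Lemma \ref{safe_condition_P}, and it implicitly reads $P(\theta, K, D_{\mathrm{U}})$ as the position at which the truncation of Assumption \ref{weak_depend} activates; this is the same reading the paper itself relies on, so the two proofs stand or fall together on that interpretive point.
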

The proof of Lemma \ref{s_t_lowerbound} can be found in Appendix \ref{proof:s_t_lowerbound}.
Equipped with the fact that $|S_t|$ scales linearly with $t$ from Lemma \ref{s_t_lowerbound}, we need the following result that  will lower the upper bound for $\epsilon_s$. 
\begin{lemma}\label{boundary_uncertainty}
    Under Assumptions \ref{assum_init}--\ref{assum_sufficiently_large_error} and conditional on event $E$ with conditional probability $1-o_T(1/T)$,
    \[
        \max_{s \in [0:s_e]} \epsilon_s\sqrt{|S_{T_s}|} = \tilde{O}_T(1).
    \]
\end{lemma}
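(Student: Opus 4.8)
The plan is to reduce the claimed bound on $\epsilon_s$ to a lower bound on the smallest eigenvalue of the design matrix $V_{T_s}$, and then to show that the safety steps force that eigenvalue to grow linearly in $|S_{T_s}|$. For the reduction, recall that $\epsilon_s = B_{T_s}\sqrt{\max(V_{T_s}^{22},V_{T_s}^{11})/(V_{T_s}^{11}V_{T_s}^{22}-(V_{T_s}^{12})^2)}$, and that for any $2\times 2$ positive-definite $V$ one has $\max(V^{11},V^{22})/\det V = \max\big((V^{-1})_{11},(V^{-1})_{22}\big)\le \lambda_{\min}(V)^{-1}$. Since $B_{T_s}=\tilde O_T(1)$, this yields $\epsilon_s\sqrt{|S_{T_s}|}\le \tilde O_T(1)\cdot\sqrt{|S_{T_s}|/\lambda_{\min}(V_{T_s})}$, so it suffices to prove $\lambda_{\min}(V_{T_s})=\tilde\Omega_T(|S_{T_s}|)$ for all $s\le s_e$ with the stated conditional probability.

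To lower bound $\lambda_{\min}(V_{T_s})$ I would exploit the geometry of the safety controls, which is the content of the general uncertainty bound (Lemma \ref{boundary_uncertainty_cont}). On any step $i\in S_{T_s}$ belonging to round $r$, the control equals $u_i^{\mathrm{safeU}}$, which on the sign region where the upper constraint binds is an affine function of the position, $u_i = c_r + m_r x_i$, with bounded slope $m_r$ and, crucially, a nonzero intercept $c_r=\Theta(D_{\mathrm{U}}/\hat b_r)$ — unlike a linear controller $u=-Kx$ through the origin. For points $\{z_i=(x_i,u_i):i\in S^{(r)}\}$ on this affine line a direct computation gives $\det\big(\sum_{i\in S^{(r)}}z_iz_i^\top\big)=c_r^2\,|S^{(r)}|^2\,\mathrm{Var}_{S^{(r)}}(x)$, where $\mathrm{Var}_{S^{(r)}}(x)$ is the empirical variance of the positions over that round's safety steps. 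Under event $E$ the positions and controls are bounded by $B_x$ (Lemma \ref{bounded_approx}), so $\mathrm{tr}\big(\sum_{i\in S^{(r)}}z_iz_i^\top\big)=\tilde O_T(|S^{(r)}|)$, and hence, using $\lambda_{\min}\ge \det/\mathrm{tr}$ for $2\times 2$ PSD matrices, $\lambda_{\min}\big(\sum_{i\in S^{(r)}}z_iz_i^\top\big)=\tilde\Omega_T\big(c_r^2\,|S^{(r)}|\,\mathrm{Var}_{S^{(r)}}(x)\big)$. Because there are only $s_e+1=\tilde O_T(1)$ rounds, by pigeonhole some round $r$ carries $|S^{(r)}|=\tilde\Omega_T(|S_{T_s}|)$ safety steps, and by Loewner monotonicity $V_{T_s}\succeq \sum_{i\in S^{(r)}}z_iz_i^\top$, so it remains only to establish $c_r^2\,\mathrm{Var}_{S^{(r)}}(x)=\tilde\Omega_T(1)$.

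The intercept is kept away from zero by Assumption \ref{assum_sufficiently_large_error} together with $D_{\mathrm{U}}>0$, so $c_r=\tilde\Omega_T(1)$; the substantive step is the lower bound $\mathrm{Var}_{S^{(r)}}(x)=\tilde\Omega_T(1)$. I would argue this from the noise: a step enters $S^{(r)}$ exactly when the position crosses the safety threshold $P(\hat\theta_r,K_{\mathrm{opt}}(\hat\theta_r,T_r),D_{\mathrm{U}})$, and conditionally on $\mathcal F_{i-1}$ the position $x_i=a^*x_{i-1}+b^*u_{i-1}+w_{i-1}$ is the noise $w_{i-1}$ shifted by an $\mathcal F_{i-1}$-measurable constant, restricted to the event that it exceeds the threshold. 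Assumption \ref{assum_sufficiently_large_error} guarantees this tail event has constant probability, and since $\mathcal D$ has bounded density the conditional law of $x_i$ on this event has variance $\tilde\Omega_T(1)$. Combining these per-step conditional variances with the fact that $S^{(r)}$ is large (via Lemma \ref{s_t_lowerbound}) should give the empirical-variance bound with high probability.

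The main obstacle I anticipate is precisely this aggregation: turning the per-step, $\mathcal F_{i-1}$-conditional variance lower bounds into a uniform (over all $s\le s_e$), high-probability lower bound on the empirical spread of positions — equivalently, directly on $\lambda_{\min}(\sum_{i\in S^{(r)}}z_iz_i^\top)$ — while the conditioning events and conditional means are themselves generated by the adaptive history. Handling this cleanly is exactly the role of the general bound Lemma \ref{boundary_uncertainty_cont}, which I would state for an arbitrary controller whose controls are affine-with-nonzero-intercept on an identified set of steps, prove once via a matrix Freedman / self-normalized martingale inequality, and then instantiate here with the set $S_{T_s}$ and the safety controls of Algorithm \ref{alg:cap_large}. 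A secondary technical point is verifying that the intercept $c_r$ and the threshold-crossing probability remain bounded away from zero uniformly in $T$, which is the quantitative content of Assumption \ref{assum_sufficiently_large_error}.
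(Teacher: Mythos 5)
Your proposal is correct and follows essentially the same route as the paper's: the paper likewise rewrites $u_i^{\mathrm{safeU}}$ as an affine function of $x_i$ with nonzero intercept $\approx D_{\mathrm{U}}/b^*$ (its Equation \eqref{rewrite_usafe}), lower-bounds the determinant through the pairwise terms $(u_ix_j-u_jx_i)^2$ --- which on an affine piece equal $c_r^2(x_j-x_i)^2$, i.e.\ exactly your empirical-variance identity --- drives the spread of the positions from the conditional variance of the noise (Lemma \ref{cond_var_bounded}), and handles the adaptive aggregation with a bespoke anti-concentration lemma (Lemma \ref{sum_of_depend_rand}) inside the general Lemma \ref{boundary_uncertainty_cont}, exactly the factorization you propose. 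The remaining differences are packaging rather than substance: you pass through $\lambda_{\min}(V_{T_s})$ plus a pigeonhole over rounds where the paper instead applies the subset-restricted self-normalized bound of Lemma \ref{v_to_use} directly to the safety steps, and the aggregation step you defer to a matrix-Freedman argument is precisely what the paper's Lemma \ref{sum_of_depend_rand}, together with the $\Sprime$ conditioning bookkeeping (Lemma \ref{lemma:converting_to_s_prime}) and the trivial-case bound for polylogarithmically small $|S_{T_s}|$ (Equation \eqref{eq:small_Sts}), carries out.
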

The proof of Lemma \ref{boundary_uncertainty} can be found in Appendix \ref{sec:proof_of_boundary_uncertainty}.
    To see that $\epsilon_0\sqrt{T_0} = \tilde{O}_T(1)$, note that $\sqrt{T_0} = 1/\nu_T$ and Lemma \ref{initial_uncertainty_large} imply that conditional on event $E$, $\epsilon_0 = \tilde{O}_T(\nu_T)$. For $s > 0$, a union bound combining Lemma \ref{s_t_lowerbound} with Lemma \ref{boundary_uncertainty} gives the desired result that conditioned on event $E$ with conditional probability $1-o_T(1/T)$,
    \[
        \max_{s \in [0: s_e]} \epsilon_s\sqrt{T_s} = \tilde{O}_T(1).
    \]
\end{proof}

\subsection{Proof of Proposition \ref{warm_up_regret_large}}\label{proof:warm_up_regret_large}

\begin{proof}
    The proof of Proposition \ref{warm_up_regret_large} follows the same as the proof of Proposition \ref{warm_up_regret}. The proof of Proposition \ref{warm_up_regret} relies on the fact that the controller is safe for dynamics $\theta^*$ conditional on event $E$. This is still true by construction of event $E$, and therefore the result follows directly.
\end{proof}

\subsection{Proof of Proposition \ref{r1b_bound_large}}\label{proof:r1b_bound_large}

\begin{proof}
    Note that this statement is exactly the same as the statement of Proposition \ref{r1b_bound} except for Algorithm \ref{alg:cap_large}. The proof of Proposition \ref{r1b_bound} relies on Lemmas \ref{concentration_of_cond_exp} and \ref{uncond_vs_cond_regret}. Define the event $E_2^s$ as
    \begin{equation}\label{eq:e2s_sec}
        E_2^s = \left\{ \norm{\hat{\theta}_s - \theta^*}_{\infty} \le 2 \cdot \epsilon_s \le 2 c_T \cdot \nu_T \right\},
    \end{equation}
    where the $c_T = \tilde{O}_T(1)$ from Lemma \ref{initial_uncertainty_large}. Note that we still have $\P(E_2^s) \ge \P(E_2) \ge 1-o_T(1/T^2)$. An analogous version of Lemma \ref{concentration_of_cond_exp} holds with this new definition of $E_2^s$ for Algorithm \ref{alg:cap_large}. Examining Lemma \ref{concentration_of_cond_exp}, the proof relies on $\hat{\theta}_s$ and $\nu_T$ through Lemma \ref{mcdiarmids_app}. A version of Lemma \ref{mcdiarmids_app} holds with the exact same statement with the new definition of $E_2^s$. Examining the proof of Lemma \ref{mcdiarmids_app}, we must have that under event $E_2^s$, $\norm{\hat{\theta}_s - \theta^*}_{\infty} \le \tilde{O}_T(\nu_T) \le \min(\epsilon_{\mathrm{A}\ref{parameterization_assum3}}, \frac{1}{\log(T)})$ in order to apply Lemmas \ref{offbyepsilon} and \ref{bounded_approx}, and this holds for $\nu_T = T^{-1/4}$. Therefore, we have shown the equivalent version of Lemma \ref{concentration_of_cond_exp} for Algorithm \ref{alg:cap_large}.

    Similarly, an analogous version of Lemma \ref{uncond_vs_cond_regret} holds for Algorithm \ref{alg:cap_large}. Lemma \ref{uncond_vs_cond_regret} depends on $\hat{\theta}_s$ and $\nu_T$ only in that it uses $\norm{\theta^* - \hat{\theta}_s}_{\infty} \le 1/\log(T)$ conditional on event $E_2^s$, which still holds by construction of $E_2^s$ for $\nu_T = T^{-1/4}$ and sufficiently large $T$.

    Now that we have shown that equivalent versions of Lemmas \ref{concentration_of_cond_exp} and \ref{uncond_vs_cond_regret} still hold, we can return to the proof of Proposition \ref{r1b_bound}. Outside of the two lemmas discussed above, the only places in the proof that depend on the choice of $\nu_T$ and $\hat{\theta}_s$ is that $s_e = \tilde{O}_T(1)$ is still true in Equation \eqref{eq:sumofsqrtT} and that conditional on event $E$, $\norm{\hat{\theta}_s- \theta^*}_{\infty} \le \tilde{O}_T(\nu_T) \le  \min(\epsilon_{\mathrm{A}\ref{parameterization_assum3}}, \frac{1}{\log(T)})$ in order to apply Lemmas \ref{bounded_approx} and \ref{offbyepsilon}. As both of these still hold for the new definition of $E$ and for $\nu_T = T^{-1/4}$, we are done.
\end{proof}

    \subsection{Proof of Proposition \ref{non_optimal_controller_suff}}\label{proof:non_optimal_controller_suff}
    \begin{proof}
        The proof of Proposition \ref{non_optimal_controller_suff} will mostly follow as in the proof of Proposition \ref{non_optimal_controller}. The proof of Proposition \ref{non_optimal_controller} relies on Lemma \ref{start_invariant_inexpectation}. An equivalent version of Lemma \ref{start_invariant_inexpectation} holds for Algorithm \ref{alg:cap_large}, where the only difference is that the $T_s$ are now defined differently. To see this, note that the proof of Lemma \ref{start_invariant_inexpectation} works for any $T_s \le T$, and therefore the proof follows exactly the same. 

        Returning to the proof of Proposition \ref{non_optimal_controller}, we can still apply Assumption \ref{parameterization_assum2} under the event $E_2^s$ as defined in Equation \eqref{eq:e2s_sec}. Looking at the last block of equations in Proposition \ref{non_optimal_controller},  we can follow the logic exactly and pick up from the second to last line. Applying Lemma \ref{bounded_st}, conditional on $E \cap E_3$,  
        \begin{align*}
             &\sum_{s=0}^{s_e} \E\left[T_sJ(\theta^*,C^{\hat{\theta}_s}_{K_{\mathrm{opt}}(\hat{\theta}_s, T_s)}, T_s, 0, W_s)\cond \hat{\theta}_s  \right] -\E\left[ \sum_{s=0}^{s_e}  T_sJ(\theta^*,C^{\theta^*}_{K^*_s}, T_s,  x^*_{   T_s}, W_s) \right]  \\
             &\le \tilde{O}_T(1) + \tilde{O}_T\left(\sum_{s=0}^{s_e} T_s\epsilon_s + \frac{T_s}{T^2}\right) \\
             &= \tilde{O}_T(1) + \tilde{O}_T\left(\sum_{s=0}^{s_e} T_s\tilde{O}_T\left(\frac{1}{\sqrt{T_s}}\right) + \frac{T_s}{T^2}\right) && \text{Lemma \ref{bounded_st}} \\
            &= \tilde{O}_T(\sqrt{T}).
        \end{align*}
    \end{proof}
    
    \subsection{Proof of Proposition \ref{enforcing_safety_suff}}\label{proof:enforcing_safety_suff}
    
    \begin{proof}
        The proof of Proposition \ref{enforcing_safety_suff} will mostly follow as in the proof of Proposition \ref{enforcing_safety}. The proof of Proposition \ref{enforcing_safety} relies on Lemmas \ref{offbyepsiloncontrol_propproof} and \ref{bound_on_cont_diff_propproof}. We will show that equivalent versions of these lemmas hold for Algorithm \ref{alg:cap_large}. 
        
        Starting with Lemma \ref{offbyepsiloncontrol_propproof}, an equivalent version holds for the $u_t^{\mathrm{safeU}}$ and $u_t^{\mathrm{safeL}}$ defined in Algorithm \ref{alg:cap_large} and $C^{\mathrm{alg}}$ as the controller of Algorithm \ref{alg:cap_large}. Looking at the proof of Lemma \ref{offbyepsiloncontrol_propproof}, the main tool is Lemma \ref{offbyepsiloncontrol}. An equivalent version of Lemma \ref{offbyepsiloncontrol} holds for Algorithm \ref{alg:cap_large}. Looking at the proof of Lemma \ref{offbyepsiloncontrol}, the dependency on $\hat{\theta}_s$ and $\nu_T$ is that we must have that conditional on event $E$, $\norm{\hat{\theta}_s - \theta^*}_{\infty} \le \tilde{O}_T(\nu_T) \le  \min(\epsilon_{\mathrm{A}\ref{parameterization_assum3}}, \frac{1}{\log(T)})$ in order to apply Lemmas \ref{bounded_approx} and \ref{offbyepsilon}. The union bound at the end of the proof also relies on $s_e = \tilde{O}_T(1)$, which also does still hold. Returning to the proof of the equivalent of Lemma \ref{offbyepsiloncontrol_propproof} for Algorithm \ref{alg:cap_large}, we again need that conditional on event $E$, $\norm{\hat{\theta}_s - \theta^*}_{\infty} \le \tilde{O}_T(\nu_T) \le  \min(\epsilon_{\mathrm{A}\ref{parameterization_assum3}}, \frac{1}{\log(T)})$ in order to apply Lemmas \ref{bounded_approx} and \ref{offbyepsilon}.  Once again using that $s_e = \tilde{O}_T(1)$, the rest of the proof of Lemma \ref{offbyepsiloncontrol_propproof} can be directly applied.

        An equivalent version of Lemma \ref{bound_on_cont_diff_propproof} also holds when $C^{\mathrm{alg}}$ is the controller of Algorithm \ref{alg:cap_large} with $\nu_T = T^{-1/4}$. We defer the proof of this to Appendix \ref{sec:equivalent}.

        Now we can return to the proof of Proposition \ref{enforcing_safety} and show that a slight modification gives the desired result. Looking at the last set of equations, we can pick up from the third line and apply Lemma \ref{bounded_st} to get that, conditional on event $E \cap E_3$,
        \begin{align*}
            R_3 &\le  \tilde{O}_T\left(\sum_{s=0}^{s_e} T_s \epsilon_s\right) + \sum_{s=0}^{s_e} \sum_{t=T_s}^{T_{s+1}-1}X_{t}^U \cdot\tilde{O}_T\left(\epsilon_{s }\right) + X_t^L \cdot\tilde{O}_T\left(\epsilon_{s }\right)\\
            &\le \tilde{O}_T\left(\sum_{s=0}^{s_e} T_s \epsilon_s\right) \\
            &\le \sum_{s=0}^{s_e} T_s \cdot \tilde{O}_T\left(\frac{1}{\sqrt{T_s}}\right) && \text{Lemma \ref{bounded_st}}  \\
            &=  \tilde{O}_T(\sqrt{T}).
        \end{align*}
        The last line follows from the fact that for all $s$, $T_s \le T$ and that $s_e = \tilde{O}_T(1)$.
    \end{proof}

\subsection{Proof of Equivalent Version of Lemma \ref{bound_on_cont_diff_propproof} for Algorithm \ref{alg:cap_large}}\label{sec:equivalent}

Examining the proof of Lemma \ref{bound_on_cont_diff_propproof}, the main change when using Algorithm \ref{alg:cap_large} is that we now have that under event $E$ and for sufficiently large $T$, $\norm{\theta^* - \hat{\theta}_s}_{\infty} \le 2\epsilon_s$ (while for Algorithm \ref{alg:cap} there was no factor of $2$). Because $\nu_T = T^{-1/4}$, this still allows us to apply Lemma \ref{bounded_approx}. Picking up the proof of Lemma \ref{bound_on_cont_diff_propproof} directly before Equation \eqref{eq:ab1}, the extra factor of $2$ mentioned above will result in the following changes.

    By the construction of Algorithm \ref{alg:cap_large}, $u_t^{\mathrm{safeU}}$ satisfies, for some $\theta$ such that $\norm{\theta -  \hat{\theta}_s^{\mathrm{pre}}}_{\infty} \le \epsilon_s$,
    \begin{equation}\label{eq:ab1_large}
        ax'_t + bu_t^{\mathrm{safeU}} = D_{\mathrm{U}}.
    \end{equation}
    Under event $E$, $\norm{\theta^* - \hat{\theta}_s}_{\infty} \le 2\epsilon_s$ and $\norm{\hat{\theta}_s - \hat{\theta}_s^{\mathrm{pre}}}_{\infty} \le \epsilon_s$, which implies that $\norm{\theta^* - \theta}_{\infty} \le 4\epsilon_s \le \tilde{O}_T(\nu_T) \le 1/\log(T)$ for sufficiently large $T$. Therefore, applying Lemma \ref{bounded_approx} gives that under event $E$ and for sufficiently large $T$, 
    \begin{align*}
     D_{\mathrm{U}} &\ge a^*x'_t + b^*u_t^{\mathrm{safeU}} && \text{$u_t^{\mathrm{safeU}}$ safe for $\theta^*$} \\
     &\ge ax'_t + bu_t^{\mathrm{safeU}} - |u_t^{\mathrm{safeU}}|4\epsilon_s - |x'_t|4\epsilon_s  && \text{$\norm{\theta^* - \theta}_{\infty} \le 4\epsilon_s$}\\
     &\ge D_{\mathrm{U}} - 8B_x\epsilon_s. && \text{ Equations \eqref{safeL_boundedapprox0},\eqref{safeL_boundedapprox}, and \eqref{eq:ab1_large} } \numberthis \label{eq:enforcingsafety3_large}
    \end{align*}    
    If $u_t^{\mathrm{safeU}} \le C^{\hat{\theta}_s}_{K_{\mathrm{opt}}(\hat{\theta}_s, T_s)}(x'_t)$, then there must exist some $\theta$ such that $\norm{\hat{\theta}_s^{\mathrm{pre}} - \theta}_{\infty} \le \epsilon_s$ and
    \begin{equation}\label{eq:ab2_large}
        ax'_t + bC^{\hat{\theta}_s}_{K_{\mathrm{opt}}(\hat{\theta}_s, T_s)}(x'_t) \ge D_{\mathrm{U}}.
    \end{equation}
    By the same logic as above, under event $E$, $\norm{\theta^* - \theta}_{\infty} \le 4\epsilon_s \le \tilde{O}_T(\nu_T) \le 1/\log(T)$ for sufficiently large $T$, therefore under event $E$ and for sufficiently large $T$,
    \begin{align*}
        &a^*x'_t + b^* C^{\hat{\theta}_s}_{K_{\mathrm{opt}}(\hat{\theta}_s, T_s)}(x'_t) \\
        &\ge ax'_t + b C^{\hat{\theta}_s}_{K_{\mathrm{opt}}(\hat{\theta}_s, T_s)}(x'_t) -4\epsilon_s|x'_t| - 4\epsilon_s\left|C^{\hat{\theta}_s}_{K_{\mathrm{opt}}(\hat{\theta}_s, T_s)}(x'_t)\right| \\
        &\ge  D_{\mathrm{U}} - 8B_x\epsilon_s.  && \text{Eqs \eqref{safeL_boundedapprox0},\eqref{safeL_boundedapprox2}, and \eqref{eq:ab2_large}}\numberthis \label{eq:enforcingsafety5_large}
    \end{align*}
    Finally, because $C^{\hat{\theta}_s}_{K_{\mathrm{opt}}(\hat{\theta}_s, T_s)}(x'_t)$ is safe for dynamics $\hat{\theta}_s$, 
    \begin{equation}\label{eq:enforcingsafety6_large}
        \hat{a}_sx'_t +\hat{b}_s C^{\hat{\theta}_s}_{K_{\mathrm{opt}}(\hat{\theta}_s, T_s)}(x'_t) \le D_{\mathrm{U}}.
    \end{equation}
    Using that under event $E$, $\norm{\theta^* - \hat{\theta}_s}_{\infty} \le 2\epsilon_s \le \tilde{O}_T(\nu_T) \le 1/\log(T)$ for sufficiently large $T$, Equations \eqref{safeL_boundedapprox0}, \eqref{safeL_boundedapprox2}, and \eqref{eq:enforcingsafety6_large} imply that under event $E$ and for sufficiently large $T$,
    \begin{equation}\label{eq:enforcingsafety7_large}
        a^*x'_t + b^*C^{\hat{\theta}_s}_{K_{\mathrm{opt}}(\hat{\theta}_s, T_s)}(x'_t) \le D_{\mathrm{U}} + 4B_x\epsilon_s.
    \end{equation}
    Combining Equations \eqref{eq:enforcingsafety5_large} and \eqref{eq:enforcingsafety7_large}, if $u_t^{\mathrm{safeU}} \le C^{\hat{\theta}_s}_{K_{\mathrm{opt}}(\hat{\theta}_s, T_s)}(x'_t)$ then under event $E$ and for sufficiently large $T$,
    \[
        D_{\mathrm{U}} - 8B_x\epsilon_s \le a^*x'_t + b^*C^{\hat{\theta}_s}_{K_{\mathrm{opt}}(\hat{\theta}_s, T_s)}(x'_t) \le D_{\mathrm{U}} + 4B_x\epsilon_s.
    \]
    Combining this with  Equation \eqref{eq:enforcingsafety3_large} gives that under event $E$ and for sufficiently large $T$,
    \[
        |(a^*x'_t + b^*u_t^{\mathrm{safeU}}) - (a^*x'_t + b^*C^{\hat{\theta}_s}_{K_{\mathrm{opt}}(\hat{\theta}_s, T_s)}(x'_t))| \le 12B_x\epsilon_s.
    \]
    This implies the desired result that under event $E$ and for sufficiently large $T$,
    \[
        |u_t^{\mathrm{safeU}} -  C^{\hat{\theta}_s}_{K_{\mathrm{opt}}(\hat{\theta}_s, T_s)}(x'_t)| \le 12B_x\epsilon_s/b^*.
    \]

    \subsection{Proof of Lemma \ref{s_t_lowerbound}}\label{proof:s_t_lowerbound}
    \begin{proof}
        In this proof, we will use the following result about the times the algorithm chooses control $u_t^{\mathrm{safeU}}$.

        \begin{lemma}\label{safe_condition_P}
        Let $x'_t$, $u'_t$ be the positions and controls of controller $C^{\mathrm{alg}}$ at time $t$. For $t \ge T_0$, let $s_t = \lfloor \log_2(t\nu_T) \rfloor$. Then under Assumptions \ref{assum_init}--\ref{assum_sufficiently_large_error}, there exists a $P_{s_t}(\hat{\theta}_{s_t}, \epsilon_{s_t})$ such that
        \[
            \{x'_t \ge P_{s_t}(\hat{\theta}_{s_t}, \epsilon_{s_t})\} \subseteq \{u'_t = u_t^{\mathrm{safeU}}\},
        \]
        and such that conditional on event $E$, we have $P_{s_t}(\hat{\theta}_{s_t}, \epsilon_{s_t}) \le P(\theta^*, K_{\mathrm{opt}}(\theta^*,T_{s_t}), D_{\mathrm{U}})$.
\end{lemma}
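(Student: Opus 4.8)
The plan is to reduce the event $\{u'_t = u_t^{\mathrm{safeU}}\}$ to a single scalar inequality, exhibit an explicit threshold $P_{s_t}$ that is exactly the inner minimum driving the choice of $\hat\theta_s$ in Equation \eqref{eq:choice_of_theta}, and then read off the upper bound directly from the optimality defining $\hat\theta_s$. Write $s = s_t$, $K = K_{\mathrm{opt}}(\hat\theta_s, T_s)$, and let $\mathcal{B}_s = \{\theta : \|\theta - \hat\theta_s^{\mathrm{pre}}\|_\infty \le \epsilon_s\}$ be the robustness box used to define $u_t^{\mathrm{safeU}}, u_t^{\mathrm{safeL}}$ in Algorithm \ref{alg:cap_large}. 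First I would observe that, conditional on $E$, Lemma \ref{lemma:L_less_than_U} gives $u_t^{\mathrm{safeL}} \le u_t^{\mathrm{safeU}}$, so the censoring rule $u_t = \max(\min(C_s^{\mathrm{alg}}(x'_t), u_t^{\mathrm{safeU}}), u_t^{\mathrm{safeL}})$ yields $u'_t = u_t^{\mathrm{safeU}}$ precisely when $C_s^{\mathrm{alg}}(x'_t) \ge u_t^{\mathrm{safeU}}$. Since $\underline b > 0$ (Assumption \ref{assum_init}), the map $u \mapsto \max_{\theta \in \mathcal{B}_s}(a\,x'_t + b\,u)$ is strictly increasing, and $u_t^{\mathrm{safeU}}$ is by definition the value at which it equals $D_{\mathrm{U}}$; hence the censoring condition is equivalent to
\[
\max_{\theta \in \mathcal{B}_s}\big(a\,x'_t + b\,C_s^{\mathrm{alg}}(x'_t)\big) \ge D_{\mathrm{U}}.
\]

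Next I would define the threshold
\[
P_{s_t}(\hat\theta_{s}, \epsilon_{s}) := \min_{\theta \in \mathcal{B}_s} P\big(\theta, K_{\mathrm{opt}}(\hat\theta_s, T_s), D_{\mathrm{U}}\big),
\]
which is exactly the inner minimum attained at the minimizer $\hat\theta_s$ in Equation \eqref{eq:choice_of_theta}. Here I read $P(\theta,K,D_{\mathrm{U}})$ as the onset of upper truncation, i.e.\ the largest position at which the underlying controller keeps the $\theta$-expected next position at most $D_{\mathrm{U}}$, so that for $x$ strictly beyond $P(\theta,K,D_{\mathrm{U}})$ the underlying control overshoots, $a x + b\,C_K(x) > D_{\mathrm{U}}$. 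To verify the subset claim, fix $x'_t \ge P_{s_t}$ and let $\theta^\dagger$ attain the minimum, so $P(\theta^\dagger, K, D_{\mathrm{U}}) = P_{s_t} \le x'_t$. Since $\hat\theta_s \in \mathcal{B}_s$ by construction, $P_{s_t} \le P(\hat\theta_s, K, D_{\mathrm{U}})$, and two regimes arise. If $x'_t \ge P(\hat\theta_s, K, D_{\mathrm{U}})$, then $C_s^{\mathrm{alg}} = C_K^{\hat\theta_s}$ is upper-saturated for $\hat\theta_s$, giving $\hat a_s x'_t + \hat b_s C_s^{\mathrm{alg}}(x'_t) = D_{\mathrm{U}}$, and because $\hat\theta_s \in \mathcal{B}_s$ the displayed maximum is at least $D_{\mathrm{U}}$. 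If instead $P_{s_t} \le x'_t < P(\hat\theta_s, K, D_{\mathrm{U}})$, then (using $P_{s_t} > 0$, so $x'_t$ is positive, and, via Assumptions \ref{weak_depend}--\ref{safety_assum} together with Lemma \ref{bounded_approx}, the lower boundary is inactive) $C_s^{\mathrm{alg}}(x'_t)$ equals the untruncated control $C_K(x'_t)$; by the definition of $P(\theta^\dagger, K, D_{\mathrm{U}})$ and $x'_t \ge P_{s_t}$ we get $a^\dagger x'_t + b^\dagger C_K(x'_t) \ge D_{\mathrm{U}}$, so again the maximum over $\mathcal{B}_s$ is at least $D_{\mathrm{U}}$. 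In both regimes the censoring condition holds, proving $\{x'_t \ge P_{s_t}\} \subseteq \{u'_t = u_t^{\mathrm{safeU}}\}$.

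For the upper bound, I would invoke the optimality in Equation \eqref{eq:choice_of_theta}. Conditional on $E$ we have $E_0$, so $\|\theta^* - \hat\theta_s^{\mathrm{pre}}\|_\infty \le \epsilon_s$, i.e.\ $\theta^* \in \mathcal{B}_s$ and $\theta^*$ is also feasible as a value of the outer minimization variable. Writing $g(\theta') := \min_{\theta \in \mathcal{B}_s} P(\theta, K_{\mathrm{opt}}(\theta', T_s), D_{\mathrm{U}})$, minimality of $\hat\theta_s$ gives $P_{s_t} = g(\hat\theta_s) \le g(\theta^*)$, while feasibility of $\theta^*$ inside the inner minimization gives $g(\theta^*) \le P(\theta^*, K_{\mathrm{opt}}(\theta^*, T_s), D_{\mathrm{U}})$; since $T_s = T_{s_t}$, this is precisely the claimed bound.

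The main obstacle I anticipate is the bookkeeping around the truncation mismatch: $C_s^{\mathrm{alg}}$ is truncated for $\hat\theta_s$, whereas the robustness box and the functions $P(\theta, K, \cdot)$ are anchored at $\hat\theta_s^{\mathrm{pre}}$ and at a generic $\theta$, so one must check that, over the relevant range of positive $x'_t$, the algorithm's control genuinely coincides with either the underlying control $C_K$ or the exact upper-saturation value, and that the worst-case expected position is monotone in the applied control (which is where $\underline b > 0$ enters). Establishing that $P(\theta,K,D_{\mathrm{U}})$ is a finite positive onset (rather than the vacuous $+\infty$ obtained by literally substituting the truncated controller into its defining inequality) and that the two regimes above exhaust $\{x'_t \ge P_{s_t}\}$ is the delicate part; once that is pinned down, both conclusions follow immediately from the definition of $u_t^{\mathrm{safeU}}$ and the optimality defining $\hat\theta_s$.
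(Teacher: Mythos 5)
Your proposal matches the paper's proof in both structure and substance: you define the same threshold $P_{s_t} = \min_{\norm{\theta - \hat{\theta}_s^{\mathrm{pre}}}_{\infty} \le \epsilon_{s_t}} P(\theta, K_{\mathrm{opt}}(\hat{\theta}_{s_t}, T_{s_t}), D_{\mathrm{U}})$, and your upper-bound argument (minimality of $\hat{\theta}_s$ in Equation \eqref{eq:choice_of_theta} together with feasibility of $\theta^*$ in both the outer and inner minimizations under event $E$) is exactly the paper's chain of inequalities. The only difference is presentational: the paper asserts the inclusion $\{x'_t \ge P_{s_t}\} \subseteq \{u'_t = u_t^{\mathrm{safeU}}\}$ as immediate from the definition of $u_t^{\mathrm{safeU}}$, whereas you expand it into the monotonicity-plus-truncation case analysis, which is correct and in fact more careful about the saturation bookkeeping (and the interpretation of $P$) that the paper glosses over.
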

The proof of Lemma \ref{safe_condition_P} can be found in Appendix \ref{proof:safe_condition_P}.
    Recall that $\{i \in S_{T_s}\} = \{u'_i = u_i^{\mathrm{safeU}}\}$. Therefore, for $i \in [T_s:T_{s+1} - 1]$, Lemma \ref{safe_condition_P} implies that
     \begin{align*}
     \{x'_i \ge P(\theta^*,K_{\mathrm{opt}}(\theta^*, T_s), D_{\mathrm{U}})\}  \cap E  &\subseteq  \{x'_i \ge P_{s_t}(\hat{\theta}_{s_t}, \epsilon_{s_t})\}  \cap E \\
     &\subseteq \{u'_i = u_i^{\mathrm{safeU}}\} \cap E \\
     &=  \{i \in S_{T_s}\} \cap E . \numberthis \label{eq:subsets_of_Sts}
     \end{align*}
        By Assumption \ref{assum_sufficiently_large_error}, for any $x'_{i-1}, u'_{i-1}$ satisfying $a^*x'_{i-1} + b^*u'_{i-1} \in [D_{\mathrm{L}}, D_{\mathrm{U}}]$, we have that
        \begin{align*}
         &\P\left(x'_{i} \ge P(\theta^*,K_{\mathrm{opt}}(\theta^*, T_{s_t}), D_{\mathrm{U}}) \cond  x'_{i-1}, u'_{i-1} \right) \\
         &\ge \P\left(w_{i} \ge P(\theta^*,K_{\mathrm{opt}}(\theta^*, T_{s_t}), D_{\mathrm{U}}) - D_{\mathrm{L}} \right)  \\
         &\ge \epsilon_{\mathrm{A}\ref{assum_sufficiently_large_error}}. && \text{Assumption \ref{assum_sufficiently_large_error}} \numberthis \label{eq:safeimplieseps}
        \end{align*}
        Because $\P(E) \ge 1-o_T(1/T)$, this implies for sufficiently large $T$ and for any $x'_{i-1}, u'_{i-1}$ satisfying $a^*x'_{i-1} + b^*u'_{i-1} \in [D_{\mathrm{L}}, D_{\mathrm{U}}]$,
        \begin{align*}
            \P\left(x'_{i} \ge P(\theta^*,K_{\mathrm{opt}}(\theta^*, T_{s_t}), D_{\mathrm{U}}) \cond x'_{i-1}, u'_{i-1}, E \right) &\ge \epsilon_{\mathrm{A}\ref{assum_sufficiently_large_error}} - \P( \neg E) \\
            &\ge \epsilon_{\mathrm{A}\ref{assum_sufficiently_large_error}} - o_T(1/T) \\
            &\ge \frac{\epsilon_{\mathrm{A}\ref{assum_sufficiently_large_error}}}{2}. \numberthis \label{eq:prob_with_E}
        \end{align*}
        Also, recall that conditional on event $E$, $C^{\mathrm{alg}}$ is safe for dynamics $\theta^*$ for all $T$ steps, therefore conditional on event $E$, for all $i \ge 1$, $D_{\mathrm{L}} \le a^*x'_{i-1} + b^*u'_{i-1} \le D_{\mathrm{U}}$. 
        Therefore, for $T_1 \le i < T_{s}$ and sufficiently large $T$,
        \begin{align*}
            &\P\left(i \in S_{T_s} \middle| x'_{0},x'_{1},...,x'_{i-1}, u'_{0},u'_{1},...,u'_{i-1}, E \right) \\
            &\ge \P\left(x'_i \ge P(\theta^*,K_{\mathrm{opt}}(\theta^*, T_s), D_{\mathrm{U}}) \middle| x'_{0},x'_{1},...,x'_{i-1}, u'_{0},u'_{1},...,u'_{i-1}, E \right) && \text{Equation \eqref{eq:subsets_of_Sts} }\\
            &\ge  \P\left(x'_i \ge P(\theta^*,K_{\mathrm{opt}}(\theta^*, T_s), D_{\mathrm{U}}) \middle| x'_{i-1}, u'_{i-1}, E \right) \\
            &\ge \frac{\epsilon_{\mathrm{A}\ref{assum_sufficiently_large_error}}}{2}. && \text{Equation \eqref{eq:prob_with_E}}  \numberthis \label{eq:prob_with_E2}
        \end{align*}
        Defining $X_i = 1_{i \in S_{T_s}}$, the above equation is equivalent to
        \[
            \E\left[ X_i \cond  x'_{0},x'_{1},...,x'_{i-1}, u'_{0},u'_{1},...,u'_{i-1}, E \right] \ge \frac{\epsilon_{\mathrm{A}\ref{assum_sufficiently_large_error}}}{2}.
        \]
        Therefore, we can conclude that conditional on event $E$, $\sum_{i=T_0}^{T_s-1} X_i$ is stochastically dominated by $\sum_{i=T_0}^{T_s-1} Y_i$, where $Y_i$ are i.i.d. Bernoulli random variables that are equal to $1$ with probability $\epsilon_{\mathrm{A}\ref{assum_sufficiently_large_error}}/2$. By this coupling argument and Hoeffding's inequality, for $s \ge 1$, conditional on event $E$ with conditional probability $1-o_T(1/T^2)$,
        \begin{equation}\label{eq:st_lower}
            |S_{T_s}| = \sum_{i=T_0}^{T_s - 1} X_i \ge \sum_{i=T_0}^{T_s-1} Y_i \ge \frac{\epsilon_{\mathrm{A}\ref{assum_sufficiently_large_error}}}{2} (T_s - T_0) - \log(T)\sqrt{T_s - T_0} \ge \frac{\epsilon_{\mathrm{A}\ref{assum_sufficiently_large_error}}}{4} \cdot (T_s-T_0) \ge \frac{\epsilon_{\mathrm{A}\ref{assum_sufficiently_large_error}}}{8} \cdot T_s,
        \end{equation}
        where the second to last inequality comes from for sufficiently large $T$ and $s \ge 1$, $T_s-T_0 \ge \sqrt{T}$ and therefore $\sqrt{T_s - T_0} \ge \frac{4\log(T)}{\epsilon_{\mathrm{A}\ref{assum_sufficiently_large_error}}}$. The last inequality comes from the fact that $T_s - T_0 \ge \frac{T_s}{2}$ by the definition of $T_s$ for $s \ge 1$. A union bound over all $s \in [1:s_e]$ gives that conditional on event $E$ with conditional probability $1-o_T(1/T)$,
        \[
            \min_{s \in [1:s_e]} \frac{|S_{T_s}|}{T_s} \ge \frac{\epsilon_{\mathrm{A}\ref{assum_sufficiently_large_error}}}{8}.
        \]
    \end{proof}

\subsection{Proof of Lemma \ref{safe_condition_P}}\label{proof:safe_condition_P}
\begin{proof}

    Defining $P_s(\hat{\theta}_s, \epsilon_s)$ as 
    \[
        P_s(\hat{\theta}_s, \epsilon_s) =  \min_{\norm{\theta - \hat{\theta}^{\mathrm{pre}}_s}_{\infty} \le \epsilon_s } P(\theta, K_{\mathrm{opt}}(\hat{\theta}_s, T_s), D_{\mathrm{U}}),
    \]
    we have by definition of $u_t^{\mathrm{safeU}}$ in Algorithm \eqref{alg:cap_large} that    \begin{equation}\label{eq:sufficiently_large_errord}
     \{x'_t \ge P_s(\hat{\theta}_s, \epsilon_s) \} = \left\{x_t' \ge \min_{\norm{\theta - \hat{\theta}^{\mathrm{pre}}_s}_{\infty} \le \epsilon_s } P(\theta, K_{\mathrm{opt}}(\hat{\theta}_s, T_s), D_{\mathrm{U}})\right\} \subseteq \{u_t' = u_t^{\mathrm{safeU}}\}.
    \end{equation}
    Under event $E$, $\norm{\theta^* - \hat{\theta}^{\mathrm{pre}}_s}_{\infty} \le \epsilon_s $, therefore
    \begin{equation}\label{eq:sufficiently_large_errorc}
     \min_{\norm{\theta - \hat{\theta}^{\mathrm{pre}}_s}_{\infty} \le \epsilon_s } P(\theta, K_{\mathrm{opt}}(\theta^*, T_s), D_{\mathrm{U}}) \le P(\theta^*, K_{\mathrm{opt}}(\theta^*,T_s), D_{\mathrm{U}}).
    \end{equation}
    Therefore, we can conclude that conditional on event $E$,
    \begin{align*}
         P_s(\hat{\theta}_s, \epsilon_s) &= \min_{\norm{\theta - \hat{\theta}^{\mathrm{pre}}_s}_{\infty} \le \epsilon_s } P(\theta, K_{\mathrm{opt}}(\hat{\theta}_s, T_s), D_{\mathrm{U}}) \\
         &\le \min_{\norm{\theta - \hat{\theta}^{\mathrm{pre}}_s}_{\infty} \le \epsilon_s } P(\theta, K_{\mathrm{opt}}(\theta^*, T_s), D_{\mathrm{U}}) && \text{Choice of $\hat{\theta}_s$}\\
         &\le P(\theta^*, K_{\mathrm{opt}}(\theta^*,T_s), D_{\mathrm{U}}). && \text{Equation \eqref{eq:sufficiently_large_errorc}}
    \end{align*}
\end{proof}

\section{Uncertainty Bounds}

\subsection{Tools for Uncertainty Bounds}

The proofs of uncertainty bounds will rely on the following result from \cite{abbasi2011regret}.
\begin{lemma}[Derived from Theorem 1 in \cite{abbasi2011regret}]\label{v_to_use}
    Suppose $x_t$ and $u_t$ are respectively the position and control at time $t$ when using an arbitrary controller $C$ starting at position $x_0 = 0$. Define $z_t = (x_t,u_t)$ and let $\lambda > 0$. Let $Z_t \in \mathbb{R}^{t \times 2}$ where the $i$th row is $z_{i-1}$, let $X_t \in \mathbb{R}^{t \times 1}$ where the $i$th element is $x_{i}$, and let $I \in \mathbb{R}^{2 \times 2}$ be the identity matrix.  Then under Assumptions \ref{assum_init}--\ref{problem_specifications}, with probability $1-o_T\left(\frac{1}{T^2}\right)$ the following holds for all $1 \le t \le T-1$ and for any $S \subseteq [0:t-1]$:
    \begin{equation}\label{eq:v_to_use}
        \norm{\theta^* - (Z_{t}^{\top}Z_{t}+\lambda I)^{-1}Z_{t}^{\top}X_{t}}_{\infty} \le \sqrt{\frac{\max((V_t^S)_{11}, (V_t^S)_{22})}{\det(V_t^S)}}B_t,
    \end{equation}
    where $V_t^S = \lambda I + \sum_{s=0}^{t-1}z_sz_s^\top 1_{s \in S}$, $B_t = \alpha\sqrt{\log\left(\det\left(
    V_t^{[0:t-1]}
    \right)\right) + \log(\lambda^2) + 2\log(T^2)} + \sqrt{\lambda}(\bar{a}^2 + \bar{b}^2)$, and $\alpha$ is from the subgaussian assumption on the noise distribution $\mathcal{D}$, which implies that there exists an $\alpha$ such that $\E_{w \sim \mathcal{D}}[\exp(\gamma w)] \le \exp(\gamma^2\alpha^2/2)$ for any $\gamma \in \mathbb{R}$.
\end{lemma}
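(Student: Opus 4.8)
The plan is to derive this as a corollary of the self-normalized least-squares tail bound (Theorem~1 of \cite{abbasi2011regret}) applied to the scalar linear model $x_{t+1} = \theta^{*\top} z_t + w_t$, followed by two purely deterministic steps: transferring the resulting bound from the full Gram matrix to the subset matrix $V_t^S$, and converting a $V_t^S$-weighted norm into the $\ell_\infty$ norm.

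First I would fix the martingale structure. With $z_t = (x_t,u_t)^\top$ the dynamics are exactly $x_{t+1} = \theta^{*\top} z_t + w_t$. Augmenting $\mathcal{F}_t$ with the controller's internal randomness, the regressor $z_t$ becomes predictable, and since $w_t$ is independent of the history and of the controller's randomness it stays conditionally subgaussian, $\E[e^{\gamma w_t}\mid \mathcal{F}_t]\le e^{\gamma^2\alpha^2/2}$. These are precisely the hypotheses of the Abbasi-Yadkori bound, which I would invoke with regularizer $\lambda$, bounding $\|\theta^*\|_2$ through Assumption~\ref{assum_init} by $S := \bar a^2 + \bar b^2$. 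Choosing the confidence level so the failure probability is $o_T(1/T^2)$ yields, simultaneously for all $1\le t \le T-1$,
\[
  \norm{\theta^* - (Z_t^\top Z_t + \lambda I)^{-1} Z_t^\top X_t}_{V_t^{[0:t-1]}} \le B_t,
\]
with $B_t$ as stated; matching the $\log\det$, $\log(\lambda^2)$, and $2\log(T^2)$ terms against the theorem is routine constant-chasing.

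The remaining steps hold deterministically on this single high-probability event, so no union bound over the exponentially many subsets $S$ is required. For the subset transfer, observe that $V_t^S = \lambda I + \sum_{s\in S} z_s z_s^\top \preceq \lambda I + \sum_{s=0}^{t-1} z_s z_s^\top = V_t^{[0:t-1]}$ in the Loewner order, since restricting to $S$ only drops positive semidefinite terms; hence $\norm{v}_{V_t^S}\le \norm{v}_{V_t^{[0:t-1]}}$ for every $v$, and in particular the displayed bound holds verbatim with $V_t^S$ in place of $V_t^{[0:t-1]}$ for all $S$ at once. For the norm conversion I would use that for any symmetric positive definite $M$ and coordinate $i$, Cauchy--Schwarz gives $|v_i| = \bigl|(M^{-1/2}e_i)^\top (M^{1/2}v)\bigr| \le \sqrt{(M^{-1})_{ii}}\,\norm{v}_M$; the $2\times 2$ inverse formula $(M^{-1})_{11} = M_{22}/\det M$ and $(M^{-1})_{22} = M_{11}/\det M$ with $M = V_t^S$ then gives $\norm{v}_\infty \le \sqrt{\max((V_t^S)_{11},(V_t^S)_{22})/\det(V_t^S)}\,\norm{v}_{V_t^S}$, and combining with the transferred bound produces exactly \eqref{eq:v_to_use}.

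The one genuinely delicate point is the martingale setup in the first step: confirming that a time-varying, possibly randomized controller keeps $z_t$ predictable while preserving conditional subgaussianity of $w_t$, and pinning down the confidence parameter so the constant is precisely $B_t$ at failure probability $o_T(1/T^2)$. Everything afterward is elementary linear algebra. I would stress that the uniformity over $S$ comes for free from a single concentration event---this is exactly the flexibility exploited later (in Lemmas~\ref{boundary_uncertainty} and \ref{bounded_st}), where $S$ is taken to be the informative ``boundary'' times for which $\det(V_t^S)$ admits a clean lower bound even when the full trajectory's Gram matrix does not.
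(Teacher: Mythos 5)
Your proposal is correct and takes essentially the same route as the paper's proof: invoke Theorem~1 of \cite{abbasi2011regret} to obtain the $V_t$-weighted bound $\mathrm{Tr}(\Delta_t^\top V_t \Delta_t) \le B_t^2$, pass to $V_t^S$ using $V_t^S \preceq V_t$ (the paper phrases this as non-negativity of the dropped trace term), and then convert the weighted bound to the $\ell_\infty$ bound. The only cosmetic difference is in this last step: the paper minimizes the quadratic form in one coordinate explicitly, which yields the same constant $\sqrt{\max((V_t^S)_{11},(V_t^S)_{22})/\det(V_t^S)}$ as your Cauchy--Schwarz argument via $(M^{-1})_{ii}$.
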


Lemma \ref{v_to_use} can be directly derived from Theorem 1 in \cite{abbasi2011regret} as shown in Appendix \ref{sec:proof_of_v_to_use}. The other tool that will be shared by the proofs in the following sections is the following anti-concentration inequality of the sum of non-negative random variables.
\begin{lemma}\label{sum_of_depend_rand}
    For $p \in (0,1]$ and $1 \le n \le T$, suppose $X_0,...,X_{n-1}$ are non-negative random variables such that $(X_0,...,X_{i-1})$ is a deterministic function of the random variable set $F_i$ for all $i \in [1:n]$ and $F_i \subseteq F_{i+1}$. Let the set $S_n \subseteq [0:n-1]$ be a random variable such that the event $\{i \in S_n\}$ is a deterministic function of $F_{i+1}$. For $i \in [0:n-1]$, define $S_i = \{k < i : k \in S_n\}$, therefore $S_i$ is a deterministic function of $F_i$. Let $E^*$ be an event such that for all $i \in [0:n-1]$,
    \begin{equation}\label{eq:condxdef_rand}
        \E\left[X_i \cond F_i, E^*, i \in S_n\right]  \ge c \cdot |S_{i}|,
    \end{equation}
    where $c > 0$ is non-random. Furthermore, assume that conditional on $E^*$:
\begin{equation}\label{eq:boundxdef_rand}
        0  \stackrel{\text{a.s.}}{\le} X_i  \stackrel{\text{a.s.}}{\le} \frac{c|S_i|}{2p}.
    \end{equation}
    Then conditional on event $E^*$, with conditional probability $1-o_T(1/T^2)$, 
    \[
        \sum_{i=0}^{n-1} X_i \ge \frac{c}{4}\left(\max(\lfloor p|S_n| - \log(T)\sqrt{|S_n|} \rfloor, 1)\right)\left(\max(\lfloor p|S_n| - \log(T)\sqrt{|S_n|} \rfloor, 1)-1\right).
    \]
\end{lemma}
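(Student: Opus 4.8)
The plan is to establish this as a conditional anti-concentration result by coupling the dependent sequence $\{X_i\}$ to an i.i.d.\ Bernoulli sequence and then using the special structure in Equation \eqref{eq:condxdef_rand}, which says that on the ``active'' times $i \in S_n$ the conditional expectation of $X_i$ grows linearly in $|S_i|$. The first step is to reduce the sum $\sum_{i=0}^{n-1} X_i$ to a sum over the active indices $i \in S_n$, since the inactive terms are merely nonnegative and can be dropped for a lower bound. On each active index, the conditional mean of $X_i$ is at least $c\cdot|S_i|$, so if we can show that at least $\max(\lfloor p|S_n|-\log(T)\sqrt{|S_n|}\rfloor,1)$ of the active indices have $X_i$ bounded \emph{below} by a constant fraction of $c\cdot|S_i|$, the stated quadratic-in-$|S_n|$ bound will follow by summing $c\cdot|S_i|$ over those indices (using that $|S_i|$ takes the values $0,1,2,\dots$ as $i$ ranges over the active set, giving a sum of the form $c(0+1+\dots+(m-1)) \approx \tfrac{c}{2}m(m-1)$, matching the $\tfrac{c}{4}$ up to the constant loss from the anti-concentration step).

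Next I would make the coupling precise using the filtration structure: since $\{i \in S_n\}$ is $F_{i+1}$-measurable and $(X_0,\dots,X_{i-1})$ is $F_i$-measurable, the process is adapted in a way that lets me define, for each active index, an indicator $B_i := 1\{X_i \ge \tfrac{c}{2}|S_i|\}$. By a reverse-Markov (Paley--Zygmund-style) argument applied conditionally on $F_i$ and $E^*$, using the two-sided bound in Equation \eqref{eq:boundxdef_rand} (which caps $X_i \le \tfrac{c|S_i|}{2p}$), I would show $\P(B_i = 1 \mid F_i, E^*, i \in S_n) \ge p$. This is the crucial point where the upper bound \eqref{eq:boundxdef_rand} is used: a random variable with mean at least $c|S_i|$ and maximum at most $c|S_i|/(2p)$ must exceed half its mean-threshold with probability at least roughly $p$. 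Then $\sum_{i \in S_n} B_i$ stochastically dominates a $\mathrm{Binomial}(|S_n|, p)$ random variable (conditionally on $E^*$ and on the realized set $S_n$), via the standard coupling-by-stochastic-domination lemma for adapted Bernoulli sequences.

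Finally I would apply Hoeffding's inequality to the dominating binomial to conclude that, conditional on $E^*$, with probability $1-o_T(1/T^2)$ we have $\sum_{i\in S_n} B_i \ge p|S_n| - \log(T)\sqrt{|S_n|}$, which is at least $m := \max(\lfloor p|S_n|-\log(T)\sqrt{|S_n|}\rfloor,1)$. On the event that at least $m$ active indices satisfy $B_i=1$, I pick the $m$ active indices with the smallest values of $|S_i|$; these have $|S_i| \in \{0,1,\dots,m-1\}$ (at worst), so
\[
\sum_{i=0}^{n-1} X_i \;\ge\; \sum_{i \in S_n, B_i = 1} \frac{c}{2}|S_i| \;\ge\; \frac{c}{2}\sum_{k=0}^{m-1} k \;=\; \frac{c}{4}\,m(m-1),
\]
which is exactly the desired bound. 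The main obstacle I anticipate is handling the conditioning carefully so that the stochastic domination is valid: because $S_n$ itself is random and the thresholds $\tfrac{c}{2}|S_i|$ depend on the running count of active indices, I must ensure the Paley--Zygmund lower bound holds uniformly conditional on the entire past $F_i$ (not just marginally) and that the resulting Bernoulli variables can be coupled to independent ones despite the adaptivity. Verifying that the truncation threshold $\log(T)\sqrt{|S_n|}$ in Hoeffding's bound translates correctly through the $\max(\cdot,1)$ flooring — and that the $o_T(1/T^2)$ failure probability survives even when $|S_n|$ is small — will require a short separate check for the degenerate cases where $|S_n|$ is bounded.
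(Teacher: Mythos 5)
Your plan follows the paper's proof almost step for step: drop the inactive indices, use the mean bound \eqref{eq:condxdef_rand} together with the a.s.\ cap \eqref{eq:boundxdef_rand} to show $\P\left(X_i \ge \tfrac{c}{2}|S_i| \cond F_i, E^*, i \in S_n\right) \ge p$ (the paper does this via the law of total expectation, which is exactly your reverse-Markov calculation), couple the resulting indicators to Bernoulli($p$) variables, apply a Hoeffding-type concentration bound, and lower-bound the weighted sum by $\tfrac{c}{2}(0+1+\cdots+(m-1)) = \tfrac{c}{4}m(m-1)$ in the worst case where the successes land on the earliest active indices. The skeleton, the constants, and the final arithmetic all match.

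The one step where your plan as literally stated would fail is the claim that $\sum_{i \in S_n} B_i$ stochastically dominates a $\mathrm{Binomial}(|S_n|,p)$ variable ``conditionally on $E^*$ and on the realized set $S_n$.'' Conditioning on the realized $S_n$ is conditioning on the future --- the event $\{S_n = S\}$ is determined only by $F_n$ --- and it can destroy the conditional lower bound $\P(B_i = 1 \cond F_i, E^*, i \in S_n) \ge p$ that the domination argument requires. You flagged this obstacle yourself, and the paper's resolution is the piece your proposal is missing: it reindexes the indicators by their position within $S_n$ (writing $\kappa_i$ for the $(i+1)$th smallest active index, so that $|S_{\kappa_i}| = i$ and the threshold at position $i$ is the deterministic value $\tfrac{c\cdot i}{2}$), pads the sequence beyond position $|S_n|-1$ with genuinely independent Bernoulli($p$) variables so that a full length-$n$ sequence $A_i'$ satisfying $\P(A_i' = 1 \cond \text{past}, E^*) \ge p$ is defined no matter how large $|S_n|$ turns out to be, and then applies Azuma--Hoeffding to the submartingale $\sum_{i<m}(A_i' - p)$ uniformly over all deterministic prefix lengths $m \in [1:n]$, taking a union bound over $m$ (affordable since $n \le T$). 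Because the concentration bound holds for every $m$ simultaneously, the random quantity $|S_n|$ can be substituted at the end, which is what legitimizes the threshold $p|S_n| - \log(T)\sqrt{|S_n|}$. With that device in place, the rest of your argument, including the degenerate small-$|S_n|$ case handled by the $\max(\cdot,1)$ flooring (where the claimed bound is $0$ and holds trivially by nonnegativity), goes through verbatim.
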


The proof of Lemma \ref{sum_of_depend_rand} can be found in Appendix \ref{sec:proof_of_sum_of_depend_rand}.

\subsection{Proof of Lemma \ref{initial_uncertainty}}\label{proof:initial_uncertainty}
\begin{proof}
For the rest of this proof, $x_t$ and $u_t$ are respectively the position and control at time $t$ of controller $C^{\mathrm{alg}}$ that corresponds to Algorithm \ref{alg:cap} starting at $x_0 = 0$. Recall that Lemma \ref{initial_uncertainty} was stated and used in Appendix \ref{app:proof_of_performance} with respect to Algorithm \ref{alg:cap}, therefore all events and variables in this subsection refer to those defined with respect to Algorithm \ref{alg:cap}. To prove Lemma \ref{initial_uncertainty}, we will use Lemma \ref{v_to_use} applied to $S = [0:\frac{1}{\nu_T^2}-1]$. The goal will be to bound the right side of Equation \eqref{eq:v_to_use} for this choice of $S$. Consider any fixed arbitrary $s \in [0:s_e]$ and the corresponding matrix $V_{T_s}^S$. Define $N$ as the event that for all $i < 1/\nu_T^2$, the control $u_i$ is safe for dynamics $\theta^*$. Note that we showed in Lemma \ref{safety_append} that $\P(N) \ge \P(E_{\mathrm{safe}}) = 1-o_T(\frac{1}{T^2})$. Under event $N \cap E_1$, we can apply Lemma \ref{bounded_approx} to get the following equations for sufficiently large $T$:
\begin{equation}\label{eq:initial_uncertainty_V1}
    (V_{T_s}^S)_{11} = \lambda + \sum_{i=0}^{\frac{1}{\nu_T^2}-1} x_i^2 \le \lambda + \frac{1}{\nu_T^2}B_x^2 
\end{equation}
\begin{align*}
    (V_{T_s}^{S})_{22} &= \lambda + \sum_{i=0}^{\frac{1}{\nu_T^2}-1} u_i^2
    \le \lambda + \frac{1}{\nu_T^2}B_x^2 \numberthis \label{eq:initial_uncertainty_V2}
\end{align*} 
\begin{align*}
    (V_{T_s}^{S})_{12}^2 &= \left(\sum_{i=0}^{\frac{1}{\nu_T^2}-1}  u_ix_i\right)^2.  \numberthis \label{eq:initial_uncertainty_V3}
\end{align*}
We can now compute $(V_{T_s}^{S})_{22}(V_{T_s}^S)_{11} - (V_{T_s}^{S})_{12}^2$. Recall that for the first $1/\nu_T^2$ steps of Algorithm \ref{alg:cap}, the control is $u_i = C^{\mathrm{init}}(x_i) + \frac{\phi_i}{\log(T)}$ where $\phi_i$ is i.i.d. from the Rademacher distribution and independent from the noise random variables.
\begin{align*}
    &(V_{T_s}^{S})_{22}(V_{T_s}^S)_{11} - (V_{T_s}^{S})_{12}^2 \\
    &= \left( \lambda + \sum_{i=0}^{\frac{1}{\nu_T^2}-1}  u_i^2 \right)\left( \lambda + \sum_{i=0}^{\frac{1}{\nu_T^2}-1}  x_i^2\right) - \left(\sum_{i=0}^{\frac{1}{\nu_T^2}-1}  u_ix_i\right)^2 && \text{Equations \eqref{eq:initial_uncertainty_V1} \eqref{eq:initial_uncertainty_V2} \eqref{eq:initial_uncertainty_V3}}\\
    &\ge  \left(  \sum_{i=0}^{\frac{1}{\nu_T^2}-1}  u_i^2\right)\left( \sum_{i=0}^{\frac{1}{\nu_T^2}-1}  x_i^2\right) - \left(\sum_{i=0}^{\frac{1}{\nu_T^2}-1}  u_ix_i\right)^2  \\
    &= \sum_{i < j}^{\frac{1}{\nu_T^2}-1} (u_ix_j - u_jx_i)^2 \\
    &= \sum_{i < j}^{\frac{1}{\nu_T^2}-1} \left(u_ix_j -  C^{\mathrm{init}}(x_j)x_i +\frac{\phi_j}{\log(T)}x_i\right)^2. \numberthis \label{eq:diff_of_v1}
\end{align*}
Conditional on $N \cap E_1$, for all $i < 1/\nu_T^2$, we have $|u_i|, |x_i| \le B_x$ by Lemma \ref{bounded_approx}. Define the random variable $X_j$ as
\begin{align*}
    X_j &= \sum_{i =0}^{j-1} (u_ix_j - u_jx_i)^2 \\
    &= \sum_{i=0}^{j-1} \left(u_ix_j - C^{\mathrm{init}}(x_j)x_i + \left(\frac{\phi_j}{\log(T)}\right) x_i\right)^2 \\
    &\le 4jB_x^4. && \text{Conditional on $N \cap E_1$ by Lemma \ref{bounded_approx}} \numberthis \label{eq:Xj_Bounds}
\end{align*}
We will use the following lemma to lower bound the conditional expectation of $X_j$.
\begin{lemma}\label{logt_xvals}
    Under Assumptions \ref{assum_init}--\ref{problem_specifications}, let $x_0,x_1,...,x_T$ be the positions of the controller $C^{\mathrm{alg}}$ starting at $x_0 = 0$. Then there exists an event $E_{L\ref{logt_xvals}}$ such that $\P(E_{L\ref{logt_xvals}}) = 1-o_T(1/T^2)$ and for sufficiently large $T$ conditional on $E_{L\ref{logt_xvals}}$, for all $j \ge \log^8(T)$, 
    \begin{equation}\label{eq:logt_xvals}
        \sum_{i=0}^{j-1} x_i^2 \ge \frac{j}{2\log^2(T)}.
    \end{equation}
\end{lemma}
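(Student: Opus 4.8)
The plan is to exploit the fact that, because the noise has variance $1$, each step injects a constant amount of ``energy'' into the system that the controller cannot remove, so $\sum_{i=0}^{j-1}x_i^2$ cannot be much smaller than $j$. Concretely, let $\mathcal{G}_i$ be the $\sigma$-algebra generated by $\mathcal{F}_i$ together with any internal randomization used to select $u_i$ (the Rademacher variables $\phi_0,\dots,\phi_i$ in the warm-up phase), so that $x_i,u_i$ are $\mathcal{G}_i$-measurable while $w_i$ is independent of $\mathcal{G}_i$. Writing $m_i = a^*x_i + b^*u_i$, the identity $x_{i+1}=m_i+w_i$ together with $\E[w_i]=0$ and $\Var(w_i)=\sigma_{\mathcal{D}}^2=1$ gives the key fact $\E[x_{i+1}^2\mid\mathcal{G}_i]=m_i^2+1\ge 1$. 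Since $x_0=0$, summing yields $\sum_{i=0}^{j-1}x_i^2=\sum_{i=0}^{j-2}x_{i+1}^2 \ge (j-1) + M_{j-1}$, where $M_n=\sum_{i=0}^{n-1}\big(x_{i+1}^2-\E[x_{i+1}^2\mid\mathcal{G}_i]\big)$ is a martingale with respect to $\{\mathcal{G}_i\}$. It therefore suffices to show that with high probability $-M_{j-1}\le \tfrac{j-1}{2}$ simultaneously for all $j\ge\log^8(T)$, since then $\sum_{i=0}^{j-1}x_i^2\ge \tfrac{j-1}{2}\ge \tfrac{j}{2\log^2(T)}$ for sufficiently large $T$.

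To bound the lower tail of $M_{j-1}$ I would apply a one-sided Freedman/Bernstein martingale inequality to $-M$. The increments satisfy $-(x_{i+1}^2-\E[x_{i+1}^2\mid\mathcal{G}_i]) = (m_i^2+1)-x_{i+1}^2 \le m_i^2+1$ because $x_{i+1}^2\ge 0$; crucially this upper bound involves only $m_i$, not the realized noise. A direct moment computation gives the predictable variance $\Var(x_{i+1}^2\mid\mathcal{G}_i)=4m_i^2+\Var(w_i^2)+4m_i\E[w_i^3]=O(m_i^2+1)$, which is finite since $\mathcal{D}$ is subgaussian. Whenever the control is safe for $\theta^*$ we have $m_i\in[D_{\mathrm{L}},D_{\mathrm{U}}]$, so $m_i^2\le\norm{D}_{\infty}^2\le\log^4(T)$ by Assumption \ref{problem_specifications}; hence both the increment bound $c$ and the per-step variance are $O(\log^4(T))$, and the total predictable variance over $n=j-1$ steps is $O(j\log^4(T))$. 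Applying Freedman with deviation $t=\tfrac{j-1}{2}$ produces an exponent of order $\tfrac{t^2}{V+ct}\asymp\tfrac{j^2}{j\log^4(T)}=\tfrac{j}{\log^4(T)}$, which for $j\ge\log^8(T)$ is $\Omega(\log^4(T))$; thus each $j$ fails with probability $\exp(-\Omega(\log^4(T)))=o_T(1/T^k)$ for every $k$, and a union bound over the at most $T$ values $j\in[\log^8(T),T]$ yields $o_T(1/T^2)$. The threshold $\log^8(T)$ is chosen precisely so that $j/\log^4(T)$ is polylogarithmically large.

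The main obstacle is reconciling two requirements: the increment bound $m_i^2\le\log^4(T)$ needs the controller to be safe (equivalently $m_i$ inside the constraints), yet Freedman's inequality must be applied unconditionally to preserve the martingale structure. I would resolve this with a stopping time $\tau$ equal to the first index at which either $m_i\notin[D_{\mathrm{L}},D_{\mathrm{U}}]$ or $|w_i|>\log^2(T)$, and apply the concentration bound to the stopped martingale $M_{n\wedge\tau}$, whose increments are bounded almost surely; intersecting the resulting high-probability event with $\{\tau\ge j\}$ recovers the claim on the original process. A subtlety worth flagging is circularity: the full-trajectory safety event $E_{\mathrm{safe}}$ is established in Lemma \ref{safety_append} only via the very uncertainty bound that this lemma feeds into, so I would build the event $E_{L\ref{logt_xvals}}$ from the \emph{independently} established events — the warm-up safety argument (first paragraph of Lemma \ref{safety_append}, which uses only Assumption \ref{assum:initial}), the estimate-containment event $E_0$ from Lemma \ref{v_to_use}, and $E_1$ — none of which depend on Lemma \ref{initial_uncertainty}. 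On the warm-up window, where this lemma is actually invoked, Assumption \ref{assum:initial} guarantees $m_i\in[D_{\mathrm{L}},D_{\mathrm{U}}]$ deterministically once positions stay in range, so $\tau$ exceeds the warm-up length with the required probability and no circular dependence arises.
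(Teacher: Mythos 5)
Your martingale ``energy'' argument (Freedman applied to $\sum_i (x_{i+1}^2 - m_i^2 - 1)$) is internally sound on the window where it applies, but it does not prove the lemma as stated, and the reason is precisely the circularity you flag at the end. Your increment and variance bounds require $m_i^2 \le \norm{D}_{\infty}^2 \le \log^4(T)$, i.e.\ safety of the control at every step $i < j$. Beyond the warm-up phase, safety of $C^{\mathrm{alg}}$ is established only through Lemma \ref{safety_append}, which needs $\epsilon_s = \tilde{O}_T(\nu_T)$ from Lemma \ref{initial_uncertainty}, which in turn is proved using the present lemma. The events you substitute (warm-up safety, $E_0$ from Lemma \ref{v_to_use}, and $E_1$) do \emph{not} imply $m_i \in [D_{\mathrm{L}}, D_{\mathrm{U}}]$ after the warm-up phase: without the $\epsilon_s$ bound one cannot even guarantee $u_t^{\mathrm{safeL}} \le u_t^{\mathrm{safeU}}$ (Lemma \ref{lemma:L_less_than_U}), so the clipped control can violate the constraint and $m_i$ has no useful a priori bound. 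Consequently your stopping time $\tau$ is only guaranteed to exceed the warm-up length $1/\nu_T^2 = T^{2/3}$, and your argument yields \eqref{eq:logt_xvals} only for $j \le 1/\nu_T^2$, whereas the statement asserts it for all $j \ge \log^8(T)$ with sums running over the entire trajectory $x_0,\ldots,x_T$. Proving the restriction and noting it suffices for the paper's one invocation is a modification of the lemma, not a proof of it. A separate, fixable technicality: as defined, $\tau$ is not a stopping time for the relevant filtration, since $\{\tau > i\}$ depends on $|w_i|$, which is exactly the randomness in increment $i$; you should drop the noise condition from $\tau$ (it is unnecessary, as your increment and variance bounds involve only $m_i$).

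The idea you are missing---and the reason the paper's proof needs no safety, no stopping time, and no case analysis---is anti-concentration via the bounded-density part of Assumption \ref{problem_specifications}. Since $x_i = m_{i-1} + w_{i-1}$ and the density of $\mathcal{D}$ is bounded by $B_P$, one has $\P\bigl(|x_i| \le \tfrac{1}{\log(T)} \mid x_0,u_0,\ldots,x_{i-1},u_{i-1}\bigr) \le \tfrac{2B_P}{\log(T)}$ \emph{regardless} of the value of $m_{i-1}$: an arbitrary shift cannot concentrate a bounded-density random variable near zero. The paper then applies Azuma--Hoeffding to the indicators $A_i = 1_{|x_i| \le 1/\log(T)}$ to conclude that, with probability $1 - o_T(1/T^2)$, for every $j \ge \log^8(T)$ at most $4jB_P/\log(T) \le j/2$ of the indices $i < j$ have $|x_i| \le 1/\log(T)$; the remaining $j/2$ indices each contribute $x_i^2 \ge 1/\log^2(T)$, giving \eqref{eq:logt_xvals} unconditionally over the whole horizon. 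This is why the lemma is true for arbitrary (possibly unsafe, unbounded) controls, a strength your variance-based route cannot recover: large $|m_i|$ only helps the paper's argument (it pushes $x_{i+1}$ away from zero), while it destroys your increment bounds.
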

The proof of Lemma \ref{logt_xvals} can be found in Appendix \ref{sec:proof_of_logt_xvals}. Now define $E^* = N \cap E_1 \cap E_{L\ref{logt_xvals}}$. Note that $\P(E^*) = 1-o_T(1/T^2)$ by a union bound. Because $\phi_j$ is a Rademacher random variable, we therefore have that $\P(\phi_j = 1 \mid E^*) = 1/2 - o_T(1/T^2)$ and $\P(\phi_j = -1 \mid E^*) = 1/2 - o_T(1/T^2)$. This implies that $|\E[\phi_j \mid E^*]| = o_T(1/T^2)$, and therefore for sufficiently large $T$, we have $\Var\left[\phi_j\cond E^*\right] \ge 1/2$. Then we can bound the conditional expectation of $X_j$ under event $E^*$ as follows for all $j \ge \log^8(T)$ and for sufficiently large $T$. Define $F_j = \{x_0,u_0,...,x_{j-1},u_{j-1}, x_j\}$. Then we have
\begin{align*}
    \E[X_j \mid F_j, E^*] &= \sum_{i=0}^{j-1} \E\left[\left(u_ix_j - C^{\mathrm{init}}(x_j)x_i + \left(\frac{\phi_j}{\log(T)}\right) x_i\right)^2 \cond F_j, E^*\right] \\
    &\ge \sum_{i=0}^{j-1} \Var\left[u_ix_j - C^{\mathrm{init}}(x_j)x_i + \left(\frac{\phi_j}{\log(T)}\right) x_i \cond F_j, E^*\right] \\
    &= \sum_{i=0}^{j-1} x_i^2\Var\left[\frac{\phi_j}{\log(T)}\cond F_j, E^*\right] \\\
    &= \sum_{i=0}^{j-1} x_i^2\Var\left[\frac{\phi_j}{\log(T)} \cond E^*\right]&& \text{$\phi_j$ is ind. of $F_j$}\\
    &= \frac{\Var\left[\phi_j\cond E^*\right]}{\log^2(T)}\sum_{i=0}^{j-1} x_i^2 \\
    &\ge \frac{1}{2\log^2(T)}\sum_{i=0}^{j-1} x_i^2  \\
    &\ge \frac{j}{4\log^4(T)}. && \text{$E^* \subseteq E_{\mathrm{L}\ref{logt_xvals}}$}\numberthis \label{eq:conditional_bound_on_X}
\end{align*}
Therefore, we can apply Lemma \ref{sum_of_depend_rand} to $X_{\log^8(T)},X_{\log^8(T)+1},...,X_{1/\nu_T^2 - 1}$ with $n = 1/\nu_T^2 - \log^8(T)$, $p = \frac{1}{32B_x^4\log^4(T)}$, $F_i = \{x_0,u_0,...,u_{i-1}, x_i\}$, $S_n = [0: n-1]$, and $c = \frac{1}{4\log^4(T)}$. Note that this choice of $p$ is less than $1$ for sufficiently large $T$.

We will also use that for sufficiently large $T$, $n = 1/\nu_T^2 - \log^8(T) = T^{2/3} - \log^8(T) \ge 4\log^2(T)/p^2$. This implies that for sufficiently large $T$,
\begin{equation}\label{eq:pn_conversion}
   p n - \log(T)\sqrt{n} \ge pn/2 = \frac{1/\nu_T^2 - \log^8(T)}{64B_x^4\log^4(T)}\ge 1.
\end{equation}

Recall by Equation \eqref{eq:Xj_Bounds} that under event $E^*$, the $X_j$ are bounded by $0 \le X_j \le 4jB_x^4 = \frac{c}{2p} \cdot j$.  Lemma \ref{sum_of_depend_rand} gives that for sufficiently large $T$ and conditional on event $E^*$ with conditional probability $1-o_T(1/T^2)$, 
\begin{align*}
   &(V_{T_s}^{S})_{22}(V_{T_s}^S)_{11} - (V_{T_s}^{S})_{12}^2 \\
   &\ge  \sum_{j=0}^{\frac{1}{\nu_T^2}-1} X_j && \text{Equation \eqref{eq:diff_of_v1}} \\
   &\ge  \sum_{j=\log^8(T)}^{\frac{1}{\nu_T^2}-1} X_j && \text{$X_i \ge 0$}\\
   &\ge \frac{1}{16\log^4(T)} \left(\max(\lfloor p n - \log(T)\sqrt{n}\rfloor, 1) - 1\right)\left(\max(\lfloor p n - \log(T)\sqrt{n}\rfloor, 1) \right) && \text{Lemma \ref{sum_of_depend_rand}} \\
   &\ge \frac{1}{16\log^4(T)} \left(\left\lfloor \frac{1/\nu_T^2 - \log^8(T)}{64B_x^4\log^4(T)} \right\rfloor -  1\right) \left(\left\lfloor \frac{1/\nu_T^2 - \log^8(T)}{64B_x^4\log^4(T)} \right\rfloor\right)&& \text{Equation \eqref{eq:pn_conversion}} \\
    &= \Omega_T\left(\frac{\frac{1}{\nu_T^4}}{B_x^8\log^{12}(T)}\right). \numberthis \label{eq:vden1}
\end{align*}
Finally, we need to bound the quantity $B_{T_s}$ from Lemma \ref{v_to_use}. The only non-constant term in $B_{T_s}$ is $\sqrt{\log(\det(\lambda I + \sum_{i=0}^{T_s-1} z_iz_i^\top)) + 2\log(T^2)}$. Define $V_{T_s} = \lambda I + \sum_{i=0}^{T_s-1} z_iz_i^\top$. Conditional on event $N \cap E_1$, we have by Lemma \ref{bounded_approx} that $(V_{T_s})_{22} \le \lambda + TB_x^2$ and $(V_{T_s})_{11} \le \lambda + TB_x^2$. Therefore, conditional on event $N \cap E_1$, 
\begin{align*}
   \sqrt{\log\left(\det\left(\lambda I + \sum_{i=0}^{T_s-1} z_iz_i^\top\right)\right) + 2\log(T^2)} &\le \sqrt{\log((V_{T_s})_{11}(V_{T_s})_{22})+ 2\log(T^2)} \\
    &\le \sqrt{\log\left(\left(\lambda + TB_x^2\right)^2\right)+ 2\log(T^2)} \\
    &= \tilde{O}_T(1). \numberthis \label{eq:bounded_bt}
\end{align*}
Now, combining Lemma \ref{v_to_use} and Equations \eqref{eq:initial_uncertainty_V1}, \eqref{eq:initial_uncertainty_V2}, \eqref{eq:vden1}, and \eqref{eq:bounded_bt} gives that conditional on event $E^*$ with conditional probability $1-o_T(1/T^2)$, for all $s \in [0:s_e]$,
\begin{align*}
    \epsilon_s^2 \le \frac{\max((V_{T_s}^S)_{11}, (V_{T_s}^S)_{22})}{\det(V_{T_s}^S)}B^2_{T_s} = \frac{(\lambda + \left(\frac{1}{\nu_T^2}\right)B_x^2)\tilde{O}_T(1) }{ \Omega_T\left(\frac{\left(\frac{1}{\nu_T^2}\right)^2}{B_x^8\log^{12}(T)}\right)} = \tilde{O}_T\left(\nu_T^2\right).
\end{align*}
Because $\P(E^*) = 1-o_T(1/T^2)$, this gives the desired result with unconditional probability $1-o_T(1/T^2)$.
\end{proof}

\subsection{Proof of Lemma \ref{boundary_uncertainty}}\label{sec:proof_of_boundary_uncertainty}
\begin{proof}
    Recall that Lemma \ref{boundary_uncertainty} was stated and used in Appendix \ref{app:suff_large_noise_case} with respect to Algorithm \ref{alg:cap_large}, therefore all events and variables in this subsection refer to those defined with respect to Algorithm \ref{alg:cap_large}. 
We will prove a more general result in Lemma \ref{boundary_uncertainty_cont}.

\begin{lemma}\label{boundary_uncertainty_cont}
    Let $x_t, u_t$ respectively be the position and control of $C^{\mathrm{alg}}$ (the controller of Algorithm \ref{alg:cap_large}) at time $t$ starting at $x_0 = 0$. Define $G_i = (x_0,u_0,...,x_{i-1}, u_{i-1})$. For constant $\gamma > 0$, define $S_t'$ as 
    \begin{equation}
        S_t' = \Big\{i< t : u_i = u_i^{\mathrm{safeU}} \text{ $\mathrm{and}$ } \P(u_i = u_i^{\mathrm{safeU}} \mid G_i, E) \ge \gamma \Big\},
    \end{equation}
    where $E$ is the event defined in Appendix \ref{app:suff_large_noise_case}.
    Then under Assumptions \ref{assum_init}--\ref{parameterization_assum3} and for sufficiently large $T$, with probability $1-o_T(1/T)$, 
    \[
        \max_{s \in [0:s_e]} \epsilon_s \sqrt{|S'_{T_s}|} = \tilde{O}_T\left(1\right),
    \]
    where $\epsilon_s$ is from Algorithm \ref{alg:cap_large}.
\end{lemma}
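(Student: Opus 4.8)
The plan is to reduce the statement to a lower bound on the determinant of the \emph{restricted} design matrix and then to obtain that bound from an anti-concentration argument driven by the noise variance on the boundary steps. Write $S' := S'_{T_s}$, let $z_i = (x_i, u_i)^\top$, and let $V^{S'}_{T_s} = \lambda I + \sum_{i \in S'} z_i z_i^\top$ be the Gram matrix restricted to the boundary indices. The first step is algebraic: although the algorithm defines $\epsilon_s$ from the full matrix $V_{T_s} = \lambda I + \sum_{i=0}^{T_s-1} z_iz_i^\top$, one has $\frac{\max((V_{T_s})_{11},(V_{T_s})_{22})}{\det V_{T_s}} = \max\big((V_{T_s}^{-1})_{11}, (V_{T_s}^{-1})_{22}\big)$, and since $V_{T_s} \succeq V^{S'}_{T_s}$ we get $V_{T_s}^{-1} \preceq (V^{S'}_{T_s})^{-1}$, hence $(V_{T_s}^{-1})_{kk} \le ((V^{S'}_{T_s})^{-1})_{kk}$ for $k=1,2$. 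Therefore $\epsilon_s \le B_{T_s}\sqrt{\max((V^{S'}_{T_s})_{11}, (V^{S'}_{T_s})_{22}) / \det V^{S'}_{T_s}}$. Conditional on $E$ the positions and controls are bounded by $B_x$ (Lemma \ref{bounded_approx}), so the numerator is $\tilde{O}_T(|S'|)$, and $B_{T_s}^2 = \tilde{O}_T(1)$; thus $\epsilon_s^2\,|S'| \le \tilde{O}_T\big(|S'|^2 / \det V^{S'}_{T_s}\big)$, and it suffices to prove $\det V^{S'}_{T_s} = \tilde{\Omega}_T(|S'|^2)$.

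Next I would expand the determinant via the Lagrange identity $\det V^{S'}_{T_s} \ge \sum_{i<j,\; i,j \in S'}(x_i u_j - x_j u_i)^2$ and exploit the geometry of the boundary controls. For $i \in S'$ we have $u_i = u_i^{\mathrm{safeU}}$, which satisfies $a' x_i + b' u_i = D_{\mathrm{U}}$ for some $\theta'=(a',b')$ within $\epsilon_s$ of $\hat{\theta}_s^{\mathrm{pre}}$; conditional on $E$, $\theta^*$ is within $2\epsilon_s$, so $u_i = (D_{\mathrm{U}} - a^* x_i + \delta_i)/b^*$ with $|\delta_i| = \tilde{O}_T(\epsilon_s B_x)$. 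Substituting for both $i$ and $j$ gives $x_i u_j - x_j u_i = \tfrac{D_{\mathrm{U}}}{b^*}(x_i - x_j) + \tfrac{1}{b^*}(x_i\delta_j - x_j\delta_i)$, where the error term is $\tilde{O}_T(\epsilon_s B_x^2) = o_T(1)$ because $\epsilon_s = \tilde{O}_T(\nu_T)$ under $E$ (Lemma \ref{initial_uncertainty_large}). Each cross term is thus, up to a vanishing additive error, a constant multiple of the \emph{position gap} $(x_i - x_j)$: lower-bounding the determinant reduces to showing the positions visited on likely-boundary steps have enough spread.

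To harvest this spread I would apply Lemma \ref{sum_of_depend_rand}, taking its event $E^*$ to be $E$, with $F_j = (x_0,u_0,\dots,x_{j-1},u_{j-1})$, index set $S' = S'_{T_s}$, and $X_j = 1_{\{j \in S'\}} \sum_{i<j,\, i\in S'}(x_i u_j - x_j u_i)^2$, so that $\sum_j X_j \le \det V^{S'}_{T_s}$. Membership $\{j \in S'\}$ is $F_{j+1}$-measurable and each $X_{j'}$ with $j'<j$ is $F_j$-measurable, as required. The crucial per-step input is $\E[X_j \mid F_j, E, j\in S'] \ge c\,|S_j|$ with $c = \tilde{\Omega}_T(1)$: given $F_j$ and $j\in S'$ the earlier positions $x_i$ are fixed while $x_j = a^*x_{j-1}+b^*u_{j-1}+w_{j-1}$ is random through $w_{j-1}$, so $\E[(x_i-x_j)^2\mid\cdot] \ge \Var[x_j\mid\cdot]$; the \emph{likely}-boundary condition $\P(u_j = u_j^{\mathrm{safeU}}\mid F_j, E)\ge\gamma$ built into $S'$ means we condition $w_{j-1}$ on a mass-$\ge\gamma$ tail event, which has variance $\Omega_T(1)$ since $\mathcal{D}$ has bounded density. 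Combining this with the factor $(D_{\mathrm{U}}/b^*)^2$ and absorbing the $o_T(1)$ error via $(A+B)^2 \ge \tfrac12A^2 - B^2$ produces $c$; the boundedness hypothesis $X_j \le |S_j|\cdot 4B_x^4 = \tfrac{c|S_j|}{2p}$ fixes $p = \tilde{\Omega}_T(1)$. Lemma \ref{sum_of_depend_rand} then gives $\det V^{S'}_{T_s} \ge \sum_j X_j = \tilde{\Omega}_T((p|S'|)^2) = \tilde{\Omega}_T(|S'|^2)$ with probability $1-o_T(1/T^2)$ for each $s$; a union bound over the $\tilde{O}_T(1)$ values of $s$, together with the trivial bound $\epsilon_s = \tilde{O}_T(\nu_T)$ on those $s$ where $|S'|=\tilde{O}_T(1)$ (so the anti-concentration is vacuous but $\epsilon_s\sqrt{|S'|}=\tilde{O}_T(1)$ anyway), completes the argument.

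The main obstacle I anticipate is the per-step conditional variance bound $\Var[x_j \mid F_j, E, j\in S'] = \Omega_T(1)$. Two points require care. First, conditioning on $\{j\in S'\}$ couples directly with $w_{j-1}$, since whether $u_j = u_j^{\mathrm{safeU}}$ is a function of $x_j$ and hence of $w_{j-1}$; one must argue that $\{u_j = u_j^{\mathrm{safeU}}\}$ is (essentially) a threshold event in $x_j$, using Lemma \ref{safe_condition_P} and the monotone structure of the censored control, and that restricting a bounded-density noise to a tail of mass $\ge \gamma$ has variance bounded below by a constant depending only on $\gamma$ and the density bound. Second, the reduction of the cross term to $\tfrac{D_{\mathrm{U}}}{b^*}(x_i-x_j)$ carries the factor $D_{\mathrm{U}}$, so the constant $c$ is non-degenerate precisely in the regime where $D_{\mathrm{U}}$ is of order $\tilde{\Omega}_T(1)$ — the regime in which Assumption \ref{assum_sufficiently_large_error} is stated for $D_{\mathrm{U}}$ — and the estimation error $\delta_i$ must be shown negligible relative to $c$, which is exactly where $\epsilon_s = \tilde{O}_T(\nu_T) = o_T(1)$ is invoked.
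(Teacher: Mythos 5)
Your proposal follows essentially the same route as the paper — reduce to a determinant lower bound for the Gram matrix restricted to boundary indices (your explicit monotonicity step $(V_{T_s}^{-1})_{kk} \le ((V_{T_s}^{S'})^{-1})_{kk}$ is correct, and usefully makes explicit what the paper leaves implicit when invoking Lemma \ref{v_to_use}), expand via the Lagrange identity, use the affine form of $u_i^{\mathrm{safeU}}$, and apply Lemma \ref{sum_of_depend_rand}. But there is a genuine gap at the per-step conditional variance bound, which is the heart of the argument. Lemma \ref{sum_of_depend_rand} requires $\E[X_j \mid F_j, E^*, j \in S_n] \ge c\,|S_j|$ with a nonrandom $c$, holding for (essentially) every realization of $F_j$. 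With your choices $E^* = E$ and $S_n = S'_{T_s}$, the event being conditioned on, given $G_j$, is $E \cap \{u_j = u_j^{\mathrm{safeU}}\}$, whose conditional probability is $\P(u_j = u_j^{\mathrm{safeU}} \mid G_j, E)\,\P(E \mid G_j) \ge \gamma\,\P(E \mid G_j)$ — not $\ge \gamma$, as your argument assumes. The membership condition built into $S'$ controls only the probability given $E$; it says nothing about $\P(E \mid G_j)$, which for particular realizations of $G_j$ can be arbitrarily small even though $\P(E) = 1 - o_T(1/T^2)$. For such realizations, conditioning $w_{j-1}$ on $E$ can concentrate it on a set of arbitrarily small mass, so $\Var(w_{j-1} \mid G_j, E, u_j = u_j^{\mathrm{safeU}})$ admits no uniform constant lower bound and hypothesis \eqref{eq:condxdef_rand} fails. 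Note also that your anticipated repair — exploiting a threshold/monotone structure of $\{u_j = u_j^{\mathrm{safeU}}\}$ in $x_j$ via Lemma \ref{safe_condition_P} — does not address this: no threshold structure is needed for the variance bound (the paper's Lemma \ref{lemma:var_bound} applies to conditioning on an arbitrary event of constant mass); what is needed is a lower bound on the mass of the conjunction, i.e., control of $\P(E \mid G_j)$.

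This is precisely the issue the paper's proof is built around. It introduces the augmented index set $\Sprime_t$, which additionally demands the ($G_i$-measurable) condition $\P(E \mid G_i) \ge 1/2$; under that condition the conjunction has conditional mass at least $\gamma/2$, and Lemma \ref{cond_var_bounded} yields the uniform variance lower bound, so Lemma \ref{sum_of_depend_rand} can legitimately be applied with $S_n = \Sprime_{T_s}$. The bound is then transferred back to $S'_{T_s}$ by Lemma \ref{lemma:converting_to_s_prime}, a Markov-type argument showing that with probability $1 - o_T(1/T)$ one has $\P(E \mid G_i) \ge 1/2$ simultaneously for all $i$, whence $|S'_{T_s}| = |\Sprime_{T_s}|$. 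Your proof needs this (or an equivalent) two-step repair; once it is in place, the remainder of your argument — the handling of the $\tilde{O}_T(\epsilon_s B_x)$ error terms, the boundedness hypothesis for Lemma \ref{sum_of_depend_rand}, the degenerate small-$|S'_{T_s}|$ case via $\epsilon_s = \tilde{O}_T(\nu_T)$, and the union bound over $s$ — matches the paper and goes through.
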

The proof of Lemma \ref{boundary_uncertainty_cont} can be found in Appendix \ref{sec:proof_of_boundary_uncertainty_cont}. We will now prove that Lemma \ref{boundary_uncertainty} is a direct consequence of Lemma \ref{boundary_uncertainty_cont}. By Equation \eqref{eq:prob_with_E2}, we have that for all $i$,
\[
    \P\left(u_i = u_i^{\mathrm{safeU}} \cond G_i, E\right) \ge \frac{\epsilon_{\mathrm{A}\ref{assum_sufficiently_large_error}}}{2}.
\]

Therefore, we have that
\begin{align*}
    \Big\{i< t : u_i = u_i^{\mathrm{safeU}} \text{ $\mathrm{and}$ } \P(u_i = u_i^{\mathrm{safeU}} \mid G_i, E) \ge \frac{\epsilon_{\mathrm{A}\ref{assum_sufficiently_large_error}}}{2} \Big\}  &= \Big\{i< t : u_i = u_i^{\mathrm{safeU}}\Big\}. \numberthis \label{eq:safe_sprime}
\end{align*}
Lemma \ref{boundary_uncertainty_cont} for $\gamma = \frac{\epsilon_{\mathrm{A}\ref{assum_sufficiently_large_error}}}{2}$ gives that with probability $1-o_T(1/T)$,
\begin{align*}
     \max_{s \in [0:s_e]}  \epsilon_s \cdot \sqrt{\left|\left\{i< t : u_i = u_i^{\mathrm{safeU}} \text{ $\mathrm{and}$ } \P(u_i = u_i^{\mathrm{safeU}} \mid G_i, E) \ge \frac{\epsilon_{\mathrm{A}\ref{assum_sufficiently_large_error}}}{2} \right\}\right|} = \tilde{O}_T\left(1\right). \numberthis \label{eq:boun_sprime}
\end{align*}
Combining Equation \eqref{eq:safe_sprime} and Equation \eqref{eq:boun_sprime} gives that with probability $1-o_T(1/T)$,
\[
 \max_{s \in [0:s_e]} \epsilon_s \sqrt{\left|\left\{i< t : u_i = u_i^{\mathrm{safeU}} \right\}\right|} = \tilde{O}_T\left(1\right),
\]
which is the desired result of Lemma \ref{boundary_uncertainty}.
\end{proof}

\subsection{Proof of Lemma \ref{boundary_uncertainty_cont}}\label{sec:proof_of_boundary_uncertainty_cont}

Lemma \ref{boundary_uncertainty_cont} is stated above to be used in Appendix \ref{app:suff_large_noise_case} with respect to Algorithm \ref{alg:cap_large}, therefore all events and variables in this subsection refer to those defined for Algorithm \ref{alg:cap_large}. 
\begin{proof}

The first step of the proof will be to prove that conditional on event $E$ for all $i \ge T_0$,
\begin{equation}\label{rewrite_usafe}
    u_i^{\mathrm{safeU}} = -\frac{a^*}{b^*}x_i + \frac{D_{\mathrm{U}}+ e_i}{b^*},
\end{equation}
where $|e_i| = \tilde{O}_T(\nu_T)$.  Let $s_i = \lfloor\log_2(i\nu_T^2) \rfloor$. Recall that $u_i^{\mathrm{safeU}}$ is the largest $u$ such that 
    \[
        \max_{\norm{\theta - \hat{\theta}_{s_i}}_{\infty} \le \epsilon_{s_i}} ax_i + bu \le D_{\mathrm{U}}.
    \]
    For sufficiently large $T$ and conditional on event $E$,
    \[
        \epsilon_{s_i} = \tilde{O}_T(\nu_T) \le \min(a^*,b^*) - \epsilon_{s_i} \le \min(\hat{a}_{s_i}, \hat{b}_{s_i}).
    \]
    This implies that $\hat{a}_{s_i} - \epsilon_{s_i} \ge 0$, giving the following equations of casework for $u_i^{\mathrm{safeU}}$:
    \begin{equation}
       u_i^{\mathrm{safeU}} = 
           \begin{cases}
                 \frac{D_{\mathrm{U}}- (\hat{a}_{s_i} + \epsilon_{s_i})x_i}{\hat{b}_{s_i} - \epsilon_{s_i}}, & \text{if  $x_i \ge 0$ and $(\hat{a}_{s_i} + \epsilon_{s_i})x_i \ge D_{\mathrm{U}}$ }\\
                 \frac{D_{\mathrm{U}}- (\hat{a}_{s_i} + \epsilon_{s_i})x_i}{\hat{b}_{s_i} + \epsilon_{s_i}}, & \text{if  $x_i \ge 0$ and $(\hat{a}_{s_i} + \epsilon_{s_i})x_i \le D_{\mathrm{U}}$ }\\
                 \frac{D_{\mathrm{U}}- (\hat{a}_{s_i} - \epsilon_{s_i})x_i}{\hat{b}_{s_i} + \epsilon_{s_i}}, & \text{if  $x_i \le 0$ }\\
    \end{cases}
    \end{equation}
    which implies
    \begin{equation}
       u_i^{\mathrm{safeU}} = 
           \begin{cases}
                \frac{D_{\mathrm{U}}-a^*x_i}{b^*}\frac{b^*}{\hat{b}_{s_i} - \epsilon_{s_i}} + \frac{a^*x_i- (\hat{a}_{s_i} + \epsilon_{s_i})x_i}{\hat{b}_{s_i} -\epsilon_{s_i}}, & \text{if  $x_i \ge 0$ and $(\hat{a}_{s_i} + \epsilon_{s_i})x_i \ge D_{\mathrm{U}}$ }\\
                 \frac{D_{\mathrm{U}}-a^*x_i}{b^*}\frac{b^*}{\hat{b}_{s_i} + \epsilon_{s_i}} + \frac{a^*x_i- (\hat{a}_{s_i} + \epsilon_{s_i})x_i}{\hat{b}_{s_i} +\epsilon_{s_i}}, & \text{if  $x_i \ge 0$ and $(\hat{a}_{s_i} + \epsilon_{s_i})x_i \le D_{\mathrm{U}}$ }\\
                  \frac{D_{\mathrm{U}}-a^*x_i}{b^*}\frac{b^*}{\hat{b}_{s_i} + \epsilon_{s_i}} + \frac{a^*x_i- (\hat{a}_{s_i} - \epsilon_{s_i})x_i}{\hat{b}_{s_i} +\epsilon_{s_i}}, & \text{if  $x_i \le 0$ }\\
    \end{cases}
    \end{equation}
    which implies
    \begin{equation}
       u_i^{\mathrm{safeU}} = 
           \begin{cases}
                \frac{D_{\mathrm{U}}-a^*x_i}{b^*} + \frac{b^* - \hat{b}_{s_i} + \epsilon_{s_i}}{\hat{b}_{s_i} - \epsilon_{s_i}} \frac{D_{\mathrm{U}}-a^*x_i}{b^*} + \frac{(a^*- \hat{a}_{s_i} - \epsilon_{s_i})x_i}{\hat{b}_{s_i} -\epsilon_{s_i}}, & \text{if  $x_i \ge 0$ and $(\hat{a}_{s_i} + \epsilon_{s_i})x_i \ge D_{\mathrm{U}}$ }\\
                 \frac{D_{\mathrm{U}}-a^*x_i}{b^*}  +\frac{b^* - \hat{b}_{s_i} - \epsilon_{s_i}}{\hat{b}_{s_i} + \epsilon_{s_i}}  \frac{D_{\mathrm{U}}-a^*x_i}{b^*} + \frac{(a^*- \hat{a}_{s_i} - \epsilon_{s_i})x_i}{\hat{b}_{s_i} +\epsilon_{s_i}}, & \text{if  $x_i \ge 0$ and $(\hat{a}_{s_i} + \epsilon_{s_i})x_i \le D_{\mathrm{U}}$ }\\
                  \frac{D_{\mathrm{U}}-a^*x_i}{b^*} +\frac{b^* - \hat{b}_{s_i} - \epsilon_{s_i}}{\hat{b}_{s_i} + \epsilon_{s_i}}  \frac{D_{\mathrm{U}}-a^*x_i}{b^*} + \frac{(a^*- \hat{a}_{s_i} + \epsilon_{s_i})x_i}{\hat{b}_{s_i} +\epsilon_{s_i}}. & \text{if  $x_i \le 0$. }\\
    \end{cases}
    \end{equation}
    Under event $E$, $|a^* - \hat{a}_{s_i}| \le \epsilon_{s_i}$, $|b^* - \hat{b}_{s_i}| \le \epsilon_{s_i}$, and $|x_i| = \tilde{O}_T(1)$, therefore in all three cases we have that $u_i^{\mathrm{safeU}} = -\frac{a^*}{b^*}x_i + \frac{D_{\mathrm{U}}+ e_i}{b^*}$,
    for some $e_i$ satisfying

    \begin{equation}\label{eq:ei}
        |e_i| = \tilde{O}_T(\epsilon_{s_i}) = \tilde{O}_T(\nu_T).
    \end{equation}
            We now  define
            \[
                S''_t = \Big\{i< t : u_i = u_i^{\mathrm{safeU}} \text{ $\mathrm{and}$ } \P(u_i = u_i^{\mathrm{safeU}} \mid G_i, E) \ge \gamma \text{ and } \P(E \mid G_i) \ge \frac{1}{2} \Big\}.
            \]
            \begin{lemma}\label{lemma:converting_to_s_prime}
                Using the same notation and assumptions as in the proof of Lemma \ref{boundary_uncertainty_cont}, for any constant $c < 1$,
                \[
                    \P \Big(\forall i \in [0:t-1], \P(E \mid G_i) \ge c \Big) = 1- o_T(1/T).
                \]
            \end{lemma}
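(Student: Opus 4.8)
The plan is to recognize $\P(E \mid G_i)$ as a Doob martingale and control its minimum via a maximal inequality. First I would introduce the filtration $\mathcal{G}_i := \sigma(G_i) = \sigma(x_0,u_0,\dots,x_{i-1},u_{i-1})$, which is increasing in $i$ since $G_{i+1}$ contains $G_i$, and set $M_i := \P(E \mid \mathcal{G}_i) = \E[1_E \mid \mathcal{G}_i]$. By the tower property $\{M_i\}$ is a martingale with respect to $\{\mathcal{G}_i\}$; because $x_0 = 0$ is deterministic, $\mathcal{G}_0$ is trivial and $M_0 = \P(E)$, so $\E[M_i] = \P(E)$ for every $i$.

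Next I would pass to $N_i := 1 - M_i$, a non-negative martingale bounded in $[0,1]$ with $\E[N_0] = 1 - \P(E)$. The union bound already established in Appendix \ref{app:suff_large_noise_case} gives $\P(E) = 1 - o_T(1/T^2)$, hence $\E[N_0] = o_T(1/T^2)$. Applying Doob's maximal inequality for non-negative (sub)martingales to $N_i$ over the finite range $i \in [0:t-1]$ at threshold $1-c > 0$ yields
\[
\P\left(\max_{0 \le i \le t-1} N_i \ge 1 - c\right) \le \frac{\E[N_0]}{1-c} = \frac{o_T(1/T^2)}{1-c} = o_T(1/T^2).
\]

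Finally I would observe that $\{N_i \ge 1-c\} = \{M_i \le c\} = \{\P(E \mid G_i) \le c\}$, so the display says that with probability $1 - o_T(1/T^2)$ we have $\P(E \mid G_i) > c$ simultaneously for all $i \in [0:t-1]$. Since the positive constant $1-c$ is absorbed into the $o_T$ and $o_T(1/T^2) = o_T(1/T)$, this gives the claimed $\P\big(\forall i \in [0:t-1] : \P(E \mid G_i) \ge c\big) = 1 - o_T(1/T)$. The argument is essentially mechanical once the Doob-martingale structure is identified; the only points requiring care are confirming that the filtration $\{\mathcal{G}_i\}$ is genuinely increasing (so that $M_i$ is a martingale and the maximal inequality applies) and that the quantity controlling $\E[N_0]$ is exactly the earlier high-probability bound $1 - \P(E) = o_T(1/T^2)$. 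I do not expect a substantive obstacle beyond invoking the maximal inequality correctly.
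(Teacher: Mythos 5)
Your proof is correct, but it takes a genuinely different route from the paper. The paper fixes each index $i$ separately and shows $\P\big(\P(E \mid G_i) \ge c\big) = 1 - o_T(1/T^2)$ by a contradiction argument that is, in essence, Markov's inequality applied to the non-negative random variable $\P(\neg E \mid G_i)$ (whose expectation is $\P(\neg E) = o_T(1/T^2)$); it then takes a union bound over the $t \le T$ indices, paying a factor of $T$ and landing exactly at the stated $1 - o_T(1/T)$. You instead observe that $M_i = \P(E \mid \mathcal{G}_i)$ is a Doob martingale for the increasing filtration $\mathcal{G}_i = \sigma(G_i)$, and apply Doob's maximal inequality to the non-negative martingale $N_i = 1 - M_i$ at level $1-c$, which controls $\max_{0 \le i \le t-1} N_i$ in one shot with no union bound. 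This is the natural uniform-in-time strengthening of the Markov step hidden in the paper's argument, and it buys you a slightly stronger conclusion, $1 - o_T(1/T^2)$ rather than $1 - o_T(1/T)$, since no factor of $t$ is lost; it also makes the constant $c$ enter only through the harmless factor $1/(1-c)$. What the paper's approach buys in exchange is elementarity: it needs nothing beyond the law of total expectation and a union bound, whereas yours invokes martingale machinery (and the minor bookkeeping that the filtration is increasing and that $\{N_i \ge 1-c\} \supseteq \{M_i < c\}$, both of which you handle correctly). Either argument suffices for the downstream use in Lemma \ref{boundary_uncertainty_cont}, where only the $1 - o_T(1/T)$ rate is needed.
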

            \begin{proof}
                Consider any fixed $i \in [0:t-1]$. We will show that $\P\Big(\P(E \mid G_i)\ge c\Big) = 1-o_T(1/T^2)$. Suppose this is not true, i.e. suppose that $\P\Big(\P(E \mid G_i) \ge c\Big) = 1 - \Omega_T(1/T^2)$, or equivalently that $\P\Big(\P(\neg E \mid G_i) \ge 1-c\Big) =  \Omega_T(1/T^2)$. Note that by the law of total expectation,
                \begin{align*}
                    \P\Big(\neg E \mid \P(\neg E \mid G_i) \ge 1-c\Big) & = \E \Big[\P\Big(\neg E \mid G_i, \P(\neg E \mid G_i) \ge 1-c\Big) \mid \P(\neg E \mid G_i) \ge 1-c\Big] \\
                    &\ge \E\left[1-c\right] \\
                    &= 1-c.
                \end{align*}
                    
                This implies that 
                \[
                    \P(\neg E) = \P\Big(\neg E \mid \P(\neg E \mid G_i) \ge 1-c\Big)\P\Big(\P(\neg E \mid G_i) \ge 1-c\Big) = (1-c)\Omega_T(1/T^2).
                \]
                This would then imply that $\P(E) = 1- \P(\neg E) = 1 - \Omega_T(1/T^2)$, which is a contradiction with the fact that $\P(E) = 1-o_T(1/T^2)$. Therefore, we must have that for all fixed $i$, 
                \[
                    \P\Big(\P(E \mid G_i)\ge c\Big) = 1-o_T(1/T^2).
                \]
                Taking a union bound gives that
                \begin{align*}
                    \P\Big( \forall i \in [0:t-1],  \P(E \mid G_i)\ge c\Big)  &\ge 1- \sum_{i=0}^{t-1} \left(1 - \P(\P(E \mid G_i) \ge c) \right)
                    = 1 - o_T(1/T),
                \end{align*}
                which is exactly what we want to show.
            \end{proof}

            If $\forall i \in [0:t-1]$, $\P(E \mid G_i) \ge 1/2$, then $|S'_t| = |S''_t|$. Using Lemma \ref{lemma:converting_to_s_prime} with $c = 1/2$,
            \begin{equation}\label{eq:S'_equal_S}
               \P \left( \left| S'_t \right| = \left| \Sprime_t \right| \right) \ge \P(\forall i \in [0:t-1], \P(E \mid G_i) \ge 1/2) = 1- o_T(1/T).
            \end{equation}
            Therefore, if we can show that with probability $1-o_T(1/T)$,
            \begin{equation}\label{eq:Sprime_to_show}
                    \max_{s \in [0:s_e]} \epsilon_s \sqrt{|S''_{T_s}|} = \tilde{O}_T\left(1\right),
            \end{equation}
            then a union bound combining Equation \eqref{eq:Sprime_to_show} with Equation \eqref{eq:S'_equal_S} gives that with probability $1-o_T(1/T)$,
            \[
        \max_{s \in [0:s_e]} \epsilon_s \sqrt{|S'_{T_s}|} = \tilde{O}_T\left(1\right),
        \]
        which is our desired result. Therefore, the rest of this proof will focus on proving Equation \eqref{eq:Sprime_to_show}. 

Fix any $s \in [0: s_e]$. We will use Lemma \ref{v_to_use} with $S = \Sprime_{T_s}$. Under event $E$, we have by  Lemma \ref{bounded_approx} the following three equations:
\begin{equation}\label{eq:num1}
    (V_{T_s}^{\Sprime_{T_s}})_{11} = \lambda + \sum_{i=0}^{T_s - 1} x_i^21_{i \in \Sprime_{T_s}} \le \lambda + |\Sprime_{T_s}|B_x^2
\end{equation}
\begin{equation}\label{eq:num2}
    (V_{T_s}^{\Sprime_{T_s}})_{22} = \lambda + \sum_{i=0}^{T_s - 1} u_i^2 1_{i \in \Sprime_{T_s}} \le \lambda + |\Sprime_{T_s}|B_x^2
\end{equation}
\begin{equation}
    (V_{T_s}^{\Sprime_{T_s}})_{12}^2 = \left(\sum_{i=0}^{T_s - 1} u_ix_i1_{i \in \Sprime_{T_s}}\right)^2.
\end{equation}
We can now lower bound $(V_{T_s}^{\Sprime_{T_s}})_{22}(V_{T_s}^{\Sprime_{T_s}})_{11} - (V_{T_s}^{\Sprime_{T_s}})_{12}^2$ for sufficiently large $T$ conditional on event $E$.
\begin{align*}
    &(V_{T_s}^{\Sprime_{T_s}})_{22}(V_{T_s}^{\Sprime_{T_s}})_{11} - (V_{T_s}^{\Sprime_{T_s}})_{12}^2 \\
    &= \left( \lambda + \sum_{i=0}^{T_s - 1} u_i^2 1_{i \in \Sprime_{T_s}}\right)\left( \lambda + \sum_{i=0}^{T_s - 1} x_i^21_{i \in \Sprime_{T_s}}\right) - \left( \left(\sum_{i=0}^{T_s - 1} u_ix_i1_{i \in \Sprime_{T_s}}\right)^2 \right) \\
    &\ge  \left(  \sum_{i=0}^{T_s - 1} u_i^2 1_{i \in \Sprime_{T_s}}\right)\left( \sum_{i=0}^{T_s - 1} x_i^21_{i \in \Sprime_{T_s}}\right) - \left( \left(\sum_{i=0}^{T_s - 1} u_ix_i1_{i \in \Sprime_{T_s}}\right)^2 \right) \\
    &= \sum_{i < j}^{T_s - 1} (u_ix_j - u_jx_i)^21_{i,j \in \Sprime_{T_s}} \\
    &= \sum_{i < j}^{T_s - 1} \left(\left(-\frac{a^*}{b^*}x_i + \frac{D_{\mathrm{U}}+ e_i}{b^*}\right)x_j - \left(-\frac{a^*}{b^*}x_j + \frac{D_{\mathrm{U}}+ e_j}{b^*}\right) x_i\right)^21_{i,j \in \Sprime_{T_s}} && \text{Equation \eqref{rewrite_usafe}} \\
    &= \frac{1}{(b^*)^2}\sum_{i < j}^{T_s - 1}  \left(D_{\mathrm{U}}x_j + e_ix_j - D_{\mathrm{U}}x_i - e_jx_i\right)^21_{i,j \in \Sprime_{T_s}} \\
    &= \frac{1}{(b^*)^2}\sum_{i < j}^{T_s - 1}  \left(x_j(D_{\mathrm{U}}+ e_i) - (D_{\mathrm{U}}+ e_j)x_i\right)^21_{i,j \in \Sprime_{T_s}} \\
    &= \frac{1}{(b^*)^2}\sum_{j=0}^{T_s - 1} X_{j}1_{j \in \Sprime_{T_s}}. \numberthis \label{eq:det2}
\end{align*}
Above we defined the random variable $X_j$ as
\begin{align*}
X_j &= \sum_{i=0}^{j-1} ((D_{\mathrm{U}} + e_j)x_i1_{i \in \Sprime_{T_s}} - (D_{\mathrm{U}}+ e_i)x_j1_{i \in \Sprime_{T_s}})^2 \\
&\le |\Sprime_j|4(D_{\mathrm{U}}+1)^2B_x^2,  && \text{Equation \eqref{eq:ei}} \numberthis \label{eq:bound_on_uncondXj}
\end{align*}
where the last inequality holds by Lemma \ref{bounded_approx} and because $e_j \le 1$ under event $E$ for sufficiently large $T$ by Equation \eqref{eq:ei}. We need one last lemma to help lower bound the conditional expectation of $X_j$.
\begin{lemma}\label{cond_var_bounded}
    Using the same notation and assumptions as in the proof of Lemma \ref{boundary_uncertainty_cont} (and recall that $B_P$ is the upper bound on the density of the noise random variables), if $\P(u_j = u_j^{\mathrm{safeU}} \mid G_j, E) \ge \gamma$ and $\P(E \mid G_j) \ge 1/2$, then $\Var\left(w_{j-1} \cond G_j, E, u_j = u_j^{\mathrm{safeU}}\right) \ge \frac{\gamma^2}{64B_P}$.
\end{lemma}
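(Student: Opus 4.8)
The plan is to isolate $w_{j-1}$ as the only fresh randomness entering step $j$ and to show that conditioning on $E$ and on $A := \{u_j = u_j^{\mathrm{safeU}}\}$ cannot concentrate its law, because its density stays controlled. First I would record two structural facts. Since $G_j = (x_0,u_0,\dots,x_{j-1},u_{j-1})$ is a deterministic function of $w_0,\dots,w_{j-2}$ (and of the warm-up Rademacher variables), all of which are independent of $w_{j-1}$, the conditional law of $w_{j-1}$ given $G_j$ is exactly $\mathcal{D}$, whose density is bounded by $B_P$. Moreover $x_j = a^*x_{j-1}+b^*u_{j-1}+w_{j-1}$ is $w_{j-1}$ plus a constant determined by $G_j$, and the censored control $u_j = \max(\min(C_s^{\mathrm{alg}}(x_j),u_j^{\mathrm{safeU}}),u_j^{\mathrm{safeL}})$ is a deterministic function of $G_j$ and $x_j$ in the certainty-equivalence phase. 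Hence, given $G_j$, the event $A$ is a measurable function of $w_{j-1}$ alone, so no appeal to the threshold structure of Lemma \ref{safe_condition_P} is needed.

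Next I would bound the conditional density of $w_{j-1}$ under the full conditioning by two Bayes-type steps. Writing $g$ for the $\mathcal{D}$-density, Bayes' rule gives the density of $w_{j-1}$ given $(G_j,E)$ as $g(w)\,\P(E\mid G_j, w_{j-1}=w)/\P(E\mid G_j) \le g(w)/\P(E\mid G_j) \le 2B_P$, using $\P(E\mid G_j)\ge \tfrac12$ and $\P(E\mid\cdot)\le 1$ (here later noises and the $\hat\theta_s$ are simply marginalized into $\P(E\mid G_j,w_{j-1}=w)$). Since $A$ is a $w_{j-1}$-event given $G_j$, conditioning further on $A$ divides by $\P(A\mid G_j,E)\ge\gamma$, so the density of $w_{j-1}$ given $(G_j,E,A)$ is at most $M:=2B_P/\gamma$. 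Finally I would invoke the elementary fact that a real random variable with density bounded by $M$ has variance at least $1/(12M^2)$, the extremal law being uniform on an interval of length $1/M$; this follows from $\Var(W)=\min_c\E[(W-c)^2]$ together with the observation that, for the minimizing $c$, mass cannot accumulate near $c$ faster than rate $M$. Substituting $M=2B_P/\gamma$ yields $\Var\!\left(w_{j-1}\mid G_j,E,u_j=u_j^{\mathrm{safeU}}\right)\ge \gamma^2/(48B_P^2)$, a positive constant-order lower bound of the claimed form (note that, by units, a variance bound must scale as $1/B_P^2$, so this is the correct dependence; the precise universal constant is immaterial, since only a $\tilde\Omega_T(1)$ bound is used in Lemma \ref{boundary_uncertainty_cont}).

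The main obstacle is the conditioning on $E$: unlike $A$, the event $E = E_{\mathrm{safe}}\cap E_1\cap E_2$ depends on $w_{j-1}$ jointly with all later noises and the estimates $\hat\theta_s$, so it is not a clean function of $w_{j-1}$. The hypothesis $\P(E\mid G_j)\ge\tfrac12$ is exactly what tames this, guaranteeing that passing to the $E$-conditioned law inflates the density by at most the bounded factor $2$; the event $A$ is then absorbed by the same normalization. A secondary point to check carefully is that no extra independent randomness enters $u_j$ during the certainty-equivalence phase (the baseline controllers of Assumption \ref{weak_depend} and the censoring in Line \ref{line:safety} are deterministic), so that $A$ is genuinely $\sigma(w_{j-1})$-measurable given $G_j$, which is what makes the final normalization valid.
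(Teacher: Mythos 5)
Your proof is correct and follows essentially the same route as the paper's: both exploit the independence of $w_{j-1}$ from $G_j$ to bound the conditional density of $w_{j-1}$ by $2B_P/\gamma$ (the paper conditions once on the joint event $\{E,\, u_j = u_j^{\mathrm{safeU}}\}$, whose conditional probability given $G_j$ is at least $\gamma/2$, while you split this into two Bayes steps), and then both invoke the elementary fact that a density bounded by $M$ forces variance $\Omega(1/M^2)$. Your side remark that the bound must scale as $1/B_P^2$ rather than $1/B_P$ is also consistent with the paper's own proof, which actually derives $\gamma^2/(64B_P^2)$ --- the $1/B_P$ in the lemma statement is a typo, and only a positive constant lower bound is used downstream.
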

The proof of Lemma \ref{cond_var_bounded} can be found in Appendix \ref{sec:proof_of_cond_var_bounded}. By definition, $j \in \Sprime_{T_s}$ implies three events: $\{u_j = u_j^{\mathrm{safeU}}\}$, $\{\P(u_j = u_j^{\mathrm{safeU}} \mid G_j, E) \ge \gamma\}$, and $\{\P(E \mid G_j) \ge 1/2\}$. Note that the second and third events are deterministic functions of $G_j$. Therefore in the algebra below, the information in $\{j \in \Sprime_{T_s}\}$ that tells us that $\P(u_j = u_j^{\mathrm{safeU}} \mid G_j, E) \ge \gamma$ and $\P(E \mid G_j) \ge 1/2$ will be absorbed into the conditioning on $G_j$ in the first equality, i.e., starting in the second line below, the $G_j$ being conditioned on should be understood to be one for which $\P(u_j = u_j^{\mathrm{safeU}} \mid G_j, E) \ge \gamma$ and $\P(E \mid G_j) \ge 1/2$. For sufficiently large $T$, 
{\fontsize{10}{10}
\begin{align*}
&\E[X_j \mid G_j, E, j \in \Sprime_{T_s} ] \\
&= \E[X_j \mid G_j,E, u_j = u_j^{\mathrm{safeU}} ] \\
&= \E\left[\sum_{i=0}^{j-1} ((D_{\mathrm{U}} + e_j)x_i1_{i \in \Sprime_{T_s}} - (D_{\mathrm{U}}+ e_i)x_j1_{i \in \Sprime_{T_s}})^2 \mid G_j,E, u_j = u_j^{\mathrm{safeU}} \right] \\
&= \sum_{i=0}^{j-1} \E[ ((D_{\mathrm{U}}+ e_j)x_i1_{i \in \Sprime_{T_s}} - (D_{\mathrm{U}}+ e_i)(ax_{j-1} + bu_{j-1})1_{i \in \Sprime_{T_s}} - (D_{\mathrm{U}}+ e_i)w_{j-1}1_{i \in \Sprime_{T_s}})^2  \mid G_j,E, u_j = u_j^{\mathrm{safeU}}] \\
&\ge \sum_{i=0}^{j-1} \Var\left( (D_{\mathrm{U}}+ e_j)x_i1_{i \in \Sprime_{T_s}} - (D_{\mathrm{U}}+ e_i)(ax_{j-1} + bu_{j-1})1_{i \in \Sprime_{T_s}} - (D_{\mathrm{U}}+ e_i)w_{j-1}1_{i \in \Sprime_{T_s}} \mid G_j,E, u_j = u_j^{\mathrm{safeU}}\right) \\
&= \sum_{i=0}^{j-1} \Var\left((D_{\mathrm{U}}+ e_i)w_{j-1} - e_jx_i| G_j,E, u_j = u_j^{\mathrm{safeU}}\right)1_{{i} \in \Sprime_{T_s}} \\
&= \sum_{i=0}^{j-1} (D_{\mathrm{U}}+ e_i)^2\Var\left(w_{j-1} - \frac{e_jx_i}{D_{\mathrm{U}}+ e_i}\cond G_j,E, u_j = u_j^{\mathrm{safeU}}\right)1_{{i} \in \Sprime_{T_s}} \\
&\ge \sum_{i=0}^{j-1} (D_{\mathrm{U}}+ e_i)^2\left(\Var\left(w_{j-1}\cond G_j,E, u_j = u_j^{\mathrm{safeU}}\right) - \left|2\mathrm{Cov}\left(\frac{e_jx_i}{D_{\mathrm{U}}+ e_i}, w_{j-1}\cond G_j,E, u_j = u_j^{\mathrm{safeU}}\right)\right| \right)1_{{i} \in \Sprime_{T_s}} \\
&\ge \sum_{i=0}^{j-1} (D_{\mathrm{U}}+ e_i)^2\left(\Var\left(w_{j-1}\cond G_j,E, u_j = u_j^{\mathrm{safeU}}\right) - \frac{1}{\log(T)} \right)1_{{i} \in \Sprime_{T_s}}  \quad \quad \quad \text{Suff large $T$ (see below)}\\
&\ge  (D_{\mathrm{U}}- \tilde{O}_T(\nu_T))^2\cdot |\Sprime_j| \cdot \left( \Var\left(w_{j-1} \cond G_j, E, u_j = u_j^{\mathrm{safeU}}\right) - \frac{1}{\log(T)}\right)\quad \quad \quad \text{Equation \eqref{eq:ei}} \\
&= (D_{\mathrm{U}}- \tilde{O}_T(\nu_T))^2 \cdot |\Sprime_j| \cdot \left( \frac{\gamma^2}{64B_P} - \frac{1}{\log(T)}\right) \quad \quad \quad\quad \quad \quad\quad \quad   \text{ $\P(u_j = u_j^{\mathrm{safeU}} \mid G_j, E) \ge \gamma$,  Lemma \ref{cond_var_bounded}}\\
&\ge \frac{D_{\mathrm{U}}^2|\Sprime_j|\gamma^2}{128B_P}.  \quad \quad \quad\quad \quad \quad\quad \quad \quad\quad \quad  \text{Suff large $T$} \numberthis \label{eq:bound_on_condXj}
\end{align*}
}
Note that we are able to divide by $D_{\mathrm{U}} + e_i$ for sufficiently large $T$ by Equation \eqref{eq:ei}. The for-sufficiently-large-$T$ bound on the covariance comes from the fact that under event $E$, we have $|w_{j-1}| = \tilde{O}_T(1)$ and $\frac{e_jx_i}{D_{\mathrm{U}} + e_i} = \tilde{O}_T(\nu_T)$, and therefore for sufficiently large $T$ the covariance has magnitude less than $\frac{1}{2\log(T)}$. 

We can now apply Lemma \ref{sum_of_depend_rand} to $\{X_i\}_{i=0}^{T_s-1}$ with $n = T_s$, $S_n = \Sprime_{T_s}$, $p = \frac{D_{\mathrm{U}}^2\gamma^2}{1024B_P(D_{\mathrm{U}}+1)^2B_x^2}$, $F_i = G_i$, $E^* = E$, and $c =  \frac{D_{\mathrm{U}}^2\gamma^2}{128B_P}$ (where Equations \eqref{eq:bound_on_uncondXj} and \eqref{eq:bound_on_condXj} imply Equations \eqref{eq:condxdef_rand} and \eqref{eq:boundxdef_rand}). Because $D_{\mathrm{U}} \le \log^2(T)$, $B_x = \log^3(T)$, and $\gamma$ is a  constant, this choice of $p$ is less than $1$ for sufficiently large $T$.

Applying Lemma \ref{sum_of_depend_rand} gives that for sufficiently large $T$, conditional on event $E$ with conditional probability $1-o_T(1/T^2)$, 
\begin{align*}
    &(V_{T_s}^{\Sprime_{T_s}})_{22}(V_{T_s}^{\Sprime_{T_s}})_{11} - (V_{T_s}^{\Sprime_{T_s}})_{12}^2 \\
    &\ge \frac{1}{(b^*)^2}\sum_{j = 0}^{T_s - 1} X_{j} && \text{Equation \eqref{eq:det2}} \\
    &\ge\frac{1}{(b^*)^2} \frac{D_{\mathrm{U}}^2 \gamma^2}{512B_P} \left(\max(\left\lfloor p|\Sprime_{T_s}| - \sqrt{|\Sprime_{T_s}|}\log(T) \right\rfloor, 1) - 1\right) \\
    &\quad \quad \times \left(\max(\left\lfloor p|\Sprime_{T_s}| - \sqrt{|\Sprime_{T_s}|}\log(T) \right\rfloor, 1)\right)  && \text{Lemma \ref{sum_of_depend_rand}} \numberthis \label{eq:event_E'}
\end{align*}
Define $E'_s$ as the event that Equation \eqref{eq:event_E'} holds (therefore $\P(E'_s \mid E) = 1 - o_T(1/T^2)$). If $|\Sprime_{T_s}| \ge 4\log^2(T)/p^2$, then $\frac{p|\Sprime_{T_s}|}{2} \ge \log(T)\sqrt{|\Sprime_{T_s}|}$, and therefore
\begin{equation}\label{eq:p_Sprime}
p|\Sprime_{T_s}| - \log(T)\sqrt{|\Sprime_{T_s}|} \ge \frac{p|\Sprime_{T_s}|}{2} \ge 1.
\end{equation}
Therefore, conditional on $E \cap E'_s \cap \{|\Sprime_{T_s}| \ge 4\log^2(T)/p^2\}$,
\begin{align*}
    &(V_{T_s}^{\Sprime_{T_s}})_{22}(V_{T_s}^{\Sprime_{T_s}})_{11} - (V_{T_s}^{\Sprime_{T_s}})_{12}^2\\
    &\ge \frac{1}{(b^*)^2} \frac{D_{\mathrm{U}}^2 \gamma^2}{512B_P} \left(\max(\left\lfloor p|\Sprime_{T_s}| - \sqrt{|\Sprime_{T_s}|}\log(T) \right\rfloor, 1) - 1\right) \\
    &\quad \quad \times \left(\max(\left\lfloor p|\Sprime_{T_s}| - \sqrt{|\Sprime_{T_s}|}\log(T) \right\rfloor, 1)\right) && \text{Equation \eqref{eq:event_E'}} \\
    &\ge\frac{1}{(b^*)^2} \frac{D_{\mathrm{U}}^2 \gamma^2}{512B_P} \left(\left\lfloor p|\Sprime_{T_s}|/2 \right\rfloor - 1\right) \left(\left\lfloor p|\Sprime_{T_s}|/2 \right\rfloor\right) && \text{Equation \eqref{eq:p_Sprime}}\\
    &= \tilde{\Omega}_T \left( |\Sprime_{T_s}|^2\right). \numberthis \label{eq:det2_full}
\end{align*}
Because $E \subseteq E_1$, we have by Equation \eqref{eq:bounded_bt} that $B_{T_s} = \tilde{O}_T(1)$ conditional on event $E \cap N$. Therefore, by Lemma \ref{v_to_use} and Equations \eqref{eq:num1}, \eqref{eq:num2}, \eqref{eq:det2_full}, and \eqref{eq:bounded_bt}, we have conditional on event $E \cap E'_s \cap N \cap \{|\Sprime_{T_s}| \ge 4\log^2(T)/p^2\}$ and for sufficiently large $T$,
\begin{equation}\label{eq:epsilon_omega}
    \epsilon_s^2 \le \frac{\lambda + |\Sprime_{T_s}|B_x^2\tilde{O}_T(1)}{\tilde{\Omega}(|\Sprime_{T_s}|^2)} \le \tilde{O}_T\left(\frac{1}{|\Sprime_{T_s}|}\right).
\end{equation}
Taking $E' = \cap_{s \in [0:s_e]} E'_s$, Equation \eqref{eq:epsilon_omega} implies that conditional on $E \cap E' \cap N \cap \{|\Sprime_{T_s}| \ge 4\log^2(T)/p^2\}$,
\begin{equation}\label{eq:first_Sprime}
 \epsilon_s^2|\Sprime_{T_s}| \le \tilde{O}_T(1).
\end{equation}
Under event $E$, because $E_2 \subseteq E$, $\epsilon_s = \tilde{O}_T(\nu_T)$. Therefore, conditional on $E \cap E' \cap N \cap \{|\Sprime_{T_s}| < 4\log^2(T)/p^2\}$,
\begin{equation}\label{eq:small_Sts}
    \epsilon_s^2|\Sprime_{T_s}| \le \epsilon_s^2 \cdot 4\log^2(T)/p^2 = \tilde{O}_T(\nu_T^2)  = \tilde{O}_T(1).
\end{equation}
Because Equations \eqref{eq:first_Sprime} and \eqref{eq:small_Sts} hold for all $s \in [0:s_e]$, the right hand sides do not depend on $s$, and the equations hold almost surely, these two equations together imply that conditional on $E \cap E' \cap N$,
\[
    \max_{s \in [0:s_e]} \epsilon_s^2|\Sprime_{T_s}| = \tilde{O}_T(1).
\]
Because $\P(E) \ge 1-o_T(1/T)$, $\P(N) \ge 1-o_T(1/T)$, and $\P(E'  \mid E) \ge 1 - \sum_{s=0}^{s_e} \P(E'_s \mid E) = 1-o_T(1/T)$, by a union bound we can conclude that with probability $1-o_T(1/T)$,
\[
    \max_{s \in [0:s_e]} \epsilon_s^2|\Sprime_{T_s}| = \tilde{O}_T(1).
\]
This completes the proof of Equation \eqref{eq:Sprime_to_show}, and therefore completes the proof of this lemma.
\end{proof}

\subsection{Proof of Lemma \ref{v_to_use}}\label{sec:proof_of_v_to_use}
Recall that Lemma \ref{v_to_use} applies to all algorithms as defined in the lemma statement, and therefore this lemma is not specific to a previous appendix section.
\begin{proof}
    First, we restate the theorem from \cite{abbasi2011regret} in the notation and setup of this paper.
\begin{lemma}[Restatement of Theorem 1 in \cite{abbasi2011regret}]\label{uncertainty_ref}
    Let $\theta^* \in \mathbb{R}^2$ and $C$ be a controller. For $t \in [0:T-1]$, define $z_t = (x_t, C(x_t))$ and $x_{t+1} = \theta^* \cdot z_t + w_t$ where $w_t \sim_{i.i.d.} \mathcal{D}$ and $\mathcal{D}$ a subgaussian distribution with mean $0$ and variance $1$, and $\norm{\theta^*}_{2} \le \bar{a}^2 + \bar{b}^2$. Define $V_t = \lambda I + \sum_{s=0}^{t-1} z_sz_s^\top$, $Z_t$ as the matrix where row $i \in [1:t]$ is $z_{i-1}^\top$, and $X_t$ as the matrix where row $i \in [1:t]$ is $x_{i}$. Finally, let $\hat{\theta}_t = (Z_{t}^{\top}Z_{t}+\lambda I)^{-1}Z_{t}^{\top}X_{t}$ and $\Delta_t = \hat{\theta} - \theta^*$. Then with probability $1-o_T(1/T^2)$, for all $1 \le t \le T$.  
    \begin{equation}\label{eq:lemma18}
        \mathrm{Tr}(\Delta_t^\top V_t\Delta_t) \le B_t^2,
    \end{equation}
    where $B_t = \alpha\sqrt{\log(\det(V_t)) + \log(\lambda^2) + 2\log(T^2)} + \sqrt{\lambda}(\bar{a}^2 + \bar{b}^2)$ and $\alpha$ satisfies $\E_{w \sim \mathcal{D}}[\exp(\gamma w)] \le \exp(\gamma^2\alpha^2/2)$ for any $\gamma \in \mathbb{R}$.
\end{lemma}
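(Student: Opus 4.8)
The plan is to obtain Lemma~\ref{uncertainty_ref} as the specialization, to our one-dimensional regression $x_{t+1} = \theta^*\cdot z_t + w_t$ (design dimension $d=2$, regularizer $V_0 = \lambda I$, conditionally $\alpha$-subgaussian noise), of the self-normalized confidence-ellipsoid bound of \citet{abbasi2011regret}, combined with the standard ridge bias/variance decomposition. First I would record the closed form. Writing $X_t = Z_t\theta^* + \eta_t$ with $\eta_t = (w_0,\dots,w_{t-1})^\top$ and substituting into $\hat{\theta}_t = V_t^{-1}Z_t^\top X_t$ gives
\[
\Delta_t = \hat{\theta}_t - \theta^* = V_t^{-1}S_t - \lambda V_t^{-1}\theta^*, \qquad S_t := Z_t^\top \eta_t = \sum_{s=0}^{t-1} z_s w_s.
\]
Since $\Delta_t\in\mathbb{R}^2$, we have $\mathrm{Tr}(\Delta_t^\top V_t\Delta_t) = \norm{\Delta_t}_{V_t}^2$, so the triangle inequality in the $V_t$-norm reduces the claim to separately bounding the martingale term $\norm{S_t}_{V_t^{-1}}$ and the bias term $\lambda\norm{V_t^{-1}\theta^*}_{V_t} = \lambda\norm{\theta^*}_{V_t^{-1}}$.

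The bias term is immediate: because $V_t \succeq \lambda I$ we have $V_t^{-1}\preceq \lambda^{-1}I$, so $\norm{\theta^*}_{V_t^{-1}}^2 \le \lambda^{-1}\norm{\theta^*}_2^2$ and hence $\lambda\norm{\theta^*}_{V_t^{-1}} \le \sqrt{\lambda}\,\norm{\theta^*}_2 \le \sqrt{\lambda}\,\sqrt{\bar{a}^2+\bar{b}^2} \le \sqrt{\lambda}\,(\bar{a}^2 + \bar{b}^2)$, using $\theta^*\in\Theta$ from Assumption~\ref{assum_init} (the final inequality is a crude but convenient bound). This is exactly the additive term of $B_t$, so everything now rests on the self-normalized term.

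For $\norm{S_t}_{V_t^{-1}}$ I would invoke the self-normalized tail inequality directly, after checking its hypotheses in our setting: $z_t = (x_t,u_t)$ is $\mathcal{F}_t$-measurable (working, if the controller is randomized, with the filtration augmented by the controller's internal randomness, which is independent of $W$ by assumption), while $w_t$ is independent of $\mathcal{F}_t$ with $\E[e^{\gamma w_t}\mid\mathcal{F}_t] = \E[e^{\gamma w_t}] \le e^{\gamma^2\alpha^2/2}$, i.e.\ conditionally mean-zero and $\alpha$-subgaussian. Theorem~1 of \citet{abbasi2011regret} then yields that, simultaneously for all $t\ge 1$ (so no union bound over $t$ is needed), with probability at least $1-\delta$,
\[
\norm{S_t}_{V_t^{-1}}^2 \le 2\alpha^2\log\!\left(\frac{\det(V_t)^{1/2}\det(\lambda I)^{-1/2}}{\delta}\right).
\]
Taking $\delta$ polynomially small in $T$ so that the failure probability is $o_T(1/T^2)$, and using $\det(\lambda I)=\lambda^2$, the right-hand side is bounded for sufficiently large $T$ by $\alpha^2\big(\log\det V_t + \log(\lambda^2) + 2\log(T^2)\big)$, the square of the first term of $B_t$; the packaging of the logarithmic terms is conservative but valid. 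Note that $V_t = V_t^{[0:t-1]}$, matching the determinant appearing in $B_t$.

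Combining the two bounds through the triangle inequality gives $\norm{\Delta_t}_{V_t}\le B_t$ on the good event, and squaring yields $\mathrm{Tr}(\Delta_t^\top V_t\Delta_t)\le B_t^2$, as desired. The main obstacle is the self-normalized inequality itself, which I treat as an imported result; its proof is the method of mixtures: for each fixed direction $\xi$ one builds an exponential supermartingale out of the conditional subgaussianity of the $w_t$, mixes it over a Gaussian prior $\xi\sim\mathcal{N}(0,\alpha^2(\lambda I)^{-1})$, evaluates the resulting Gaussian integral in closed form to obtain the nonnegative supermartingale $(\det(\lambda I)/\det V_t)^{1/2}\exp\!\big(\tfrac{1}{2\alpha^2}\norm{S_t}_{V_t^{-1}}^2\big)$ of unit initial value, and applies a maximal inequality (optional stopping). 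The only genuine work on our side is the hypothesis verification above and the constant bookkeeping to land on the stated $B_t$; the probabilistic heavy lifting is supplied by the cited theorem.
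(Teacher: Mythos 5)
The paper contains no proof of this lemma at all: it is imported verbatim as a restatement of Theorem 1 of \citet{abbasi2011regret} (the trace-form confidence ellipsoid for regularized least squares), and the surrounding text immediately applies it to derive Lemma \ref{v_to_use}. Your proposal therefore does strictly more than the paper does: you reconstruct the cited theorem's own derivation --- the ridge decomposition $\Delta_t = V_t^{-1}S_t - \lambda V_t^{-1}\theta^*$ with $S_t = \sum_{s=0}^{t-1} z_s w_s$, the identification $\mathrm{Tr}(\Delta_t^\top V_t \Delta_t) = \norm{\Delta_t}_{V_t}^2$ for a vector parameter, the triangle inequality splitting into the self-normalized term $\norm{S_t}_{V_t^{-1}}$ and the bias term $\lambda \norm{\theta^*}_{V_t^{-1}} \le \sqrt{\lambda}\,\norm{\theta^*}_2$ via $V_t \succeq \lambda I$, and the method-of-mixtures bound for the martingale term, uniform in $t$ without a union bound. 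This is exactly how the result is proved at its source, so the structure is sound, and your hypothesis verification (adaptedness of $z_t$, handling of controller randomization via an augmented filtration independent of $W$, conditional $\alpha$-subgaussianity of $w_t$) is precisely what needs checking to legitimately invoke the imported inequality.

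Two bookkeeping points deserve more care than your ``conservative but valid'' gloss, though both trace back to imprecision in the paper's own restatement rather than to your argument. First, the self-normalized bound produces $2\alpha^2\log\big(\det(V_t)^{1/2}\det(\lambda I)^{-1/2}/\delta\big)$, whose $\lambda$-contribution after expansion is $-\log(\lambda^2)$, whereas the stated $B_t$ carries $+\log(\lambda^2)$; the substitution is conservative only when $\lambda \ge 1$, and for $\lambda < 1$ the printed $B_t$ is \emph{smaller} than what the derivation yields. Second, your choice of ``$\delta$ polynomially small so that the failure probability is $o_T(1/T^2)$'' does not literally fit under $B_t$: the budget $2\log(T^2) = 4\log T$ corresponds exactly to $\delta = T^{-2}$, so any $\delta = T^{-(2+\epsilon)}$ inflates the log term by $2\epsilon\log T$, which cannot be absorbed for large $T$. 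To genuinely get $o_T(1/T^2)$ one needs, e.g., $\delta = T^{-2}/\log T$, whose excess $2\log\log T$ inside the square root contributes only $O(\log\log T/\sqrt{\log T}) \to 0$ to $B_t$ (since the argument of the root is $\Omega(\log T)$) and hence can be absorbed whenever there is any fixed positive slack, e.g.\ $\norm{\theta^*}_2$ strictly below $\bar{a}^2+\bar{b}^2$; a careful write-up should say this explicitly rather than wave at it. Relatedly, your intermediate step $\sqrt{\bar{a}^2+\bar{b}^2} \le \bar{a}^2+\bar{b}^2$ needs $\bar{a}^2+\bar{b}^2 \ge 1$ and is unnecessary: the lemma's hypothesis already supplies $\norm{\theta^*}_2 \le \bar{a}^2+\bar{b}^2$ directly.
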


    Now define $V_t^{S^{\mathsf{c}}} = \lambda I + \sum_{s=0}^{t-1} z_sz_s^\top1_{s \not\in S}$. Then by Lemma \ref{uncertainty_ref},
    \begin{align*}
         B_t \ge \mathrm{Tr}(\Delta_t^\top V_t\Delta_t) = \mathrm{Tr}(\Delta_t^\top (V_t^S + V_t^{S^{\mathsf{c}}})\Delta_t) = \mathrm{Tr}(\Delta_t^\top V_t^S\Delta_t) + \mathrm{Tr}(\Delta_t^\top V_t^{S^{\mathsf{c}}}\Delta_t).
    \end{align*}
    Because both traces are non-negative, this implies that $
        \mathrm{Tr}(\Delta_t^\top V_t^{S}\Delta_t) \le B_t^2$. Suppose $\Delta_t = (\Delta_{ta}, \Delta_{tb})$. Then expanding the trace  gives that
    \[
        (V_t^{S})_{11}\Delta_{ta}^2 + (V_t^{S})_{22}\Delta_{tb}^2 + 2\Delta_{ta}\Delta_{tb}(V_t^{S})_{12} \le B_t^2.
    \]
    The left side of the above equation is a quadratic in $\Delta_{tb}$, with minimum occurring at $\Delta_{tb} = \frac{-\Delta_{ta} (V_t^{S})_{12}}{(V_t^{S})_{22}}$. Therefore, plugging this in gives the following inequality.
    \[
        (V_t^{S})_{11}\Delta_{ta}^2 - \frac{\Delta_{ta}^2(V_t^{S})_{12}^2}{(V_t^{S})_{22}} \le B_t^2.
    \]
    Simplifying, we have the desired result that $\Delta_{ta}^2 \le \frac{(V_t^{S})_{22}}{(V_t^{S})_{11}(V_t^{S})_{22} - (V_t^{S})_{12}^2}B_t^2$. The proof follows symmetrically for $\Delta_{tb}$. 
    \end{proof}

\subsection{Proof of Lemma \ref{sum_of_depend_rand}}\label{sec:proof_of_sum_of_depend_rand}
\begin{proof}
    By the law of total expectation, for all $k \in [0,n-1]$,
    \begin{align*}
       c |S_k| &\le \E\left[X_k \cond F_k, E^*, k \in S_n \right] && \text{Eq \eqref{eq:condxdef_rand}} \\
       &= \E\left[X_k \cond F_k, E^*, k \in S_n, X_k \le \frac{c |S_k|}{2}\right]\P\left(X_k \le \frac{c |S_k| }{2} \cond  F_k, E^*, k \in S_n\right) \\
       &\quad \quad  +  \E\left[X_k \cond X_k > \frac{c |S_k|}{2},  F_k, E^*, k \in S_n\right]\P\left(X_k > \frac{c |S_k|}{2}\cond F_k, E^*, k \in S_n\right)  \\
       &\le \frac{c |S_k|}{2} + \frac{c|S_k|}{2p}\P\left(X > \frac{c |S_k|}{2}\cond  F_{k}, E^*, k \in S_n\right). && \text{Eq \eqref{eq:boundxdef_rand}} 
    \end{align*}
 
    For $i \in [0:|S_n| - 1]$, define $\kappa_i$ as the $(i+1)$th smallest index in the set $S_n$. This implies that $|S_{\kappa_i}| = i$ and $\kappa_i \in S_n$. By Equation \eqref{eq:boundxdef_rand}, for all $k$, 
    \begin{equation}\label{eq:stoch_dom_rand}
          \P\left(X_k \ge \frac{c|S_k|}{2} \cond F_k, E^*, k \in S_n, \kappa_{|S_{k}|} = k\right)  
 = \P\left(X_k \ge \frac{c|S_k|}{2} \cond F_k, E^*, k \in S_n\right)  \ge \frac{c|S_k|/2}{c|S_k|/2p}  = p.
    \end{equation}
    Note that the first equality comes from the fact that by definition, $\kappa_{|S_k|} = k$ if $k \in S_n$, and $|S_k|$ is a deterministic function of $F_k$.
    
    Let $A_0,A_1,...,A_{n-1}$ be a sequence of i.i.d. Bernoulli random variables with probability $p$ of being $1$ that are independent of all other random variables in this lemma, including $E^*, S_n, X_i, F_i$ for all $i$. For $i \in [0:n-1]$, define the random variable $A_i'$ as
    \[
        A_i' =  \begin{cases}
                  1_{X_{\kappa_i} \ge \frac{c\cdot i}{2}} & \text{if  $i \le |S_n|-1$}\\
                  A_i  &\text{otherwise. }\\
    \end{cases}
    \]

    Define $F_i^A := F_{\kappa_{\min(i, |S_n|-1)}} \cup \{A_0,...,A_{i-1}\}$. By Equation \eqref{eq:stoch_dom_rand}, we have that for all $i$,
    {\fontsize{10}{10}
    \begin{align*}
        &\P\left(A'_i = 1 \cond F_i^A, E^*, i \le |S_n| - 1\right) \\
        &= \sum_{k=0}^{n-1} \P\left(A'_i = 1 \cond F_i^A, E^*, i \le |S_n| - 1, \kappa_i = k\right) \P\left( \kappa_i = k  \cond F_i^A, E^*, i \le |S_n| - 1\right) 
 && \text{LoTE}\\
        &= \sum_{k=0}^{n-1}\P\left(X_{\kappa_i} \ge \frac{c \cdot i}{2} \cond F_i^A, E^*, i \le |S_n| - 1, \kappa_i = k\right) \P\left( \kappa_i = k  \cond F_i^A, E^*, i \le |S_n| - 1\right) \\
        &= \sum_{k=0}^{n-1}\P\left(X_{\kappa_i} \ge \frac{c \cdot |S_{\kappa_i}|}{2} \cond F_i^A, E^*, i \le |S_n| - 1, \kappa_i = k\right) \P\left( \kappa_i = k  \cond F_i^A, E^*, i \le |S_n| - 1\right) \\
        &= \sum_{k=0}^{n-1}\P\left(X_{\kappa_i} \ge \frac{c \cdot |S_{\kappa_i}|}{2} \cond F_{\kappa_i}, E^*, \kappa_i \in S_n, \kappa_i = k\right) \P\left( \kappa_i = k  \cond F_i^A, E^*, i \le |S_n| - 1\right) \\
        &= \sum_{k=0}^{n-1}\P\left(X_{k} \ge \frac{c \cdot |S_{k}|}{2} \cond F_{k}, E^*, k \in S_n, \kappa_i = k\right) \P\left( \kappa_i = k  \cond F_i^A, E^*, i \le |S_n| - 1\right) \\
        &= \sum_{k=0}^{n-1}\P\left(X_{k} \ge \frac{c \cdot |S_{k}|}{2} \cond F_{k}, E^*, k \in S_n, \kappa_{|S_k|} = k\right) \P\left( \kappa_i = k  \cond F_i^A, E^*, i \le |S_n| - 1\right) && \text{$i = |S_{\kappa_i}| = |S_k|$}\\
        &\ge \sum_{k=0}^{n-1}p \cdot \P\left( \kappa_i = k  \cond F_i^A, E^*, i \le |S_n| - 1\right)  && \text{Eq \eqref{eq:stoch_dom_rand}} \\
        &= p, \numberthis \label{eq:coupling_cond_A}
    \end{align*}
    }
    and 
    \begin{align*}
        &\P\left(A'_i = 1 \cond F_i^A, E^*, i > |S_n| - 1\right) \\
        &= \P\left(A_i = 1 \cond F_i^A, E^*, i > |S_n| - 1 \right) && \text{Independence of $A_i$}\\
        &= p. \numberthis \label{eq:coupling_cond_Ab}
    \end{align*}
    Putting together Equations \eqref{eq:coupling_cond_A} and \eqref{eq:coupling_cond_Ab} and the Law of Total Probability,
        \begin{align*}
        &\P\left(A'_i = 1 \cond F_i^A, E^* \right) \\
        &= \P\left(A'_i = 1 \cond F_i^A, E^*, i \le |S_n| - 1\right) \P\left(i \le |S_n| - 1 \cond  F_i^A, E^*\right)\\ 
        & \quad \quad+ \P\left(A'_i = 1 \cond F_i^A, E^*, i > |S_n| - 1\right) \P\left(i > |S_n| - 1 \cond  F_i^A, E^*\right) \\
        &\ge p. && \text{Eqs \eqref{eq:coupling_cond_A} and \eqref{eq:coupling_cond_Ab}}. \numberthis \label{eq:coupling_cond_A_combined}
    \end{align*}
   Because  $A'_i$ is a deterministic function of $F_{i+1}^A$ and $F_i^A \subseteq F_{i+1}^A$, Equation \eqref{eq:coupling_cond_A_combined} implies that $M_k = \sum_{i=0}^{k-1} (A_i' - p)$ is a submartingale conditional on $E^*$ with increments bounded in magnitude by $1$.  For any non-random $m \in [1:n]$, the Azuma--Hoeffding Inequality therefore gives that
    \[
        \P\left( \sum_{i=0}^{m-1} (A_i' - p) \ge - \log(T)\sqrt{m} \cond E^* \right) \ge  1 - e^{-\log^2(T)m/(2m)} = 1- o_T(1/T^3).
    \]    
    Taking a union bound over all $m \in [1:n]$ (because $n \le T$), we have that 
    \[
        \P\left( \forall m \in [1:n], \sum_{i=0}^{m-1} A_i' \ge pm - \log(T)\sqrt{m} \cond E^* \right) \ge 1- o_T(1/T^2).
    \]
    Define $E'$ as the event that for all $m \in [1:n]$, $\sum_{i=0}^{m-1} A_i' \ge pm - \log(T)\sqrt{m}$. Because $|S_n| \in [0,n]$, we must have that conditional on event $E'$,
    \begin{equation}\label{eq:As_cond_on_E}
        \sum_{i=0}^{|S_n| - 1} A_i' \ge p|S_n| - \log(T)\sqrt{|S_n|}.
    \end{equation}
    Therefore, conditional on event $E'$, we have
    \begin{align*}
        \sum_{j=0}^{n-1} X_j &\ge \sum_{j=0, j \in S_n}^{n-1} X_j && \text{$X_j \ge 0$} \\
        &\ge \sum_{i=0}^{|S_n|-1} \frac{c \cdot i}{2} \cdot A'_i && \text{Def of $A'_i$}  \\
        &\ge \frac{c}{2}\sum_{k=0}^{\max(\lfloor p|S_n| - \log(T)\sqrt{|S_n|} \rfloor, 1)-1 } k. && \text{Eq \eqref{eq:As_cond_on_E}} \\
        &= \frac{c}{4}\left(\max(\lfloor p|S_n| - \log(T)\sqrt{|S_n|} \rfloor, 1)\right)\left(\max(\lfloor p|S_n| - \log(T)\sqrt{|S_n|} \rfloor, 1)-1\right)
    \end{align*}
    Because we already showed that $\P(E' \mid E^*) \ge 1-o_T(1/T^2)$, this is the desired result.
\end{proof}

\subsection{Proof of Lemma \ref{logt_xvals}}\label{sec:proof_of_logt_xvals}

Recall that Lemma \ref{logt_xvals} was stated to be used in Appendix \ref{app:proof_of_performance} with respect to Algorithm \ref{alg:cap}, therefore all events and variables in this subsection refer to those defined with respect to Algorithm \ref{alg:cap}. 
\begin{proof}
     Define $A_i  = 1_{|x_i| \le \frac{1}{\log(T)}}$. Recall that $x_i = a^*x_{i-1} + b^*u_{i-1} + w_{i-1}$, where $x_{i-1}$ and $u_{i-1}$ are respectively the position and control at time $t = i-1$. The probability that $A_i$ is equal to $1$ is the probability that $w_{i-1} \in [-(a^*x_{i-1} + b^*u_{i-1})- \frac{1}{\log(T)}, -(a^*x_{i-1} + b^*u_{i-1}) + \frac{1}{\log(T)}]$. Because $\mathcal{D}$ has a bounded density function (bounded by $B_P$) as assumed in Assumption \ref{problem_specifications}, the conditional probability given $G_i$ is at most $\frac{2B_P}{\log(T)}$. Therefore, we have that
    \[
        \P(A_i = 1 \mid G_i) \le \frac{2B_P}{\log(T)}.
    \]
    Therefore, $M_j = \sum_{i=0}^{j-1} (A_i - \frac{2B_P}{\log(T)})$ is a submartingale with differences bounded in magnitude by $\max(1,\frac{2B_P}{\log(T)}) \le 1$ for sufficiently large $T$. By Azuma--Hoeffding's inequality, with probability $1-o_T(1/T^3)$,
    \[
        M_j  \le  \log(T)\sqrt{j}.
    \]
    Define $E_{\mathrm{L}\ref{logt_xvals}}^j$ as the event that this bound on $M_j$ holds. By construction of $M_j$, under event $E_{\mathrm{L}\ref{logt_xvals}}^j$,
    \[
      \left| \left\{i < j : |x_i| \le \frac{1}{\log(T)}\right\}\right| = \sum_{i=0}^{j-1} A_i  \le  \frac{2jB_P}{\log(T)} + \log(T)\sqrt{j} \le \frac{4jB_P}{\log(T)}
    \]
    for $j \ge \log^8(T)$ assuming $T$ is large enough that $\log^2(T) \ge \frac{1}{2B_P}$. As long as $\log(T) \ge 8B_P$, this implies that under event $E_{\mathrm{L}\ref{logt_xvals}}^j$,
    \[
        \left| \left\{i < j : |x_i|^2 \ge \frac{1}{\log^2(T)}\right\}\right| \ge j - \frac{4jB_P}{\log(T)} \ge \frac{j}{2}.
    \]
    Finally, we can conclude that under event $E_{\mathrm{L}\ref{logt_xvals}}^j$,
    \[
    \sum_{i=0}^{j-1} x_i^2 \ge \frac{j}{2\log^2(T)}.
    \]
    We have shown that Equation \eqref{eq:logt_xvals} holds for any fixed $j$ under event $E_{\mathrm{L}\ref{logt_xvals}}^j$  for sufficiently large $T$. Therefore, the same result holds for all $j \ge \log^8(T)$ under event  $E_{\mathrm{L}\ref{logt_xvals}}  = \cap_{j \ge \log^8(T)} E_{\mathrm{L}\ref{logt_xvals}}^j$. By a union bound and because $\P(E^j_{\mathrm{L}\ref{logt_xvals}}) = 1-o_T(1/T^3)$ for all $j$, we have that $\P(E_{\mathrm{L}\ref{logt_xvals}}) = 1-o_T(1/T^2)$.
\end{proof}

\subsection{Proof of Lemma \ref{cond_var_bounded}}\label{sec:proof_of_cond_var_bounded}

Recall that Lemma \ref{cond_var_bounded} is defined to be used in Appendix \ref{app:suff_large_noise_case} with respect to Algorithm \ref{alg:cap_large}, therefore all events and variables in this subsection refer to those defined with respect to Algorithm \ref{alg:cap_large}. 
\begin{proof}

By assumption of this lemma, 
\begin{align*}
    \P(u_j = u_j^{\mathrm{safeU}}, E \mid G_j) &=  \P(u_j = u_j^{\mathrm{safeU}} \mid G_j, E)\P(E \mid G_j)  \ge \frac{\gamma}{2}. \numberthis \label{eq:uj_and_E}
\end{align*}
We also note the following result:
\begin{lemma}\label{lemma:var_bound}
   For any event $E^*$ such that $\P(E^*) > 0$,
    \[
        \var_{w \sim \mathcal{D}}(w \mid E^*) \ge \frac{\P(E^*)^2}{16B_P^2}
    \]
\end{lemma}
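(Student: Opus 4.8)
The plan is to reduce the bound to an anti-concentration statement about $w$ conditioned on $E^*$ and then to the elementary fact that a random variable with bounded density must have variance bounded below. First I would observe that, by Bayes' rule, the conditional law of $w$ given $E^*$ has density
\[
f_{w \mid E^*}(x) = \frac{f_{\mathcal{D}}(x)\,\P(E^* \mid w = x)}{\P(E^*)} \le \frac{f_{\mathcal{D}}(x)}{\P(E^*)} \le \frac{B_P}{\P(E^*)},
\]
where I used $\P(E^* \mid w = x) \le 1$ and the bound $f_{\mathcal{D}} \le B_P$ from Assumption \ref{problem_specifications}. Crucially this holds whether or not $E^*$ depends on randomness beyond $w$ (as it does in the application), since the factor $\P(E^* \mid w = x)$ never exceeds one. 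Writing $B := B_P/\P(E^*)$, it therefore suffices to show that any real random variable $Y$ with density bounded by $B$ satisfies $\var(Y) \ge \frac{1}{12 B^2}$, since then
\[
\var_{w \sim \mathcal{D}}(w \mid E^*) \ge \frac{1}{12 B^2} = \frac{\P(E^*)^2}{12 B_P^2} \ge \frac{\P(E^*)^2}{16 B_P^2}.
\]

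For the general bound, let $\mu = \E[Y]$ and $Z = Y - \mu$, which also has density bounded by $B$ and satisfies $\var(Y) = \E[Z^2]$. The density bound gives the anti-concentration estimate $\P(|Z| \le t) = \int_{-t}^{t} f_Z(z)\,dz \le 2Bt$ for every $t \ge 0$. Using the layer-cake identity $\E[Z^2] = \int_0^\infty 2t\,\P(|Z| > t)\,dt$ together with $\P(|Z| > t) \ge 1 - 2Bt$, I would restrict the integral to $t \in [0, 1/(2B)]$, where the bound on the integrand is non-negative, and compute
\[
\E[Z^2] \ge \int_0^{1/(2B)} 2t\,(1 - 2Bt)\,dt = \frac{1}{4B^2} - \frac{1}{6B^2} = \frac{1}{12B^2}.
\]
Substituting $B = B_P/\P(E^*)$ then yields the claim. (The slack between $1/12$ and $1/16$ means one need not optimize the constant, and indeed the extremal case here is the uniform distribution on an interval of length $1/B$.)

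The only delicate point is the first step: justifying that conditioning on an event $E^*$ which may involve the history and external randomness (as in Lemma \ref{cond_var_bounded}, where $E^* = \{u_j = u_j^{\mathrm{safeU}}\} \cap E$ under the conditioning on $G_j$) leaves $w$ with a density bounded by $B_P/\P(E^*)$. I expect this to be the main obstacle only at the level of bookkeeping rather than substance, since the Bayes factorization above shows the bound is insensitive to the dependence structure of $E^*$; the variance lower bound is then entirely elementary.
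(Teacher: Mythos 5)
Your proof is correct, and it shares the paper's overall decomposition — both arguments first reduce to the claim that the conditional law of $w$ given $E^*$ has a density bounded by $B_P/\P(E^*)$, and then invoke the elementary fact that a density bound forces a variance lower bound — but the core step is handled by a genuinely different argument. The paper proves the elementary fact by centering at the mean, splitting into positive and negative parts, using an extremal-density argument to get $\E[|x|] \ge \frac{1}{4B}$, and finishing with Jensen's inequality, which yields the constant $\frac{1}{16B^2}$. You instead use anti-concentration, $\P(|Z|\le t)\le 2Bt$, together with the layer-cake identity $\E[Z^2]=\int_0^\infty 2t\,\P(|Z|>t)\,dt$, which gives the sharp constant $\frac{1}{12B^2}$ (attained by the uniform distribution on an interval of length $1/B$) with no sign casework or rearrangement argument; this is both tighter and cleaner. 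A second, smaller improvement: the paper merely asserts the conditional density bound ("we know that..."), whereas your Bayes factorization $f_{w\mid E^*}(x)=f_{\mathcal{D}}(x)\,\P(E^*\mid w=x)/\P(E^*)\le B_P/\P(E^*)$ justifies it explicitly and makes clear why the bound is insensitive to the fact that $E^*$ (as used in Lemma \ref{cond_var_bounded}) depends on randomness beyond $w$ itself. Either route suffices for the stated $\frac{\P(E^*)^2}{16B_P^2}$ bound.
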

\begin{proof}    
    First, we will show that any continuous distribution $\mathcal{D}'$ with density function bounded by $B$ must have variance at least $\frac{1}{16B^2}$. Let $f_{\mathcal{D}'}$ be the probability density function of $\mathcal{D}'$. First, we can assume WLOG that $\mathcal{D}'$ has mean $0$ (this is without loss of generality because variance is invariant to shifts in mean). If $\mathcal{D}'$ has mean $0$, then by the law of total expectation
    \[
        \E_{x \sim \mathcal{D}'} [x \mid x \ge 0]\P_{x \sim \mathcal{D}'}(x \ge 0) = -\E_{x \sim \mathcal{D}'} [x \mid x \le 0]\P_{x \sim \mathcal{D}'}(x \le 0) .
    \]
    Note that we can have non-strict inequalities because $\mathcal{D}'$ is continuous. Furthermore, either $\P_{x \sim \mathcal{D}'}(x \le 0) \ge 1/2$ or $\P_{x \sim \mathcal{D}'}(x \ge 0) \ge 1/2$. Because variance is invariant to multiplying by $-1$, we can assume WLOG that $\P_{x \sim \mathcal{D}'}(x \ge 0) \ge 1/2$. If $\P_{x \sim \mathcal{D}'}(x \ge 0) \ge 1/2$ then $\int_0^{\infty} f_{\mathcal{D}'}(x)dx \ge 1/2$. Define $f^*(x) = \frac{1}{2B}$ for $x \in [0,B]$ and $f^*(x) = 0$ otherwise. Note that $f=f^*$ achieves the minimum possible value of $\int_{0}^{\infty} x \cdot f(x)dx$ subject to the constraints $\int_0^{\infty} f(x)dx \ge 1/2$ and $0 \le f(x) \le B$ for all $x$. This is because $f^*$ puts as much weight as possible close to $0$ without violating the bounded by $B$ constraint. Furthermore, any $f$ such that $\int_{1/2B}^{\infty} f(x)dx > 0$ puts non-$0$ weight on values of $x$ greater than $B$ and therefore has a larger value of  $\int_{0}^{\infty} x \cdot f(x)dx$ than $f^*$. Using this, we have that
    \[
        \E_{x \sim \mathcal{D}'} [x \mid x \ge 0]\P_{x \sim \mathcal{D}'}(x \ge 0) = \int_{0}^{\infty} x \cdot f_{\mathcal{D}'}(x)dx \ge \int_{0}^{1/2B} x \cdot B dx =  \frac{1}{8B}.
    \]
    Therefore, we must have (again by the law of total expectation) that
    \[
        \E_{x \sim \mathcal{D}'}[|x|] = \E_{x \sim \mathcal{D}'} [x \mid x \ge 0]\P_{x \sim \mathcal{D}'}(x \ge 0) -\E_{x \sim \mathcal{D}'} [x \mid x \le 0]\P_{x \sim \mathcal{D}'}(x \le 0) \ge \frac{1}{4B}.
    \]
    By Jensen's inequality, 
    \[
       \mathrm{Var}_{x \sim \mathcal{D}'}(x) =  \E_{x \sim \mathcal{D}'}[x^2] =   \E_{x \sim \mathcal{D}'}[|x|^2] \ge \E_{x \sim \mathcal{D}'}[|x|]^2 \ge \frac{1}{16B^2}.
    \]
    We have therefore  shown that any continuous distribution $\mathcal{D}'$ with probability density function $f$ such that $f(x) \le B$ for all $x$ must have variance at least $\frac{1}{16B^2}$.

    We know that the conditional distribution of $w$ given $E^*$ has a probability density function that is bounded by $\frac{B_P}{\P(E^*)}$. Therefore, we must have that $\var(w \mid E^*) \ge \frac{\P(E^*)^2}{16B_P^2}$.
\end{proof}
Recall that $w_{j-1}$ is independent of $G_j$. Therefore, $\Var\left(w_{j-1} \cond G_j, E, u_j=u_j^{\mathrm{safeU}} \right)$ is simply the variance of $w_{j-1}$ conditional on an event that has probability $\P(E, u_j=u_j^{\mathrm{safeU}} \mid G_j)$. Therefore, we can apply Lemma \ref{lemma:var_bound} and Equation \eqref{eq:uj_and_E} to get that for some event $E'$ such that $\P(E') \ge \gamma/2$,
\begin{align*}
    \Var\left(w_{j-1} \cond G_j, E, u_j=u_j^{\mathrm{safeU}} \right)  &= \var\left(w_{j-1} \cond E'\right)  \ge \frac{\gamma^2}{64B_P^2}.
\end{align*}
\end{proof}

\section{Feasibility and Boundary Proofs}\label{app:feasibility}

\subsection{Relaxation of Assumption \ref{assum_init}}\label{app:relax_assum_init}

The assumption that $\underline{a}, \underline{b} > 0$ in Assumption \ref{assum_init} can actually be dropped under Assumptions \ref{assum:initial} and \ref{problem_specifications}. Informally, this is because the controller $C^{\mathrm{init}}$ can be used for $\log^{10}(T)$ steps to, with high probability, obtain an estimate $\hat{\theta}$ such that $\norm{\hat{\theta} - \theta^*}_{\infty} \le \frac{1}{\log(T)}$ (by the same logic as in Lemma \ref{initial_uncertainty}). Therefore, we could include an initial phase in every algorithm that does $\log^{10}(T)$ steps of initial exploration and then replaces $\Theta$ with $\Theta' = \{\theta : \norm{\theta - \hat{\theta}}_{\infty} \le \frac{1}{\log(T)}\}$, and this $\Theta'$ will satisfy $\underline{a}', \underline{b}' > 0$ for sufficiently large $T$ because $a^* > 0$. However, to simplify the algorithms and proofs we will assume that the initial uncertainty set $\Theta$ is small enough that this is unnecessary. Note that this assumption of sufficiently small bounded initial uncertainty appears in other safe LQR literature such as \cite{li2021safe}.

\subsection{Discussion on Assumption \ref{assum:initial}}\label{app:assum_initial}

To better understand Assumption \ref{assum:initial}, consider the case of bounded noise and constant boundaries as in \cite{li2021safe, dean2019safely}. In this case, to satisfy Assumption \ref{assum:initial}, it is sufficient to replace the $\forall x \in \left[ D_{\mathrm{L}}^{\E[x]} + F_{\mathcal{D}}^{-1}(\frac{1}{T^4}), D_{\mathrm{U}}^{\E[x]} + F_{\mathcal{D}}^{-1}(1-\frac{1}{T^4})\right] $ with $ \forall x \in [D_{\mathrm{L}}^{\E[x]} - \bar{w}, D_{\mathrm{U}}^{\E[x]} + \bar{w} ]$. \cite{li2021safe} makes a similar assumption that there is an initial linear controller that satisfies this property. For the bounded noise case, Assumption \ref{assum:initial} can be shown to be equivalent to an assumption on the size of the initial uncertainty set. Let $C^{\mathrm{init}}(x_t) = -\frac{a}{b}x_t$ for some arbitrary $\theta \in \Theta$. When using this controller, the position and control at time $t$ (denoted $x_t$ and $u_t$ respectively) satisfy
\[
|a^*x_{t} + b^*u_t| \le |x_t|\left|a^* - \frac{ab^*}{b} \right| \le |x_t|\left|a^*-a - \frac{(b^*-b)a}{b}\right| \le \left(1 + \frac{a}{b}\right)|x_{t}|\mathrm{size}(\Theta) \le \left(1 + \frac{\bar{a}}{\underline{b}}\right)|x_{t}|\mathrm{size}(\Theta).
\] 
This controller $C^{\mathrm{init}}$ satisfies Assumption \ref{assum:initial} under bounded noise if 
\[
\mathrm{size}(\Theta) \le  \frac{\min(D_{\mathrm{U}}^{\E[x]}, |D_{\mathrm{L}}^{\E[x]}|) - \frac{\bar{b}}{\log(T)}}{\left|1 + \frac{\bar{a}}{\underline{b}}\right|\left(\norm{D^{\E[x]}}_{\infty} + \bar{w}\right)}.
\]
Therefore, instead of assuming Assumption \ref{assum:initial}, it is sufficient to assume that $\mathrm{size}(\Theta) \le  \frac{\min(D_{\mathrm{U}}^{\E[x]}, |D_{\mathrm{L}}^{\E[x]}|) - \frac{\bar{b}}{\log(T)}}{\left|1 + \frac{\bar{a}}{\underline{b}}\right|\left(\norm{D^{\E[x]}}_{\infty} + \bar{w}\right)}$, as the controller  $C^{\mathrm{init}}(x_t) = -\frac{a}{b}x_t$ satisfies Assumption \ref{assum:initial}. The bound on $\mathrm{size}(\Theta)$ does still depend on the end points of $\Theta$. As a sanity check, suppose $\norm{D^{\E[x]}}_{\infty} = O_T(1)$ and $\bar{a}, \bar{b}, \frac{1}{\underline{b}} \le c$ for some constant $c$. Then there exists a constant such that if $\mathrm{size}(\Theta)$ is less than that constant, then Assumption \ref{assum:initial} is satisfied for sufficiently large $T$. 

\subsection{Assumptions Relationship to Infeasibility}\label{app:infeasibility}

In this section we briefly relate the assumptions we make to a notion of infeasibility. We begin with two formal definitions. The first is a formal definition of feasibility for our problem. The second is a property of a controller that is slightly stronger than regular safety.

\begin{definition}[Feasibility]\label{def:feas}
    An initial uncertainty set of system dynamics $\Theta$ is \emph{feasible} for boundary $D^{\E[x]}$ and trajectory length $T$ with probability $1-\delta$ if there exists a controller $C$ that satisfies the following. For any $\theta^* \in \Theta$, if the true dynamics are $\theta^*$, then 
    \[
        \P\left(\forall t < T : D_{\mathrm{L}}^{\E[x]} \le a^*x_t + b^*C(H_t) \le D_{\mathrm{U}}^{\E[x]} \right) \ge 1-\delta. 
    \]
\end{definition}

\begin{definition}[Robust safety]
   A controller $C$ is \emph{robustly safe} for $T_0$ time steps for dynamics $\theta^*$ if the following holds for some known distribution $\rho$ with mean $0$ and constant variance $\eta^2> 0$. If $s_t \stackrel{\text{i.i.d.}}{\sim}\rho$ and $u_t = C(H_t) + \frac{s_t}{\log(T)}$, then
   \[
    \P\left(\forall t \in [0,T_0-1] : D_{\mathrm{L}}^{\E[x]} \le a^*x_t + b^*u_t \le D_{\mathrm{U}}^{\E[x]} \right) \ge 1-o_T(1/T^4).
    \]
\end{definition}

\begin{proposition}\label{robustly_feasible}
    The result of Theorem \ref{sufficiently_large_error} hold without Assumption \ref{assum:initial} if we assume access to a controller $C^{\mathrm{rs}}$ that is robustly safe for $\sqrt{T}$ steps. Similarly, the result of Theorem \ref{performance} holds without Assumption \ref{assum:initial} if we assume access to a controller $C^{\mathrm{rs}}$ that is robustly safe for $T^{2/3}$ steps.
\end{proposition}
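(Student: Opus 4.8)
The plan is to show that access to $C^{\mathrm{rs}}$ lets us reproduce, in the warm-up phase, exactly the two properties that Assumption \ref{assum:initial} was responsible for, namely warm-up safety and sufficient exploration, after which the existing regret analysis applies unchanged. First I would modify each algorithm by replacing the warm-up controller $C^{\mathrm{init}}$ with $C^{\mathrm{rs}}$ and the injected Rademacher exploration $\phi_t/\log(T)$ with $s_t/\log(T)$, where $s_t \stackrel{\text{i.i.d.}}{\sim}\rho$. The warm-up lengths are $T^{2/3}=1/\nu_T^2$ for Algorithm \ref{alg:cap} and $\sqrt{T}=1/\nu_T^2$ for Algorithm \ref{alg:cap_large}, which match the horizons for which $C^{\mathrm{rs}}$ is assumed robustly safe; the certainty-equivalence phases are left untouched, since they never reference $C^{\mathrm{init}}$. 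Crucially, the modified warm-up applies the control $u_t = C^{\mathrm{rs}}(H_t) + s_t/\log(T)$, which is precisely the form in the definition of robust safety.

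Assumption \ref{assum:initial} enters the existing proofs in exactly three places: the warm-up safety step of Lemma \ref{safety_append}, the warm-up regret bound (Propositions \ref{warm_up_regret} and \ref{warm_up_regret_large}), and the post-warm-up uncertainty bound (Lemmas \ref{initial_uncertainty} and \ref{initial_uncertainty_large}). For safety, the inductive margin argument is replaced by a direct appeal to robust safety: taking $T_0$ equal to the warm-up length, all warm-up controls are safe for $\theta^*$ with probability $1-o_T(1/T^4)$, which dominates the $1-o_T(1/T^3)$ bound previously obtained and in particular supplies a safe $u_{T_0-1}$, so that Lemma \ref{lemma:L_less_than_U} and the remainder of Lemma \ref{safety_append} go through verbatim and a union bound with the unchanged certainty-equivalence analysis preserves the $1-o_T(1/T)$ safety guarantee. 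For warm-up regret, I would note that Lemma \ref{bounded_approx} requires only that the warm-up controls be safe (applied with $\theta_t=\theta^*$, $\gamma_T=0$); conditional on the warm-up safety event and $E_1$ the positions and controls are bounded by $B_x$, so the warm-up cost is $\tilde{O}_T(1/\nu_T^2)$ exactly as before, independently of whether $\rho$ is bounded.

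The substantive step is re-establishing $\max_s \epsilon_s = \tilde{O}_T(\nu_T)$ (Lemma \ref{initial_uncertainty}, and hence \ref{initial_uncertainty_large}, whose proof is said to follow from it). Here I would re-run the determinant lower bound in the proof of Lemma \ref{initial_uncertainty}: writing $u_j = C^{\mathrm{rs}}(x_j) + s_j/\log(T)$, the expansion $\sum_{i<j}(u_i x_j - u_j x_i)^2$ produces, conditional on the history $F_j$, a variance term driven by $s_j$ in place of $\phi_j$. Since $s_j$ is drawn independently of $F_j$ with variance $\eta^2$ and $\rho$ is a known distribution with $\eta^2>0$ constant in $T$, the bound $\E[X_j \mid F_j, E^*] = \Omega_T(j/\log^4(T))$ survives with the Rademacher variance $1/2$ replaced by a positive constant multiple of $\eta^2$, which the $\Omega_T$ notation absorbs; Lemma \ref{sum_of_depend_rand} and Lemma \ref{v_to_use} then yield $\epsilon_s = \tilde{O}_T(\nu_T)$ as before.

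The main obstacle is exactly this conditional variance lower bound when $\rho$ may be unbounded: conditioning on the high-probability event $E^*$, which depends on $s_j$ through the safety and position events, could in principle distort $\Var(s_j \mid F_j, E^*)$. I would control this with a truncation / total-variation argument, using that $s_j \perp F_j$ with $\Var(s_j\mid F_j)=\eta^2$ and that $\P(\neg E^*)=o_T(1/T^2)$, to conclude $\Var(s_j\mid F_j, E^*)\ge \eta^2/2$ for large $T$ (one may alternatively take $\rho$ bounded without loss). With warm-up safety, Lemma \ref{bounded_approx}, and Lemma \ref{initial_uncertainty} all re-proved, every downstream proposition in the proofs of Theorems \ref{performance} and \ref{sufficiently_large_error} invokes Assumption \ref{assum:initial} only through these three ingredients, so the $\tilde{O}_T(T^{2/3})$ and $\tilde{O}_T(\sqrt{T})$ regret bounds follow unchanged.
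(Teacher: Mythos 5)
Your proposal is correct and follows essentially the same route as the paper's own (much terser) proof: swap the warm-up controller for $C^{\mathrm{rs}}$ and the Rademacher excitation for draws from $\rho$, invoke robust safety for the warm-up period, and re-prove Lemma \ref{initial_uncertainty} with $\Var(s_t)=\eta^2$ playing the role of the Rademacher variance, after which the downstream regret analysis is untouched. Your extra care in pinning down the three entry points of Assumption \ref{assum:initial} and in handling the conditional variance $\Var(s_j \mid F_j, E^*)$ for possibly unbounded $\rho$ goes beyond what the paper writes (its proof asserts the analogue of this step without argument), but it is consistent with, and strengthens, the paper's sketch.
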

\begin{proof}
    In the exploration phase of any of the three algorithms, instead of sampling $\phi_t$ from Rademacher distribution we can instead sample i.i.d. from $\rho$ and keep the rest of the algorithm the same. Then the robust feasibility implies that with probability $1-o_T(1/T^4)$ the algorithm will be safe for the warm-up period of the first $\frac{1}{\nu_0^2}$ steps. We can then proof a variation of Lemma \ref{initial_uncertainty} that holds using the distribution $\rho$ instead of the Rademacher distribution.
\end{proof}

By Definition \ref{def:feas}, as $T$ approaches infinity, the existence of a robustly safe controller $C^{\mathrm{rs}}$ becomes intuitively equivalent to $\Theta$ being feasible for boundary $D^{\E[x]}$ with probability $1-o_T(1/T^4)$. Therefore by Proposition \ref{robustly_feasible}, Assumption \ref{assum:initial} is intuitively asymptotically equivalent to the assumption that the problem is feasible for dynamics $\Theta$ and that a controller that achieves feasibility is known.

\section{Generalizations}\label{app:generalizations}
\subsection{Control Constraints}\label{sec:control_constraints}
Our results focus on positional constraints, but we believe that our results with the same rates of regret will also hold with both positional and control constraints under some additional assumptions. While we leave the formal derivations of results for control constraints to future work, we provide a brief discussion of how the algorithm and proofs from this paper could be extended to include control constraints. 

First, we briefly mention how control constraints change the definitions and notation used. Control constraints would be of the form $D_{\mathrm{L}}^u \le u_t \le D_{\mathrm{U}}^u$ for all $t < T$ (for the rest of this section, we will refer to the expected-position constraints as $D^{\E[x]})$. We also define the function $K_{\mathrm{opt}}(\theta, T, D^{\E[x]}, D^u)$ as choosing the optimal parameter $K$ for a controller satisfying both the position constraints $D^{\E[x]}$ and the control constraints $D^u$. We also need the additional assumption that there exists a (non-empty) set of baseline controllers that can satisfy both the position and control constraints. Finally, we need to assume that the controller $C^{\mathrm{init}}$ satisfies both position and control constraints (i.e. an analogue of Assumption \ref{assum:initial}). 

\subsubsection{Theorem \ref{performance} and Algorithm \ref{alg:cap}}\label{app:control_theorem3}

We start with considering how Algorithm \ref{alg:cap} would need to be modified with the addition of control constraints. The key idea behind Algorithm \ref{alg:cap} satisfying the position constraints is that the algorithm sometimes uses controls $u_t^{\mathrm{safeU}}$ and $u_t^{\mathrm{safeL}}$ to enforce positional safety. However, in the presence of control constraints, we can no longer use the controls $u_t^{\mathrm{safeU}}$ and $u_t^{\mathrm{safeL}}$, as these controls may not satisfy the control constraints. The key modification of Algorithm \ref{alg:cap} is to choose the controller $C_s^{\mathrm{alg}}$ in such a way that $C_s^{\mathrm{alg}}$ will satisfy a tighter positional constraint with respect to $D^{\E[x]'} = (D_L^{\E[x]} + \tilde{\Theta}_T(\epsilon_s), D_U^{\E[x]} - \tilde{\Theta}_T(\epsilon_s))$ for dynamics $\hat{\theta}_s$ and a tighter control constraint $D^{u'} = (D_L^u + \tilde{\Theta}_T(\epsilon_s), D_U^u - \tilde{\Theta}_T(\epsilon_s))$.  In other words, choosing $C_s^{\mathrm{alg}} = C_{K_{\mathrm{opt}}(\hat{\theta}_s, T_s, D^{\E[x]'}, D^{u'})}^{\hat{\theta}_s}$. Within each iteration of the safe exploitation phase, the algorithm then can directly use $C_s^{\mathrm{alg}}$.  Because $\norm{\hat{\theta}_s - \theta^*}_{\infty} \le \tilde{O}_T(\epsilon_s)$ with high probability and this $C_s^{\mathrm{alg}}$ is chosen to satisfy the tighter position constraints $D^{\E[x]'}$ for dynamics $\hat{\theta}_s$, the controller $C_s^{\mathrm{alg}}$ will satisfy the true position constraints $D^{\E[x]}$ for dynamics $\theta^*$ with high probability.  Because $C_s^{\mathrm{alg}}$ satisfies the tighter control constraints $D^{u'}$, with the additional assumption that the controller class is continuous, the controls used by $C_s^{\mathrm{alg}}$ under dynamics $\theta^*$ will also satisfy the control constraints with high probability.

Now we will briefly describe what additional results need to be proven in order for the modified version of Algorithm \ref{alg:cap} described above to achieve the same regret rate of $\tilde{O}_T(T^{2/3})$ in the presence of control constraints. We will do this by analyzing each of the terms of regret from the proof of Theorem \ref{performance}.

 The regret term $R_0$, which is the regret from the warm-up period of the first $1/\nu_T^2$ steps, would have the same definition and the same regret bound of $\tilde{O}(T^{2/3})$ as in the analysis of Algorithm \ref{alg:cap}.
   
    To bound the regret term $R_1$, we would need to show that $C_{\mathrm{alg}}^s$ as described above does not have much more expected cost than the true best controller, $C_{K_{\mathrm{opt}}(\theta^*, T, D^{\E[x]}, D^u)}^{\theta^*}$. This can be incorporated into an analogue of Assumption \ref{parameterization_assum2}: assuming that for $\norm{\theta - \theta^*}_{\infty}, \norm{D^u - D^{u'}}_{\infty}, \norm{D^{\E[x]} - D^{\E[x]'}}_{\infty}$ all sufficiently small,
    \begin{align*}
        &|\bar{J}(\theta^*,C_{K_{\mathrm{opt}}(\theta, T, D^{\E[x]'}, D^{u'})}^\theta,t) - \bar{J}(\theta^*,C_{K_{\mathrm{opt}}(\theta^*, T, D^{\E[x]}, D^u)}^{\theta^*},t)| \\
        &= \tilde{O}_T\left(\norm{\theta - \theta^*}_{\infty} + \norm{D^{\E[x]} - D^{\E[x]'}}_{\infty}  + \norm{D^{u} - D^{u'}}_{\infty} + \frac{1}{T^2}\right).
    \end{align*}
    This can be made into a new assumption on the baseline class of controllers that replaces  Assumption \ref{parameterization_assum2}.

We expect that the regret source $R_2$ (converting from expected regret to realized regret) will still be $\tilde{O}_T(\sqrt{T})$, as this was a result of a concentration inequality that will still apply.
 
Regret $R_3$ no longer exists as we no longer use the controls $u_t^{\mathrm{safeU}}$ or $u_t^{\mathrm{safeL}}$, and instead this source of regret is being incorporated into the chosen $C^{\mathrm{alg}}_s$ in regret term $R_1$.

To summarize, the main modification to the algorithm would be the choice of controller $C_s^{\mathrm{alg}}$, and the main change to the proof is moving the burden of bounding the regret term $R_3$ to the version of Assumption \ref{parameterization_assum2} described above that accounts for the tightened constraint arguments to $K_{\mathrm{opt}}$.

\subsubsection{Theorem \ref{sufficiently_large_error} and Algorithm \ref{alg:cap_large}}\label{app:control_theorem2}

In order to show a version of Theorem \ref{sufficiently_large_error} that works for control constraints, Algorithm \ref{alg:cap_large} would need the same modifications as described for Algorithm \ref{alg:cap}. Specifically, instead of using controls $u_t^{\mathrm{safeU}}$ and $u_t^{\mathrm{safeL}}$, the controller $C^{\mathrm{alg}}_s$ is chosen as $C_s^{\mathrm{alg}} = C_{K_{\mathrm{opt}}(\hat{\theta}_s, T_s, D^{\E[x]'}, D^{u'})}^{\hat{\theta}_s}$.

The main way that the proof of regret for Algorithm \ref{alg:cap_large} differs from the regret for Algorithm \ref{alg:cap} is that the proof for Algorithm \ref{alg:cap_large} relies on the faster rate of convergence for $\hat{\theta}_s$ given by Lemma \ref{boundary_uncertainty}. Proving a form of Lemma \ref{boundary_uncertainty} for the modified algorithm would be the main additional step in proving that $\tilde{O}_T(\sqrt{T})$ regret is possible with control constraints. As discussed in the proof sketch of Theorem \ref{sufficiently_large_error}, the proof of Lemma \ref{boundary_uncertainty} comes from the fact that a constant fraction of the time, $u_t^{\mathrm{safeU}}$ is non-linear by an amount larger than a positive constant. The non-linearity of $u_t^{\mathrm{safeU}}$ occurs because enforcing safety constraint satisfaction requires non-linear controls. While the modified controller $C_s^{\mathrm{alg}}$ described in the previuos paragraph does not use the non-linear controls $u_t^{\mathrm{safeU}}$, $C_s^{\mathrm{alg}}$ must still be frequently non-linear in order to satisfy the safety constraints. Therefore, we expect that for noise distributions with large enough support, the modified Algorithm \ref{alg:cap_large} will a constant fraction of the time use a control that is non-linear by a constant amount, which will give that $\epsilon_s$ decreases at a rate of $1/\sqrt{t}$.

\subsection{Higher Dimensions}\label{sec:highd}
This work focuses on the one-dimensional LQR setting, but many LQR applications have higher dimensional positions and controls. We leave the formal extension of our results to higher dimensions for future work, but discuss here when and how we believe our results will extend to higher dimensions. Suppose $x_t \in \mathbb{R}^{n}$ and $u_t \in \mathbb{R}^m$, which implies that the dynamics are a pair of matrices $\theta^* = (A^*,B^*)$ where $A^* \in \mathbb{R}^{n \times n}$ and $B^* \in \mathbb{R}^{n \times m}$. A natural extension of our constraints to higher dimensions is to consider a (origin-containing) polytopal constraint, i.e., the intersection of a finite number of half-spaces that contain the origin. Specifically, we could consider constraints of the form $\Delta(A^*x_t + B^*u_t) \le d$ where $\Delta \in \mathbb{R}^{k \times n}$ and $d \in \mathbb{R}^k$. This still has the interpretation as the expected position at each time is within the convex region $\{x \in \mathbb{R}^n : \Delta x \le d\}$. Analogous to in Appendix \ref{sec:control_constraints}, we define the function $K_{\mathrm{opt}}(\theta, T, \Delta,d)$ as choosing the optimal parameter $K$ for a controller satisfying the constraints $\Delta(Ax_t + Bu_t) \le d$. Before talking about specific algorithms, we first note that we expect that the results of Lemmas \ref{v_to_use} and \ref{initial_uncertainty} generalize directly to higher dimensions. This is necessary for all of our algorithmic results. Note that because the dynamics are matrices, the dynamics estimates will also be matrices denoted $\hat{\theta}_s$.

\subsubsection{Theorem \ref{performance} and Algorithm \ref{alg:cap}}\label{app:highd_theorem1}

In higher dimensions, Assumption \ref{weak_depend} becomes slightly more complicated. Specifically, we define the truncated version of a controller $C$ in higher dimensions as using either control $C(x)$ if $C(x)$ would result in an expected position inside the convex safe region, and otherwise using the smallest magnitude control that takes the position in expectation to inside of the convex safe region. The other assumptions have direct higher dimensional counterparts.

The key modification of Algorithm \ref{alg:cap} is to choose the controller $C_s^{\mathrm{alg}}$ in such a way that $\Delta(\hat{A}_sx_t + \hat{B}_sC_s^{\mathrm{alg}}(x_t)) \le d - \tilde{\Theta}_T(\epsilon_s)$.  In other words, choosing $C_s^{\mathrm{alg}} = C^{\hat{\theta}_s}_{K_{\mathrm{opt}}(\hat{\theta}_s, T_s, \Delta, d - \tilde{\Theta}_T(\epsilon_s))}$. Within each iteration of the main loop of Algorithm \ref{alg:cap}, the algorithm can directly use $C_s^{\mathrm{alg}}$ without the need for $u_t^{\mathrm{safeU}}$ or $u_t^{\mathrm{safeL}}$.  By this construction, $\Delta(\hat{A}_sx_t + \hat{B}_sC_s^{\mathrm{alg}}(x_t)) \le d - \tilde{O}_T(\epsilon)$. Because with high probability $\norm{\hat{\theta}_s - \theta^*}_{\infty} \le \tilde{O}_T(\epsilon_s)$, this will imply that $\Delta(A^*x_t + B^*C_s^{\mathrm{alg}}(x_t)) \le d$ with high probability. This in turn means that the algorithm will satisfy the constraints with high probability.

Analyzing the regret of this algorithm, the regret terms $R_0$, $R_1$, and $R_2$ stay the same as in the proof of Theorem \ref{performance}. The regret term $R_3$ is no longer needed, as we no longer use controls $u_t^{\mathrm{safeU}}$ or $u_t^{\mathrm{safeL}}$. To bound the regret term $R_1$, we want to show that the cost of $C_{\mathrm{alg}}^s$ is close to the cost of $C_{K_{\mathrm{opt}}(\theta^*, T, \Delta,d)}^{\theta^*}$. Like we did in Appendix \ref{sec:control_constraints}, we need an analogue of  Assumption \ref{parameterization_assum2}, which is that for $\norm{\theta - \theta^*}_{\infty}$ and $\norm{d - d'}_{\infty}$ both sufficiently small,
    \begin{align*}
        &|\bar{J}(\theta^*,C_{K_{\mathrm{opt}}(\theta, T, \Delta, d')}^\theta,t) - \bar{J}(\theta^*,C_{K_{\mathrm{opt}}(\theta^*, T, \Delta, d)}^{\theta^*},t)| \\
        &= \tilde{O}_T\left(\norm{\theta - \theta^*}_{\infty} + \norm{d - d'}_{\infty}  + \frac{1}{T^2}\right).
    \end{align*}
 By similar arguments as in our current proof, we expect this assumption will be sufficient to bound $R_1$ for this modified algorithm. We expect that the bound on $R_2$ would be very similar as in the proof of Theorem \ref{performance}, as this regret term corresponds to concentration of the cost. Similarly, the regret term $R_0$ can also be bounded the same as in the proof of Theorem \ref{performance}, as this term corresponds to the warm-up period which still has length $\tilde{O}(T^{2/3})$. Therefore, we expect that the total regret of this modified algorithm can still be bounded by $\tilde{O}(T^{2/3})$.

\subsubsection{Theorem \ref{sufficiently_large_error}}\label{app:highd_theorem2}

We leave whether or not Theorem \ref{sufficiently_large_error} generalizes to higher dimensions in all situations as an open question. However, we will briefly outline a setting in which we do expect the result to generalize. Suppose that $m = n$ and that $A^*$ and $B^*$ are invertible and diagonalizable. Algorithm \ref{alg:cap_large} for higher dimensions would require the same changes as in the previous subsubsection, which means that $C_s^{\mathrm{alg}} = C^{\hat{\theta}_s}_{K_{\mathrm{opt}}(\hat{\theta}_s, T_s, \Delta, d - \tilde{\Theta}_T(\epsilon_s))}$. The main new result that would be necessary is an analogue of Lemma \ref{boundary_uncertainty} for higher dimensions. Intuitively, the result of Lemma \ref{boundary_uncertainty} holds because Algorithm \ref{alg:cap_large} will a constant fraction of the time use the non-linear control $u_t^{\mathrm{safeU}}$ which allows for faster learning. The analogue for higher dimensions is to show that the modified algorithm will a constant fraction of the time use a non-linear control. A difficulty in higher dimension is that it is not sufficient to just be non-linear along one dimension. Instead, there must be sufficient non-linearity in all $m$ dimensions. Therefore, the higher dimensional version of Assumption \ref{assum_sufficiently_large_error} requires that the noise distribution is sufficiently large relative to the constraints in all $m$ dimensions, which for example would be satisfied by the multivariate normal distribution with mean $0$ and constant variance matrix. Under this assumption, the modified algorithm will a constant fraction of the time use controls $u_t$ that satisfy $\Delta_i (A^*x_t + B^*u_t) \ge d_i -O_T(\epsilon_s)$ for some $i \in [1:k]$. Furthermore, if the noise is sufficiently large in all dimensions, then we expect that for every side of the boundary of the convex compact region (corresponding to $\Delta_i$ and $d_i$ for $i \in [1:k]$), $x_t$ will at times be sufficiently far from that side and a point on that side will be the closest point to $x_t$. Because $A^*$ is invertible, the previous sentence will also hold for $A^*x_t$. Because $B^*u_t$ must bring the position back to within the safe region in expectation, for every side of the boundary we must have that $B^*u_t$ is large and perpendicular to that side. Because $B^*$ is invertible, this implies that the $u_t$ used to enforce safety will be sufficiently non-linear in all directions. We believe this would allow the algorithm to learn the matrix $B^*$ up to accuracy $O_T(1/\sqrt{t})$ at time $t$. Equipped with an analogue of Lemma \ref{boundary_uncertainty}, we expect that the rest of the proof will follow directly. If $m > n$ or $A^*$ and $B^*$ are not invertible, then showing that the non-linear controls $u_t$ are sufficient for learning every column of the matrix $B^*$ is more difficult. We leave the details of analyzing this case for future work.

\end{document}